\def\isarxiv{1} 
\definecolor{lightgraybox}{RGB}{245,245,245}
\newcounter{takeawaycounter}
\NewDocumentEnvironment{takeaway}{m}{%
  \refstepcounter{takeawaycounter}%
  \begin{tcolorbox}[
    colback=lightgraybox,
    colframe=blue!20!white,
    fonttitle=\bfseries,
    coltitle=black,
    title={Takeaway~\thetakeawaycounter:~#1},
    boxrule=1pt,
    arc=2mm,
    left=1mm, right=1mm, top=1mm, bottom=1mm,
  ]
}{
  \end{tcolorbox}
}
\renewcommand*{\citet}{\cite} 
\definecolor{mydarkblue}{rgb}{0,0.08,0.45}
\renewcommand*{\cite}{\citep} 
\theoremstyle{plain}
\newtheorem{theorem}{Theorem}[section]
\newtheorem{lemma}[theorem]{Lemma}
\newtheorem{definition}[theorem]{Definition}
\newtheorem{proposition}[theorem]{Proposition}
\newtheorem{corollary}[theorem]{Corollary}
\newtheorem{remark}[theorem]{Remark}
\newcommand{\wh}{\widehat}
\newcommand{\wt}{\widetilde}
\newcommand{\R}{\mathbb{R}}
\renewcommand{\tilde}{\wt}
\renewcommand{\hat}{\wh}
\newcommand{\attn}{{\sf Attn}}
\newcommand{\lsa}{{\sf LSA}}
\newcommand{\tf}{{\sf TF}}
\newcommand{\softmax}{{\sf Softmax}}
\DeclareMathOperator*{\E}{{\mathbb{E}}}
\DeclareMathOperator*{\var}{\mathrm{Var}}
\DeclareMathOperator*{\Cov}{\mathrm{Cov}}
\DeclareMathOperator{\vect}{vec}
\DeclareMathOperator{\vech}{vech}
\newcommand*{\RN}[1]{\expandafter\@slowromancap\romannumeral #1@}
\begin{document}

\title{Why Do Transformers Fail to Forecast Time Series In-Context?}

\ifdefined\isarxiv

\date{}

\author{
\textbf{Yufa Zhou}$^{1}$\thanks{Equal Contribution.}\, ,
  \textbf{Yixiao Wang}$^{1}$\footnotemark[1]\, , 
  \textbf{Surbhi Goel}$^2$, 
  \textbf{Anru R. Zhang}$^1$
  \\
  $^1$Duke University, $^2$University of Pennsylvania\\
  \texttt{\{yufa.zhou,yixiao.wang,anru.zhang\}@duke.edu, surbhig@cis.upenn.edu}
}

\else

\maketitle

\fi

\ifdefined\isarxiv
  \maketitle
  \begin{abstract}
Time series forecasting (TSF) remains a challenging and largely unsolved problem in machine learning, despite significant recent efforts leveraging Large Language Models (LLMs), which predominantly rely on Transformer architectures.
Empirical evidence consistently shows that even powerful Transformers often fail to outperform much simpler models, e.g., linear models, on TSF tasks; however, a rigorous theoretical understanding of this phenomenon remains limited.
In this paper, we provide a theoretical analysis of Transformers' limitations for TSF through the lens of In-Context Learning (ICL) theory.
Specifically, under AR($p$) data, we establish that: (1) Linear Self-Attention (LSA) models $\textit{cannot}$ achieve lower expected MSE than classical linear models for in-context forecasting; (2) as the context length approaches to infinity, LSA asymptotically recovers the optimal linear predictor; and (3) under Chain-of-Thought (CoT) style inference, predictions collapse to the mean exponentially. 
We empirically validate these findings through carefully designed experiments\footnote{Code: \url{https://github.com/MasterZhou1/ICL-Time-Series}}.
Our theory not only sheds light on several previously underexplored phenomena but also offers practical insights for designing more effective forecasting architectures.
We hope our work encourages the broader research community to revisit the fundamental theoretical limitations of TSF and to critically evaluate the direct application of increasingly sophisticated architectures without deeper scrutiny.

\epigraph{
``The only thing we know about the future is that it will be different.''
}{\textit{--- Peter Drucker}}


  \end{abstract}

\else

\begin{abstract}

\end{abstract}

\fi

\section{Introduction}

Time series forecasting (TSF), a fundamental and longstanding challenge in machine learning, involves predicting future observations based on historical data \cite{brockwell2002introduction,box2015time,de200625,hamilton2020time}. TSF has broad applicability across diverse fields such as electronic health records, traffic analysis, energy consumption, and financial market predictions \cite{montgomery2015introduction,masini2023machine}. In contrast, Transformers \cite{vaswani2017attention} have emerged as a cornerstone architecture in modern deep learning, achieving groundbreaking success across a wide array of sequence modeling tasks, including language modeling \cite{gpt4o,o1,llama3,qwen3,deepseek-r1}, computer vision \cite{dosovitskiy2021an,peebles2023scalable,ma2024sit}, visual-language modeling \cite{liu2023visual,jin2024unified}, 
and video modeling \cite{deng2025autoregressive,jin2024video}.

Encouraged by their remarkable performance in language modeling, substantial efforts have been dedicated to adapting Transformers and Large Language Models (LLMs) to TSF \cite{zhou2021informer,liu2022pyraformer,wu2021autoformer,zhou2022fedformer,gruver2023large,cao2024tempo,pan2024s,jin2023time,jin2024position}. Nevertheless, empirical evidence consistently reveals that Transformer-based models frequently \textit{underperform} compared to simpler, linear forecasting methods, despite their quadratic time complexity and significantly larger parameter counts \cite{zeng2023transformers,tan2024language,eldele2024tslanet,li2025does}. Such findings have prompted the development of lightweight linear models and frequency-domain approaches that typically outperform Transformers on long-horizon forecasting tasks \cite{xu2024fits,Li2023RevisitingLT,yue2025olinear,wang2023timemixer,eldele2024tslanet}. However, a comprehensive theoretical understanding of why Transformers exhibit such limitations remains scarce.

Existing theoretical studies on Transformer for TSF mainly relied on Neural Tangent Kernel analyses or generic In-Context Learning (ICL) theory, yielding theoretical bounds often disconnected from practical relevance and failing to provide clear representational insights \cite{ke2024curse,cole2025context,sander2024transformers,lipo23,wu2025transformers}. 
In sharp contrast, our work uniquely addresses the core representational limitations of Transformers through a rigorous theoretical examination grounded explicitly in classical Auto-Regressive (AR) models, fundamental frameworks dominating traditional TSF \cite{hamilton2020time}.
By adopting Linear Self-Attention (LSA), a simplified yet powerful abstraction that eliminates the Softmax function for analytical tractability \cite{acds23,mhm24,zfb24,acs+24,yang2024gated}, we uncover novel and essential constraints inherent in the attention mechanism itself.

Despite AR models' inherent linearity rendering linear methods optimal, assessing the representational gap between optimal linear models and Transformers is a highly non-trivial endeavor. Under minimal assumptions—specifically, only assuming data adheres to a stable AR($p$) process—we establish substantial theoretical results that clearly delineate the representational boundaries of Transformers. 
A related setting was studied in \citet{cole2025context}, which analyzes LSA on a one-dimensional linear dynamical system (a special case of AR(2)), while we generalize to AR($p$) under minimal stability assumptions, significantly increasing difficulty and scope. 
Our findings indicate that even optimally parameterized LSA Transformers cannot outperform classical linear predictors in terms of expected MSE. With infinitely long historical context their predictions can theoretically converge to those of linear regression if training is sufficiently good, yet even this convergence arises not from any structural advantage of LSA but from the inherent stability of time series, which collapses their representational space to that of linear models on AR processes. In contrast, for any finite context length there exists a provable strictly positive gap, which diminishes at a rate no faster than $1/n$ as the context length grows.
We further analyze how predictions evolve and how errors accumulate under iterative Chain-of-Thought (CoT) inference.

Our theoretical analyses are complemented by empirical validations, providing practical insights into Transformer architectural design and clarifying their fundamental limitations in TSF contexts. Ultimately, our work calls for a reconsideration of the suitability and effectiveness of naively applying complex Transformer-based architectures to TSF. We advocate for deeper theoretical exploration to systematically unravel the foundational differences driving Transformers' divergent performance across domains, bridging the gap between representational capability and practical efficacy.

Our primary contributions are summarized as follows:
\begin{itemize}[leftmargin=*]
\item We show that linear self-attention is essentially a restricted/compressed representation of linear regression, so it cannot outperform linear predictors (\cref{subsec:hankel}).
\item We establish a strictly positive performance gap between LSA and linear predictor, and proves that this gap vanishes at a rate no faster than $1/n$ as context length increases (\cref{subsec:gap}).
\item We characterize prediction behavior and error compounding rate in Chain-of-Thought inference, highlighting fundamental limitations of iterative Transformer predictions (\cref{subsec:cot}).
\item We empirically corroborate our theoretical findings, offering insights into architectural implications and emphasizing the inherent representational limitations of Transformers for TSF (\cref{sec:experiment}).
\end{itemize}

\subsection{Related Work}


\paragraph{Negative Results of Transformers on TSF.}

Empirical studies consistently find Transformer- and LLM-based models struggle to surpass simpler linear baselines for time series forecasting. Language modeling components add minimal value \cite{tan2024language}, and linear variants (NLinear, DLinear) outperform Transformers on long-horizon benchmarks \cite{zeng2023transformers}. Analyses further question the utility of self-attention \cite{kim2024self}, show limited gains from scaling model size \cite{li2025does}, and motivate lightweight methods such as FITS \cite{xu2024fits}, RLinear \cite{Li2023RevisitingLT}, OLinear \cite{yue2025olinear}, TimeMixer \cite{wang2023timemixer}, and CNN-based TSLANet \cite{eldele2024tslanet}. Beyond forecasting, Transformers also struggle with zero-shot temporal reasoning \cite{merrill2024language} and anomaly detection tasks \cite{zhou2025can}. 

Existing theoretical explanations remain limited. 
Kernel-based analyses attribute Transformer failures to asymmetric feature learning within a Neural Tangent Kernel regime \cite{ntk}, but their synthetic assumptions and lack of universal lower bounds restrict their practical applicability \cite{ke2024curse}. Similarly, recent In-Context Learning theory for linear dynamical systems provides a data-dependent lower bound largely reflecting intrinsic noise rather than representational shortcomings \cite{cole2025context}. 
In contrast, our work studies general AR($p$) processes, employs distinct analytic techniques, and explicitly identifies representational constraints of Transformers relative to classical linear forecasters.  
Due to the space constraint, we leave more related work to \cref{sec:more_related_work}.

\section{Preliminaries}\label{sec:preli}

\paragraph{Notations.}
We write $[n] := \{1,2,\dots,n\}$.  
For $a \in \mathbb{R}^d$, let $a = (a_1,\dots,a_d)^\top$.  
For $X = [x_1,\dots,x_n] \in \mathbb{R}^{m \times n}$, $x_i \in \mathbb{R}^m$ is the $i$-th column; $X_{i,:}$ and $X_{:,j}$ denote the $i$-th row and $j$-th column; $X_{a:b,:}$ and $X_{:,c:d}$ denote row/column submatrices.  
$\mathbf{1}_d,\mathbf{0}_d$ and $\mathbf{1}_{d\times d},\mathbf{0}_{d\times d}$ are all-ones/all-zeros vectors and matrices; $I_d$ is the $d$-identity.  
$\|\cdot\|$ denotes certain norm for a vector or matrix.  
For symmetric $A,B \in \mathbb{R}^{d \times d}$, $A \succeq B$ ($A \succ B$) iff $A-B$ is positive semidefinite (definite).
For a sequence $\{a_t\}$, $a_t \nearrow a$ means $a_t$ increases monotonically to $a$.  
For $A \in \mathbb{R}^{m \times n}$ and $B \in \mathbb{R}^{p \times q}$, the Kronecker product $A \otimes B \in \mathbb{R}^{mp \times nq}$ satisfies
$
(A \otimes B)_{(i-1)p + k,\,(j-1)q + \ell} := A_{i,j}\,B_{k,\ell} \ 
(i\in[m],\, j\in[n],\, k\in[p],\, \ell\in[q]).
$
For $X \in \mathbb{R}^{p \times p}$ symmetric, $\mathrm{vech}(X) \in \mathbb{R}^{p(p+1)/2}$ stacks the lower triangle (including diagonal); for $X \in \mathbb{R}^{m \times n}$, $\mathrm{vec}(X) \in \mathbb{R}^{mn}$ stacks columns.

\subsection{Time Series}\label{sec:preli:tsf_ar}

We begin by formally defining the notion of time series considered in this paper.

\begin{definition}[Time Series]
A \emph{time series} is a finite sequence of random variables \(\{x_t\}_{t=1}^T\), indexed by discrete time \(t \in \{1,\dots,T\}\).  
We write \(x_{1:T} := (x_1, \dots, x_T)\) for the full sequence.  
The process is called \emph{multivariate} if each \(x_t \in \mathbb{R}^d\) with \(d > 1\), and \emph{univariate} if \(d = 1\). 
\end{definition}

In this work, we primarily focus on \textit{univariate} Auto-Regressive processes, particularly the AR($p$) model, a cornerstone of classical time series analysis~\cite{hamilton2020time,box2015time}.

\begin{definition}[AR($p$) Process {\cite{hamilton2020time}}]\label{def:arp}
A real-valued stochastic process \( \{x_i\}_{i=1}^T \) follows an autoregressive model of order \( p \), denoted AR($p$), if there exist coefficients \( \rho_1, \dots, \rho_p \in \mathbb{R} \) and white noise \( \varepsilon_i \overset{\text{i.i.d.}}{\sim} \mathcal{N}(0, \sigma_{\varepsilon}^2) \) such that for all \( i > 0 \),
\begin{align*}
x_{i+1} = \sum_{j=1}^{p} \rho_j\,x_{i-j+1} + \varepsilon_{i+1},    
\end{align*}
with fixed initial values \( \{x_{-p+1}, \dots, x_0\} \).
Assuming the characteristic polynomial \( 1 - \rho_1 z - \dots - \rho_p z^p \) has all roots outside the unit circle, i.e. $|z|>1$, to ensure weak stationarity, the process satisfies:
(1) \( \mathbb{E}[x_i] = 0 \), 
(2) \( \mathbb{E}[x_i^2] = \gamma_0 \), and 
(3) \( \mathbb{E}[x_i x_{n+1}] = \gamma_{n+1 - i} \), where \( \gamma_k := \mathbb{E}[x_i x_{i+k}] \) and \( r_k := \gamma_k / \gamma_0 \).
\end{definition}

Further classical results—including the ordinary least squares (OLS) solution for AR models, as well as the formulation and properties of linear predictors—are deferred to \cref{sec:ts_fundamentals}.

\subsection{Transformer Architecture}\label{sec:preli:tf}

For theoretical tractability, we adopt the \textit{Linear Self-Attention (LSA)}, which
omits Softmax and 
has been widely used in prior theoretical works~\cite{vnr+23,acds23,zfb24,mhm24,vvsg24,gsr+24,gyw+24,sja25,zhang2025training}, with growing empirical interest~\cite{katharopoulos2020transformers,schlag2021linear,acs+24,mamba2,yang2024gated}.

\begin{definition}[Linear Self-Attention (LSA)]\label{def:lsa_main}
Let $H \in \mathbb{R}^{(d+1) \times (m+1)}$ be the input matrix and define the causal mask
\(
M := \begin{bmatrix}
    I_m & 0 \\
    0 & 0
\end{bmatrix}
\in \mathbb{R}^{(m+1) \times (m+1)}.
\)
We denote the attention weights $P, Q \in \mathbb{R}^{(d+1) \times (d+1)}$.  
Then the linear self-attention output is defined as
\[
\lsa(H) := H + \frac{1}{m} P H M(H^{\top} Q H) \in \mathbb{R}^{(d+1) \times (m+1)}.
\]
\end{definition}

Throughout this paper, we focus on LSA-only Transformers.

\begin{definition}[$L$-Layer LSA-Only Transformer]\label{def:tf_main}
Let $\lsa_1, \dots, \lsa_L$ be a sequence of $L$ linear self-attention layers as defined in \cref{def:lsa_main}. The $L$-layer Transformer is defined recursively via function composition:
\[
\tf(H) := \lsa_L \circ \lsa_{L-1} \circ \dots \circ \lsa_1(H) \in \mathbb{R}^{(d+1) \times (m+1)}.
\]
\end{definition}

\subsection{In-Context Time Series Forecasting}\label{sec:preli:icl_tsf}

Given a univariate sequence $x_{1:n}$ of AR order $p$ (\cref{def:arp}), we build a Hankel matrix $H_n \in \mathbb{R}^{(p+1)\times(n-p+1)}$ (\cref{def:hankel-matrix}) whose final column is zero-padded in its last entry as a \emph{label slot} for $x_{n+1}$.  
Setting $d=p$ and $m=n-p$, we feed $H_n$ into the $L$-layer LSA-only Transformer $\tf$ (\cref{def:tf_main}) and read the forecast directly from the label slot:
\[
\hat{x}_{n+1} := [\tf(H_n)]_{(p+1,\; n-p+1)} \in \mathbb{R}.
\]

The Hankel construction (\cref{def:hankel-matrix}) encodes the autoregressive structure in context: identifying an AR($p$) process requires at least $p$ lags, and each column of the Hankel matrix provides exactly this information. 
Unlike raw token sequences, the Hankel layout already fixes the relative order of observations, so it implicitly carries position encoding. This both respects the time-series autoregressive dependencies and avoids the need for additional positional embeddings that can make Transformers harder to train. 
The formal justification is given in \cref{sec:nested-transformer-risk}.

\begin{definition}[Hankel Matrix]\label{def:hankel-matrix}
For $(x_1,\dots,x_n)\in\mathbb{R}^n$ and $p\le n$, define
\[
H_n :=
\begin{bmatrix}
x_1     & x_2     & \cdots & x_{n-p}   & x_{n-p+1} \\
x_2     & x_3     & \cdots & x_{n-p+1} & x_{n-p+2} \\
\vdots  & \vdots  &        & \vdots    & \vdots    \\
x_p     & x_{p+1} & \cdots & x_{n-1}   & x_n       \\
x_{p+1} & x_{p+2} & \cdots & x_n       & 0
\end{bmatrix} \in \R^{(p+1)\times (n-p+1)},
\]
where each column is a sliding window of length $p{+}1$, with the last zero marking the prediction.
\end{definition}

\section{Main Results}\label{sec:main}

We organize our theoretical contributions into three parts. First, in \cref{subsec:hankel} we provide a high-level feature-space perspective: By Hankelizing the input and analyzing the induced $\sigma$-algebra, we show that one-layer linear self-attention (LSA) effectively compresses history into a restricted cubic feature class which asymptotically collapses to the last $p$ lags, thereby anticipating a structural disadvantage relative to linear regression (LR). Building on this intuition, \cref{subsec:gap} establishes our core result: for autoregressive (AR) and more generally linear stationary processes, the optimal one-layer LSA predictor suffers a \emph{strict finite-sample excess risk} over LR, quantified by a positive Schur–complement gap that vanishes only asymptotically at an explicit $1/n$ rate. While stacking additional LSA layers yields monotone improvements, LR remains the fundamental benchmark that cannot be surpassed. Finally, \cref{subsec:cot} turns to multistep forecasting: we prove that chain-of-thought (CoT) rollout, in stark contrast to its benefits in language tasks, compounds errors exponentially and collapses forecasts to the mean, with LSA uniformly dominated by LR at every horizon. 
All formal proofs are deferred to \cref{sec:expressivity_lsa,sec:gap,sec:cot,sec:nonGaussian-gap}. In contrast to prior work that mainly studies LR in ICL settings \cite{acds23,zfb24}, time series settings introduce intrinsic temporal dependencies among input variables, making the analysis substantially more complex and non-trivial. 

\subsection{Feature-Space View}\label{subsec:hankel}

\paragraph{Restricted feature class.}
We first reparameterize $P,Q$ in \cref{def:lsa_main} to $A, b$ to obtain the simplified form of the LSA prediction (\cref{lem:lsa-closed-form-hankel}).
Let $\Phi=\Phi(H_n;A,b)$ denote the one-layer LSA features induced by query–key weighting and value aggregation. The predictor admits a cubic lifting:

\begin{lemma}[Cubic lifting for one-layer LSA]\label{lem:hankel-lifting}
There exist coefficients $\{\beta_{j,r,k}\}$ such that
\[
\widehat x^{\mathrm{LSA}}_{n+1}(A,b)
\;=\;\sum_{j,r,k}\beta_{j,r,k}\,\varphi^{(p)}_{j,r,k}(x_{1:n}),
\qquad
\varphi^{(p)}_{j,r,k}\ \text{are degree-3 monomials in }\{x_t\}.
\]
Hence one-layer LSA is a linear functional over a cubic feature space $\mathcal H^{(p)}_{\mathrm{LSA}}$.
\end{lemma}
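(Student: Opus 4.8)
The plan is to start from the closed-form one-layer prediction provided by \cref{lem:lsa-closed-form-hankel} and expand it directly as a multilinear form in the entries of the Hankel matrix. Recall that the forecast is read off as $\hat{x}^{\mathrm{LSA}}_{n+1}=[\lsa(H_n)]_{p+1,\,m+1}$ with $m=n-p$, where $\lsa(H_n)=H_n+\tfrac{1}{m}PH_nM(H_n^\top Q H_n)$. The first point I would record is that the residual (identity) branch contributes $[H_n]_{p+1,m+1}=0$, since this is exactly the zero-padded label slot; hence the entire prediction comes from the attention term, and in particular it will turn out to be \emph{homogeneous} of degree three with no lower-order pieces.

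Next I would extract the $(p+1,m+1)$ entry of $\tfrac{1}{m}PH_nM(H_n^\top Q H_n)$ explicitly. Writing $w^\top$ for the last row of $P$ (equivalently, its reparameterized counterpart $A,b$ from \cref{lem:lsa-closed-form-hankel}) and using that the causal mask $M$ is diagonal with a single zero in its last slot, the entry collapses to
\[
\hat{x}^{\mathrm{LSA}}_{n+1}
= \frac{1}{m}\sum_{b=1}^{m}\bigl(w^\top [H_n]_{:,b}\bigr)\bigl([H_n]_{:,b}^\top Q\,[H_n]_{:,m+1}\bigr).
\]
The mask is precisely what restricts the aggregation index $b$ to $[m]$, and the last column $[H_n]_{:,m+1}$ plays the role of the query. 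This expression is manifestly trilinear in the entries of $H_n$: one factor comes from the value readout $w^\top[H_n]_{:,b}$, and two come from the quadratic score $[H_n]_{:,b}^\top Q\,[H_n]_{:,m+1}$.

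The final step substitutes the Hankel entries $[H_n]_{i,j}=x_{i+j-1}$ (with $[H_n]_{p+1,m+1}=0$) into the trilinear form. Every product of three entries becomes a degree-three monomial $x_{t_1}x_{t_2}x_{t_3}$ in the observations, so after distributing the sums over the matrix indices and the aggregation index $b$ and grouping equal monomials, one obtains $\hat{x}^{\mathrm{LSA}}_{n+1}=\sum_{j,r,k}\beta_{j,r,k}\,\varphi^{(p)}_{j,r,k}(x_{1:n})$, where each $\varphi^{(p)}_{j,r,k}$ is a fixed cubic monomial and each coefficient $\beta_{j,r,k}$ is a polynomial in the parameters (linear in $w$/$A$ and linear in $Q$) that does \emph{not} depend on the data. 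This data/parameter separation is exactly what realizes $\hat{x}^{\mathrm{LSA}}_{n+1}$ as a linear functional over a fixed cubic feature space $\mathcal{H}^{(p)}_{\mathrm{LSA}}$, proving the claim.

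The computation itself is routine; the only part requiring care is the bookkeeping. I must track the causal mask and the zero label slot so that the aggregation runs over $[m]$ and the last coordinate of the query column is correctly annihilated, and I must keep the parameter-dependent coefficients cleanly separated from the data-dependent monomials. Both follow from multilinearity of the matrix product in the entries of $H_n$, so there is no genuine analytic obstacle. The substantive content — characterizing \emph{which} monomials carry nonzero coefficients, and hence the precise structure and eventual asymptotic collapse of $\mathcal{H}^{(p)}_{\mathrm{LSA}}$ — is what the subsequent results exploit, but the present lemma only asserts existence of the cubic representation, which the expansion above delivers directly.
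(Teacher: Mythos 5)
Your proof is correct and takes essentially the same route as the paper's: extract the $(p{+}1,\,n{-}p{+}1)$ entry of the masked attention update (the paper does this by citing \cref{lem:lsa-closed-form-hankel}, yielding $b^\top G_n A\, x_{n-p+1:n}$), then expand the resulting trilinear form in the Hankel entries into degree-3 monomials $\varphi^{(p)}_{j,r,k}$ with data-independent coefficients $\beta_{j,r,k}=b_j A_{r,k}$. The only (harmless) difference is that you perform the entry extraction for arbitrary $P,Q$ rather than invoking the paper's structured parameterization, which makes explicit that only the last row of $P$ and the first $p$ columns of $Q$ ever enter the readout.
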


\noindent\emph{Proof deferred to \cref{sec:expressivity_lsa}.}
\cref{lem:hankel-lifting} shows that the LSA readout lives in a \emph{cubic} feature space of the raw inputs.
By the $L_2$-projection property of conditional expectation (orthogonality onto sub-$\sigma$-algebras), we obtain \cref{prop:monotonicity}: any predictor operating on $\Phi$ cannot achieve lower MSE than the optimal predictor operating on the full context $x_{1:n}$. In short, the attention-derived representation is \emph{informationally coarser} than the raw context; adding architectural complexity does not reveal additional predictive signal beyond the last $p$ lags, but only reweights existing information. 

\begin{proposition}[Information monotonicity]\label{prop:monotonicity}
$\sigma(\Phi)\subseteq\sigma(x_{1:n})$. Hence, by conditional orthogonality/Jensen,
\[
\inf_{f}\ \mathbb E\left[(x_{n+1}-f(x_{1:n}))^2\right]
\ \le\
\inf_{g}\ \mathbb E\left[(x_{n+1}-g(\Phi))^2\right].
\]
\end{proposition}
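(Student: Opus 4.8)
The plan is to split the statement into two independent pieces: a deterministic, information-theoretic inclusion $\sigma(\Phi)\subseteq\sigma(x_{1:n})$, and an $L^2$-projection inequality that the inclusion feeds into. The inclusion carries all the structural content; once it is in hand, the displayed MSE inequality becomes the textbook fact that a best mean-square predictor built from a coarser $\sigma$-algebra cannot beat one built from a finer one. I would therefore derive the inequality from the Hilbert-space projection characterization of conditional expectation rather than manipulating the infima directly.

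First I would establish the inclusion. By construction $H_n$ is a fixed polynomial (hence Borel-measurable) map of the random vector $x_{1:n}$, and \cref{lem:hankel-lifting} shows that each coordinate of the one-layer LSA feature $\Phi=\Phi(H_n;A,b)$ is a degree-$3$ monomial in the entries $\{x_t\}$. Composition of Borel maps is Borel, so $\Phi=h(x_{1:n})$ for some Borel $h$. The standard fact that the $\sigma$-algebra generated by a measurable transformation is contained in that of its argument then gives $\sigma(\Phi)=\{h(x_{1:n})^{-1}(B):B\text{ Borel}\}=\{x_{1:n}^{-1}(h^{-1}(B))\}\subseteq\sigma(x_{1:n})$, since $h^{-1}(B)$ is Borel. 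This step is purely deterministic bookkeeping and does not invoke stationarity.

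Next I would convert both infima into projection errors and conclude. Weak stationarity gives $\mathbb E[x_{n+1}^2]=\gamma_0<\infty$, so $x_{n+1}\in L^2$. By the Doob--Dynkin lemma the admissible predictors $\{g(\Phi):g\text{ Borel}\}\cap L^2$ are exactly $L^2(\Omega,\sigma(\Phi),\mathbb P)$, and likewise for the $f(x_{1:n})$; hence each infimum equals the squared $L^2$-distance from $x_{n+1}$ to the corresponding closed subspace, attained at the conditional expectation. Writing $\mu_n:=\mathbb E[x_{n+1}\mid\sigma(x_{1:n})]$ and $\nu_n:=\mathbb E[x_{n+1}\mid\sigma(\Phi)]$, the inclusion yields the tower identity $\nu_n=\mathbb E[\mu_n\mid\sigma(\Phi)]$, and orthogonality of the two residuals gives the Pythagorean decomposition
\[
\mathbb E\big[(x_{n+1}-\nu_n)^2\big]=\mathbb E\big[(x_{n+1}-\mu_n)^2\big]+\mathbb E\big[(\mu_n-\nu_n)^2\big]\ \ge\ \mathbb E\big[(x_{n+1}-\mu_n)^2\big],
\]
which is exactly the claimed inequality, since the left and right sides are respectively $\inf_g$ and $\inf_f$. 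The only genuine care required---the ``hard part,'' such as it is---is the measurability and integrability bookkeeping that legitimizes passing from abstract $\sigma$-algebras to the pointwise-defined predictors $f,g$: namely Doob--Dynkin, and the finiteness of the cubic features' second moments, which holds because the Gaussian-driven AR($p$) process has all moments finite. The direction of the inequality is then forced purely by the nesting, with no further estimation.
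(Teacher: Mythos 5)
Your proof is correct and follows essentially the same route as the paper's: establish $\sigma(\Phi)\subseteq\sigma(x_{1:n})$ via Doob--Dynkin (since $\Phi$ is a deterministic Borel function of $x_{1:n}$), then invoke the $L^2$-projection characterization of conditional expectation on nested $\sigma$-algebras. The only cosmetic difference is that you prove the projection-monotonicity step inline via the tower property and the Pythagorean decomposition, whereas the paper cites it as an abstract Hilbert-space lemma (\cref{thm:projection-monotonicity}); these are the same argument.
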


\noindent\emph{Proof deferred to \cref{sec:expressivity_lsa}.}
Furthermore, \cref{prop:collapse-features} formalizes the asymptotic picture: as the number of observed contexts $n\to\infty$, the $(p{+}1)$-row Hankel design ensures access to exactly the $p$ relevant lags; by ergodicity, empirical Hankel Gram blocks converge to their Toeplitz limits, cross-row correlations stabilize, and the cubic coordinates concentrate onto the last-$p$-lag subspace. 
Consequently, one-layer LSA \emph{cannot outperform} LR and can at best \emph{match it asymptotically} (\cref{prop:achieve}). 
See \cref{sec:expressivity_lsa} for an optimal LSA parameter choice achieving this limit.

\begin{proposition}[Asymptotic collapse of LSA features]\label{prop:collapse-features}
As $n\to\infty$ for a stable AR($p$), the coordinates $\varphi^{(p)}_{j,r,k}(x_{1:n})$ converge in $L_2$ to scaled copies of $\{x_{n-p+1},\dots,x_n\}$. Thus the optimal one-layer LSA readout asymptotically reduces to a linear function of the last $p$ lags.
\end{proposition}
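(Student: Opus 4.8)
The plan is to make the collapse mechanism fully explicit by expanding the one-layer LSA readout through the cubic lifting of \cref{lem:hankel-lifting}, and then separating, in each cubic coordinate, the part that is \emph{averaged} over the context from the part that is \emph{frozen} at the prediction boundary. Writing $[\lsa(H_n)]_{(p+1,\,n-p+1)} = \tfrac{1}{m}[P H_n M (H_n^\top Q H_n)]_{(p+1,\,n-p+1)}$ and substituting the Hankel entries $(H_n)_{a,c}=x_{a+c-1}$ together with the last column $h_{\mathrm{last}}=(x_{n-p+1},\dots,x_n,0)^\top$, the readout reorganizes as
\[
\widehat x^{\mathrm{LSA}}_{n+1}
= \sum_{s=1}^{p} x_{n-p+s}\,\Bigl[\sum_{a,r} P_{(p+1),a}\,Q_{r,s}\,\underbrace{\tfrac{1}{m}\sum_{c=1}^{m} x_{a+c-1}\,x_{c+r-1}}_{=:\ \widehat\gamma^{(m)}_{a-r}}\Bigr].
\]
The key structural observation is that the causal mask $M$ restricts the value aggregation to the first $m=n-p$ sliding windows, so after carrying out the sum over the window index $c$ each feature coordinate factors as a \emph{boundary lag} $x_{n-p+s}$ (which does not move with $c$) times an \emph{empirical lag-$(a-r)$ autocovariance} $\widehat\gamma^{(m)}_{a-r}$. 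Hence, to prove the claimed $L_2$ collapse it suffices to show $\widehat\gamma^{(m)}_{a-r}\to\gamma_{a-r}$ in $L_2$ for each fixed shift $a-r$, since then the grouped coordinate $\widehat\gamma^{(m)}_{a-r}\,x_{n-p+s}\to\gamma_{a-r}\,x_{n-p+s}$ in $L_2$, a scaled copy of the last-$p$ lag $x_{n-p+s}$.

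The heart of the argument is an $L_2$ law of large numbers for the sample autocovariances of a stable AR($p$). First, weak stationarity gives $\E[\widehat\gamma^{(m)}_{a-r}]=\gamma_{a-r}$ exactly, since every summand $x_{a+c-1}x_{c+r-1}$ has constant index gap $a-r$. It then remains to bound the variance: I would expand $\var(\widehat\gamma^{(m)}_{a-r}) = m^{-2}\sum_{c,c'}\Cov(x_{a+c-1}x_{c+r-1},\,x_{a+c'-1}x_{c'+r-1})$ and use the joint Gaussianity of the stationary AR process (Isserlis/Wick) to write each fourth-order covariance as a sum of products of two autocovariances whose lags are governed by $\tau=c-c'$ and by $a-r$. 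Invoking the geometric decay $|\gamma_k|\le C\lambda^{|k|}$ with $\lambda\in(0,1)$ — a classical consequence of the roots-outside-the-unit-circle condition in \cref{def:arp} via the causal MA($\infty$) representation $x_t=\sum_{j\ge0}\psi_j\varepsilon_{t-j}$ with geometrically decaying $\psi_j$ — the sum over $\tau$ is absolutely summable, so $\var(\widehat\gamma^{(m)}_{a-r}) = m^{-2}\sum_\tau (m-|\tau|)\,\Cov(\tau) = O(1/m)\to0$. Combining the exact mean with the vanishing variance yields the desired $L_2$ convergence.

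Finally I would assemble the pieces: there are only finitely many index triples $(a,r,s)$ (all confined to $[p+1]$), so $L_2$ convergence of each grouped coordinate propagates through the fixed finite linear combination, giving $\widehat x^{\mathrm{LSA}}_{n+1}\to\sum_{s=1}^p\bigl(\sum_{a,r}P_{(p+1),a}Q_{r,s}\,\gamma_{a-r}\bigr)x_{n-p+s}$ in $L_2$, a linear function of the last $p$ lags; this holds for any fixed weights and in particular for the optimal ones (cf. \cref{prop:achieve}). I expect the main obstacle to be the variance bound for $\widehat\gamma^{(m)}_{a-r}$: the clean Isserlis decomposition relies on Gaussianity, and one must also confirm that the population covariance sequence lies in $\ell^2$ (guaranteed here by geometric decay) so that the lag sum converges. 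Establishing the analogous law of large numbers without Gaussianity would instead require a mixing or summable-cumulant argument, which is precisely why the non-Gaussian case is handled separately in \cref{sec:nonGaussian-gap}.
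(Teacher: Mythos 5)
Your decomposition is the same one the paper uses: both arguments factor each cubic coordinate $\varphi^{(p)}_{j,r,k}$ into the frozen boundary lag $x_{n-p+k}$ times an empirical lag autocovariance $\frac{1}{m}\sum_{c} x_{c+j-1}x_{c+r-1}$, reducing everything to a law of large numbers for sample autocovariances. Where you genuinely differ is the convergence tool. The paper's proof (\cref{lem:lsa-collapse}) is a one-liner: it invokes Birkhoff's ergodic theorem (\cref{thm:birkhoff}) to get \emph{almost sure} convergence of the Gram entries to $\gamma_{|j-r|}$ and then factors out $x_{n-p+k}$. That route needs no Gaussianity and essentially no moment assumptions, but it delivers a.s.\ convergence, which is not literally the $L_2$ mode asserted in the proposition (the paper supplies no uniform-integrability step to upgrade it). Your route---exact mean by stationarity plus an Isserlis/geometric-decay variance bound giving $\var(\widehat\gamma^{(m)}_{a-r})=O(1/m)$---attacks the $L_2$ claim directly and yields an explicit rate that dovetails with the $1/n$ expansion in \cref{sec:gap-rate}; the price is that it leans on Gaussianity (or, as you note, summable cumulants in the general case).

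There is, however, one asserted step that does not follow as written: from $\widehat\gamma^{(m)}_{a-r}\to\gamma_{a-r}$ in $L_2$ you conclude $\widehat\gamma^{(m)}_{a-r}\,x_{n-p+s}\to\gamma_{a-r}\,x_{n-p+s}$ in $L_2$. Multiplication by a fixed random variable does not preserve $L_2$ convergence in general: $x_{n-p+s}$ is unbounded and correlated with $\widehat\gamma^{(m)}_{a-r}$ (the most recent windows in the Gram sum contain $x_{n-p+s}$ itself), so $\E\big[(\widehat\gamma^{(m)}_{a-r}-\gamma_{a-r})^2\,x_{n-p+s}^2\big]$ is not controlled by $\E\big[(\widehat\gamma^{(m)}_{a-r}-\gamma_{a-r})^2\big]$ alone. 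The patch is short with the tools you already invoke: either (i) apply Cauchy--Schwarz, $\E[(\widehat\gamma-\gamma)^2x^2]\le\|\widehat\gamma-\gamma\|_{L_4}^2\,\|x\|_{L_4}^2$, and note that $\widehat\gamma^{(m)}_{a-r}-\gamma_{a-r}$ is a centered quadratic form in jointly Gaussian variables (an element of the second Wiener chaos), so hypercontractivity gives $\|\widehat\gamma-\gamma\|_{L_4}\le C\,\|\widehat\gamma-\gamma\|_{L_2}=O(m^{-1/2})$ with an absolute constant $C$; or (ii) expand $\E[(\widehat\gamma-\gamma)^2 x_{n-p+s}^2]$ directly as a sixth-order Gaussian moment via Isserlis, exactly the kind of computation the paper performs for its $\tilde S_n$ expansion. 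With that one step repaired, your argument is complete: the exact mean, the $O(1/m)$ variance bound, the geometric decay of $\gamma_k$ for a stable AR($p$), and the assembly over the finitely many index triples are all sound.
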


\begin{takeaway}{Attention is Not All You Need for TSF}
\textbf{Feature-Space.} LSA operates on a strictly coarser $\sigma$-algebra than the raw context; it can at best reweight the last-$p$ lags and cannot unlock signal beyond them. For time series with pronounced linear structure (e.g., AR/ARMA), this is a \emph{fundamental representational limitation}. This aligns with empirical findings that simple linear models often outperform Transformers in TSF~\cite{zeng2023transformers,kim2024self,tan2024language,liang2025attention}.

\end{takeaway}

\subsection{A Strict Finite-Sample Gap (Core Result)}\label{subsec:gap}

In this section we rigorously establish and quantitatively characterize the \emph{fundamental representational limitation} of one-layer LSA. 
Our main technical device is a \emph{Kronecker-product lifting} of the Hankel-derived features: the one-layer LSA readout can be written as
\[
\widehat x^{\mathrm{LSA}}_{n+1}=\tilde\eta^{\top}Z,
\qquad
Z:=\big(\vech G\big)\otimes x,\ \ x:=x_{n-p+1:n}\in\mathbb R^{p},
\]
where $G=G(x_{1:n})$ is the Hankel Gram matrix, $\vech(\cdot)$ denotes half-vectorization, and $\tilde\eta$ is an affine reparameterization of $(A,b)$ (see \cref{sec:lsa-gap}). 
This lifts the \textit{cubic} dependence of LSA into an ordinary \textit{linear regression} in the lifted space. 
Let $Z\in\mathbb R^{qp}$ with $q:=\dim(\vech G)$. 
We further define $\tilde S:=\mathbb E[ZZ^{\top}],\ \tilde r:=\mathbb E[Zx^{\top}],\ \Gamma_p:=\mathbb E[xx^{\top}]$, and the induced Schur complement is $\Delta_n:=\Gamma_p-\tilde r^{\top}\tilde S^{-1}\tilde r$. Intuitively, $\Delta_n$ captures the component of the linear signal in the last $p$ lags that remains \emph{orthogonal} to the span of lifted LSA features—namely, the exact \emph{representation gap} characterized in \cref{thm:lsa-gap-ar}. Moreover, we prove in \cref{thm:lsa-gap-ar} that $\Delta_n \succ 0$ for any finite $n$, establishing a strict finite-sample gap. We then derive an explicit first-order expansion $\Delta_n=\frac{1}{n}B_p+o(1/n)$ under Gaussianity in \cref{thm:rate-one-over-n}, and show in \cref{thm:lsa-gap-nonG,lem:nonG-expansions,thm:one-over-n-nonG} that both the strictness and the $1/n$ rate persist for general linear stationary processes, with the leading constant adjusted by cumulant spectra (\cref{sec:nonGaussian-gap}).
  
\begin{theorem}[Strict finite-sample gap: AR($p$)]\label{thm:lsa-gap-ar}
For any $n\ge p$ and stable AR($p$),
\[
\min_{A,b}\ \mathbb E\left[(\widehat x^{\mathrm{LSA}}_{n+1}-x_{n+1})^2\right]
\ \ge\
\min_{w}\ \mathbb E\left[(w^{\top}x_{n-p+1:n}-x_{n+1})^2\right]
\;+\;\rho^{\top}\Delta_n\rho,
\qquad \Delta_n\succ0.
\]
\end{theorem}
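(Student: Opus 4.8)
The plan is to recast the optimal one-layer LSA predictor as an \emph{ordinary} linear regression in the Kronecker-lifted feature $Z=(\vech G)\otimes x$, compare it against the AR($p$) linear predictor by peeling off the innovation, and then reduce the claimed gap to the Schur complement $\Delta_n$; the strict positivity $\Delta_n\succ0$ is the only delicate point and I would handle it by a Wiener--chaos orthogonality argument. Concretely, by the lifting recalled before the statement (see \cref{sec:lsa-gap}) we have $\widehat x^{\mathrm{LSA}}_{n+1}=\tilde\eta^\top Z$ with $\tilde\eta$ an affine image of $(A,b)$. Since this map need not be surjective onto $\R^{qp}$, minimizing over genuine attention weights is a restriction of the fully relaxed lifted regression, so
\[
\min_{A,b}\mathbb{E}\!\left[(\widehat x^{\mathrm{LSA}}_{n+1}-x_{n+1})^2\right]\ \ge\ \min_{\tilde\eta\in\R^{qp}}\mathbb{E}\!\left[(\tilde\eta^\top Z-x_{n+1})^2\right],
\]
and it suffices to evaluate the right-hand side exactly.

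To evaluate it, I would use the AR structure $x_{n+1}=\rho^\top x+\varepsilon_{n+1}$ with $x=x_{n-p+1:n}$. Because $\varepsilon_{n+1}$ is independent of $x_{1:n}$ and $Z$ is a deterministic function of $x_{1:n}$, the innovation is mean-zero and orthogonal to both $\tilde\eta^\top Z$ and $\rho^\top x$, so the cross term vanishes and
\[
\mathbb{E}\!\left[(\tilde\eta^\top Z-x_{n+1})^2\right]=\sigma_\varepsilon^2+\mathbb{E}\!\left[(\tilde\eta^\top Z-\rho^\top x)^2\right].
\]
The identical decomposition on the linear side gives $\min_w\mathbb{E}[(w^\top x-x_{n+1})^2]=\sigma_\varepsilon^2$, attained at $w=\rho$ (unique since $\Gamma_p\succ0$). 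Thus the asserted gap collapses to the single identity $\min_{\tilde\eta}\mathbb{E}[(\tilde\eta^\top Z-\rho^\top x)^2]=\rho^\top\Delta_n\rho$, which is precisely the residual of regressing the scalar $\rho^\top x$ onto $Z$: the normal equations (with $\mathbb{E}[xZ^\top]=\tilde r^\top$, and $\tilde S^{-1}$ replaced by a pseudoinverse if $\tilde S$ is singular) yield $\rho^\top\Gamma_p\rho-\rho^\top\tilde r^\top\tilde S^{-1}\tilde r\rho=\rho^\top\Delta_n\rho$. Combining the three displays gives the inequality with the stated constant, and equality would hold exactly when the attention parameterization is rich enough to realize the lifted minimizer.

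The remaining, and genuinely hard, step is strictness $\Delta_n\succ0$. Positive semidefiniteness is free, since $\Delta_n$ is the Schur complement of the PSD covariance of $(x,Z)$; the content is that for every $v\neq0$ the linear form $v^\top x$ is \emph{not} perfectly linearly predictable from the cubic features $Z$. Here I would invoke the Wiener--chaos (Hermite) decomposition under Gaussianity: each coordinate $Z_{k,j}=\widehat\gamma_k\,x_j$ is a product of a (centered) empirical autocovariance and a lag, hence lies in $\mathcal H_1\oplus\mathcal H_3$; writing $Z=Z^{(1)}+Z^{(3)}$ and using that $v^\top x\in\mathcal H_1$ together with orthogonality of the chaoses,
\[
\min_{\tilde\eta}\mathbb{E}\!\left[(v^\top x-\tilde\eta^\top Z)^2\right]=\min_{\tilde\eta}\big\{\mathbb{E}[(v^\top x-\tilde\eta^\top Z^{(1)})^2]+\mathbb{E}[(\tilde\eta^\top Z^{(3)})^2]\big\}.
\]
Vanishing would force a single $\tilde\eta$ to satisfy $\tilde\eta^\top Z^{(1)}=v^\top x$ and $\tilde\eta^\top Z^{(3)}=0$ simultaneously. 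The crux is to rule this out by showing the third-chaos block $S^{(3)}:=\mathbb{E}[Z^{(3)}Z^{(3)\top}]$ is nondegenerate on the relevant directions---equivalently, that the stochastic fluctuations $\widehat\gamma_k-\gamma_k$ at finite $n$ prevent any nontrivial combination of the $(\widehat\gamma_k-\gamma_k)x_j$ third-chaos terms from cancelling while leaving a nonzero first-chaos target, forcing $v=0$. I expect this fluctuation-based nondegeneracy to be the main obstacle; it is precisely the quantity that decays as $n\to\infty$, which is why the strict gap shrinks and links to the collapse of \cref{prop:collapse-features} and the $1/n$ rate of \cref{thm:rate-one-over-n}. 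Everything outside this step is routine orthogonality and Schur-complement bookkeeping.
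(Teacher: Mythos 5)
Your reduction coincides with the paper's own: the Kronecker lifting $\widehat x^{\mathrm{LSA}}_{n+1}=\tilde\eta^{\top}Z$, the relaxation $\min_{A,b}\ge\min_{\tilde\eta}$ over the rank-one Kronecker set, the innovation split $\mathbb{E}[(\tilde\eta^{\top}Z-x_{n+1})^2]=\sigma_\varepsilon^2+\mathbb{E}[(\tilde\eta^{\top}Z-\rho^{\top}x)^2]$ (valid because $\varepsilon_{n+1}$ is independent of $x_{1:n}$, hence of $Z$), the identification $\min_{w}\mathbb{E}[(w^{\top}x-x_{n+1})^2]=\sigma_\varepsilon^2$, and the normal-equation evaluation of the residual as $\rho^{\top}\Delta_n\rho$ are all exactly the steps carried out in \cref{sec:lsa-gap}. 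Those parts of your argument are correct.

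The genuine gap is precisely where you flag it: strictness of $\Delta_n\succ0$, which is the entire content of the theorem (positive semidefiniteness is free, and without strictness the statement degenerates to the trivial bound of \cref{thm:tf-lower-bound}). The paper proves it in two lemmas: \cref{lem:vechG-1-x-pd} shows $\Cov([(\vech G)^{\top},x^{\top}]^{\top})\succ0$ by an innovation-by-innovation elimination --- condition on $\mathcal F_{n-1}$, write $x_n=\rho^{\top}x_{n-p:n-1}+\varepsilon_n$, use the fresh randomness of $\varepsilon_n$ and $\varepsilon_n^2$ to force all coefficients touching $x_n$ to vanish (invoking \cref{lem:lin-comb-consecutive}), and recurse upward --- and \cref{lem:lift-pd} lifts this to the Kronecker level by conditioning on $x$ and a Schur-complement step. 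Your chaos-decomposition alternative correctly reduces strictness to the claim that no $\tilde\eta$ can simultaneously satisfy $\tilde\eta^{\top}Z^{(1)}=v^{\top}x$ and $\tilde\eta^{\top}Z^{(3)}=0$ unless $v=0$, for which it would suffice that $S^{(3)}=\mathbb{E}[Z^{(3)}Z^{(3)\top}]$ is nonsingular; but you never prove this, and you explicitly label it ``the main obstacle'' that you ``expect'' to hold. Note also that the same unproven claim is what you would need to justify $\tilde S\succ0$, without which $\Delta_n=\Gamma_p-\tilde r^{\top}\tilde S^{-1}\tilde r$ is not even well defined with a genuine inverse. For what it is worth, your route looks completable: Wick monomials $:x_ix_jx_k:$ with distinct index multisets are linearly independent (since the Gaussian vector $x_{1:n}$ has nondegenerate covariance), and a leading-monomial extraction --- e.g., $:x_1x_{1+d}\,x_{n-p+j}:$ receives a contribution from exactly one coordinate of $Z^{(3)}$, so its coefficient must vanish, then induct on the smallest time index --- would yield $S^{(3)}\succ0$; this is the chaos-space analogue of the paper's innovation elimination. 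But some such argument must actually be carried out, and as written your proposal stops short of the one step that makes the theorem true.
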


\noindent\textbf{What this says.}
Even after optimizing over all one-layer LSA parameters, the \emph{best-in-class} LSA risk is strictly larger than the \emph{best-in-class} linear risk by the explicit quadratic form $\rho^\top\Delta_n\rho$; the gap is \emph{structural} (positive definite), not an estimation or optimization artifact. 

\begin{proof}[Proof Sketch]
(i) Since the lifted loss is a strictly convex quadratic in $\tilde\eta$, it has the unique minimizer $\tilde\eta^*=\tilde S^{-1}\tilde r,\rho$, yielding the class optimum $\min_{\tilde\eta}\mathcal L(\tilde\eta)=\sigma_\varepsilon^2+\rho^\top\Delta_n\rho$.
(ii) Under AR($p$), the optimal linear predictor attains the Bayes one-step risk $\sigma_\varepsilon^2$, so the excess risk equals $\rho^\top\Delta_n\rho$.
(iii) Strictness follows from block positive definiteness of the joint covariance
\begin{align*}
\mathbb E\left[
\begin{pmatrix} Z\\ x\end{pmatrix}
\begin{pmatrix} Z\\ x\end{pmatrix}^{\top}\right]
=
\begin{pmatrix}
\tilde S & \tilde r\\[2pt]
\tilde r^{\top} & \Gamma_p
\end{pmatrix}\succ0.   
\end{align*}

Specifically, we reduce the claim to showing that the (non-lifted) joint covariance of $(\vech G,x)$ is positive definite. We prove this via an innovation-based elimination from the newest to older indices, establishing that no nontrivial linear combination can have zero variance. Using the Schur-complement property then yields $\Delta_n\succ0$.
All blocks admit closed forms as Hankel–Toeplitz moments up to orders~4 and~6 (from earlier definitions of $\tilde S, \tilde r$ in this section). 
Hence, $\Delta_n$ is \emph{computable} from process moments; in the Gaussian case these moments follow from Isserlis’ theorem. A warm-up AR(1) calculation is given in \cref{sec:ar1-gap}. 
See Appendix~\ref{sec:lsa-gap} for the complete proof.
\end{proof}

\paragraph{First-order gap rate under Gaussianity.}
We now quantify the finite-$n$ excess risk in \cref{thm:lsa-gap-ar}. For zero-mean stationary Gaussian AR($p$), we expand the lifted moments around their population (rank-one) limit and evaluate the Schur complement via a singular block inverse.

\begin{theorem}[Explicit $1/n$ rate: Gaussian]\label{thm:rate-one-over-n}
For Gaussian AR($p$),
\[
\Delta_n=\Gamma_p-\tilde r_n^\top \tilde S_n^{-1}\tilde r_n
=\frac{1}{n}B_p+o(1/n),\qquad B_p\succeq0\ \text{(generically }B_p\succ0\text{)}.
\]
Consequently, for any fixed $\|\rho\|\ge r>0$ there exists $c_r>0$ such that
\[
\min_{A,b}\mathbb E[(\widehat x^{\mathrm{LSA}}_{n+1}-x_{n+1})^2]
\ \ge\
\min_{w}\mathbb E[(w^{\top}x_{n-p+1:n}-x_{n+1})^2]\;+\;\frac{c_r}{n}.
\]
\end{theorem}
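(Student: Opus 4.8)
The plan is to turn the Schur complement into a linear‑prediction error that I can expand through a singular, rank‑one perturbation of the lifted moments. I would start from the closed form already established in \cref{thm:lsa-gap-ar}: the best‑in‑class LSA excess risk equals $\rho^\top\Delta_n\rho$, where $\Delta_n=\Gamma_p-\tilde r_n^\top\tilde S_n^{-1}\tilde r_n$ is the residual covariance of the best linear predictor of $x:=x_{n-p+1:n}$ from $Z=(\vech G)\otimes x$. Since every linear functional of $Z$ is a bilinear form $v^\top\tilde A\,x$ in $v:=\vech G$ and $x$ (with $\tilde A\in\R^{q\times p}$, $q=\dim(\vech G)$), and since $\E[Z]=0$ for the zero‑mean process (so no intercept is needed), this yields the variational identity
\[
\rho^\top\Delta_n\rho=\min_{\tilde A\in\R^{q\times p}}\ \E\big[((\rho-\tilde A^\top v)^\top x)^2\big],
\]
which is exactly the quantity I would expand in $1/n$.

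\textbf{Moment estimates driving the rate.} Next I would split $v=g+u$, where $g:=\vech G_\infty$ is the deterministic ergodic (Toeplitz) limit of $\vech G$ and $u$ is the centered fluctuation. Under stability the autocovariances decay exponentially, so Isserlis' theorem gives the bounds that control the expansion: $\E[u_ix_j]=0$ (an odd Gaussian moment), $\E[uu^\top]=\frac1n C+o(1/n)$, and crucially $\E[u_ix_jx_k]=O(1/n)$ — a boundary‑localized product $x_jx_k$ has only $O(1)$ total covariance with the $\Theta(n)$‑term Gram average, which the $1/n$ prefactor suppresses. Assembling these, the key object is $K:=\E[(x\otimes u)(x\otimes u)^\top]=\frac1n\bar K+o(1/n)$ with a fixed $\bar K\succeq0$. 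The nonzero limit $g$ makes $\tilde S_\infty=(gg^\top)\otimes\Gamma_p$ rank‑deficient (rank $p$), which is precisely the singular block the inverse must navigate.

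\textbf{The $1/n$ expansion.} I would then carry out the minimization to leading order. Writing $w_0:=\tilde A^\top g-\rho$ (deterministic) and $\delta w:=-\tilde A^\top u$,
\[
\E\big[((\rho-\tilde A^\top v)^\top x)^2\big]=w_0^\top\Gamma_p w_0+\vect(\tilde A)^\top K\,\vect(\tilde A)+2\,(\mathrm{cross}),
\]
where the cross term is $O(\|w_0\|/n)$ by the third‑moment bound. Because $\Gamma_p\succ0$, any $\|w_0\|=\Theta(1)$ forces $\Theta(1)$ risk, so the optimal $\tilde A$ satisfies $w_0=O(1/n)$ and relaxing $w_0\neq0$ perturbs the objective only at $O(1/n^2)$. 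Hence to leading order the problem collapses to the constrained quadratic minimum
\[
\rho^\top\Delta_n\rho=\tfrac1n\,\mu(\rho)+o(1/n),\qquad \mu(\rho):=\min\{\vect(\tilde A)^\top\bar K\,\vect(\tilde A):\ \tilde A^\top g=\rho\},
\]
and $B_p$ is the PSD matrix representing the form $\rho\mapsto\mu(\rho)$ (a generalized Schur complement of $\bar K$ against the constraint). That $B_p\succeq0$ is immediate since $\Delta_n\succeq0$ for every $n$; the equivalent block‑inverse form $B_p=\lim_n n\Delta_n$ follows from the singular‑block‑inverse bookkeeping sketched after \cref{thm:lsa-gap-ar}, in which the nondegenerate range block contributes a term converging to $\Gamma_p$ that exactly cancels $\Gamma_p$.

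\textbf{Strictness, the consequence, and the main obstacle.} For the stated consequence I would establish generic strictness $B_p\succ0$: $\mu(\rho)>0$ can fail only if some nonzero $\tilde A$ satisfies both $\bar K\,\vect(\tilde A)=0$ and $\tilde A^\top g=\rho\neq0$, a non‑degeneracy of the limiting Gram‑fluctuation covariance $\bar K$ that holds off an algebraic subvariety of AR coefficients; the explicit AR(1) computation in \cref{sec:ar1-gap} confirms $B_1>0$. Granting $B_p\succ0$, I would combine it with the finite‑$n$ strict positivity $\Delta_n\succ0$ from \cref{thm:lsa-gap-ar}: the sequence $n\lambda_{\min}(\Delta_n)$ is positive for every $n$ and converges to $\lambda_{\min}(B_p)>0$, so $\kappa:=\inf_n n\lambda_{\min}(\Delta_n)>0$, whence for $\|\rho\|\ge r$ we get $\rho^\top\Delta_n\rho\ge\lambda_{\min}(\Delta_n)\|\rho\|^2\ge(\kappa r^2)/n=:c_r/n$. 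I expect the main obstacle to be this sharp \emph{lower} bound — proving $\bar K$ is non‑degenerate on the constraint manifold so that $B_p\succ0$ rather than merely $\succeq0$ — since it must rule out the many cubic coordinates of $Z$ conspiring to cancel the fluctuation and recover $x$ to higher order. Establishing it rests on a central limit theorem for quadratic functionals of the stationary Gaussian process (Isserlis plus summable autocovariances) to pin down $\bar K$, whereas the matching $O(1/n)$ \emph{upper} bound is comparatively routine via the explicit suboptimal predictor $\hat x_0=(g^\top v/\|g\|^2)\,x$.
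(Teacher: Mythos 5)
Your proposal is correct in substance and reaches the same leading constant as the paper, but by a genuinely different route. The paper's proof (\cref{sec:gap-rate}) is primal and algebraic: it computes explicit first-order expansions $\tilde S_n=(uu^\top)\otimes\Gamma_p+\tfrac1n C_S+o(1/n)$ and $\tilde r_n=u\otimes\Gamma_p+\tfrac1n C_r+o(1/n)$, with $u=\vech(\Gamma_{p+1})$, via Isserlis' theorem and Toeplitz summation (\cref{lem:one-step-expansion}); it then rotates by $Q=[u/\|u\|,\,Q_\perp]$ and expands the inverse of the resulting singular-plus-$O(1/n)$ block matrix (\cref{lem:block-inv}) to read off $B_p$ in closed form (\cref{thm:one-over-n}). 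You instead use the dual, variational characterization $\rho^\top\Delta_n\rho=\min_{\tilde A}\E\big[((\rho-\tilde A^\top v)^\top x)^2\big]$, split $v=\vech G$ into its deterministic Toeplitz limit (your $g$, the paper's $u$) plus the centered fluctuation, and argue the minimization localizes onto the constraint manifold $\tilde A^\top g=\rho$ up to $O(1/n^2)$, leaving the constrained quadratic minimum $\mu(\rho)$ over the fluctuation covariance $\bar K$. The two answers agree: in the paper's block notation one checks that the blocks of $\bar K$ are exactly $A_1-2\|u\|\,\mathrm{Sym}(\delta)$, $B-\|u\|d$, and $C$ (the third-moment objects $\delta,d$ appear in the paper's $B_p$ only because its blocks mix the deterministic part $\|u\|x$ with the fluctuation), and your constrained minimum is precisely the Schur complement of $\bar K$, i.e.\ $\mu(\rho)=\rho^\top B_p\rho$. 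Your formulation makes $B_p\succeq0$, and the cancellation of third moments at leading order, transparent; the paper's formulation buys fully explicit, computable constants.

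Two caveats. First, the lower-bound half of your localization step (``the optimal $\tilde A$ satisfies $w_0=O(1/n)$, and relaxing $w_0\neq0$ perturbs the objective only at $O(1/n^2)$'') is uniform only if the minimizer stays bounded: the cross term is $O(\|w_0\|\,\|\tilde A\|/n)$, so sequences with $\|\tilde A\|\to\infty$ along near-null directions of $\bar K$ must be ruled out. This requires either showing $\tilde S_n^{-1}\tilde r_n\rho=O(1)$ or invoking strict positivity of the orthogonal fluctuation block $\bar K_{22}=C$ --- a nondegeneracy the paper also uses implicitly (it inverts $C$), but its exact block inversion never has to control an infimum over an unbounded parameter set. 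Second, your claim that the AR(1) computation ``confirms $B_1>0$'' overreaches: \cref{prop:lsa_ar1_warm} optimizes only along a restricted one-dimensional ray of $(b,A)$, so it does not lower-bound the full LSA optimum; like the paper, you must ultimately assume the generic nondegeneracy $B_p\succ0$, which neither argument proves. Granting that, your derivation of $c_r$ via $\kappa:=\inf_n n\lambda_{\min}(\Delta_n)>0$ is valid and in fact covers every $n$, slightly stronger than the paper's \cref{thm:one-over-n}, which asserts the bound only for $n\ge n_0$.
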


\begin{proof}[Proof Sketch]
The argument proceeds in two steps: 
\emph{(1) Moment expansions.} We first expand the lifted covariance quantities using standard tools 
(Isserlis’ theorem for Gaussian moments and Toeplitz summation for time averages). 
This shows that the main term has a simple block-diagonal structure determined entirely by the process autocovariances, while the finite-sample corrections appear at order~$1/n$. 
\emph{(2) Singular block inversion.} Because part of the leading block vanishes in the limit, 
the inverse matrix develops a component that scales linearly with~$n$. 
Applying a first-order block-inverse expansion reveals that the excess-risk matrix $\Delta_n$ itself scales like $1/n$, 
with a computable correction term depending on the second-order moment structure of the process. 

In short, the proof shows that after separating the dominant block structure and carefully controlling the inverse, 
the residual term $\Delta_n$ has an explicit $1/n$-expansion governed entirely by process moments.
A complete proof is provided in Appendix~\ref{sec:gap-rate}.
\end{proof}



\begin{remark}[Why the rate is \(1/n\)]
Let $u=\operatorname{vech}(\Gamma_{p+1})$. At the population limit \(\tilde S_\infty=(uu^\top)\otimes\Gamma_p\) is rank-one along \(u\). Finite \(n\) introduces \(\Theta(1/n)\) perturbations that regularize the orthogonal directions, so the Schur complement \(\Gamma_p-\tilde r_n^\top \tilde S_n^{-1}\tilde r_n\) is \(\Theta(1/n)\). The overlap of Hankel windows is the source of these first–order terms.
\end{remark}

\paragraph{Beyond Gaussianity: strict gap and $1/n$ rate.}
We work under \emph{linear stationarity} with Wold representation $x_t=\sum_{k\ge0}\psi_k\varepsilon_{t-k}$, $\sum_k|\psi_k|<\infty$, and i.i.d.\ \emph{symmetric} innovations $\{\varepsilon_t\}$ with $\mathbb E[\varepsilon_t]=0$, possessing finite fourth and sixth moments. Replacing Isserlis’ theorem by the moment–cumulant formula preserves positive definiteness of the joint covariance, so the \textbf{strictly positive} gap still holds (\cref{thm:lsa-gap-nonG}); furthermore, the convergence rate remains $1/n$ via the non-Gaussian expansions (\cref{lem:nonG-expansions}) culminating in the rate result (\cref{thm:one-over-n-nonG}).

\paragraph{Multi-layer LSA yields monotone improvement.}
Because the stacked Kronecker feature set enlarges with depth, the optimal LSA risk is monotone nonincreasing in $L$ by projection isotonicity (\cref{prop:mono-depth}, using the Moore–Penrose formulation \cref{eq:relax-loss,eq:multi-gap-psd} and the residual-covariance view \cref{lem:psd-gap}). Moreover, if the $(L{+}1)$-st layer contributes any $L_2$-nonredundant direction—equivalently $\mathrm{Var} \big(P_{\mathcal H_L^\perp}(g^{(L)}\otimes x)\big)>0$ for $\mathcal H_L=\mathrm{span}\{g^{(0)}\otimes x,\dots,g^{(L-1)}\otimes x\}$—then the Moore–Penrose Schur complement strictly decreases (\cref{prop:mono-depth}), yielding
\[
\min_{\{b^{(\ell)},A^{(\ell)}\}_{\ell=0}^{L}}
\mathbb E\big[(\widehat x_{n+1}^{(L+1)}-x_{n+1})^2\big]
\;<\;
\min_{\{b^{(\ell)},A^{(\ell)}\}_{\ell=0}^{L-1}}
\mathbb E\big[(\widehat x_{n+1}^{(L)}-x_{n+1})^2\big].
\]

\begin{remark}
A potential misinterpretation is that we claim Transformers cannot solve or fit time series at all. This is NOT the case. Rather, our result shows that Transformers cannot outperform simple linear models beyond a certain extent. While they may be capable of solving time series tasks to some degree, their performance does not substantially exceed that of linear models.
\end{remark}

\begin{takeaway}{Strictly Positive Gap between LSA and Linear Predictor}
\textbf{Strictness.} For any finite $n$, one-layer LSA has a \emph{strict} excess risk over $p$-lag LR equal to $\rho^\top\Delta_n\rho$ with $\Delta_n\succ0$ (\cref{thm:lsa-gap-ar}).  

\textbf{Rate.} The gap admits an explicit \emph{$1/n$} expansion with PSD leading constant $B_p$ (generically PD) (\cref{thm:rate-one-over-n}).  

\textbf{Robustness.} Under linear stationarity with finite moments, strictness and the \emph{$1/n$} rate persist; non-Gaussianity only alters the constant via cumulants (\cref{thm:lsa-gap-nonG,lem:moment-cumulant,lem:nonG-expansions,thm:one-over-n-nonG}). 

\textbf{Depth.} Stacking layers enlarges the feature span, so risk is \emph{monotone nonincreasing} in $L$; generically it \emph{strictly} improves when the new layer adds a nonredundant direction (\cref{prop:mono-depth}), yet the LR baseline remains unbeatable at finite $n$ (\cref{eq:relax-loss,eq:multi-gap-psd}).
\end{takeaway}

\subsection{Chain-of-Thought Rollout: Multi-Step Collapse}\label{subsec:cot}

In this section, we show that under CoT-style inference, LSA collapses to the mean with an error rate that grows exponentially. We first define CoT in \cref{def:cot-inference}. 

\begin{definition}[Chain-of-Thought (CoT) Inference]\label{def:cot-inference}
Given a time series $(x_1,\dots,x_n)$ and context length $p$, initialize the Hankel matrix $H_n \in \mathbb{R}^{(p+1)\times(n-p+1)}$ as in \cref{def:hankel-matrix} with the last column zero-padded. Let $\tf$ be the $L$-layer LSA-based Transformer in \cref{def:tf_main}. For each step $t=1,2,\dots,T$:
\begin{enumerate}[leftmargin=*]
\item Predict the next value:
$\displaystyle \hat{x}_{n+t} := [\tf(H_{n+t-1})]_{(p+1, \, n-p+t)}.$
\item Overwrite the zero in the last column of $H_{n+t-1}$ with $\hat{x}_{n+t}$.
\item Append the column $(x_{n-p+t+1},\dots,x_n,\hat{x}_{n+1},\dots,\hat{x}_{n+t})^\top$ to form $H_{n+t}$ with last entry set to $0$.
\end{enumerate}
Repeating yields CoT rollouts $\hat{x}_{n+1},\dots,\hat{x}_{n+T}$ by feeding model outputs back into the Hankel input.
\end{definition}

\begin{theorem}[Collapse and error compounding for CoT]\label{thm:cot-main}
Under AR($p$), the Bayes $h$-step forecast equals the noise-free recursive rollout of the one-step Bayes predictor. Any stable linear CoT recursion $\widehat s_{t+1}=A(w)\widehat s_t$ collapses exponentially to~$0$. For Bayes $w=\rho$,
\[
\mathrm{MSE}^{*}(h)
=\mathbb E[(x_{n+h}-\widehat x^{*}_{n+h})^2]
=\sigma_\varepsilon^2\sum_{k=0}^{h-1}\psi_k^2
\nearrow \mathrm{Var}(x_t),
\quad
\mathrm{Var}(x_t)-\mathrm{MSE}^{*}(h)\le \tfrac{C^2\sigma_\varepsilon^2}{1-\beta^2}\beta^{2h}.
\]
Thus, even for the optimal predictor, CoT error compounds to the unconditional variance at an exponential rate governed by the spectrum of $A(\rho)$. Here $\beta<1$ and $C>0$ are constants depending only on the AR($p$) process.
\end{theorem}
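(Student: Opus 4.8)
The plan is to pass to the companion (state-space) form of the AR($p$) recursion and separate the deterministic mean dynamics from the injected innovations. I would write $s_t := (x_t,\dots,x_{t-p+1})^\top\in\mathbb{R}^p$ and let $A(\rho)$ be the companion matrix of the coefficient vector $\rho$, so that $s_{t+1}=A(\rho)s_t+\varepsilon_{t+1}e_1$ with $e_1$ the first basis vector; the stationarity hypothesis of \cref{def:arp} is exactly the statement that all eigenvalues of $A(\rho)$ lie strictly inside the unit disk, i.e.\ the spectral radius satisfies $\beta_0:=\rho(A(\rho))<1$. The one-step Bayes predictor is $\mathbb{E}[x_{n+1}\mid x_{1:n}]=\rho^\top x_{n-p+1:n}$. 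To prove that the $h$-step Bayes forecast equals the noise-free rollout, I would induct on $h$ using the tower rule: conditioning $x_{n+h}=\rho^\top s_{n+h-1}+\varepsilon_{n+h}$ on $x_{1:n}$ and using that $\varepsilon_{n+h}$ is independent of $x_{1:n}$ kills the innovation term and reduces the $h$-step forecast to that of $s_{n+h-1}$, giving the recursion $\widehat s_{t+1}=A(\rho)\widehat s_t$ with all future noise set to zero.

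The collapse claim is then pure linear algebra. For any $w$ for which $A(w)$ is stable, unrolling gives $\widehat s_t=A(w)^t\widehat s_0$; choosing any $\beta\in(\rho(A(w)),1)$, Gelfand's formula (equivalently a Jordan-form bound absorbing the polynomial prefactors into $\beta$) yields $\|A(w)^t\|\le C\beta^t$, hence $\widehat s_t\to 0$ exponentially, i.e.\ the rollout collapses to the unconditional mean $0$. For the Bayes choice $w=\rho$ this applies with $\beta\in(\beta_0,1)$.

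For the quantitative MSE statement I would switch to the Wold/MA($\infty$) representation $x_t=\sum_{k\ge 0}\psi_k\varepsilon_{t-k}$ ($\psi_0=1$, $\sum_k|\psi_k|<\infty$), which stability guarantees. Splitting $x_{n+h}=\sum_{k\ge 0}\psi_k\varepsilon_{n+h-k}$ into innovations dated after time $n$ (indices $n+h-k\ge n+1$, i.e.\ $k\le h-1$) and those dated at most $n$, and using that $\widehat x^*_{n+h}=\mathbb{E}[x_{n+h}\mid x_{1:n}]$ is the $L_2$ projection onto the past innovations, the forecast error is exactly the block of unpredictable future innovations
\[
x_{n+h}-\widehat x^*_{n+h}=\sum_{k=0}^{h-1}\psi_k\,\varepsilon_{n+h-k}.
\]
Mutual orthogonality of the i.i.d.\ mean-zero innovations then gives $\mathrm{MSE}^*(h)=\sigma_\varepsilon^2\sum_{k=0}^{h-1}\psi_k^2$. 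Monotonicity in $h$ is immediate since the summands are nonnegative, and the limit is $\sigma_\varepsilon^2\sum_{k\ge 0}\psi_k^2=\mathrm{Var}(x_t)$, the MA($\infty$) variance identity; hence $\mathrm{MSE}^*(h)\nearrow\mathrm{Var}(x_t)$.

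It remains to bound the tail $\mathrm{Var}(x_t)-\mathrm{MSE}^*(h)=\sigma_\varepsilon^2\sum_{k\ge h}\psi_k^2$, and this is the one step requiring real work. The key input is exponential decay of the MA coefficients, $|\psi_k|\le C\beta^k$, which I would obtain by writing $\psi_k$ as a fixed bilinear form in the companion power, $\psi_k=e_1^\top A(\rho)^k e_1$, and reusing the bound $\|A(\rho)^k\|\le C\beta^k$ from the collapse step; the only subtlety is that when $A(\rho)$ has nontrivial Jordan blocks one must take $\beta$ strictly above the spectral radius $\beta_0$ to absorb the polynomial-in-$k$ factors, which is why $\beta<1$ (rather than $\beta_0$ itself) appears in the constant. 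Given $|\psi_k|\le C\beta^k$, the tail is a geometric series,
\[
\sum_{k\ge h}\psi_k^2\le C^2\sum_{k\ge h}\beta^{2k}=\frac{C^2\beta^{2h}}{1-\beta^2},
\]
and multiplying by $\sigma_\varepsilon^2$ yields the claimed bound. The main obstacle is thus establishing uniform exponential decay of $\psi_k$ with a single pair $(C,\beta)$ depending only on the AR($p$) process; everything else is bookkeeping with conditional expectations and orthogonality of innovations.
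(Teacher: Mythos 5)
Your proposal is correct and follows essentially the same route as the paper's proof: the companion-form induction for the Bayes-equals-noise-free-rollout claim, Gelfand's formula to get $\|A(w)^t\|\le C\beta^t$ for the exponential collapse, the innovation split giving $\mathrm{MSE}^{*}(h)=\sigma_\varepsilon^2\sum_{k=0}^{h-1}\psi_k^2$ with the Wold variance identity for the limit, and the bound $|\psi_k|=|e_1^\top A(\rho)^k e_1|\le C\beta^k$ followed by a geometric series for the tail. No gaps; your remark about taking $\beta$ strictly above the spectral radius to absorb Jordan-block polynomial factors is exactly the content of the paper's auxiliary spectral-decay lemma.
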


\noindent\emph{Proof deferred to \cref{sec:cot}.}
Because conditional expectation is the $L_2$ projection, for any measurable $g=g(x_{1:n})$ (including any $L$-layer LSA CoT rollout) one has
\[
\mathbb E[(x_{n+h}-g)^2]
=\mathrm{MSE}^{*}(h)+\mathbb E[(\widehat x^{*}_{n+h}-g)^2]
\ \ge\ \mathrm{MSE}^{*}(h),
\]
with strictness unless $g\equiv \widehat x^{*}_{n+h}$ a.s.\ (\cref{eq:pythagoras,eq:h-dominance}). Since one-layer LSA already has a strict finite-sample gap at $h=1$ (\cref{thm:lsa-gap-ar}), equality fails generically for all $h$.

\begin{corollary}[Earlier failure of LSA CoT]\label{cor:lsa-fail}
Define the failure horizon $H_\tau(g):=\inf\{h\ge1:\mathbb E[(x_{n+h}-g_h)^2]\ge \tau\,\mathrm{Var}(x_t)\}$. Then
\(H_\tau(\widehat x^{\mathrm{LSA}})\le H_\tau(\widehat x^{*})
\) for all $\tau\in(0,1)$, with strict inequality on a set of $\tau$ of positive measure. In words: LSA CoT reaches the large-error regime no later than (and generically earlier than) Bayes linear regression.
\end{corollary}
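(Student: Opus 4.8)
The plan is to reduce everything to a pointwise-in-$h$ ordering of the two error curves, which is handed to us directly by the Pythagorean identity displayed just after \cref{thm:cot-main}. Write $E^{*}(h):=\mathbb E[(x_{n+h}-\widehat x^{*}_{n+h})^2]=\mathrm{MSE}^{*}(h)$ and $E^{\mathrm{LSA}}(h):=\mathbb E[(x_{n+h}-\widehat x^{\mathrm{LSA}}_{n+h})^2]$. Since the CoT rollout $\widehat x^{\mathrm{LSA}}_{n+h}$ is a deterministic function $g_h(x_{1:n})$ of the observed context—each fed-back prediction is itself such a function, and by the cubic lifting (\cref{lem:hankel-lifting}) a polynomial one—the $L_2$-projection decomposition gives $E^{\mathrm{LSA}}(h)=E^{*}(h)+\mathbb E[(\widehat x^{*}_{n+h}-\widehat x^{\mathrm{LSA}}_{n+h})^2]\ge E^{*}(h)$ for every $h\ge1$. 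This single inequality drives both claims.

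First I would establish the weak inequality for all $\tau$. Fix $\tau\in(0,1)$ and set $v:=\tau\,\mathrm{Var}(x_t)$. From $E^{\mathrm{LSA}}(h)\ge E^{*}(h)$, whenever $E^{*}(h)\ge v$ we also have $E^{\mathrm{LSA}}(h)\ge v$, so the failure set $\{h:E^{\mathrm{LSA}}(h)\ge v\}$ contains $\{h:E^{*}(h)\ge v\}$. Taking infima over a larger set can only decrease the value, hence $H_\tau(\widehat x^{\mathrm{LSA}})\le H_\tau(\widehat x^{*})$. Note this needs no monotonicity of the error curves and holds verbatim for every $\tau\in(0,1)$.

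For the strict part I would exploit the one-step gap. By \cref{thm:lsa-gap-ar}, $E^{\mathrm{LSA}}(1)\ge\sigma_\varepsilon^2+\rho^{\top}\Delta_n\rho$ (the minimum over $A,b$, hence a lower bound for any LSA), while $E^{*}(1)=\sigma_\varepsilon^2$ and $\mathrm{Var}(x_t)=\gamma_0>\sigma_\varepsilon^2$ for a nondegenerate AR($p$) with $\rho\neq0$. Define $\tau_1^{*}:=\sigma_\varepsilon^2/\gamma_0<1$ and $\tau_1^{\mathrm{LSA}}:=E^{\mathrm{LSA}}(1)/\gamma_0\ge(\sigma_\varepsilon^2+\rho^{\top}\Delta_n\rho)/\gamma_0>\tau_1^{*}$. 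For any $\tau$ in the nonempty open interval $I:=(\tau_1^{*},\,\min(\tau_1^{\mathrm{LSA}},1))$ one has $E^{\mathrm{LSA}}(1)>\tau\gamma_0=v$, so $h=1$ is a failure horizon for LSA and $H_\tau(\widehat x^{\mathrm{LSA}})=1$; simultaneously $E^{*}(1)=\tau_1^{*}\gamma_0<v$, so $h=1$ is not a failure horizon for Bayes and $H_\tau(\widehat x^{*})\ge2$. Hence $H_\tau(\widehat x^{\mathrm{LSA}})=1<2\le H_\tau(\widehat x^{*})$ throughout $I$, and $I$ carries positive Lebesgue measure precisely because $\rho^{\top}\Delta_n\rho>0$ forces $\tau_1^{\mathrm{LSA}}>\tau_1^{*}$ (while $\tau_1^{*}<1$ keeps $\min(\tau_1^{\mathrm{LSA}},1)>\tau_1^{*}$).

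The one genuinely delicate point, and the step I would be most careful about, is justifying that the Pythagorean decomposition legitimately applies to the rollout: I must verify that $\widehat x^{\mathrm{LSA}}_{n+h}$ is $\sigma(x_{1:n})$-measurable, so that $\widehat x^{*}_{n+h}=\mathbb E[x_{n+h}\mid x_{1:n}]$ is genuinely the $L_2$ projection being compared against. This is immediate from \cref{def:cot-inference}, since every appended coordinate is a fixed function of $x_{1:n}$ and of earlier predictions, which are themselves such functions; the cubic-lifting structure then makes each $g_h$ an explicit polynomial in $x_{1:n}$. Everything else is bookkeeping with the definition of the infimum and the strict $h=1$ gap, so no additional estimates on the collapse rate from \cref{thm:cot-main} are required.
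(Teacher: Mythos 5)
Your proof is correct and follows essentially the same route as the paper: the Pythagorean ($L_2$-projection) identity gives the horizonwise dominance $\mathrm{MSE}^{\mathrm{LSA}}(h)\ge\mathrm{MSE}^{*}(h)$, the failure-set inclusion yields the weak inequality $H_\tau(\widehat x^{\mathrm{LSA}})\le H_\tau(\widehat x^{*})$, and the $h=1$ strict gap from \cref{thm:lsa-gap-ar} supplies strictness. Your explicit interval $(\tau_1^{*},\,\min(\tau_1^{\mathrm{LSA}},1))$ simply fills in, in the natural way, the positive-measure set of $\tau$ that the paper's argument leaves implicit.
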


\noindent\emph{Proof deferred to \cref{sec:cot}.}
Quantitatively, \(\mathrm{Var}(x_t)-\mathrm{MSE}^{\mathrm{LSA}}(h)
\le\mathrm{Var}(x_t)-\mathrm{MSE}^{*}(h),\)
so whenever the gap to variance remains positive, LSA’s CoT error approaches $\mathrm{Var}(x_t)$ at least exponentially fast; if it overshoots (the left side becomes negative), \cref{cor:lsa-fail} still guarantees earlier threshold crossing. For AR(1), closed forms make the compounding explicit: $\mathrm{MSE}^{*}(h)=\sigma^2(1-\rho^{2h})$ with half-life $h_{1/2}=\log(1/2)/\log(\rho^2)$.

\begin{takeaway}{CoT Collapse in TSF}
\textbf{TSF.} CoT rollout forms a stable linear dynamical system that \emph{collapses to the mean} and whose error \emph{compounds exponentially} to $\mathrm{Var}(x_t)$; Bayes/LR is horizonwise optimal and LSA CoT is uniformly dominated at each horizon (\cref{prop:bayes-multistep,lem:cot-decay,thm:cot-collapse,eq:h-dominance,cor:earlier-fail}).  

\textbf{Contrast.} Notably, CoT behaves very differently in TSF compared to other domains: in language tasks, test-time scaling shows longer inference chains \emph{improve} problem solving~\cite{li2024chain,snell2025scaling}, and CoT can help in in-context linear regression~\cite{huang2025transformers}; in stark contrast, in TSF, CoT leads to \emph{rapidly compounding} forecast errors.
\end{takeaway}

\section{Numerical Verification}\label{sec:experiment}

We defer experimental details, including datasets, model configurations, and the Softmax attention vs. LSA comparison, to \cref{sec:exp_detail}.

\subsection{Evaluation}\label{sec:exp:eval}

To comprehensively assess model performance in TSF, we employ two complementary inference modes for evaluation and visualization: 

\begin{itemize}[leftmargin=*]
\item \textbf{Teacher-Forcing (TF) TSF:}
This method evaluates the model under idealized conditions by providing ground-truth historical values as inputs at \emph{each} time step. It is commonly used to measure predictive accuracy and to visualize the model’s capacity to fit the true data distribution.
\item \textbf{Chain-of-Thought (CoT) TSF:}  
This iterative inference approach simulates real-world deployment by using the model's own past predictions as inputs for future steps. It enables the evaluation of long-horizon stability and the extent of error accumulation during rollout.
\end{itemize}

\paragraph{Evaluation Metrics.}
Let $\{x_1, x_2, \dots, x_T\}$ denote a ground-truth test time series and $\{\hat{x}_1, \hat{x}_2, \dots, \hat{x}_T\}$ the corresponding model predictions. 
We evaluate forecasting accuracy using the Mean Squared Error (MSE):
$
\mathrm{MSE} := \frac{1}{T} \sum_{t=1}^{T} (x_t - \hat{x}_t)^2.
$
In rollouts where predictions are generated via TF or CoT, we further compute the \emph{cumulative MSE} up to step $k$:
$
\mathrm{CMSE}(k) := \frac{1}{k} \sum_{t=1}^{k} (x_t - \hat{x}_t)^2, k \in [T].
$
This metric reflects how prediction errors \emph{accumulate over time} as inference unfolds, providing a smoother depiction of trajectory-level performance when pointwise errors are noisy or scattered \cite{pham2023learning,wen2023onenet}.

\subsection{Experiments}

\begin{figure*}[!ht]
    \centering
    \begin{tabular}{ccc}
        \includegraphics[width=0.31\linewidth]{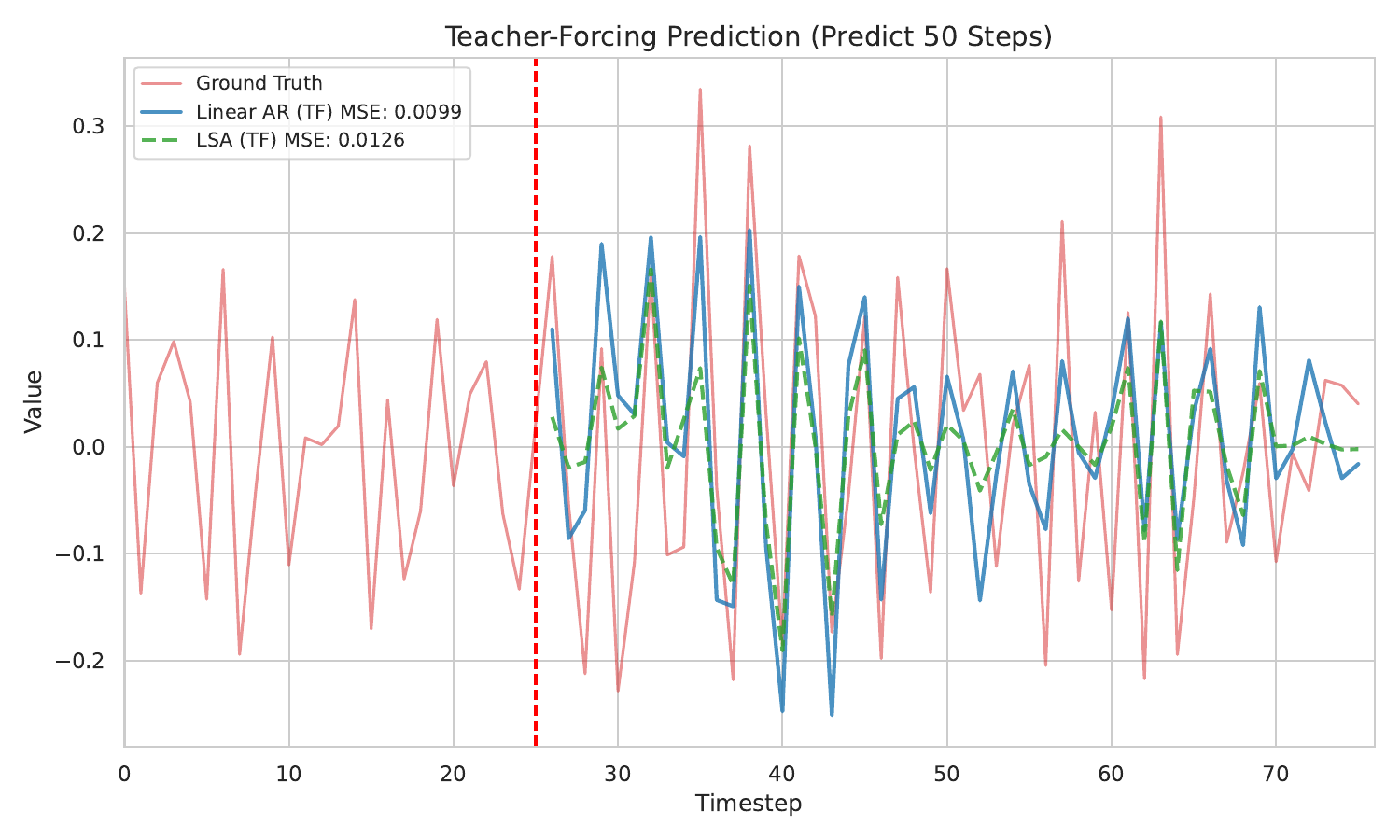} & 
        \includegraphics[width=0.31\linewidth]{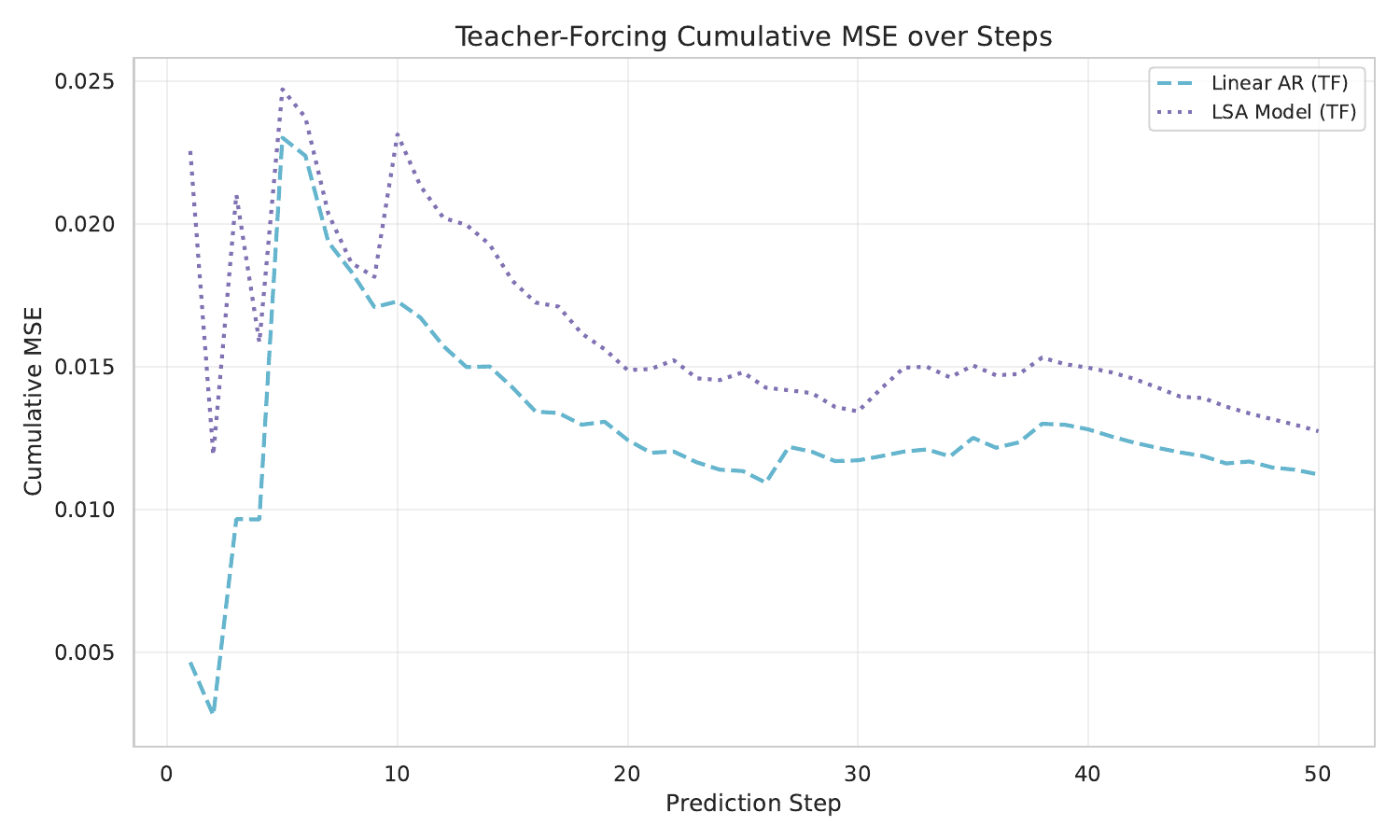} & 
        \includegraphics[width=0.275\linewidth]{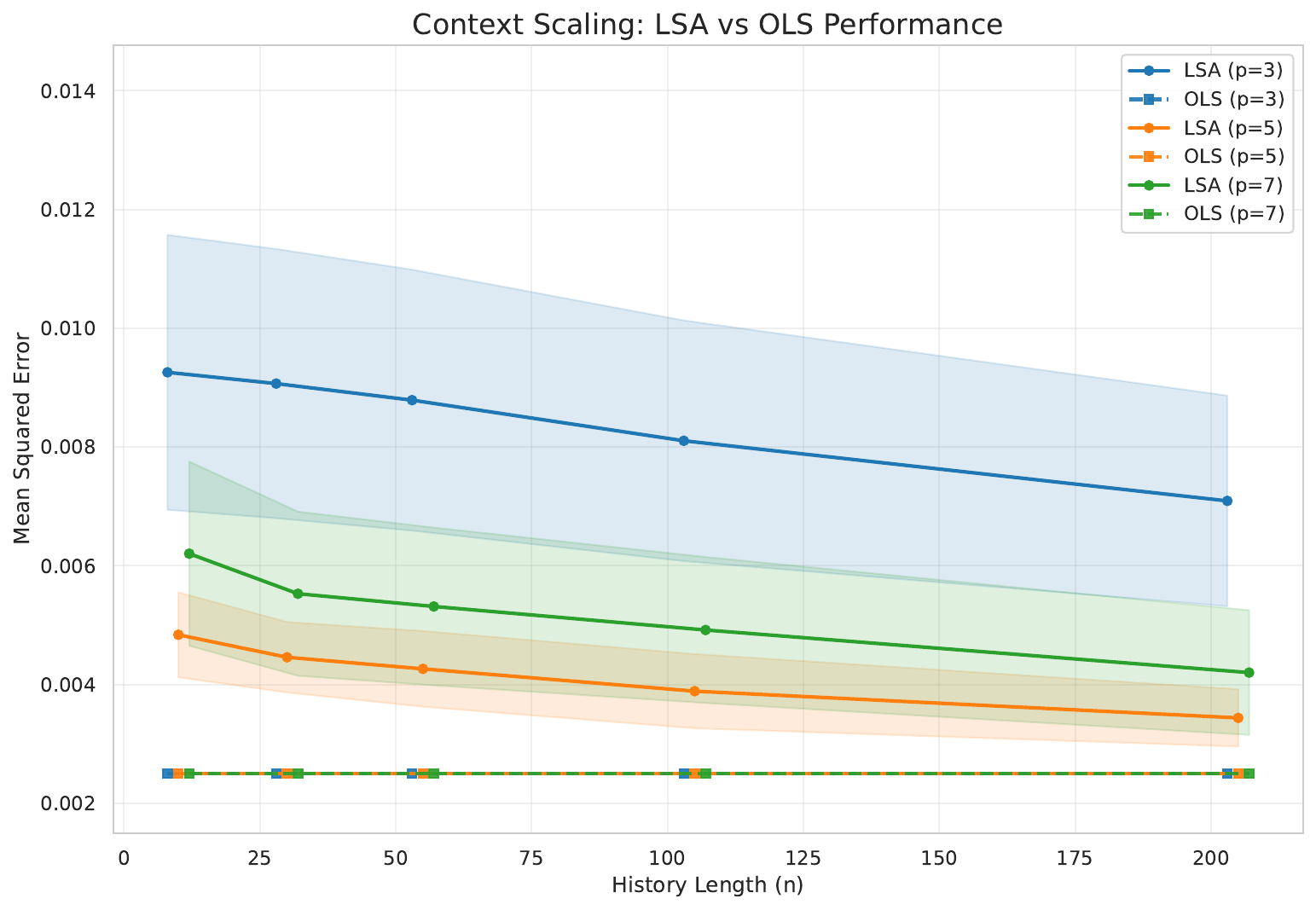} 
        \\
        {\scriptsize (a) TF Values} & {\scriptsize (c) TF Cumulative MSE} & {\scriptsize (e) Context Scaling} \\
        \includegraphics[width=0.31\linewidth]{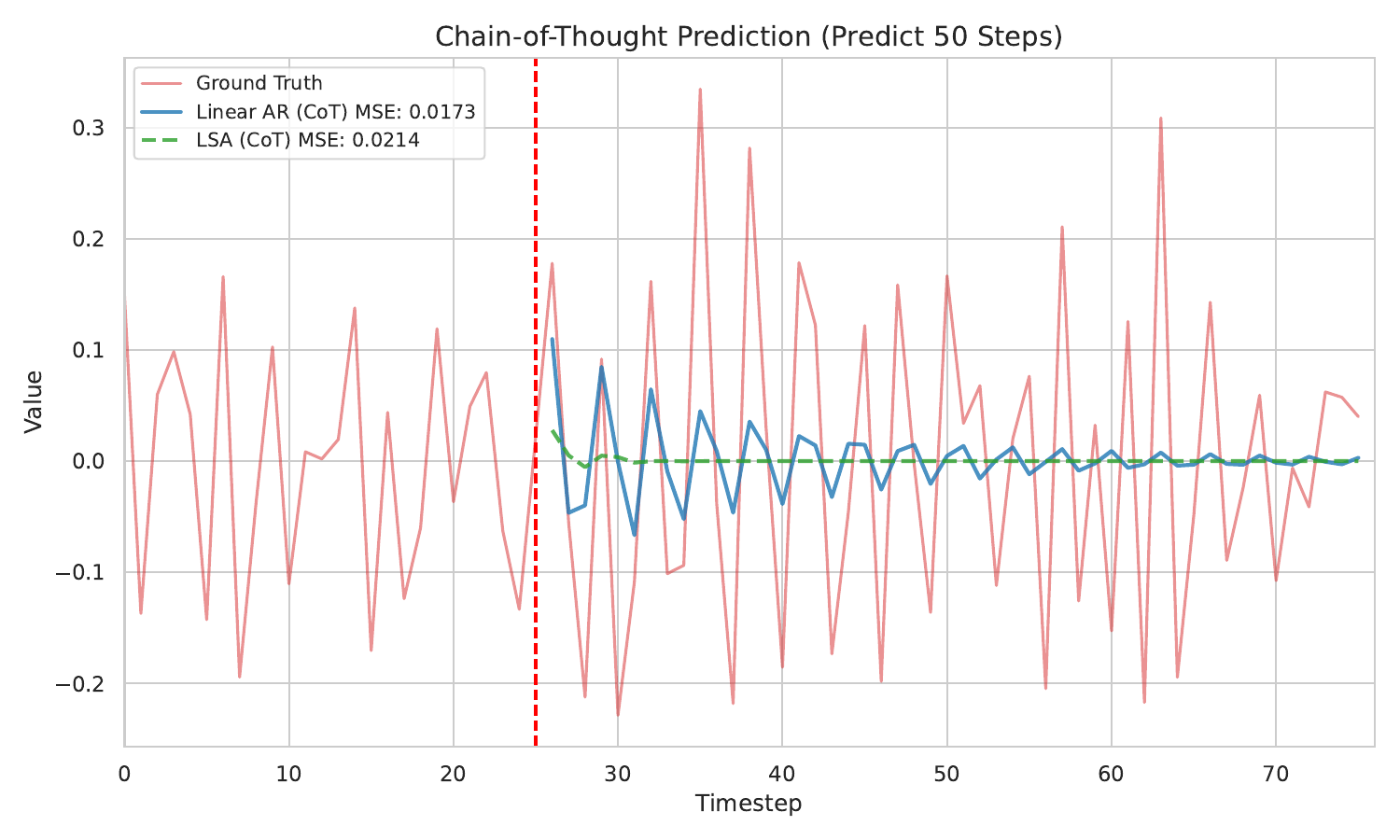} & 
        \includegraphics[width=0.31\linewidth]{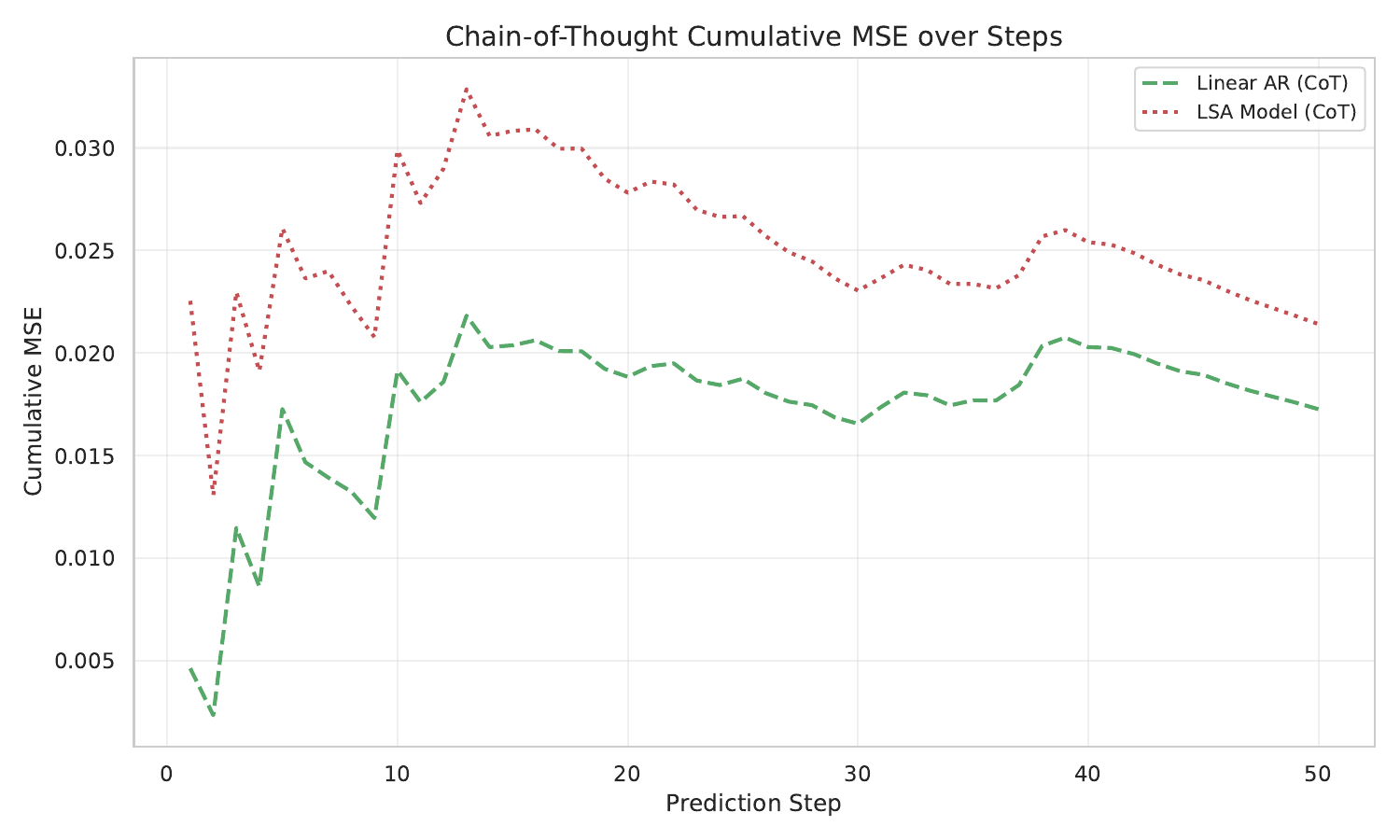} & 
        \includegraphics[width=0.275\linewidth]{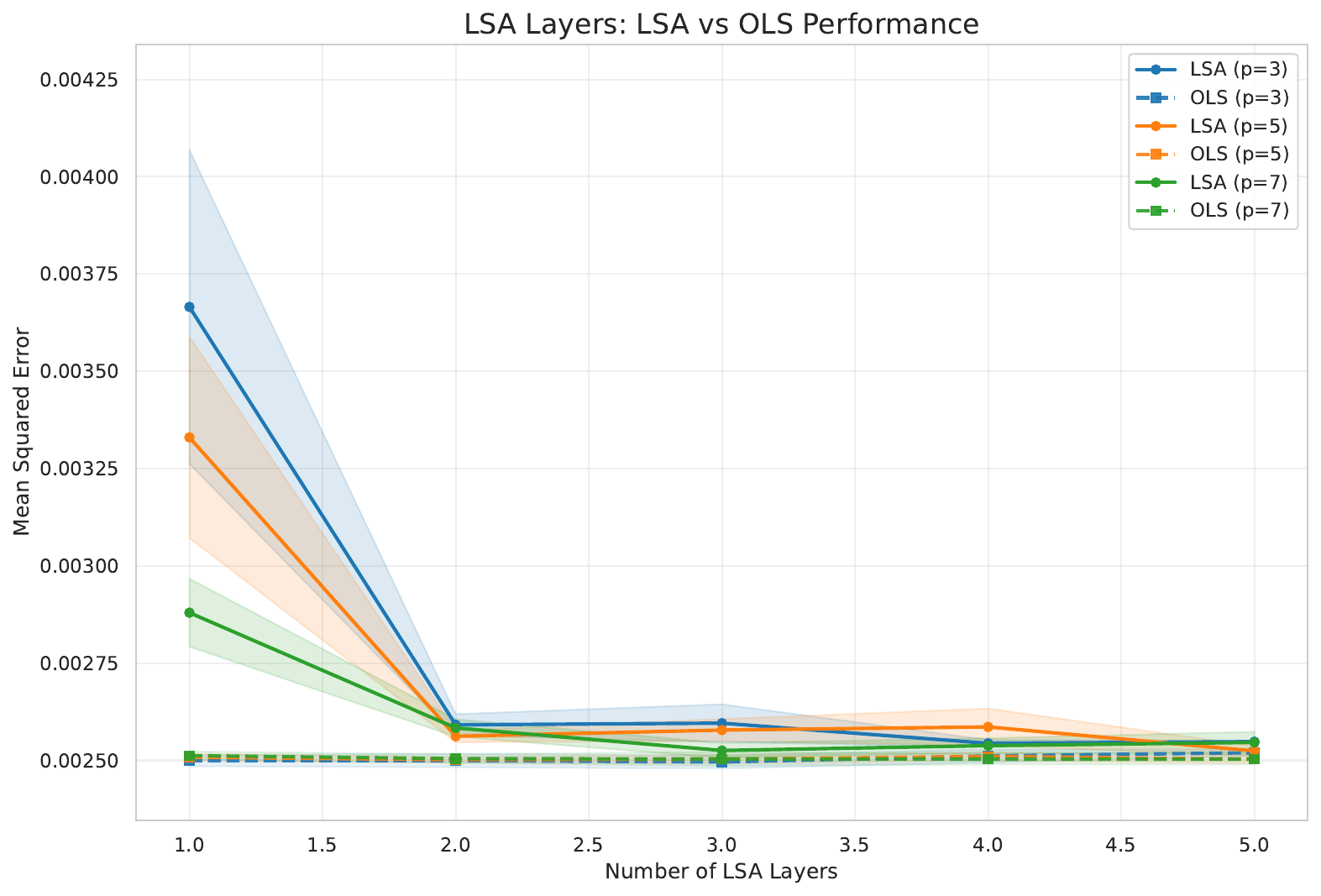} 
        \\
        {\scriptsize (b) CoT Values} & {\scriptsize (d) CoT Cumulative MSE} & {\scriptsize (f) LSA Layer Scaling}
    \end{tabular}
    \caption{Experimental results. 
    \textbf{(a--b)} Predictions under Teacher-Forcing (TF) and Chain-of-Thought (CoT). 
    \textbf{(c--d)} Cumulative MSE for TF and CoT rollouts. 
    \textbf{(e--f)} Scaling experiments varying the history length and the number of LSA layers. 
    Overall, LSA tracks AR($p$) but never surpasses the OLS baseline, confirming its representational limits. 
    }
    \label{fig:experiments}
\end{figure*}

For all experiments, we adopt a common set of hyperparameters. 
Synthetic AR series are generated with Gaussian noise of zero mean and standard deviation $\sigma_\varepsilon=0.05$ 
and total length $50{,}000$, split into training/validation/test with proportions $0.70/0.15/0.15$. 
Models are trained on 10 RTX 2080 GPUs for up to $100$ epochs, using a batch size of $512$ 
and the Adam optimizer with learning rate $10^{-3}$.

\paragraph{Teacher Forcing vs. Chain-of-Thought.}

We compare $50$-step predictions under Teacher-Forcing (TF) and Chain-of-Thought (CoT) rollout (\cref{sec:exp:eval}) using a one-layer LSA with $n=8$ on AR($5$). 
Figures~\ref{fig:experiments}(a--b,c--d) show trajectories and cumulative MSEs. 
Under TF, both LSA and OLS track the AR($5$) process, but OLS consistently yields lower MSE, indicating LSA fits yet never exceeds the linear baseline. 
Under CoT rollout, errors accumulate: both methods collapse to the mean, with LSA failing earlier, consistent with \cref{subsec:cot}.

\paragraph{Context and Layers Scaling.}


We vary history length $n$ and LSA layers $L$ with $p\in\{3,5,7\}$, averaged over 7 seeds with error bars indicating the standard error of the mean (SEM).
For context scaling, $n=p+\{5,25,50,100,200\}$ with one LSA layer; for layer scaling, $L\in\{1,\dots,5\}$ with $n=100$. 
Figures~\ref{fig:experiments}(e--f) show Teacher-Forcing MSE: larger $n$ improves LSA but never closes the gap to OLS, and more layers give diminishing gains saturating at the OLS level, consistent with \cref{subsec:gap}.

\section{Discussion}

\paragraph{Architectural Considerations.}

Beyond LSA, several components may influence Transformer performance. 
The Softmax in standard attention may enhance expressivity by exponentiating inputs, effectively expanding the representation space and approximating an infinite series to capture richer dependencies (see experimental results in \cref{sec:exp_detail}). 
Given the limitations of single-head LSA, simply aggregating multiple heads over the same data source is unlikely to improve expressivity; however, allocating different heads to distinct modalities may offer benefits through data fusion. Additionally, the role of feedforward MLP layers deserves closer scrutiny. Although not the focus of our analysis, prior work \cite{zeng2023transformers,tan2024language,kim2024self,liang2025attention} suggests that MLPs play as key contributors in time series tasks—potentially explaining the performance of LLMs in TSF. We leave these directions to future work.

\paragraph{Difference Between Language and Time Series.}

Attention serves as a learned compression mechanism, essential in language modeling where meaning depends on long-range, abstract dependencies~\cite{deletang2024language,sutskever2023observation,goldblum2024no,huang2024compression}. In contrast, time series with low-order dynamics (e.g., AR($p$)) hinge on local, position-specific patterns, where such compression can obscure predictive signals. Because attention applies fixed contextual weights, it often fails to capture these direct dependencies, explaining the locality-agnostic behavior noted in \cite{li2019enhancing}. When compression is misaligned with the data-generating process, classical models typically outperform attention.

\section{Conclusion}\label{sec:conclusion}

We study the limits of Transformers in time-series forecasting via in-context learning theory. For autoregressive (AR) processes, we prove that Linear Self-Attention (LSA) cannot beat the optimal linear predictor, yielding a strictly positive gap in expectation. 
Our analysis further shows that although LSA asymptotically recovers the linear predictor under teacher-forcing, errors compound under Chain-of-Thought rollout, ultimately causing collapse-to-mean behavior.  
Experiments across teacher-forcing, CoT, and scaling corroborate our theory: LSA matches but never exceeds the linear baseline. 
These findings clarify the inherent limits of attention in time-series forecasting and highlight the need for architectures beyond self-attention to capture temporal structure.



\section*{Acknowledgment}
SG gratefully acknowledges support from OpenAI's Superalignment Grant. ARZ was partially supported by NSF Grant CAREER-2203741.

\ifdefined\isarxiv
\bibliographystyle{alpha}
\bibliography{ref}
\else

\bibliography{ref}
\bibliographystyle{iclr2026_conference}

\fi

\newpage
\onecolumn
\appendix
{\hypersetup{linkcolor=black}
\tableofcontents
}

\newpage

\begin{center}
    \LARGE {\bf Appendix}
\end{center}

\paragraph{Roadmap.}

In \cref{sec:more_related_work}, we review related work on Transformers for time series forecasting, in-context learning theory, and representational limitations of attention. 
\cref{sec:more_discussion} provides further discussion on our perspective of TSF. 
In \cref{sec:exp_detail}, we include more experiment details.
\cref{sec:ts_fundamentals} reviews classical results on time series. 
\cref{sec:expressivity_lsa} analyzes the expressivity of LSA Transformers for TSF. 
\cref{sec:gap} establishes the finite-sample gap between LSA and linear models, with detailed proofs. 
\cref{sec:cot} extends our analysis to Chain-of-Thought rollout, characterizing collapse-to-mean and error compounding. 
Finally, \cref{sec:nonGaussian-gap} relaxes Gaussian assumptions and generalizes our results to linear stationary processes.

\section{More Related Work}\label{sec:more_related_work}

\subsection{Transformers for Time Series Forecasting}

Early adaptations of Transformers for time series forecasting primarily modified attention mechanisms to capture long-term dependencies efficiently. Informer introduced ProbSparse attention to mitigate quadratic complexity \cite{zhou2021informer}, while Pyraformer employed hierarchical pyramidal attention for multi-scale modeling \cite{liu2022pyraformer}. Autoformer and FEDformer further integrated domain-specific inductive biases, utilizing auto-correlation and frequency decomposition, respectively, to model seasonal-trend components explicitly \cite{wu2021autoformer,zhou2022fedformer}. Additional variants include locality-enhanced attention \cite{li2019enhancing}, inverted architectures \cite{liu2024itransformer}, and tokenization-based representations treating sequences as textual patches \cite{nie2023a}. A comprehensive overview is presented in \cite{wen2023transformers}. 

More recent studies leverage pretrained Large Language Models (LLMs) to transfer NLP-style capabilities to forecasting tasks. Zero-shot forecasting was initially demonstrated using pretrained LLMs without task-specific tuning \cite{gruver2023large}. Subsequent works explored specialized prompt-based strategies \cite{cao2024tempo,pan2024s}, reprogramming pretrained LLMs directly \cite{jin2023time}, and unified, dataset-agnostic training paradigms \cite{woo2024unified,lu2025incontext}. Broader frameworks proposed foundational-model perspectives for time series tasks \cite{goswami2024moment}, discrete vocabulary tokenization \cite{ansari2024chronos}, multi-patch prediction \cite{bian2024multipatch}, and generalized decoder-only architectures \cite{das2024decoder,zhou2023one,liu2024autotimes}. Recent surveys systematically summarize these emerging paradigms and highlight open challenges and opportunities in deploying LLMs for time series analysis \cite{zhang2024large,liang2024foundation,liu2025can,jin2024position}.

\subsection{In-Context Learning Theory}

Recent work theoretically interprets in-context learning (ICL) as a Transformer forward pass implicitly performing variants of gradient descent (GD). Early studies empirically demonstrated Transformers can closely approximate ordinary-least-squares predictors \cite{gtlv22}, while subsequent constructive analyses showed one linear self-attention (LSA) layer corresponds exactly to one GD step, with the global training objective implementing a preconditioned, Bayes-optimal GD step \cite{vnr+23,asa+23,acds23,mhm24}. Further training-dynamics analyses establish gradient flow convergence of LSA to learn the class of linear models \cite{zfb24}, provide finite-time convergence guarantees and parameter evolution for multi-head Softmax attention \cite{hcl24,he2025incontext}, and identify phase transitions revealing when linear-attention mimics full Transformer behaviors \cite{acs+24,zhang2025training}. Extensions include multi-step GD via chain-of-thought prompting \cite{huang2025transformers} and kernelized polynomial regression through gated linear units \cite{sja25}. Other works establish positive approximation guarantees for ICL in dynamical and autoregressive settings but lack universal lower bounds or explicit representational constraints \cite{lipo23,sander2024transformers,wu2025transformers,cole2025context}.

\subsection{Representational Limitations of Transformers}

Despite their success, Transformers exhibit fundamental limitations in expressivity. Pure self-attention without MLPs suffers doubly exponential rank collapse with depth, severely constraining representational capacity \cite{dong2021attention}. Self-attention cannot model periodic finite-state languages unless the depth or number of heads scales with input length \cite{hahn2020theoretical}. Complexity analyses show that log-precision Transformers are no more powerful than $\mathsf{TC}^0$ circuits, implying provable failure on linear systems and context-free languages under standard complexity separations \cite{merrill2023parallelism}.
Communication-complexity arguments further reveal that Transformers cannot compose functions over sufficiently large input domains \cite{peng2024limitations}, and their performance is task-dependent, achieving logarithmic complexity in input size for sparse averaging tasks, but requiring linear complexity for triple-detection \cite{sanford2023representational}.
\cite{chen2024theoretical} establish unconditional depth–width trade-offs, proving that solving sequential $L$-step function composition tasks over input of $n$ tokens requires either $\Omega(L)$ layers or $n^{\Omega(1)}$ hidden dimensions. Empirically, Transformers struggle with compositional generalization \cite{dziri2023faith} and fail to outperform RNNs in modeling Hidden Markov dynamics \cite{hu2024limitation}. While chain-of-thought prompting and positional embeddings can recover arithmetic and step-wise reasoning \cite{feng2023towards,mcleish2024transformers}, these function as external aids, underscoring the architecture's inherent limitations. 

\section{More Discussion}\label{sec:more_discussion}

\paragraph{Other Data Distributions.}
Although our analysis focuses on AR processes, similar considerations apply more broadly. The difficulty for attention arises from its misalignment in capturing input dependencies, as seen in AR($p$). We further discuss Moving Average (MA) processes and, more generally, ARMA models in \cref{sec:nonGaussian-gap}.

\paragraph{Multivariate Time Series.}

In the case of uncorrelated multivariate time series, each dimension evolves independently, reducing training to separate LSA models. 
This eliminates the opportunity to exploit cross-variable dependencies and limits the potential of learning shared structure. 
Consequently, pretraining on a collection of uncorrelated time series may fail to produce useful shared representations. 
For the correlated case, one could impose structural assumptions on inter-variable dependencies to enable tractable analysis. We leave the investigation of such multivariate models, such as Vector Auto-Regression (VAR)~\cite{hamilton2020time}, to future work.

\paragraph{Optimization Dynamics.}
While our analysis primarily addresses representational limitations of attention mechanisms, future work could explore optimization dynamics and training difficulties of Transformers for time series forecasting. Understanding these issues might yield complementary insights into observed empirical shortcomings.

\paragraph{Real-World Complexities.}
Real-world forecasting tasks include many complexities not modeled in this study, such as data randomness, intricate temporal dependencies, training instability, noisy signals, and external factors like market sentiment \cite{liang2024foundation,liu2025can}. Exploring these practical challenges could help bridge theoretical findings with real-world performance.

\paragraph{Architectural and Framework Varieties.}

Our framework intentionally abstracts away practical components such as Rotary Position Embeddings (RoPE)~\cite{su2024roformer} and Mixture-of-Experts (MoE)~\cite{fedus2022switch}. Future work may assess whether these enhancements alleviate the representational limitations we identify. State Space Models like Mamba~\cite{gu2024mamba, mamba2} also present a promising alternative for time series forecasting~\cite{wang2025mamba}. Beyond architectural changes, generative modeling paradigms such as diffusion models~\cite{ho2020denoising} and flow matching~\cite{lipman2023flow} offer alternative approaches for time series~\cite{tashiro2021csdi,yuan2024diffusion, hu2024fm, liu2024retrieval,fan2024mgtsd,liu2025sundial}, potentially overcoming the limitations of Transformer-based Next-Token Prediction objectives.

\section{Experimental Details}\label{sec:exp_detail}

This section provides additional details for \cref{sec:experiment}.

\subsection{Dataset and Model Configuration}

\paragraph{Synthetic data.}
We generate \emph{stable} AR($p$) processes (\cref{def:arp}) by sampling coefficient vectors (roots outside the unit circle), adding Gaussian noise, discarding a short burn-in, and retaining a long sequence. Each sequence is split into train/validation/test segments. 

\paragraph{From long sequences to training examples.}
We fix a history length $n > p $. For each series $x_{1:T}$, a sliding window of length $n$ with stride~1 defines training pairs with input history $x_{t-n+1:t}$ and target $x_{t+1}$. Each history is transformed into a Hankel matrix $H_n^{(t)} \in \mathbb{R}^{(p+1)\times(n-p+1)}$ (\cref{def:hankel-matrix}), which serves as the input to the LSA model.

\paragraph{Models.}
Our main model is an $L$-layer LSA-only Transformer $\tf$ (\cref{def:tf_main}) with feature dimension $d=p$. We read prediction from the label slot:
$
\hat{x}_{t+1} = \bigl[\tf(H_n^{(t)})\bigr]_{(p+1,n-p+1)}.
$
As a baseline, we fit a classical AR($p$) predictor by OLS on the same training series used for LSA.

\paragraph{Training.}
All windows are shuffled and batched. Models are trained with teacher forcing using MSE loss and Adam. We sweep $p$, $n$, and $L$, while keeping the noise level and optimization hyperparameters fixed. Performance is reported on the held-out test split.

\subsection{Softmax Attention vs. LSA}

\begin{figure*}[!ht]
    \centering
    \begin{tabular}{cc}
        \includegraphics[width=0.45\linewidth]{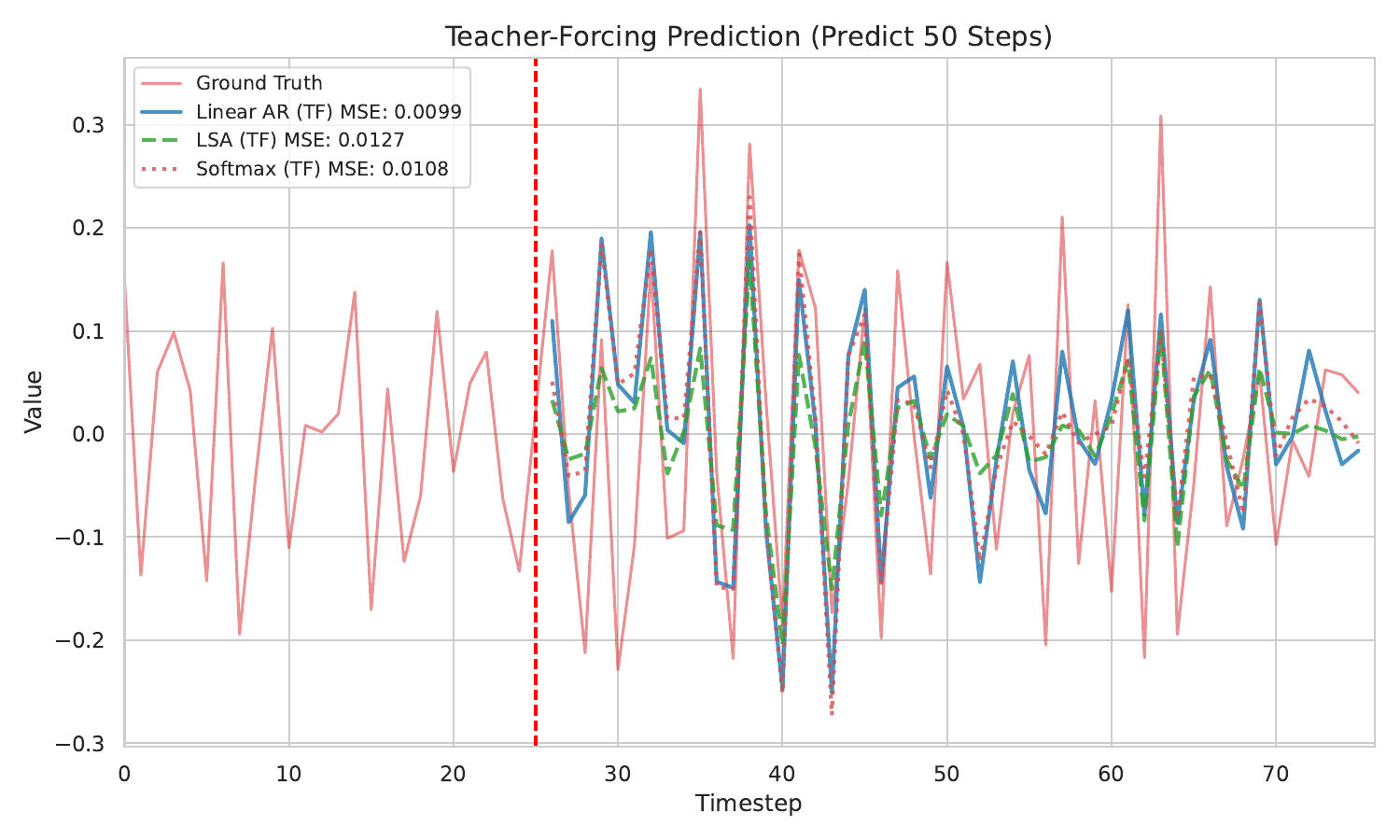} & 
        \includegraphics[width=0.45\linewidth]{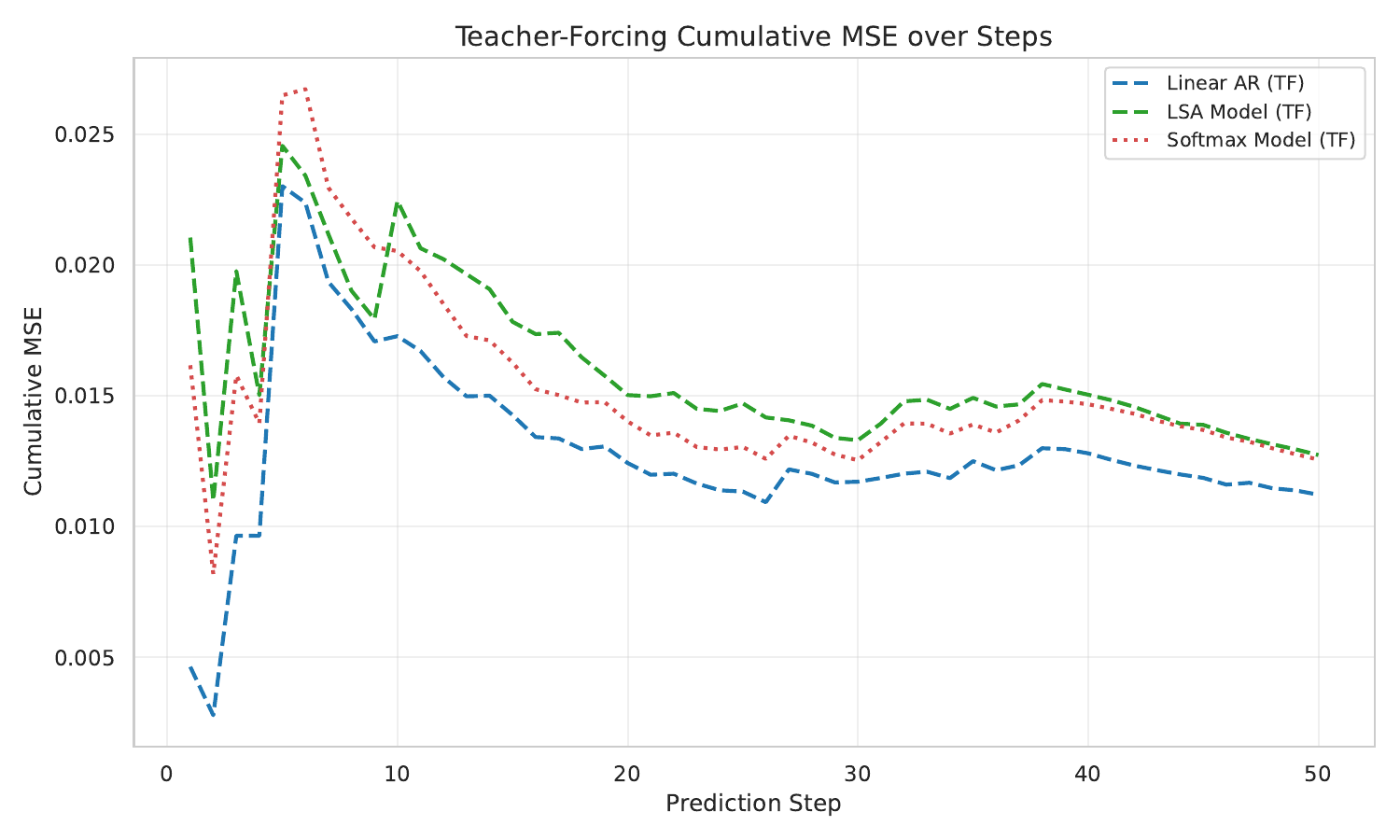} 
        \\
        {\scriptsize (a) TF Values} & {\scriptsize (c) TF Cumulative MSE}\\
        \includegraphics[width=0.45\linewidth]{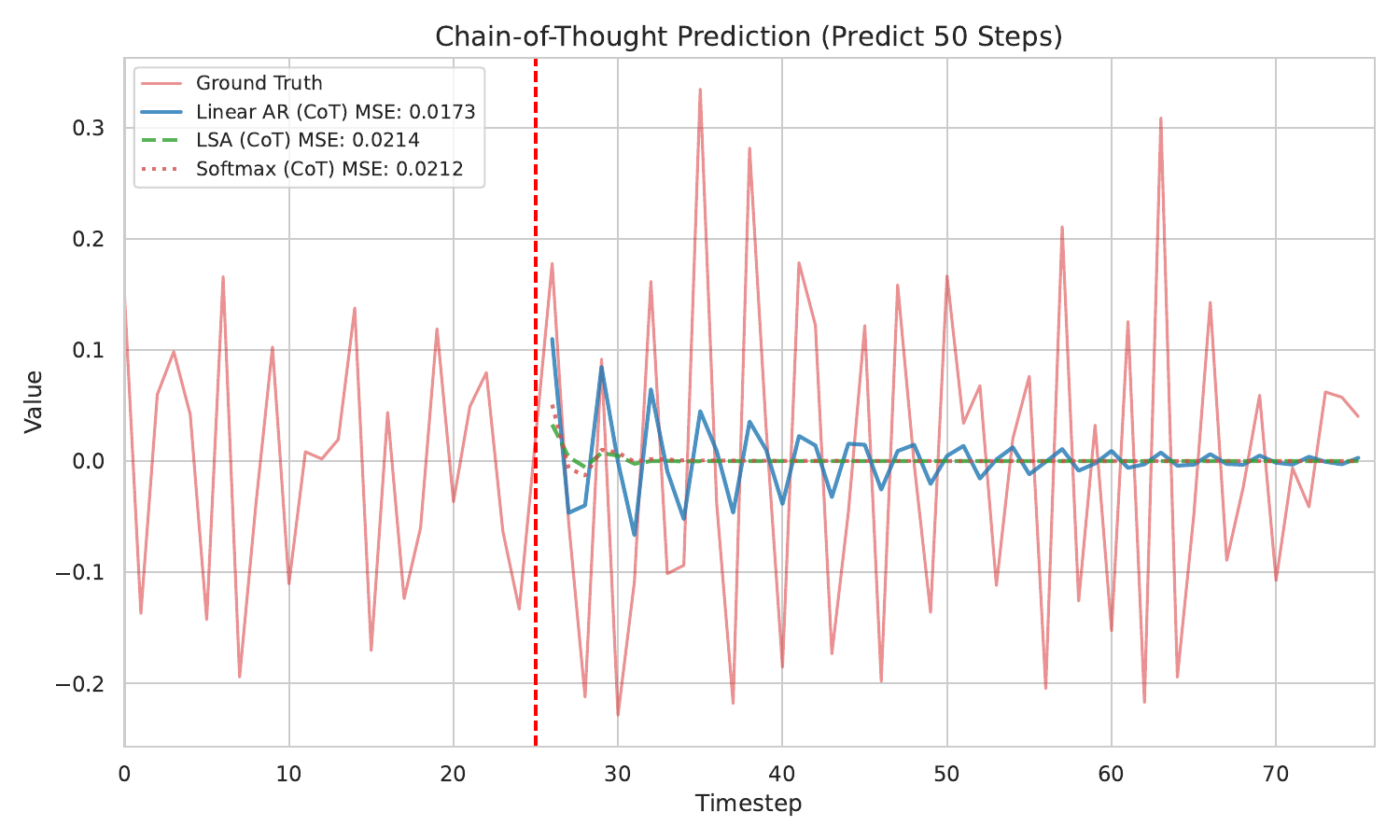} & 
        \includegraphics[width=0.45\linewidth]{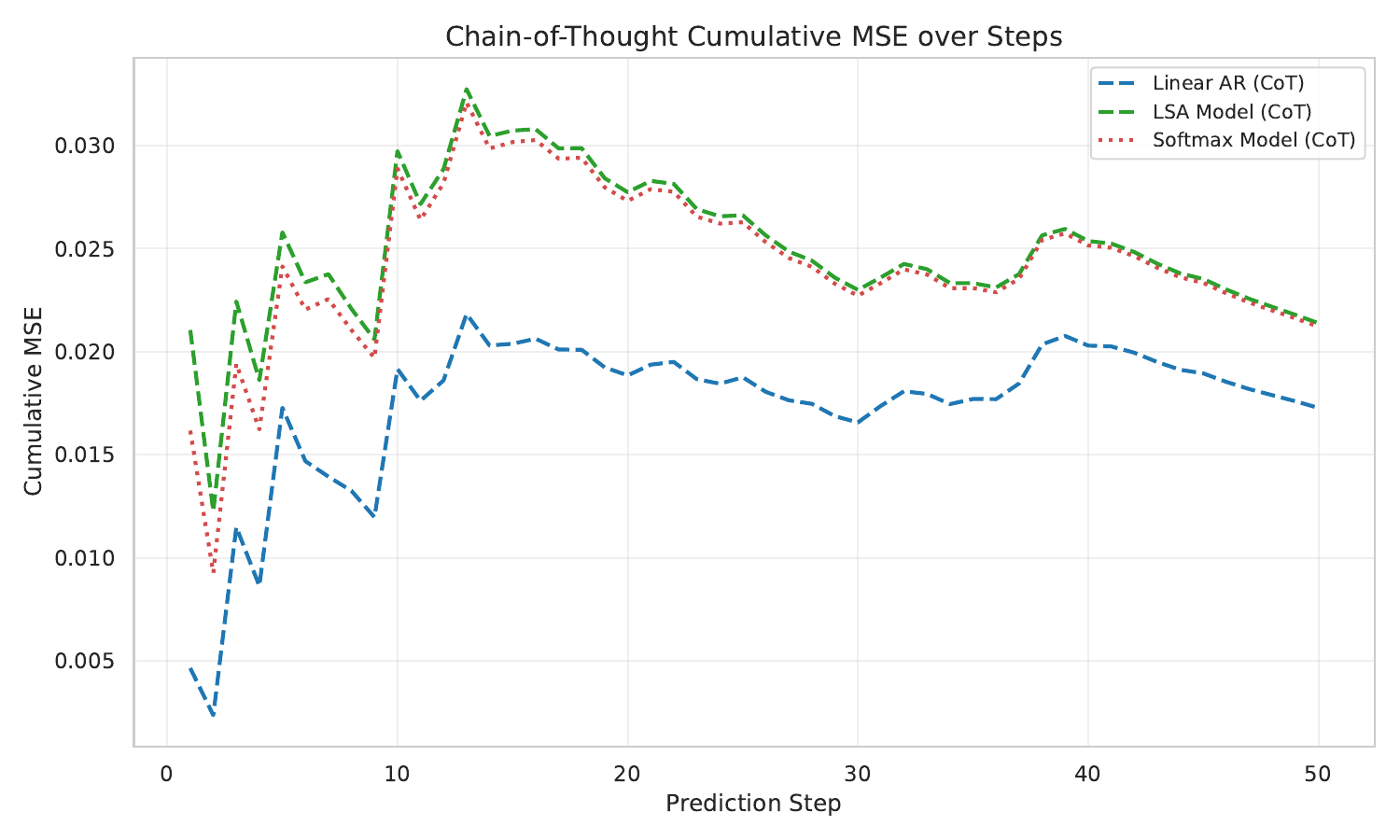} \\
        {\scriptsize (b) CoT Values} & {\scriptsize (d) CoT Cumulative MSE}
    \end{tabular}
    \caption{Experimental results on comparison of LSA and Softmax Attention. 
    \textbf{(a--b)} Predictions under Teacher-Forcing (TF) and Chain-of-Thought (CoT). 
    \textbf{(c--d)} Cumulative MSE for TF and CoT rollouts. 
    Overall, both LSA and Softmax Attention tracks AR($p$) but never surpass the OLS baseline. Moreover, Softmax Attention is slightly better than LSA. 
    }
    \label{fig:experiments_softmax}
\end{figure*}

Though theoretically more challenging, we could examine the experimental behavioral comparison between Softmax attention and LSA. We define Softmax attention as follows: 
\begin{definition}[Softmax Attention]\label{def:softmax_attn}
Let $H \in \mathbb{R}^{(d+1) \times (m+1)}$ be the input matrix and define the causal mask
\[
M := \begin{bmatrix}
    I_m & 0 \\
    0 & 0
\end{bmatrix}
\in \mathbb{R}^{(m+1) \times (m+1)}.
\]
We denote the reparameterized weights $P, Q \in \mathbb{R}^{(d+1) \times (d+1)}$.  
Then the (masked) softmax attention output is defined as
\[
\attn(H) := P H M \, \softmax \bigl( H^\top Q H \bigr),
\]
where the $\softmax(\cdot)$ is applied column-wise to ensure attention weights are normalized.  
Thus $\attn(H) \in \mathbb{R}^{(d+1) \times (m+1)}$.
\end{definition}

We compare 1-layer LSA and softmax attention models under identical settings (Adam optimizer and hyperparameters as in \cref{sec:experiment}), differing only in architecture. Empirically, Softmax Attention performs slightly better than LSA (\cref{fig:experiments_softmax}), which is unsurprising given the greater expressivity underlying the strong performance of Transformers. However, the analytical complexity of the Softmax operator makes theoretical understanding more challenging, and we leave a deeper investigation of this gap to future work.

\section{Time Series Fundamentals}\label{sec:ts_fundamentals}

We first recall classical results for stationary autoregressive (AR) processes, which form the theoretical backbone for our later analysis. These results connect population-level covariances to model coefficients, show how consistent estimates can be obtained from finite data, and establish a natural linear performance baseline.

\begin{definition}[Yule--Walker Equations \cite{hamilton2020time}]\label{def:yulewalker}
Let \( \{x_i\}_{i=1}^T \) follow a stationary AR($p$) process as in Definition~\ref{def:arp}. Define the Toeplitz autocovariance matrix \( \Gamma_p \in \mathbb{R}^{p \times p} \) as
\[
    \Gamma_p := \begin{pmatrix}
    \gamma_0 & \gamma_1 & \cdots & \gamma_{p-1} \\
    \gamma_1 & \gamma_0 & \cdots & \gamma_{p-2} \\
    \vdots   & \vdots   & \ddots & \vdots       \\
    \gamma_{p-1} & \gamma_{p-2} & \cdots & \gamma_0
    \end{pmatrix},
\quad
    \gamma := \begin{pmatrix} \gamma_1 \\ \gamma_2 \\ \vdots \\ \gamma_p \end{pmatrix}.
\]
Then the AR coefficient vector \( \rho := (\rho_1, \ldots, \rho_p)^\top \) satisfies the Yule--Walker system:
\[
    \Gamma_p \rho = \gamma.
\]
This moment-matching condition links the autocovariance structure to the autoregressive coefficients.
\end{definition}

\begin{theorem}[OLS Consistency for AR($p$) Estimation {\cite{hamilton2020time}}]\label{thm:ols-consistency}
Let $\{x_i\}_{i=1}^{T}$ be generated by an AR($p$) process as in Definition~\ref{def:arp}. Let \(\widehat{\rho}_n := (X^\top X)^{-1}X^\top y\) be the ordinary least squares estimator obtained from
\[
y := (x_{p+1}, \dots, x_n)^\top, \quad
X := \begin{pmatrix}
x_p & x_{p-1} & \cdots & x_1 \\
x_{p+1} & x_p & \cdots & x_2 \\
\vdots & \vdots & \ddots & \vdots \\
x_{n-1} & x_{n-2} & \cdots & x_{n-p}
\end{pmatrix}.
\]
Then, as \(n \to \infty\),
\[
\widehat{\rho}_n \xrightarrow{\mathrm{a.s.}} \rho.
\]
Hence, in the large-sample limit, OLS recovers the Bayes-optimal linear predictor in mean squared error.
\end{theorem}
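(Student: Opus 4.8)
The plan is to turn the statement into a statement about convergence of sample moments, using the fact that OLS in an AR regression is a method-of-moments estimator in disguise. First I would observe that the AR($p$) recursion of \cref{def:arp} is exactly the linear model $y = X\rho + \varepsilon$ with $\varepsilon = (\varepsilon_{p+1},\dots,\varepsilon_n)^\top$ (each row of $X$ together with the corresponding entry of $y$ is one instance of $x_{i+1}=\sum_j\rho_j x_{i-j+1}+\varepsilon_{i+1}$). Substituting this into the OLS formula and writing $N:=n-p$ for the number of rows gives the decomposition
\[
\widehat\rho_n-\rho=\Big(\tfrac{1}{N}X^\top X\Big)^{-1}\Big(\tfrac{1}{N}X^\top\varepsilon\Big).
\]
The whole proof then reduces to two almost-sure limits: $\tfrac1N X^\top X\xrightarrow{\mathrm{a.s.}}\Gamma_p$ and $\tfrac1N X^\top\varepsilon\xrightarrow{\mathrm{a.s.}}0$, together with invertibility of $\Gamma_p$.

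For the Gram-matrix limit I would first invoke stability: since all roots of the characteristic polynomial lie outside the unit circle, the process admits a causal Wold/MA($\infty$) representation $x_t=\sum_{k\ge0}\psi_k\varepsilon_{t-k}$ with $\sum_k|\psi_k|<\infty$. Consequently each $x_t$ is a measurable function of the i.i.d. Gaussian innovations $\{\varepsilon_s\}_{s\le t}$, so the full process is strictly stationary and ergodic. The $(i,j)$ entry of $\tfrac1N X^\top X$ is the time average of the stationary sequence $x_{t-i}x_{t-j}$, whose mean is the autocovariance $\gamma_{|i-j|}$ (finite because $\gamma_0<\infty$). Birkhoff's ergodic theorem (equivalently the SLLN for stationary ergodic sequences) then yields $\tfrac1N X^\top X\xrightarrow{\mathrm{a.s.}}\Gamma_p$, the Toeplitz matrix of \cref{def:yulewalker}.

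For the noise cross term, each entry of $\tfrac1N X^\top\varepsilon$ is $\tfrac1N\sum_t x_{t-j}\varepsilon_t$. The key structural point is causality: $x_{t-j}$ is a function of $\{\varepsilon_s:s\le t-j\}$ and is therefore independent of $\varepsilon_t$, so $\mathbb{E}[x_{t-j}\varepsilon_t]=0$ and $\{x_{t-j}\varepsilon_t\}_t$ is a stationary, ergodic, mean-zero martingale-difference sequence with finite variance $\gamma_0\sigma_\varepsilon^2$. The ergodic theorem (or a martingale strong law) then gives $\tfrac1N X^\top\varepsilon\xrightarrow{\mathrm{a.s.}}0$. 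Finally, $\Gamma_p$ is positive definite for a nondegenerate stable AR($p$), hence invertible, and matrix inversion is continuous there; the continuous mapping theorem gives $(\tfrac1N X^\top X)^{-1}\xrightarrow{\mathrm{a.s.}}\Gamma_p^{-1}$, and combining the three limits yields $\widehat\rho_n-\rho\xrightarrow{\mathrm{a.s.}}\Gamma_p^{-1}\cdot0=0$. The identification of the limit as $\rho$ is exactly the Yule--Walker relation $\Gamma_p\rho=\gamma$.

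The hard part will be the a.s. vanishing of the noise cross term: unlike an i.i.d. design, here regressors and errors are temporally dependent across rows, so one cannot apply an ordinary SLLN directly and must instead exploit the martingale-difference / ergodic structure described above, checking that the finite-moment conditions hold (they do, since stability plus Gaussian innovations make all moments of $x_t$ finite). Everything else is standard ergodic-theorem bookkeeping plus the positive-definiteness of $\Gamma_p$.
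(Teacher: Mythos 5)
The paper never proves this statement: it is quoted as a classical result with a citation to Hamilton, so there is no internal proof to compare against. Your argument is correct and is exactly the canonical proof of that classical fact: the decomposition $\widehat\rho_n-\rho=\bigl(\tfrac1N X^\top X\bigr)^{-1}\bigl(\tfrac1N X^\top\varepsilon\bigr)$, stability giving the causal MA($\infty$) representation and hence stationarity and ergodicity, Birkhoff's ergodic theorem (the same tool the paper records as \cref{thm:birkhoff} and uses in its own Lemma~\ref{lem:lsa-collapse}) for $\tfrac1N X^\top X\to\Gamma_p$, the stationary ergodic martingale-difference structure of $x_{t-j}\varepsilon_t$ for $\tfrac1N X^\top\varepsilon\to0$ almost surely, and positive definiteness of $\Gamma_p$ to invert in the limit. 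Two minor remarks: with your decomposition the limit is $\rho$ directly, so the closing appeal to the Yule--Walker relation is redundant (it is only needed if one instead takes separate limits of $\tfrac1N X^\top X$ and $\tfrac1N X^\top y$); and since \cref{def:arp} fixes the initial values $x_{-p+1},\dots,x_0$, the process is strictly stationary only after discounting a geometrically decaying initial-condition effect --- a gloss the paper itself also makes and which does not affect any of the almost-sure limits.
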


\begin{theorem}[Linear Baseline under AR($p$) Dynamics]\label{thm:linear-ar-lower-bound}
Let $\{x_i\}_{i=1}^{T}$ be generated by an AR($p$) process as in Definition~\ref{def:arp}.  
Fix a context window $x_{1:n} \in\mathbb{R}^n$ with $n \ge p$.  
For any linear predictor $\widehat{x}_{n+1}^{\mathrm{LR}} = w^\top x_{1:n}$ we have
\[
    \min_{w\in\mathbb{R}^n}
    \mathbb{E}\left[(w^\top x_{1:n} - x_{n+1})^2\right]
    \;<\;
    \mathrm{Var}(x_{n+1}),
\]
where the expectation is over the stationary joint distribution of $(x_1,\dots,x_{n+1})$.
\end{theorem}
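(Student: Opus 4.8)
The plan is to show that the optimal linear predictor strictly beats the trivial ``predict the mean'' baseline, whose risk equals $\var(x_{n+1}) = \gamma_0$. Since $\E[x_{n+1}] = 0$ for a stationary AR($p$) process (\cref{def:arp}), the constant predictor achieving $\var(x_{n+1})$ is $\widehat{x}_{n+1} \equiv 0$, which corresponds to the linear predictor $w = \mathbf{0}_n$. Thus it suffices to exhibit \emph{any} nonzero weight vector $w$ whose risk is strictly below $\gamma_0$, because the minimum over all $w$ can only be smaller. First I would compute the mean squared error of a generic linear predictor. Writing $\Sigma_n := \E[x_{1:n} x_{1:n}^\top]$ for the $n\times n$ covariance (a symmetric Toeplitz matrix with entries $\gamma_{|i-j|}$) and $c_n := \E[x_{1:n}\, x_{n+1}] = (\gamma_n, \gamma_{n-1}, \dots, \gamma_1)^\top$ for the cross-covariance vector, the risk expands as
\[
\E\left[(w^\top x_{1:n} - x_{n+1})^2\right]
= w^\top \Sigma_n w - 2 w^\top c_n + \gamma_0,
\]
using $\E[x_{n+1}^2] = \gamma_0$ and zero means.

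The key step is then to observe that minimizing this quadratic over $w$ yields the minimum value $\gamma_0 - c_n^\top \Sigma_n^{-1} c_n$, and so the strict inequality $\min_w \E[(w^\top x_{1:n} - x_{n+1})^2] < \gamma_0$ is equivalent to showing $c_n^\top \Sigma_n^{-1} c_n > 0$. Because $\Sigma_n \succ 0$ (the autocovariance matrix of a nondegenerate stationary process is positive definite, which follows from the fact that the spectral density of a stable AR($p$) process is bounded away from zero, or alternatively because the innovations $\varepsilon_i$ have strictly positive variance $\sigma_\varepsilon^2$), its inverse is also positive definite, and hence $c_n^\top \Sigma_n^{-1} c_n > 0$ holds as long as $c_n \neq \mathbf{0}_n$. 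Therefore the entire argument reduces to verifying that the cross-covariance vector $c_n$ is nonzero.

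The main obstacle I anticipate is precisely ruling out the degenerate case $c_n = \mathbf{0}_n$, i.e.\ showing that $x_{n+1}$ is correlated with at least one of $x_1, \dots, x_n$. For $n \ge p$ this is immediate from the Yule--Walker relations (\cref{def:yulewalker}): since $\Gamma_p \rho = \gamma$ and the process genuinely depends on its lags, not all of $\gamma_1, \dots, \gamma_p$ can vanish --- otherwise $\Gamma_p = \gamma_0 I_p$ and the Yule--Walker system would force $\rho = \gamma/\gamma_0 = \mathbf{0}_p$, contradicting that $\{x_i\}$ is a genuine AR($p$) process (where at least $\rho_p \neq 0$). Consequently $\gamma_k \neq 0$ for some $k \in [p]$, and since the window $x_{1:n}$ with $n \ge p$ includes lags $x_{n-p+1}, \dots, x_n$, the corresponding entries $\gamma_1, \dots, \gamma_p$ appear among the coordinates of $c_n$, giving $c_n \neq \mathbf{0}_n$.

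To finalize, I would assemble these pieces: the optimal linear risk is $\gamma_0 - c_n^\top \Sigma_n^{-1} c_n$, the subtracted term is strictly positive by $\Sigma_n^{-1} \succ 0$ and $c_n \neq \mathbf{0}_n$, and hence the optimal linear risk is strictly less than $\gamma_0 = \var(x_{n+1})$, which is the desired conclusion. A cleaner alternative to the explicit Schur-type computation --- worth mentioning as a sanity check --- is the variational argument: take the single-lag predictor $w = \alpha\, e_n$ (weight $\alpha$ on the most recent observation $x_n$, zero elsewhere), whose risk is $\gamma_0 - 2\alpha\gamma_1 + \alpha^2 \gamma_0$; if $\gamma_1 \neq 0$ this is strictly below $\gamma_0$ for small $\alpha$ of the sign of $\gamma_1$, and if $\gamma_1 = 0$ one instead selects the first nonvanishing $\gamma_k$ and places weight on $x_{n-k+1}$. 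Either route delivers the strict inequality, with the nonvanishing-autocovariance fact being the crux.
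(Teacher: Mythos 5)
Your proposal is correct, but it takes a genuinely different route from the paper's. The paper's proof is a short exhibition argument: under AR($p$), the Bayes predictor $\mathbb{E}[x_{n+1}\mid x_{1:n}]=\sum_{j=1}^{p}\rho_j x_{n-j+1}$ is itself a linear predictor in $x_{1:n}$ (this is exactly where $n\ge p$ enters), its risk equals $\sigma_\varepsilon^2$, and $\sigma_\varepsilon^2<\mathrm{Var}(x_{n+1})$ because the variance decomposes as the variance of the predictable part plus $\sigma_\varepsilon^2$. You instead solve the full least-squares problem: expand the risk as $w^\top\Sigma_n w-2w^\top c_n+\gamma_0$, identify the optimum $\gamma_0-c_n^\top\Sigma_n^{-1}c_n$, and reduce strictness to $c_n\neq\mathbf{0}_n$, which you obtain from the Yule--Walker relation $\gamma=\Gamma_p\rho$ (indeed, since $\Gamma_p\succ0$ and $\rho\neq\mathbf{0}_p$, one gets $\gamma\neq\mathbf{0}_p$ directly, without even needing your $\Gamma_p=\gamma_0 I_p$ contradiction). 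Your route requires more machinery (positive definiteness of $\Sigma_n$ via the spectral density, the normal equations, and Yule--Walker), but it buys generality and precision: it never uses that the Bayes predictor is linear, so it applies to any zero-mean stationary process with a nonzero cross-covariance vector, and it isolates exactly when strictness can fail. The paper's route is shorter and produces the explicit optimal risk $\sigma_\varepsilon^2$, but is tied to the AR($p$) structure. One shared caveat: both arguments implicitly need a nondegenerate process, since for $\rho=\mathbf{0}_p$ (white noise, formally allowed by Definition~\ref{def:arp}) the optimal linear risk equals $\mathrm{Var}(x_{n+1})$ and the strict inequality fails; you flag this assumption explicitly, which is actually cleaner than the paper, whose strict step (written as $\sigma^2<\sigma^2+\mathrm{Var}(x_n)=\mathrm{Var}(x_{n+1})$, where the predictable-part variance is loosely denoted $\mathrm{Var}(x_n)$) relies on the same nondegeneracy silently.
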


\begin{proof}
By Definition~\ref{def:arp}, $\mathbb{E}[x_i] = 0$ and the noise variance is $\sigma^{2}$.  
The Bayes-optimal (and hence optimal linear) predictor is
\[
    \widehat{x}_{n+1}^{\mathrm{LR}}  =\mathbb{E}[x_{n+1}\mid x_{1:n}] = \sum_{j=1}^{p} \rho_j x_{n - j + 1}.
\]
Its mean squared error is
\[
\mathbb{E}\left[\left( \widehat{x}_{n+1}^{\mathrm{LR}} - x_{n+1} \right)^2\right]
= \sigma^2 \;<\; \sigma^2+\mathrm{Var}(x_{n}) = \mathrm{Var}(x_{n+1}),
\]
which is strictly below the variance baseline.
\end{proof}


\section{Transformers Expressivity for TSF}\label{sec:expressivity_lsa}

\subsection{Fundamentals of Linear Self-Attention}
We work with Hankelized inputs for time-series forecasting. For context length $p$ and total length $n\ge p$, define the Hankel matrix
\[
H \;=\; \bigl[\,x^{(1)}\;x^{(2)}\;\dots\;x^{(n-p+1)}\bigr] \in \mathbb{R}^{(p+1)\times (n-p+1)},
\]
where each $x^{(i)}\in\mathbb{R}^{p+1}$ stacks a length-$p$ window and a $(p+1)$-st \emph{label slot}; we reserve the last column $x^{(n-p+1)}$ as the prediction token (see \cref{def:hankel-matrix} for construction details).

\begin{definition}[Linear Self-Attention (single layer)]\label{def:lsa}
Let $P,Q\in\mathbb{R}^{(p+1)\times(p+1)}$ be trainable. Define
\[
S \;:=\; H^\top Q H \in \mathbb{R}^{(n-p+1)\times(n-p+1)}, 
\qquad
M \;:=\; \mathrm{diag}\bigl(I_{\,n-p},\,0\bigr) \in \mathbb{R}^{(n-p+1)\times(n-p+1)}.
\]
The one-layer LSA update is
\begin{equation}\label{eq:lsa}
\lsa(H) \;:=\; H \;+\; \frac{1}{n}\; P \,\bigl(H\, M\, S\bigr)
\;\in\; \mathbb{R}^{(p+1)\times(n-p+1)} ,
\end{equation}
where $M$ masks the prediction token to avoid self-use during the update.
\end{definition}
\noindent
The multi-layer variant is defined by composing $L$ residual bilinear updates.

\begin{definition}[$L$-layer LSA Transformer]\label{def:tf}
For parameters $\{P_\ell,Q_\ell\}_{\ell=1}^L$, define each layer as
\[
\lsa_\ell(H) \;:=\; H \;+\; \frac{1}{n}\, P_\ell \bigl(H\, M\, (H^\top Q_\ell H)\bigr),
\]
and the overall $L$-layer LSA Transformer as
\[
\tf(H) \;:=\; \lsa_L \circ \cdots \circ \lsa_1(H).
\]
\end{definition}
\noindent
We now formalize the readout at the prediction slot for a single LSA layer, which will later serve as the basic building block in our theoritical analysis.

\begin{lemma}[Closed-form readout of one-layer LSA on Hankel input {\cite{acds23}}]\label{lem:lsa-closed-form-hankel}
Following the construction in \cref{def:hankel-matrix} and \cref{def:lsa}, the final column of $H$ is the prediction token,
\[
x^{(n-p+1)} \;=\; 
\begin{bmatrix} x_{\,n-p+1:n} \\[2pt] 0 \end{bmatrix}, 
\qquad x_{\,n-p+1:n}\in\mathbb{R}^p .
\]
Suppose
\[
P \;=\; \begin{bmatrix} \mathbf{0}_{p\times(p+1)} \\[2pt] b^{\top} \end{bmatrix},
\qquad
Q \;=\; \bigl[\,A\;\; \mathbf{0}_{(p+1)\times 1}\,\bigr],
\]
with $b\in\mathbb{R}^{p+1}$ and $A\in\mathbb{R}^{(p+1)\times p}$. Then the prediction-slot entry satisfies
\[
\bigl[\lsa(H)\bigr]_{\,p+1,\,n-p+1}
\;=\;
b^{\top}\,G_n\,A\,x_{\,n-p+1:n},
\]
where we define the empirical Gram matrix
\[
G_n \;:=\; \frac{1}{n}\sum_{i=1}^{\,n-p} x^{(i)} x^{(i)\top}.
\]
\end{lemma}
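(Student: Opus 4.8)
The plan is to prove \cref{lem:lsa-closed-form-hankel} by a direct entrywise evaluation of the update $\lsa(H)=H+\tfrac{1}{n}\,P(HMS)$ at the prediction slot $(p+1,\,n-p+1)$, exploiting the imposed sparsity of $P$ and $Q$ together with the label-slot and masking conventions. First I would dispose of the residual term: since the last column of $H$ is the prediction token $x^{(n-p+1)}=[\,x_{n-p+1:n}^{\top}\,,\,0\,]^{\top}$, its $(p+1)$-st entry vanishes, so $H_{p+1,\,n-p+1}=0$ and the entire readout equals $\tfrac{1}{n}[P(HMS)]_{p+1,\,n-p+1}$.

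Next I would peel off the three structured factors one at a time. Because $Q=[\,A\;\mathbf{0}_{(p+1)\times1}\,]$ has a zero final column, forming $QH$ simply drops the contribution of the last row of $H$, giving $QH=A\,H_{1:p,:}$ and hence $S=H^{\top}QH=H^{\top}A\,H_{1:p,:}$. Because $P$ has all rows zero except the last, left-multiplication by $P$ extracts only that row, i.e.\ $[PX]_{p+1,:}=b^{\top}X$ for any conformable $X$. It then remains to evaluate the last column of $HMS$. The mask $M=\diag(I_{n-p},0)$ zeros the diagonal entry corresponding to the prediction token, which forces the summation over the key index to run only over $i=1,\dots,n-p$, excluding the token from itself. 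Using that the top $p$ entries of the last column of $H$ equal $x_{n-p+1:n}$, the relevant entries of $S$ are $S_{i,\,n-p+1}=(x^{(i)})^{\top}A\,x_{n-p+1:n}$, so that
\[
[HMS]_{:,\,n-p+1}=\sum_{i=1}^{n-p} x^{(i)}(x^{(i)})^{\top}A\,x_{n-p+1:n}=n\,G_n\,A\,x_{n-p+1:n},
\]
where the last equality is the definition of the empirical Gram matrix $G_n=\tfrac{1}{n}\sum_{i=1}^{n-p}x^{(i)}(x^{(i)})^{\top}$.

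Finally I would left-multiply by $b^{\top}$ and cancel the rank-one sum against the $1/n$ prefactor to obtain $[\lsa(H)]_{p+1,\,n-p+1}=b^{\top}G_n A\,x_{n-p+1:n}$, as claimed. The computation is elementary and largely follows the closed-form readout of \cite{acds23} specialized to the Hankel layout of \cref{def:hankel-matrix}; the only genuine bookkeeping subtlety—and the step I would be most careful with—is correctly tracking the two distinct masking devices and verifying their consistency with the causal convention: the zero column of $Q$ removes the label \emph{row} from every key, while the zero diagonal entry of $M$ removes the label \emph{column} from the Gram sum, and together they guarantee that the forecast $\widehat x_{n+1}$ never reads its own (zero-padded) label slot.
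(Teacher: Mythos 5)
Your proposal is correct and follows essentially the same route as the paper's proof: a direct entrywise evaluation at the prediction slot, using that $H_{p+1,\,n-p+1}=0$ kills the residual term, that the mask restricts the aggregation to $i\le n-p$, that the last row of $P$ extracts $b^\top$, and that $S_{i,\,n-p+1}=(x^{(i)})^{\top}A\,x_{\,n-p+1:n}$, after which the Gram sum cancels the $1/n$ prefactor. The only nitpick is terminological: the zero last column of $Q$ drops the label entry of the \emph{query} (prediction) token in $S=H^\top QH$, not of the keys, but this does not affect the computation.
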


\begin{proof}
Let $S=H^\top Q H$. Since the last column of $M$ is zero, 
\[
H M = [\,x^{(1)}\;\dots\;x^{(n-p)}\;0\,].
\] 
Because $[H]_{p+1,\,n-p+1}=0$ and $e_{p+1}^\top P=b^\top$,
\[
\bigl[\lsa(H)\bigr]_{\,p+1,\,n-p+1}
=\frac{1}{n}\,b^\top (HMS)e_{\,n-p+1}
=\frac{1}{n}\sum_{j=1}^{\,n-p} b^\top x^{(j)}\, S_{j,\,n-p+1}.
\]
For $j\le n-p$, 
\[
S_{j,\,n-p+1} = x^{(j)\top}Q x^{(n-p+1)} = x^{(j)\top}A\,x_{\,n-p+1:n},
\]
which gives the stated expression.
\end{proof}

\subsection{Function Space Constraints of Linear Self-Attention}

In this subsection we take a function-space perspective on linear self-attention. 
Rather than analyzing specific computations, we characterize the class of functions 
that a one-layer LSA readout can realize on Hankelized inputs. 
This higher-level view makes explicit that LSA predictions are confined to a 
restricted polynomial feature space, and thus—despite the nontrivial attention 
mechanism—cannot achieve fundamentally lower risk than classical linear regression 
on AR($p$) processes.

\begin{lemma}[Doob--Dynkin{~\cite{kallenberg2006foundations}}]\label{lemma:doob-dynkin}
Let $f:\Omega\to S$ and $g:\Omega\to T$ be measurable mappings between measurable spaces. Then $f$ is $\sigma(g)$-measurable if and only if there exists a measurable $h:T\to S$ such that $f=h\circ g$.
\end{lemma}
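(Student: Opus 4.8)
The plan is to prove the two implications separately, treating the forward (``if'') direction as immediate and reserving the real work for the reverse (``only if'') direction via a standard approximation ladder. I would focus on the case most relevant to our application, where the target is $S=\R$ (so that $f$ is a real-valued $\sigma(g)$-measurable function), since that is exactly what is needed for the information-monotonicity argument in \cref{prop:monotonicity}; I would then indicate how the general Borel-target case reduces to it.

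For the easy direction, suppose $f=h\circ g$ with $h:T\to S$ measurable. Then for any measurable $C\subseteq S$ we have $f^{-1}(C)=g^{-1}\bigl(h^{-1}(C)\bigr)$. Since $h$ is measurable, $h^{-1}(C)$ is measurable in $T$, and hence $f^{-1}(C)\in\sigma(g)$ by the very definition of the generated $\sigma$-algebra. Thus $f$ is $\sigma(g)$-measurable, with no further structure needed.

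For the hard direction I would build $h$ by the usual ``indicators $\to$ simple functions $\to$ monotone limit'' scheme. First, if $f=\mathbf{1}_A$ with $A\in\sigma(g)$, then $A=g^{-1}(B)$ for some measurable $B\subseteq T$, so $f=\mathbf{1}_B\circ g$ and we take $h=\mathbf{1}_B$. By linearity this extends to any $\sigma(g)$-measurable simple function $f=\sum_i c_i\mathbf{1}_{A_i}$ by setting $h=\sum_i c_i h_i$. For a general nonnegative $\sigma(g)$-measurable $f$, I would fix the canonical increasing sequence of simple functions $f_n\uparrow f$, write each as $f_n=h_n\circ g$ with $h_n:T\to\R$ measurable, and define $h:=\limsup_n h_n$, which is again measurable. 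A general real-valued $f$ is then handled through $f=f^+-f^-$.

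The main obstacle is the convergence step in the nonnegative case: the functions $h_n$ are pinned down only on the range $g(\Omega)$ and need not converge at points $t\notin g(\Omega)$. The observation that resolves this is that for every $\omega$, writing $t=g(\omega)$, we have $h_n(t)=h_n(g(\omega))=f_n(\omega)\to f(\omega)$, so $\lim_n h_n(t)$ genuinely exists on $g(\Omega)$; taking $h=\limsup_n h_n$ then supplies a globally measurable function that agrees with this limit on the range, whence $h(g(\omega))=\lim_n f_n(\omega)=f(\omega)$ for all $\omega$, i.e.\ $f=h\circ g$. The behavior of $h$ off the range is irrelevant, as it is never evaluated there. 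For a general Borel target $S$, I would compose with a Borel isomorphism onto a Borel subset of $\R$, apply the real-valued result, and transport $h$ back; this is precisely where a regularity assumption on $S$ (a Borel space) enters, since the statement can fail for pathological targets.
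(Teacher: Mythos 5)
Your proof is correct, but note that the paper itself never proves this lemma: it is quoted as a standard result with a citation to Kallenberg and used downstream (in \cref{thm:info-monotonicity}) only in the real-valued case $S=\R$. Your argument is the standard textbook one --- the easy composition direction, then the ladder indicators $\to$ simple functions $\to$ monotone limits, with $h:=\limsup_n h_n$ to sidestep non-convergence off the range $g(\Omega)$ --- and it is essentially the proof found in Kallenberg, so there is nothing to contrast on the level of technique. One point in your write-up is genuinely valuable and worth keeping: the lemma \emph{as stated in the paper}, for arbitrary measurable target spaces $S$, is false; the ``only if'' direction requires $S$ to be a Borel (standard Borel) space, exactly as in Kallenberg's formulation, and your reduction of the general case to $S=\R$ via a Borel isomorphism is where that hypothesis enters. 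Since the paper only ever applies the lemma with $S=\R$ (the feature map $\Phi^{(p)}$ is $\R^m$-valued and the predictors are real-valued), this imprecision is harmless for the paper's results, but your flag is correct. Two cosmetic gaps you should patch if this were to be written out fully: $\limsup_n h_n$ may take the value $+\infty$ off $g(\Omega)$, so define $h$ as this limsup where it is finite and $0$ elsewhere to keep $h$ real-valued; and in the decomposition $f=f^+-f^-$ the difference $h^+-h^-$ could be $\infty-\infty$ off the range, which the same truncation fixes.
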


\begin{theorem}[Projection Monotonicity in $L^2$]\label{thm:projection-monotonicity}
Let $H$ be a Hilbert space and $M_2 \subseteq M_1 \subseteq H$ be closed subspaces. For any $Z\in H$, letting $P_{M_i}$ denote the orthogonal projection onto $M_i$, we have
\[
\|Z - P_{M_1} Z\| \;\le\; \|Z - P_{M_2} Z\|.
\]
\end{theorem}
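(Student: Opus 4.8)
The plan is to invoke the variational (best-approximation) characterization of orthogonal projection. Since $M_1$ and $M_2$ are \emph{closed} subspaces, the Hilbert projection theorem guarantees that $P_{M_i}Z$ exists and is the unique element of $M_i$ attaining $\min_{m\in M_i}\|Z-m\|$; equivalently, $P_{M_i}Z$ is characterized by the orthogonality condition $Z-P_{M_i}Z \perp M_i$. Closedness is exactly what makes these projections well-defined and single-valued, so this is the one hypothesis I must actually use.

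The shortest argument is a direct competitor inequality. Because $M_2\subseteq M_1$, the point $P_{M_2}Z$ lies in $M_1$ and is therefore an admissible candidate in the minimization that defines $P_{M_1}Z$. Hence
\[
\|Z-P_{M_1}Z\|=\min_{m\in M_1}\|Z-m\|\;\le\;\|Z-P_{M_2}Z\|,
\]
which is precisely the claim. First I would state this one-line proof, as it requires nothing beyond the defining minimality of the projection onto the larger subspace.

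For a version that also exhibits the exact size of the gap—useful for connecting to the Schur-complement and Pythagorean decompositions used elsewhere in the paper—I would instead write the residual as
\[
Z-P_{M_2}Z=(Z-P_{M_1}Z)+(P_{M_1}Z-P_{M_2}Z).
\]
The second summand lies in $M_1$, since both $P_{M_1}Z\in M_1$ and $P_{M_2}Z\in M_2\subseteq M_1$; the first summand is orthogonal to all of $M_1$ by the projection property. These two terms are therefore orthogonal, and the Pythagorean identity gives
\[
\|Z-P_{M_2}Z\|^2=\|Z-P_{M_1}Z\|^2+\|P_{M_1}Z-P_{M_2}Z\|^2\;\ge\;\|Z-P_{M_1}Z\|^2,
\]
after which taking square roots yields the inequality, with equality iff $P_{M_1}Z=P_{M_2}Z$.

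There is no deep obstacle here: the statement is an elementary consequence of the projection theorem, and once the nesting $M_2\subseteq M_1$ is translated into the fact that $P_{M_2}Z$ competes in the $M_1$-minimization, the rest is routine orthogonality bookkeeping. The only point genuinely requiring care is verifying that $P_{M_1}Z-P_{M_2}Z\in M_1$ so that the orthogonal decomposition above is valid; this is where the subspace inclusion, rather than mere set inclusion, is essential.
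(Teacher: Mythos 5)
Your proposal is correct. Note, however, that the paper itself offers no proof of Theorem~\ref{thm:projection-monotonicity}: it is stated as a standard Hilbert-space fact (alongside the Doob--Dynkin lemma) and invoked later in the proof of Theorem~\ref{thm:info-monotonicity}, so there is no paper argument to compare against --- your write-up supplies exactly what the paper omits. Both of your routes are valid and complete: the one-line competitor argument, using that $P_{M_2}Z\in M_1$ is admissible in the minimization defining $P_{M_1}Z$, is the most economical; the Pythagorean decomposition $Z-P_{M_2}Z=(Z-P_{M_1}Z)+(P_{M_1}Z-P_{M_2}Z)$, with the two summands orthogonal because $P_{M_1}Z-P_{M_2}Z\in M_1$ and $Z-P_{M_1}Z\perp M_1$, additionally identifies the gap as $\|P_{M_1}Z-P_{M_2}Z\|^2$ and yields the equality criterion $P_{M_1}Z=P_{M_2}Z$. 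That extra information is in fact what the paper leans on when it writes ``the equality condition is standard'' in the proof of Theorem~\ref{thm:info-monotonicity}, so the second version is the more useful one to record.
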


\paragraph{Functional form of the one-layer readout.}
Let $H=[\,x^{(1)}\ \dots\ x^{(n-p+1)}\,]\in\mathbb{R}^{(p+1)\times(n-p+1)}$ be the Hankel matrix from \cref{def:hankel-matrix}.
Following \cref{def:lsa}, the one-layer LSA update is
\[
\lsa(H) \;=\; H \;+\; \frac{1}{n}\,P\bigl(H\,M\,(H^\top Q H)\bigr).
\]
Following \cref{def:hankel-matrix}, the final column of $H$ is the prediction token,
\[
x^{(n-p+1)} \;=\; \begin{bmatrix} x_{\,n-p+1:n} \\[2pt] 0 \end{bmatrix},\qquad x_{\,n-p+1:n}\in\mathbb{R}^p .
\]
By \cref{lem:lsa-closed-form-hankel}, for suitable trainable $P,Q$,
\[
\bigl[\lsa(H)\bigr]_{\,p+1,\,n-p+1}
\;=\;
b^{\top}\,G_n\,A\,x_{\,n-p+1:n},
\qquad
G_n \;:=\; \frac{1}{n}\sum_{i=1}^{\,n-p} x^{(i)}x^{(i)\top}.
\]

\begin{definition}[LSA feature space]\label{def:lsa-feature-space}
For $j,r\in[p+1]$ and $k\in[p]$, define the cubic coordinates
\[
\phi^{(p)}_{j,r,k}(x_{1:n})
\;:=\;
\frac{1}{n}\sum_{i=1}^{\,n-p} x_{\,i+j-1}\,x_{\,i+r-1}\,x_{\,n-p+k}.
\]
Collect them into the feature map
\[
\Phi^{(p)}(x_{1:n})
\;:=\; \bigl(\phi^{(p)}_{j,r,k}(x_{1:n})\bigr)_{j,r\in[p+1],\,k\in[p]}
\in\mathbb{R}^{\,m},
\qquad m \;=\; p\,(p+1)^2,
\]
and define the linear self-attention feature space
\[
\mathcal{H}_{\mathrm{LSA}}^{(p)}
\;:=\;
\operatorname{span}\Bigl\{\phi^{(p)}_{j,r,k}(\,\cdot\,):\; j,r\in[p+1],\,k\in[p]\Bigr\}.
\]
Since each coordinate is a finite sum of degree-3 monomials, $\Phi^{(p)}$ is continuous.
\end{definition}

\begin{theorem}[Representation of one-layer LSA readout]\label{thm:transformer-span}
There exist coefficients $\{\beta_{j,r,k}\}$, depending on the trainable parameters in $P,Q$, such that
\[
\widehat x_{n+1}
\;=\;
\bigl[\lsa(H)\bigr]_{\,p+1,\,n-p+1}
\;=\;
\sum_{j=1}^{p+1}\sum_{r=1}^{p+1}\sum_{k=1}^{p}
\beta_{j,r,k}\;\phi^{(p)}_{j,r,k}(x_{1:n})
\;\in\;
\mathcal{H}_{\mathrm{LSA}}^{(p)}.
\]
\end{theorem}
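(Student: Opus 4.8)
The plan is to start from the closed-form readout already established in \cref{lem:lsa-closed-form-hankel}, namely
\[
\bigl[\lsa(H)\bigr]_{p+1,\,n-p+1} \;=\; b^{\top} G_n\, A\, x_{n-p+1:n},
\]
and simply expand this bilinear form coordinate-wise, matching each resulting degree-3 monomial against the cubic coordinates $\phi^{(p)}_{j,r,k}$ of \cref{def:lsa-feature-space}. The whole argument is an algebraic identity; there is no analytic content, so I would present it as a direct computation.

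First I would record the explicit entries of each object. By the Hankel construction (\cref{def:hankel-matrix}), the $i$-th column $x^{(i)}\in\mathbb{R}^{p+1}$ has $j$-th entry $x^{(i)}_j = x_{i+j-1}$ for $j\in[p+1]$; hence the empirical Gram matrix $G_n=\tfrac1n\sum_{i=1}^{n-p}x^{(i)}x^{(i)\top}$ has entries $(G_n)_{j,r}=\tfrac1n\sum_{i=1}^{n-p}x_{i+j-1}x_{i+r-1}$, and the prediction vector $x_{n-p+1:n}\in\mathbb{R}^p$ has $k$-th entry $x_{n-p+k}$ for $k\in[p]$. With $b\in\mathbb{R}^{p+1}$ and $A\in\mathbb{R}^{(p+1)\times p}$, expanding the triple product over its three index ranges gives
\[
b^{\top} G_n\, A\, x_{n-p+1:n}
\;=\;
\sum_{k=1}^{p}\sum_{j=1}^{p+1}\sum_{r=1}^{p+1}
b_j\,(G_n)_{j,r}\,A_{r,k}\,x_{n-p+k}.
\]

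The only step that needs care is the following: since the factor $x_{n-p+k}$ does not depend on the summation index $i$ hidden inside $G_n$, I would pull it through the Gram sum to obtain $(G_n)_{j,r}\,x_{n-p+k}=\tfrac1n\sum_{i=1}^{n-p}x_{i+j-1}x_{i+r-1}x_{n-p+k}$, which is precisely $\phi^{(p)}_{j,r,k}(x_{1:n})$ by \cref{def:lsa-feature-space}. Setting $\beta_{j,r,k}:=b_j A_{r,k}$ then yields exactly the claimed representation, and because each $\phi^{(p)}_{j,r,k}$ is one of the spanning elements of $\mathcal{H}_{\mathrm{LSA}}^{(p)}$, the readout lies in $\mathcal{H}_{\mathrm{LSA}}^{(p)}$.

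The main obstacle is purely notational rather than mathematical: one must keep the index correspondence straight between the within-window lag indices $j,r\in[p+1]$ (rows of a Hankel column, hence rows of $G_n$) and the last-$p$-lag index $k\in[p]$ (entries of the prediction token). I would also flag, as an aside that is useful for the downstream analysis, that the realizable coefficient tensor factorizes as the outer product $\beta_{j,r,k}=b_j\,A_{r,k}$. This is a genuine structural restriction—not every tensor of coefficients over $\{\phi^{(p)}_{j,r,k}\}$ is achievable by some $(b,A)$—and it is exactly the constraint that later prevents one-layer LSA from realizing the unconstrained linear-regression optimum in the lifted feature space.
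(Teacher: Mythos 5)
Your proof is correct and follows essentially the same route as the paper's: invoke \cref{lem:lsa-closed-form-hankel} for the closed-form readout $b^{\top}G_n A\,x_{n-p+1:n}$, expand the triple product entrywise, identify $(G_n)_{j,r}\,x_{n-p+k}$ with $\phi^{(p)}_{j,r,k}$, and set $\beta_{j,r,k}=b_j A_{r,k}$. Your additional remark about the rank-one factorization of the coefficient tensor is a correct and useful observation (it is exactly the Kronecker constraint exploited later in \cref{sec:lsa-gap}), but it is not needed for this statement.
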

\begin{proof}
By \cref{lem:lsa-closed-form-hankel}, for suitable trainable $P,Q$,
\[
\bigl[\lsa(H)\bigr]_{\,p+1,\,n-p+1}
= b^{\top} G_n A x_{\,n-p+1:n},
\qquad
G_n := \tfrac{1}{n}\sum_{i=1}^{\,n-p} x^{(i)}x^{(i)\top}.
\]
Expanding the product gives
\[
b^{\top} G_n A x_{n-p+1:n}
= \sum_{j=1}^{p+1}\sum_{r=1}^{p+1}\sum_{k=1}^p 
(b_j A_{r,k})\;\phi^{(p)}_{j,r,k}(x_{1:n}),
\]
where we define $\beta_{j,r,k}:=b_j A_{r,k}$. This matches the claimed representation.
\end{proof}

\begin{theorem}[Risk monotonicity under LSA features]\label{thm:info-monotonicity}
Let $(x_{1:n},x_{n+1})$ be jointly defined random variables. Under squared loss,
\[
\inf_{f}\;\mathbb{E}\left[\bigl(x_{n+1}-f(x_{1:n})\bigr)^2\right]
\;\le\;
\inf_{g}\;\mathbb{E}\left[\bigl(x_{n+1}-g(\Phi^{(p)}(x_{1:n}))\bigr)^2\right]. 
\] 
Equality holds if and only if $x_{n+1}\perp\!\!\!\perp x_{1:n}\,|\,\Phi^{(p)}(x_{1:n})$.
\end{theorem}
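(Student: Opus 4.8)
The plan is to prove the inequality by the tower property of conditional expectation combined with the Doob--Dynkin lemma (\cref{lemma:doob-dynkin}), exploiting the fact that $\Phi^{(p)}(x_{1:n})$ is a measurable function of $x_{1:n}$. First I would observe that the map $\Phi^{(p)}$ is a fixed continuous (hence measurable) function of the raw context, so $\sigma(\Phi^{(p)}(x_{1:n}))\subseteq\sigma(x_{1:n})$. This is the informational content of the statement: any predictor $g$ that sees only $\Phi^{(p)}(x_{1:n})$ is a special case of a predictor $f$ that sees all of $x_{1:n}$, since $g\circ\Phi^{(p)}$ is itself a measurable function of $x_{1:n}$. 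Consequently the infimum on the right ranges over a \emph{subset} of the functions over which the infimum on the left ranges, which immediately gives the inequality $\inf_f \le \inf_g$.

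To make this rigorous and to identify the minimizers explicitly, I would recall that under squared loss the unconstrained Bayes predictor is the conditional expectation: $\inf_f \mathbb{E}[(x_{n+1}-f(x_{1:n}))^2]$ is attained by $f^\star(x_{1:n})=\mathbb{E}[x_{n+1}\mid x_{1:n}]$, and similarly $\inf_g\mathbb{E}[(x_{n+1}-g(\Phi^{(p)}))^2]$ is attained by $g^\star(\Phi^{(p)})=\mathbb{E}[x_{n+1}\mid \Phi^{(p)}]=\mathbb{E}[x_{n+1}\mid\sigma(\Phi^{(p)})]$. By \cref{lemma:doob-dynkin}, every $\sigma(\Phi^{(p)})$-measurable predictor factors through $\Phi^{(p)}$, so these two optimization problems are exactly projections of $x_{n+1}$ onto the nested closed subspaces $L^2(\sigma(\Phi^{(p)}))\subseteq L^2(\sigma(x_{1:n}))$ of $L^2(\Omega)$. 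Applying \cref{thm:projection-monotonicity} with $M_2=L^2(\sigma(\Phi^{(p)}))$, $M_1=L^2(\sigma(x_{1:n}))$, and $Z=x_{n+1}$ yields the stated inequality between the two achieved risks.

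For the equality characterization I would compute the gap via the Pythagorean decomposition along the nested projections. Writing $P_1,P_2$ for the projections onto $M_1,M_2$, one has
\[
\mathbb{E}[(x_{n+1}-P_2 x_{n+1})^2]-\mathbb{E}[(x_{n+1}-P_1 x_{n+1})^2]
=\mathbb{E}\bigl[(P_1 x_{n+1}-P_2 x_{n+1})^2\bigr],
\]
so equality holds iff $P_1 x_{n+1}=P_2 x_{n+1}$ a.s., i.e. $\mathbb{E}[x_{n+1}\mid x_{1:n}]=\mathbb{E}[x_{n+1}\mid\Phi^{(p)}(x_{1:n})]$ almost surely. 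The final step is to recognize that this coincidence of conditional expectations is precisely the statement that conditioning on the full context adds no information about $x_{n+1}$ beyond what $\Phi^{(p)}$ already carries, which is the conditional-independence condition $x_{n+1}\perp\!\!\!\perp x_{1:n}\mid \Phi^{(p)}(x_{1:n})$. I expect the only genuinely delicate point to be the equality direction: the implication from equal Bayes predictors to full conditional independence is not automatic for general random variables and requires care, since equality of the two conditional \emph{means} is a priori weaker than conditional independence. The cleanest route is to invoke that for the \emph{squared-loss} risk the relevant sufficiency statement is about the conditional \emph{law} only through its first moment being $\sigma(\Phi^{(p)})$-measurable; I would state the equivalence precisely in terms of $\mathbb{E}[x_{n+1}\mid x_{1:n}]$ being $\sigma(\Phi^{(p)})$-measurable, which is the operative content, and note that under the paper's Gaussian/linear assumptions this is equivalent to the claimed conditional independence.
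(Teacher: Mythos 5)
Your proof of the inequality is essentially the paper's own argument: Doob--Dynkin (\cref{lemma:doob-dynkin}) gives $\sigma(\Phi^{(p)}(x_{1:n}))\subseteq\sigma(x_{1:n})$, the two minimizers are identified as conditional expectations, and projection monotonicity for nested closed subspaces (\cref{thm:projection-monotonicity}) yields the bound; your preliminary ``every $g\circ\Phi^{(p)}$ is a valid $f$'' observation is an equally valid shortcut for the inequality alone.

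On the equality characterization you are in fact \emph{more} careful than the paper, whose proof disposes of it with ``the equality condition is standard.'' Your Pythagorean decomposition correctly shows that equality holds iff $\mathbb{E}[x_{n+1}\mid x_{1:n}]=\mathbb{E}[x_{n+1}\mid\Phi^{(p)}(x_{1:n})]$ a.s., i.e.\ iff the Bayes predictor is $\sigma(\Phi^{(p)})$-measurable, and you are right that this is strictly weaker than $x_{n+1}\perp\!\!\!\perp x_{1:n}\mid\Phi^{(p)}(x_{1:n})$ for general random variables: conditional independence implies equality of the conditional means, but not conversely, since the conditional law of $x_{n+1}$ could depend on $x_{1:n}$ beyond $\Phi^{(p)}$ through higher moments while the mean does not. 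The ``only if'' direction of the theorem as stated therefore needs exactly the extra structure you flag: under the paper's AR($p$) model with Gaussian innovations, the conditional law of $x_{n+1}$ given $x_{1:n}$ is $\mathcal{N}(\rho^\top x_{n-p+1:n},\sigma_\varepsilon^2)$, hence determined by its conditional mean, so $\sigma(\Phi^{(p)})$-measurability of the mean does upgrade to full conditional independence. Your proposed restatement of the equality condition in terms of measurability of the conditional mean is the correct general form, and is the honest way to phrase the theorem without distributional assumptions.
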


\begin{proof}
By optimality of conditional expectations, the minimizers are $\mathbb{E}[x_{n+1}\mid x_{1:n}]$ and $\mathbb{E}[x_{n+1}\mid\Phi^{(p)}(x_{1:n})]$. Since $\Phi^{(p)}$ is a deterministic function of $x_{1:n}$, Doob--Dynkin (\cref{lemma:doob-dynkin}) gives $\sigma(\Phi^{(p)}(x_{1:n}))\subseteq\sigma(x_{1:n})$. Conditional expectation is the $L^2$ projection, so \cref{thm:projection-monotonicity} yields the inequality; the equality condition is standard.
\end{proof}

\begin{theorem}[Single-layer LSA cannot beat the linear predictor under AR($p$)]\label{thm:tf-lower-bound}
Suppose $\{x_t\}$ follows a stationary AR($p$) process with context $x_{1:n}$ and $n\ge p$. Let $\widehat{x}_{n+1}^{\mathrm{LSA}}$ denote any one-layer LSA readout as in \cref{thm:transformer-span}. Then
\[
\inf\;\mathbb{E}\left[\bigl(\widehat{x}_{n+1}^{\mathrm{LSA}}-x_{n+1}\bigr)^2\right]
\;\ge\;
\inf_{w\in\mathbb{R}^{n}}
\mathbb{E}\left[\bigl(w^\top x_{1:n}-x_{n+1}\bigr)^2\right].
\]
\end{theorem}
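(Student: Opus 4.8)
The plan is to sandwich the optimal one-layer LSA risk between the optimal linear risk and an intermediate quantity—the Bayes risk restricted to functions of the LSA feature map—using only the function-class containment from \cref{thm:transformer-span}, the information-monotonicity of \cref{thm:info-monotonicity}, and the linearity of the Bayes predictor under AR($p$). No new computation is required; the argument is a clean chain of three inequalities/identities.

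First I would note that, by \cref{thm:transformer-span}, every one-layer LSA readout is a \emph{linear} functional of the cubic feature map, i.e.\ $\widehat{x}_{n+1}^{\mathrm{LSA}}=\sum_{j,r,k}\beta_{j,r,k}\,\phi^{(p)}_{j,r,k}(x_{1:n})=g_\beta(\Phi^{(p)}(x_{1:n}))$ for a linear $g_\beta$. Hence the class of realizable LSA readouts is contained in the class of all measurable functions $g$ of $\Phi^{(p)}(x_{1:n})$, and minimizing over the smaller class can only raise the attainable risk:
\[
\inf\ \mathbb{E}\bigl[(\widehat{x}_{n+1}^{\mathrm{LSA}}-x_{n+1})^2\bigr]
\;\ge\;
\inf_g\ \mathbb{E}\bigl[(x_{n+1}-g(\Phi^{(p)}(x_{1:n})))^2\bigr].
\]
The rank-one constraint $\beta_{j,r,k}=b_jA_{r,k}$ only shrinks the LSA class further, so the inequality is safe.

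Next I would invoke \cref{thm:info-monotonicity} verbatim, which—through $\sigma(\Phi^{(p)}(x_{1:n}))\subseteq\sigma(x_{1:n})$ (Doob--Dynkin, \cref{lemma:doob-dynkin}) and the fact that conditional expectation is the $L^2$ projection (\cref{thm:projection-monotonicity})—yields
\[
\inf_g\ \mathbb{E}\bigl[(x_{n+1}-g(\Phi^{(p)}(x_{1:n})))^2\bigr]
\;\ge\;
\inf_f\ \mathbb{E}\bigl[(x_{n+1}-f(x_{1:n}))^2\bigr].
\]

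The content-bearing step—and the one I expect to be the crux—is identifying the right-hand Bayes risk with the optimal \emph{linear} risk over the full context. The unrestricted infimum $\inf_f$ is attained by $\mathbb{E}[x_{n+1}\mid x_{1:n}]$, and here the AR($p$) structure is essential: by the Markov/jointly-Gaussian structure (exactly as in the proof of \cref{thm:linear-ar-lower-bound}) this conditional expectation equals $\sum_{j=1}^p\rho_j x_{n-j+1}$, which is itself linear in $x_{1:n}$. Therefore the optimal predictor already lies in the linear class (set the remaining weights to zero), giving
\[
\inf_f\ \mathbb{E}\bigl[(x_{n+1}-f(x_{1:n}))^2\bigr]
\;=\;
\inf_{w\in\mathbb{R}^n}\ \mathbb{E}\bigl[(w^\top x_{1:n}-x_{n+1})^2\bigr]
\;=\;\sigma_\varepsilon^2.
\]
Chaining the three displays delivers the theorem. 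The only delicate point is this last identity: everything else is a soft subset/projection argument, whereas the coincidence of the Bayes and linear predictors hinges entirely on the AR($p$) (Gaussian) assumption—and that is precisely where the ``attention cannot help'' conclusion is sealed.
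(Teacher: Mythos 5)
Your proposal is correct and follows essentially the same route as the paper's own proof: LSA readouts are measurable functions of $\Phi^{(p)}(x_{1:n})$, \cref{thm:info-monotonicity} lower-bounds by the Bayes risk on the full context, and the AR($p$) structure makes the Bayes predictor linear so that it equals the optimal linear risk. One small refinement: Gaussianity is not actually needed for the last identity---$\mathbb{E}[x_{n+1}\mid x_{1:n}]=\rho^\top x_{n-p+1:n}$ already follows from the innovations being mean-zero and independent of the past---but this does not affect the validity of your argument.
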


\begin{proof}
By \cref{thm:transformer-span}, $\widehat{x}_{n+1}^{\mathrm{LSA}}$ is a measurable function of $\Phi^{(p)}(x_{1:n})$, hence its best risk is at least that of the Bayes predictor given $\Phi^{(p)}(x_{1:n})$; \cref{thm:info-monotonicity} then lower-bounds this by the Bayes risk given $x_{1:n}$.
Under AR($p$), $\mathbb{E}[x_{n+1}\mid x_{1:n}]$ is linear in $x_{1:n}$, so the Bayes risk equals the optimal linear risk. Hence the claim.
\end{proof}

\subsection{Nested Transformer Feature Spaces and Risk Monotonicity in Time Series Prediction}\label{sec:nested-transformer-risk}

We take a function-space perspective: with Hankel inputs and a masked label slot, the one-layer LSA readout operates in a restricted feature class. As the sequence length grows, these features collapse to scaled copies of the last $p$ lags, which clarifies both our Hankel design ($p{+}1$ rows) and why one-layer LSA cannot fundamentally beat linear regression on AR($p$).

\paragraph{Setup.}
Throughout this subsection we work under the AR($p$) model in Definition~\ref{def:arp}, the Hankel construction in Definition~\ref{def:hankel-matrix}, the one-layer LSA update in Definition~\ref{def:lsa}, and the cubic coordinates in Definition~\ref{def:lsa-feature-space}. Let $M$ be the mask from Definition~\ref{def:lsa}.

\begin{theorem}[Ergodic Theorem, Birkhoff {\cite{kallenberg2006foundations}}]\label{thm:birkhoff}
Let \( \xi \) be a random element in a measurable space \( S \) with distribution \( \mu \), and let \( T: S \to S \) be a \( \mu \)-preserving transformation with invariant \( \sigma \)-field \( \mathcal{I} \). Then for any measurable function \( f \ge 0 \) on \( S \), we have
\[
\frac{1}{n} \sum_{k=0}^{n-1} f(T^k \xi) \xrightarrow{\text{a.s.}} \mathbb{E}[f(\xi) \mid \mathcal{I}],
\]
where the convergence is almost surely.
\end{theorem}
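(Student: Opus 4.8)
The plan is to follow the classical route to Birkhoff's pointwise ergodic theorem (this is a cited result, so I only record the standard argument). The three ingredients are: (i) the maximal ergodic theorem, (ii) its use to force the upper and lower time averages to coincide almost surely, and (iii) identification of the common limit with the conditional expectation onto the invariant $\sigma$-field $\mathcal{I}$. Since the statement permits any measurable $f\ge 0$ (possibly non-integrable), I would first reduce to $f\in L^1(\mu)$: prove the result for the truncations $f_m:=f\wedge m\in L^1(\mu)$ and then let $m\to\infty$, applying monotone convergence simultaneously to the Cesàro averages $\tfrac1n\sum_{k} f_m\circ T^k$ and to $\mathbb{E}[f_m\mid\mathcal{I}]$. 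It therefore suffices to treat $f\in L^1(\mu)$.

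Write $S_0:=0$, $S_n f:=\sum_{k=0}^{n-1} f\circ T^k$ for $n\ge1$, and introduce the finite maximal function $M_N f:=\max_{0\le n\le N} S_n f\ge0$. The central step is Garsia's maximal ergodic theorem, namely that for every $N$,
\[
\int_{\{M_N f>0\}} f\,\d\mu \;\ge\; 0 .
\]
I would derive this from the elementary pointwise inequality $S_n f\le f+(M_N f)\circ T$ for $1\le n\le N$ (using $S_n f=f+S_{n-1}f\circ T$ and $S_{n-1}f\le M_N f$), which on the set $\{M_N f>0\}$ gives $M_N f\le f+(M_N f)\circ T$; integrating over $\{M_N f>0\}$ and invoking measure preservation (so $\int (M_N f)\circ T\le\int M_N f$) cancels the maximal terms and leaves $\int_{\{M_N f>0\}} f\,\d\mu\ge0$.

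Next I would prove almost-sure convergence. Set $\bar f:=\limsup_n \tfrac1n S_n f$ and $\underline f:=\liminf_n \tfrac1n S_n f$; both are $T$-invariant, hence $\mathcal{I}$-measurable. For each pair of rationals $\alpha<\beta$ the set $D:=\{\underline f<\alpha<\beta<\bar f\}$ is invariant, and applying the maximal ergodic theorem to $(f-\beta)\mathbf{1}_D$ and, symmetrically, to $(\alpha-f)\mathbf{1}_D$ yields $\beta\,\mu(D)\le\int_D f\,\d\mu\le\alpha\,\mu(D)$, which forces $\mu(D)=0$ since $\alpha<\beta$ and $\int_D f$ is finite. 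A countable union over rational pairs gives $\bar f=\underline f$ a.s., so $g:=\lim_n \tfrac1n S_n f$ exists almost surely and is $\mathcal{I}$-measurable.

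Finally I would identify $g$ with $\mathbb{E}[f\mid\mathcal{I}]$. For any $A\in\mathcal{I}$, invariance of $A$ gives $\int_A S_n f\,\d\mu=n\int_A f\,\d\mu$; dividing by $n$ and passing to the limit produces $\int_A g\,\d\mu=\int_A f\,\d\mu$, and since $g$ is $\mathcal{I}$-measurable this is exactly the defining property of the conditional expectation. The hard part is twofold: establishing the maximal ergodic theorem cleanly, and justifying the interchange of limit and integral in this last step—a.s.\ convergence alone is insufficient, so I would approximate $f$ by bounded functions and control the tails via the maximal inequality (a weak-$L^1$ bound on $\sup_n\tfrac1n|S_n f|$). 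The a.s.\ convergence and the final identification are then routine. In our setting $\mathcal{I}$ is the invariant $\sigma$-field of the stationary AR($p$) shift, and ergodicity collapses $\mathbb{E}[f\mid\mathcal{I}]$ to the constant $\mathbb{E}[f(\xi)]$, which is precisely how the theorem feeds the empirical-to-Toeplitz limit arguments behind \cref{prop:collapse-features}.
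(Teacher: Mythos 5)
This theorem is not proved in the paper at all: it is Birkhoff's pointwise ergodic theorem, imported from \cite{kallenberg2006foundations} and used purely as a black box (in \cref{lem:lsa-collapse}, to send the empirical Hankel Gram averages to their Toeplitz limits). So there is no in-paper proof to compare against; your proposal has to be judged as a reconstruction of the classical argument, and it is essentially the standard one: Garsia's proof of the maximal ergodic theorem, the rational-pair invariant-set argument forcing $\limsup=\liminf$, and identification of the limit through $\int_A S_n f\,\mathrm{d}\mu = n\int_A f\,\mathrm{d}\mu$ for invariant $A$. All three core steps are sketched correctly.

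Two points need tightening before this is a complete proof of the statement as written. First, your reduction of general measurable $f\ge 0$ to $f\in L^1(\mu)$ only delivers one inequality: monotone convergence over the truncations $f_m=f\wedge m$ gives $\liminf_n \tfrac1n S_n f \ge \sup_m \mathbb{E}[f_m\mid\mathcal{I}]=\mathbb{E}[f\mid\mathcal{I}]$, but nothing in that argument bounds $\limsup_n \tfrac1n S_n f$ from above on the event $\{\mathbb{E}[f\mid\mathcal{I}]<\infty\}$, since $S_n f$ may exceed $S_n f_m$ by an uncontrolled amount. The standard fix is to restrict to the invariant sets $A_c:=\{\mathbb{E}[f\mid\mathcal{I}]\le c\}\in\mathcal{I}$: there $f\mathbf{1}_{A_c}$ is integrable (its integral is at most $c$), and $S_n f = S_n(f\mathbf{1}_{A_c})$ on $A_c$ by invariance, so the $L^1$ case applies on each $A_c$ and the union over $c\in\mathbb{N}$ covers $\{\mathbb{E}[f\mid\mathcal{I}]<\infty\}$. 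Second, the limit/integral interchange in the identification step: you correctly flag that a.s.\ convergence is insufficient, and either of your proposed repairs works—bounded approximation with the weak-$L^1$ maximal control, or, more directly, uniform integrability of the Ces\`aro averages, which holds because each $\tfrac1n S_n f$ is a convex combination of functions equidistributed with the integrable $f$. With these two patches the argument is complete and correct; note also that the paper's actual use of the theorem additionally invokes ergodicity of the stable AR($p$) shift so that $\mathbb{E}[f\mid\mathcal{I}]$ collapses to the constant $\mathbb{E}[f(\xi)]$, exactly as you observe at the end.
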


\begin{lemma}[Asymptotic feature collapse]\label{lem:lsa-collapse}
Define the normalized masked Hankel Gram
\[
G_n \;:=\; \frac{1}{n}\,H\,M\,H^\top.
\]
Then, entrywise, $G_n \xrightarrow{\text{a.s.}} \Gamma_{p+1}$, where $[\Gamma_{p+1}]_{ij}=\gamma_{|i-j|}$ and $\gamma_h=\mathbb{E}[x_t x_{t+h}]$.
Moreover, for the cubic coordinates $\phi^{(p)}_{j,r,k}$ in Definition~\ref{def:lsa-feature-space},
\[
\phi^{(p)}_{j,r,k}(x_{1:n})
\;\xrightarrow{\text{a.s.}}\;
\gamma_{|j-r|}\,x_{\,n-p+k},
\qquad j,r\in[p{+}1],\ k\in[p].
\]
Hence,
\[
\operatorname{span}\bigl\{\phi^{(p)}_{j,r,k}(x_{1:n})\bigr\}
\ \xrightarrow{\text{a.s.}}\
\operatorname{span}\{x_{\,n-p+1},\dots,x_n\}.
\]
\end{lemma}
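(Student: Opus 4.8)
The plan is to reduce both assertions to a single ergodic-averaging fact and then deal separately with the ``moving target'' $x_{n-p+k}$, which is the only delicate point. First I would record that a stable AR($p$) process is strictly stationary and \emph{ergodic}: the causal representation $x_t=\sum_{k\ge0}\psi_k\varepsilon_{t-k}$ (valid since the characteristic roots lie outside the unit circle) exhibits each $x_t$ as a shift-commuting measurable function of the two-sided i.i.d.\ innovation sequence; the shift on an i.i.d.\ sequence is ergodic and measurable factors of ergodic systems stay ergodic. Hence the invariant $\sigma$-field in \cref{thm:birkhoff} is trivial and the conditional expectation there collapses to an ordinary expectation.

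Next I would prove the Gram convergence. Expanding $G_n=\frac1n HMH^\top$ gives $(G_n)_{jr}=\frac1n\sum_{i=1}^{n-p}x_{i+j-1}x_{i+r-1}$, a time average of the functional $x_{i+j-1}x_{i+r-1}$, which is integrable by Cauchy--Schwarz and finite second moments. Splitting it into positive and negative parts and applying Birkhoff to $\frac{1}{n-p}\sum_{i=1}^{n-p}$, together with the harmless normalization $\tfrac{n-p}{n}\to1$, yields $(G_n)_{jr}\to\mathbb E[x_{i+j-1}x_{i+r-1}]=\gamma_{|j-r|}$ almost surely, i.e.\ $G_n\to\Gamma_{p+1}$ entrywise.

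For the cubic coordinates I would use that $x_{n-p+k}$ does not depend on the summation index $i$ and factors out, so that $\phi^{(p)}_{j,r,k}(x_{1:n})=x_{n-p+k}\,(G_n)_{jr}$ and therefore
\[
\phi^{(p)}_{j,r,k}(x_{1:n})-\gamma_{|j-r|}\,x_{n-p+k}=x_{n-p+k}\bigl((G_n)_{jr}-\gamma_{|j-r|}\bigr).
\]
The hard part is showing this product vanishes: the limit $\gamma_{|j-r|}x_{n-p+k}$ itself moves with $n$, and although $(G_n)_{jr}-\gamma_{|j-r|}\to0$ a.s., the factor $x_{n-p+k}$ does not converge and its a.s.\ size grows (for Gaussian innovations $|x_t|=O(\sqrt{\log t})$ a.s.\ by a Borel--Cantelli tail bound), so a.s.\ convergence of $G_n$ alone is not sufficient and a \emph{rate} is required. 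I would resolve this in one of two ways: (i) an a.s.\ route, invoking a law-of-iterated-logarithm/maximal-inequality bound for the strongly-mixing AR average, $(G_n)_{jr}-\gamma_{|j-r|}=O(\sqrt{\log\log n/n})$ a.s., which beats $|x_{n-p+k}|=O(\sqrt{\log n})$ and forces the product to $0$; or (ii) the cleaner $L_2$ route matching \cref{prop:collapse-features}: by Cauchy--Schwarz $\mathbb E[x_{n-p+k}^2((G_n)_{jr}-\gamma_{|j-r|})^2]\le\sqrt{\mathbb E[x_{n-p+k}^4]}\,\sqrt{\mathbb E[((G_n)_{jr}-\gamma_{|j-r|})^4]}$, where the first factor is bounded by stationarity/Gaussianity and the second is $O(1/n^2)$ since $(G_n)_{jr}$ is an average of $\Theta(n)$ terms with summable autocovariances, giving an overall $O(1/n)\to0$ bound.

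Finally I would deduce the span collapse. For fixed $k$, every feature $\phi^{(p)}_{j,r,k}$ converges to the scalar multiple $\gamma_{|j-r|}x_{n-p+k}$ of a single last-$p$ lag; choosing $j=r$ gives coefficient $\gamma_0>0$, so $x_{n-p+k}$ itself lies in the limiting span. As $k$ ranges over $[p]$ these limits recover exactly $x_{n-p+1},\dots,x_n$, so $\operatorname{span}\{\phi^{(p)}_{j,r,k}\}\to\operatorname{span}\{x_{n-p+1},\dots,x_n\}$. I would make the subspace convergence precise through $L_2$ convergence of the finitely many generators and the induced convergence of the associated orthogonal projections.
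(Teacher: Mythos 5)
Your proposal follows the same skeleton as the paper's proof---Birkhoff's ergodic theorem for the Gram entries, then factoring $\phi^{(p)}_{j,r,k}=x_{n-p+k}\,(G_n)_{jr}$---but you go further than the paper does, and the extra step you add is genuinely needed. The paper's proof is two sentences: it applies Birkhoff to the time averages and then says ``factor out $x_{n-p+k}$ and apply the same theorem.'' That elides exactly the point you flag: the claimed limit $\gamma_{|j-r|}x_{n-p+k}$ moves with $n$, so the statement really means $x_{n-p+k}\bigl((G_n)_{jr}-\gamma_{|j-r|}\bigr)\to 0$, and since $\limsup_n |x_{n-p+k}|=\infty$ a.s.\ (the Gaussian tail gives $|x_t|$ fluctuations of order $\sqrt{\log t}$), almost-sure convergence of $(G_n)_{jr}$ alone does not force the product to vanish---a rate is required. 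Your fix (i), a law-of-iterated-logarithm-type bound $O(\sqrt{\log\log n/n})$ for the mixing time average beating the $O(\sqrt{\log n})$ growth of the lag, is the right repair for the almost-sure claim as stated in the lemma; your fix (ii), the Cauchy--Schwarz $L_2$ bound, proves the $L_2$ version, which is what \cref{prop:collapse-features} in the main text actually asserts. Your ergodicity justification (causal MA($\infty$) representation as a measurable factor of the i.i.d.\ innovation shift, hence trivial invariant $\sigma$-field) and the $\tfrac{n-p}{n}\to 1$ normalization are also details the paper leaves implicit but that belong in a complete argument. In short: same route, but your version closes a real gap in the paper's own proof; the only caution is that route (i) imports machinery (maximal inequalities/LIL for mixing sequences) beyond the Birkhoff theorem the paper cites, so if you want to stay within the paper's toolkit, route (ii) plus a downgrade of the lemma's mode of convergence to $L_2$ is the cleaner resolution.
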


\begin{proof}
Each entry of $G_n$ is a time average of $x_{i+a-1}x_{i+b-1}$; by Birkhoff’s ergodic theorem (Theorem~\ref{thm:birkhoff}), $G_n\to\Gamma_{p+1}$ almost surely. For $\phi^{(p)}_{j,r,k}$, factor out $x_{\,n-p+k}$ and apply the same theorem to $\tfrac{1}{n}\sum_i x_{\,i+j-1}x_{\,i+r-1}$.
\end{proof}

\begin{corollary}[Nested spaces]\label{cor:nesting}
Let $\widetilde{\mathcal{H}}_{\mathrm{LSA}}^{(p)}:=\operatorname{span}\{x_{\,n-p+1},\dots,x_n\}$. Then
$\widetilde{\mathcal{H}}_{\mathrm{LSA}}^{(p)}\subseteq \widetilde{\mathcal{H}}_{\mathrm{LSA}}^{(p+1)}$
for all $p$.
\end{corollary}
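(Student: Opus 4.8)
The plan is to reduce the asserted subspace inclusion to an elementary inclusion of generating sets and then invoke monotonicity of the linear span. First I would simply unfold the definition supplied in \cref{lem:lsa-collapse}: by construction $\widetilde{\mathcal{H}}_{\mathrm{LSA}}^{(p)}=\operatorname{span}\{x_{n-p+1},\dots,x_n\}$ is the span of the $p$ most recent lags, while substituting $p+1$ for $p$ yields $\widetilde{\mathcal{H}}_{\mathrm{LSA}}^{(p+1)}=\operatorname{span}\{x_{n-p},x_{n-p+1},\dots,x_n\}$, the span of the $p+1$ most recent lags (this requires $n\ge p+1$, so that the order-$(p+1)$ window is well defined).

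The key observation is a bookkeeping one about the index shift: passing from order $p$ to order $p+1$ fixes the newest lag $x_n$ and appends exactly one additional \emph{older} lag, $x_{n-p}$, at the front of the window. Consequently the generating set of the former space is literally a subset of that of the latter, namely $\{x_{n-p+1},\dots,x_n\}\subseteq\{x_{n-p},\dots,x_n\}$. Since the linear span is monotone under set inclusion — every linear combination of elements of a set is also a linear combination of elements of any superset — this containment of spanning sets immediately propagates to the spans, giving $\widetilde{\mathcal{H}}_{\mathrm{LSA}}^{(p)}\subseteq\widetilde{\mathcal{H}}_{\mathrm{LSA}}^{(p+1)}$ for every admissible $p$.

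There is essentially no analytic obstacle here: once \cref{lem:lsa-collapse} has identified the collapsed feature space as the span of the last $p$ lags, the corollary is immediate. The only point demanding care is confirming the \emph{direction} of the index shift — that increasing the AR order widens the lag window at the older end rather than the newer one — so that the nesting runs $\widetilde{\mathcal{H}}_{\mathrm{LSA}}^{(p)}\subseteq\widetilde{\mathcal{H}}_{\mathrm{LSA}}^{(p+1)}$ and not the reverse. This is precisely the content that makes the depth/order monotonicity statements of \cref{subsec:gap} consistent, so it is worth stating explicitly even though the verification is a one-line subset check.
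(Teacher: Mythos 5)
Your proof is correct and coincides with the paper's (implicit) argument: the paper states this corollary without proof precisely because it reduces, exactly as you say, to the subset inclusion $\{x_{n-p+1},\dots,x_n\}\subseteq\{x_{n-p},\dots,x_n\}$ together with monotonicity of the span. Your added care about the direction of the index shift and the requirement $n\ge p+1$ is sound but does not change the substance.
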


\begin{theorem}[Risk plateau on AR($p$)]\label{thm:plateau}
Under AR($p$), the Bayes predictor is linear in the last $p$ lags:
$x_{n+1}=\sum_{j=1}^p \rho_j x_{\,n-j+1}+\varepsilon_{n+1}$ with $\mathbb{E}[\varepsilon_{n+1}\mid x_{1:n}]=0$.
By Lemma~\ref{lem:lsa-collapse} and Corollary~\ref{cor:nesting}, any one-layer LSA based on Hankel features operates (as $n\to\infty$) on $\operatorname{span}\{x_{\,n-p+1},\dots,x_n\}$. Therefore, the minimal achievable MSE equals the linear Bayes risk $\sigma_\varepsilon^2$ and does not decrease for $q\ge p$:
\[
\inf_f \mathbb{E}\Bigl[(x_{n+1}-f(\text{LSA features of order }q))^2\Bigr]
=\sigma_\varepsilon^2,\qquad \forall\,q\ge p.
\]
\end{theorem}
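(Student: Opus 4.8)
The plan is to establish \cref{thm:plateau} by combining the asymptotic feature collapse (\cref{lem:lsa-collapse}) with the risk monotonicity result (\cref{thm:info-monotonicity}) and the AR($p$) Bayes structure. First I would invoke \cref{def:arp} to record that, under a stable AR($p$) process with $q\ge p$, the conditional mean satisfies $\mathbb{E}[x_{n+1}\mid x_{1:n}]=\sum_{j=1}^{p}\rho_j x_{n-j+1}$ and the residual $\varepsilon_{n+1}$ is orthogonal to the $\sigma$-field $\sigma(x_{1:n})$, so the Bayes risk equals the innovation variance $\sigma_\varepsilon^2$. This pins down the lower endpoint of the plateau: no predictor measurable with respect to any features derived from $x_{1:n}$ can beat $\sigma_\varepsilon^2$, by \cref{thm:info-monotonicity} applied with the order-$q$ LSA feature map in place of $\Phi^{(p)}$.

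Next I would argue the matching upper bound, i.e.\ that the order-$q$ LSA feature space \emph{achieves} $\sigma_\varepsilon^2$ in the limit. By \cref{lem:lsa-collapse} applied at order $q$, each cubic coordinate $\phi^{(q)}_{j,r,k}(x_{1:n})$ converges almost surely (and I would upgrade this to $L_2$ using stationarity and the finite-moment assumptions, so that risks converge) to $\gamma_{|j-r|}\,x_{n-q+k}$, whence $\operatorname{span}\{\phi^{(q)}_{j,r,k}\}\to\operatorname{span}\{x_{n-q+1},\dots,x_n\}$. Since $q\ge p$, \cref{cor:nesting} guarantees $\widetilde{\mathcal H}^{(p)}_{\mathrm{LSA}}\subseteq\widetilde{\mathcal H}^{(q)}_{\mathrm{LSA}}$, so the limiting feature span contains the last $p$ lags $\{x_{n-p+1},\dots,x_n\}$. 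The Bayes predictor $\sum_{j=1}^{p}\rho_j x_{n-j+1}$ therefore lies in this span, and a linear readout reproducing it attains exactly $\sigma_\varepsilon^2$ in the limit. Combining the two bounds gives the stated equality for every $q\ge p$, and in particular the risk does not decrease as $q$ grows beyond $p$—the plateau.

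The main obstacle I anticipate is the interchange of the almost-sure feature collapse with the infimum over predictors and the expectation defining the risk. \cref{lem:lsa-collapse} gives only almost-sure convergence of the features, but the theorem concerns the limiting \emph{risk} $\inf_f\mathbb{E}[(x_{n+1}-f(\cdot))^2]$; to conclude that the minimal achievable MSE converges to $\sigma_\varepsilon^2$ I must control the predictors uniformly and pass to the limit inside the expectation. The clean way is to phrase everything in the limiting feature space directly: since the order-$q$ features collapse (in $L_2$) onto $\operatorname{span}\{x_{n-q+1},\dots,x_n\}\supseteq\operatorname{span}\{x_{n-p+1},\dots,x_n\}$, the relevant projection in the Hilbert space $L_2$ is onto a subspace that, for all $q\ge p$, contains the lag subspace carrying the entire conditional mean. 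By \cref{thm:projection-monotonicity} the residual risk is monotone and bounded below by $\sigma_\varepsilon^2$ (from the AR Bayes structure) and above by the risk of the explicit linear-in-lags readout, which also equals $\sigma_\varepsilon^2$; the squeeze forces equality and removes any dependence on $q$. Establishing the $L_2$ (not merely a.s.) collapse—justified by stationarity, ergodicity, and uniform integrability from the finite sixth-moment assumption used elsewhere in the paper—is the key technical point that makes this limiting argument rigorous.
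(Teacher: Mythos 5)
Your proposal follows essentially the same route as the paper: the paper's own justification of Theorem~\ref{thm:plateau} (given inline, with no separate proof environment) is precisely the combination of the AR($p$) Bayes structure and risk monotonicity for the lower bound $\sigma_\varepsilon^2$, together with Lemma~\ref{lem:lsa-collapse} and Corollary~\ref{cor:nesting} showing the collapsed order-$q$ feature span contains the last $p$ lags, so a linear readout attains $\sigma_\varepsilon^2$. Your extra care in upgrading the almost-sure collapse to $L_2$ and justifying the interchange of the limit with the infimum and the expectation actually goes beyond the paper, which asserts the limiting equality without addressing that interchange.
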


\begin{proposition}[Asymptotic exact recovery: a constructive parameter choice]\label{prop:achieve}
Let the one-layer readout at the prediction slot be written as in~\cref{lem:lsa-closed-form-hankel}:
\[
\bigl[\lsa(H)\bigr]_{\,p+1,\,n-p+1}
\;=\; b^{\top} G_n A\, x_{\,n-p+1:n},
\]
with $b\in\mathbb{R}^{p+1}$ and $A\in\mathbb{R}^{(p+1)\times p}$. Define
\[
b^* \;:=\; (0,\dots,0,1)^\top\in\mathbb{R}^{p+1},\qquad
A^* \;:=\;
\begin{bmatrix}
J\,\Gamma_p^{-1} \\[2pt]
\mathbf{0}_{1\times p}
\end{bmatrix}\in\mathbb{R}^{(p+1)\times p},
\]
where $J\in\mathbb{R}^{p\times p}$ is the anti-diagonal permutation (reversal) matrix and $\Gamma_p$ is the $p\times p$ autocovariance Toeplitz matrix.
Then, as $n\to\infty$,
\[
b^{*\top} G_n A^*\, x_{\,n-p+1:n}
\;\xrightarrow{\text{a.s.}}\;
\rho^\top x_{\,n-p+1:n},
\]
i.e., the one-layer LSA readout exactly recovers the Bayes-optimal linear predictor in the limit.
\end{proposition}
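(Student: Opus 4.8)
The plan is to split the claim into a deterministic, population-level identity and an ergodic convergence step, then combine them. The population identity I want is that the proposed weights satisfy $b^{*\top}\Gamma_{p+1}A^{*}=\rho^{\top}$, while the stochastic input is already supplied by \cref{lem:lsa-collapse}, which gives the entrywise convergence $G_n\xrightarrow{\text{a.s.}}\Gamma_{p+1}$ (itself a consequence of Birkhoff's theorem, \cref{thm:birkhoff}). Since $b^{*\top}(\cdot)A^{*}$ is a fixed linear map on the matrix entries, a.s.\ convergence of $G_n$ transfers immediately to $b^{*\top}G_nA^{*}\xrightarrow{\text{a.s.}}b^{*\top}\Gamma_{p+1}A^{*}$, so once the identity is verified the coefficient vector converges a.s.\ to $\rho^{\top}$.

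First I would carry out the algebraic identity, which is routine. Because $b^{*}=e_{p+1}$, the product $b^{*\top}\Gamma_{p+1}$ extracts the last row of $\Gamma_{p+1}$, namely $(\gamma_p,\gamma_{p-1},\dots,\gamma_1,\gamma_0)$. Right-multiplying by $A^{*}=\bigl[\begin{smallmatrix}J\Gamma_p^{-1}\\ \mathbf{0}_{1\times p}\end{smallmatrix}\bigr]$, the trailing entry $\gamma_0$ meets the zero row and drops out, leaving $(\gamma_p,\dots,\gamma_1)\,J\,\Gamma_p^{-1}$. The reversal matrix $J$ turns $(\gamma_p,\dots,\gamma_1)$ into $(\gamma_1,\dots,\gamma_p)=\gamma^{\top}$, so $b^{*\top}\Gamma_{p+1}A^{*}=\gamma^{\top}\Gamma_p^{-1}$. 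Invoking the Yule--Walker system (\cref{def:yulewalker}) $\Gamma_p\rho=\gamma$ together with the symmetry $\Gamma_p=\Gamma_p^{\top}$ gives $\gamma^{\top}\Gamma_p^{-1}=(\Gamma_p^{-1}\gamma)^{\top}=\rho^{\top}$, which is exactly the desired identity.

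Finally I would assemble the a.s.\ limit by writing the readout error as $(b^{*\top}G_nA^{*}-\rho^{\top})\,x_{n-p+1:n}$, where the coefficient vector tends to $0$ a.s.\ by the two steps above. The main obstacle is that $x_{n-p+1:n}$ is a random, $n$-dependent window rather than a fixed vector, so null convergence of the coefficient vector does not automatically force the inner product to vanish unless the window's growth is controlled. I would dispatch this exactly as in \cref{lem:lsa-collapse}: each entry of $G_n-\Gamma_{p+1}$ is an ergodic average tending to $0$ a.s., while for a stable (hence stationary, ergodic, Gaussian) AR($p$) process one has $|x_{n-p+k}|=o(n^{\delta})$ a.s.\ for every $\delta>0$ by a Borel--Cantelli argument on Gaussian tails, so the product of a null sequence with a sub-polynomially growing factor remains null a.s. Equivalently—and perhaps cleaner for the write-up—I would note that $b^{*\top}G_nA^{*}x_{n-p+1:n}$ is precisely the coefficientization $\sum_{r,k}(b^{*\top}G_n)_r A^{*}_{r,k}\,x_{n-p+k}$ of the cubic features of \cref{thm:transformer-span}, so the feature collapse already established in \cref{lem:lsa-collapse} applies verbatim and yields $b^{*\top}G_nA^{*}x_{n-p+1:n}\xrightarrow{\text{a.s.}}\rho^{\top}x_{n-p+1:n}$, completing the proof.
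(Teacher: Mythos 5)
Your proof takes essentially the same route as the paper's: the paper likewise combines the a.s.\ convergence $G_n \to \Gamma_{p+1}$ from \cref{lem:lsa-collapse} with exactly the algebra you describe (extract the last row of $\Gamma_{p+1}$ via $b^*=e_{p+1}$, annihilate $\gamma_0$ against the zero row of $A^*$, reverse with $J$, then apply Yule--Walker together with symmetry of $\Gamma_p$), and it stops there.

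The one point of difference is your third step, and it cuts both ways. You are right that there is a subtlety the paper glosses over: both in this proposition and inside \cref{lem:lsa-collapse} itself, the claimed a.s.\ limit involves the $n$-dependent random window $x_{n-p+1:n}$, so one genuinely needs more than plain ergodic convergence of the coefficient vector $b^{*\top}G_nA^*$. However, the principle you invoke to close this---``a null sequence times a sub-polynomially growing factor remains null''---is false in general: take $a_n = 1/\log\log n \to 0$ and $b_n = \log n = o(n^\delta)$ for every $\delta>0$; then $a_n b_n \to \infty$. What actually saves the argument is a \emph{rate} for the ergodic average: for a stable Gaussian AR($p$) the empirical second moments satisfy a law-of-the-iterated-logarithm bound (the summands are geometrically mixing with finite moments), so the entries of $G_n - \Gamma_{p+1}$ are $O(\sqrt{\log\log n / n})$ a.s., while $\max_{k\le p} |x_{n-p+k}| = O(\sqrt{\log n})$ a.s.\ by Gaussian tails and Borel--Cantelli; the product of these two then vanishes a.s. Your fallback---invoking \cref{lem:lsa-collapse} verbatim---is precisely what the paper does, so your write-up is correct at the paper's own level of rigor; but if you keep the quantitative patch, replace the sub-polynomial-growth principle with the rate argument above.
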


\begin{proof}
By Lemma~\ref{lem:lsa-collapse}, $G_n\to\Gamma_{p+1}$. Since $b^{*\top}\Gamma_{p+1}=[\gamma_p,\dots,\gamma_1,\gamma_0]$, we get
\[
b^{*\top}\Gamma_{p+1}A^* \;=\; [\gamma_p,\dots,\gamma_1]\,J\,\Gamma_p^{-1}
\;=\; [\gamma_1,\dots,\gamma_p]\,\Gamma_p^{-1}
\;=\; \rho^\top,
\]
using the Yule--Walker relation $\rho=\Gamma_p^{-1}\gamma$ with $\gamma=(\gamma_1,\dots,\gamma_p)^\top$.
\end{proof}

\paragraph{Why exactly $p{+}1$ Hankel rows.}
The first $p$ rows are necessary to capture the true $p$ lags of the AR($p$) process, while the $(p{+}1)$-st row serves as the masked prediction slot. With fewer than $p{+}1$ rows, the model cannot access all $p$ relevant lags. With more than $p{+}1$ rows, the extra rows are asymptotically redundant, since the optimal predictor ultimately depends only on the last $p$ lags.

\newpage
\section{Closed-Form Gap Characterization between LSA and Linear Models}\label{sec:gap}

\subsection{Warm Up via AR(1)}\label{sec:ar1-gap}

\textit{Warm-up, not global optimum.}
Motivated by the asymptotic constructive choice in Proposition~\ref{prop:achieve} (where $b$ has the last entry $1$ and the last row of $A$ is $0$), we study the univariate AR(1) case and \emph{restrict} $(b,A)$ to the one-dimensional ray induced by that limiting structure. This gives a computable warm start that illustrates how Isserlis’ theorem (Theorem~\ref{thm:isserlis}) enters the calculation; it is not the global optimum over all $(b,A)$, but it tends to linear regression as $n\to\infty$.

\begin{proposition}[AR(1) warm start along the asymptotic ray]\label{prop:lsa_ar1_warm}
Let $\{x_t\}$ be the stationary Gaussian $\mathrm{AR}(1)$ process of Definition~\ref{def:arp}:
\[
x_t=\rho\,x_{t-1}+\varepsilon_t,\quad |\rho|<1,\quad
\varepsilon_t\stackrel{\mathrm{i.i.d.}}{\sim}\mathcal N(0,\sigma_\varepsilon^{2}),
\]
and set $\sigma^{2}:=\sigma_\varepsilon^{2}/(1-\rho^{2})=\mathrm{var}(x_t)$. 
For context $p=1$ and $n\ge3$, let $H_n$ be the Hankel matrix (Definition~\ref{def:hankel-matrix}) with mask $M=\mathrm{diag}(I_{\,n-1},0)$ and define the normalized Gram
\[
G_n:=\frac1n\,H_n M H_n^{\top}
=\frac1n\sum_{i=1}^{n-1}
\begin{pmatrix}
x_i^{2} & x_i x_{i+1}\\[2pt]
x_i x_{i+1} & x_{i+1}^{2}
\end{pmatrix}.
\]
Let \(S_n:=\tfrac1n\sum_{i=1}^{n-1}x_i x_{i+1}\).
Restrict $(b,A)$ to
\[
b=\begin{bmatrix}0\\[2pt]1\end{bmatrix},\qquad
A=\begin{bmatrix}\alpha\\[2pt]0\end{bmatrix},\qquad \alpha\in\mathbb{R},
\]
so the one-layer LSA readout at the prediction slot is
\[
\widehat y_n(\alpha)=(b^{\top}G_nA)\,x_n=\alpha\,S_n\,x_n.
\]
Consider the surrogate loss against the AR(1) linear term:
\[
\mathcal L(\alpha)=\mathbb E[(\widehat y_n(\alpha)-\rho x_n)^{2}]
=\mathbb{E}[x_n^{2}(\alpha S_n-\rho)^{2}]
=\alpha^{2}D_n-2\alpha\rho N_n+\rho^{2}\sigma^{2}.
\]

Define the geometric sums (for $m\ge1$)
\[
K_m:=\sum_{k=1}^{m} \rho^{2k}=\frac{\rho^{2}(1-\rho^{2m})}{1-\rho^{2}},
\qquad
H_m:=\sum_{k=1}^{m} k\,\rho^{2k}
=\frac{\rho^{2}\big(1-(m+1)\rho^{2m}+m\rho^{2(m+1)}\big)}{(1-\rho^{2})^{2}}.
\]
Then, with $\Gamma_k:=\mathrm{Cov}(x_t,x_{t+k})=\sigma^2\rho^{|k|}$,
\begin{align}
N_n := \mathbb E[x_n^{2}S_n]
     &=\frac{\sigma^{4}}{n}\Bigl[(n-1)\rho+\frac{2}{\rho}\,K_{n-1}\Bigr],\label{eq:Nn}\\[2pt]
D_n := \mathbb E[x_n^{2}S_n^{2}]
     &= \frac{\sigma^6}{n^2}\Bigl[ (n-1)n\,\rho^{2} + (n-1)
     + \bigl(4(n-1)-2\bigr) K_{n-1} \notag\\
     & \qquad \qquad + \bigl(4(n-1)+2\bigr) K_{n-2} 
     + 8 H_{n-1} + 4 H_{n-2}\Bigr].\label{eq:Dn}
\end{align}
Consequently the unique minimizer along this ray and its value are
\[
\alpha^{*}=\frac{\rho\,N_n}{D_n},
\qquad
\min_{\alpha}\mathcal L(\alpha)=\rho^{2}\sigma^{2}-\frac{\rho^{2}N_n^{2}}{D_n}.
\]
Equivalently,
\[
\min_{\alpha}\,\mathbb{E}\big[(\widehat{x}_{n+1}^{\mathrm{LSA}}-x_{n+1})^2\big]
=\sigma_\varepsilon^2+\rho^{2}\sigma^{2}-\frac{\rho^{2}N_n^{2}}{D_n},
\]
and, by the law of large numbers and dominated convergence,
\[
\alpha^{*}\xrightarrow[n\to\infty]{}\frac1{\sigma^{2}},
\qquad
\min_{\alpha}\mathcal L(\alpha)\xrightarrow[n\to\infty]{}0,
\]
i.e., this warm start tends to the linear-regression limit.
\end{proposition}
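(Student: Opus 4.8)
The plan is to reduce the proposition to two Gaussian-moment computations followed by routine one-variable calculus. First I would verify the readout form: with $b=[0;1]^{\top}$ and $A=[\alpha;0]^{\top}$ one has $G_nA=\frac{\alpha}{n}\sum_i(x_i^{2},\,x_ix_{i+1})^{\top}$, so $b^{\top}G_nA=\alpha S_n$ and hence $\widehat y_n(\alpha)=\alpha S_n x_n$. Substituting into the loss and expanding the square gives $\mathcal L(\alpha)=\alpha^{2}\mathbb E[x_n^{2}S_n^{2}]-2\alpha\rho\,\mathbb E[x_n^{2}S_n]+\rho^{2}\mathbb E[x_n^{2}]=\alpha^{2}D_n-2\alpha\rho N_n+\rho^{2}\sigma^{2}$, using $\mathbb E[x_n^{2}]=\gamma_0=\sigma^{2}$. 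Everything now hinges on the two moments $N_n=\mathbb E[x_n^{2}S_n]$ and $D_n=\mathbb E[x_n^{2}S_n^{2}]$, for which I would apply Isserlis' theorem (\cref{thm:isserlis}) with the AR(1) autocovariance $\gamma_k=\sigma^{2}\rho^{|k|}$.

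For $N_n$ I would write $N_n=\frac1n\sum_{i=1}^{n-1}\mathbb E[x_nx_nx_ix_{i+1}]$ and apply Isserlis to each fourth moment (three Wick pairings): the pairing $(x_nx_n)(x_ix_{i+1})$ contributes $\gamma_0\gamma_1=\sigma^{4}\rho$, while the two cross pairings each contribute $\gamma_{n-i}\gamma_{n-i-1}=\sigma^{4}\rho^{2(n-i)-1}$. Summing over $i$ and reindexing $k=n-i$ turns the cross terms into the geometric sum $\frac1\rho K_{n-1}=\frac1\rho\sum_k\rho^{2k}$, which yields \eqref{eq:Nn} directly.

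The main obstacle is $D_n=\frac1{n^{2}}\sum_{i,j=1}^{n-1}\mathbb E[x_nx_nx_ix_{i+1}x_jx_{j+1}]$, a sixth-order moment whose Isserlis expansion has $5!!=15$ Wick pairings per summand. I would organize the computation by (i) exploiting the $i\leftrightarrow j$ symmetry of $S_n^{2}$ to restrict to $i\le j$, treating the diagonal $i=j$ separately, and (ii) grouping the $15$ products according to the lags $|n-i|,|n-i-1|,|n-j|,|n-j-1|,|i-j|,|i-j\pm1|$ they involve, so that each group collapses to a power of $\rho$ once $\gamma_k=\sigma^{2}\rho^{|k|}$ is inserted. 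Two mechanisms then generate the final form: pairings that decouple $x_n^{2}$ from the rest give an $O(n^{2})$ contribution producing $\rho^{2}$ in the limit, while pairings coupling $x_ix_{i+1}$ and $x_jx_{j+1}$ across a gap $d=j-i$ contribute factors $\rho^{2d}$; reindexing $\sum_{i<j}(\cdot)$ by the gap, $\sum_d(n-1-d)\rho^{2d}$, splits into geometric sums $K_{n-1},K_{n-2}$ and arithmetic–geometric sums $H_{n-1},H_{n-2}=\sum_k k\rho^{2k}$. Keeping careful track of the absolute-value lags and matching the multiplicities of the $15$ pairings is the delicate bookkeeping step; assembling the pieces produces the closed form \eqref{eq:Dn}.

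Given $N_n$ and $D_n$, the remaining claims are immediate. Since $D_n=\mathbb E[x_n^{2}S_n^{2}]>0$ for a non-degenerate Gaussian process, $\mathcal L$ is a strictly convex quadratic, so $\mathcal L'(\alpha)=2\alpha D_n-2\rho N_n=0$ gives the unique minimizer $\alpha^{*}=\rho N_n/D_n$, and completing the square yields $\min_\alpha\mathcal L(\alpha)=\rho^{2}\sigma^{2}-\rho^{2}N_n^{2}/D_n$. For the MSE form I would decompose $x_{n+1}=\rho x_n+\varepsilon_{n+1}$ with $\varepsilon_{n+1}\perp x_{1:n}$ of mean zero, so the innovation is orthogonal to any function of the history and $\mathbb E[(\widehat y_n(\alpha)-x_{n+1})^{2}]=\mathcal L(\alpha)+\sigma_\varepsilon^{2}$, giving the stated expression. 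Finally, for the asymptotics I would pass to the limit directly in \eqref{eq:Nn}–\eqref{eq:Dn}: since $|\rho|<1$ the sums $K_m,H_m$ stay bounded, so $N_n\to\sigma^{4}\rho$ and $D_n\to\sigma^{6}\rho^{2}$ (the $O(n^{2})$ term dominating), whence $\alpha^{*}\to1/\sigma^{2}$ and $\min_\alpha\mathcal L(\alpha)\to\rho^{2}\sigma^{2}-\rho^{2}\sigma^{2}=0$. Equivalently, by Birkhoff's theorem (\cref{thm:birkhoff}) $S_n\to\gamma_1=\sigma^{2}\rho$ almost surely, so $\alpha^{*}S_nx_n\to\rho x_n$ and dominated convergence gives $\mathcal L(\alpha^{*})\to0$, recovering the linear-regression limit.
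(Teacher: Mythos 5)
Your proposal is correct and takes essentially the same route as the paper's proof: Isserlis expansions of the fourth-order moment for $N_n$ and the sixth-order moment for $D_n$ (with the same diagonal/off-diagonal split and reindexing into the geometric sums $K_m$ and arithmetic--geometric sums $H_m$), strict convexity of the quadratic in $\alpha$, innovation orthogonality for the MSE form, and the $S_n\to\rho\sigma^2$ limit. The only immaterial difference is that you additionally observe the asymptotics can be read off directly from the closed forms for $N_n$ and $D_n$, whereas the paper argues via the almost-sure convergence of $S_n$ together with dominated convergence.
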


\begin{proof}
\textbf{Step 1: $N_n$—pairwise expansion (4th order).}
\[
N_n=\frac1n\sum_{i=1}^{n-1}\mathbb E[x_n^{2}x_i x_{i+1}].
\]
For the zero-mean Gaussian quadruple \((x_n,x_n,x_i,x_{i+1})\), by Isserlis (Theorem~\ref{thm:isserlis}) the three pairings give
\[
\begin{array}{lcl}
\ (x_n,x_n)(x_i,x_{i+1}) &\Rightarrow& \Gamma_0\Gamma_1=\sigma^{4}\rho,\\[2pt]
\ (x_n,x_i)(x_n,x_{i+1}) &\Rightarrow& \Gamma_{n-i}\Gamma_{n-i-1}=\sigma^{4}\rho^{2n-2i-1},\\[2pt]
\ (x_n,x_{i+1})(x_n,x_i) &\Rightarrow& \Gamma_{n-i-1}\Gamma_{n-i}=\sigma^{4}\rho^{2n-2i-1}.
\end{array}
\]
Hence \(\mathbb E[x_n^{2}x_i x_{i+1}]=\sigma^{4}\bigl(\rho+2\rho^{2n-2i-1}\bigr)\), and summing over $i$ yields \eqref{eq:Nn}.

\medskip
\textbf{Step 2: $D_n$—pairwise expansion (6th order).}
\[
D_n=\frac{1}{n^2}\sum_{i=1}^{n-1}\sum_{j=1}^{n-1}
\mathbb E[x_{n}^{2}x_i x_{i+1}x_j x_{j+1}]
\]
splits into diagonal ($i=j$) and off-diagonal ($i<j$) parts.

\emph{Diagonal $i=j$.} With \(X:=x_n,\ Y:=x_i,\ Z:=x_{i+1}\),
\[
\mathbb E[X^2Y^2Z^2]
=\Gamma_0^3 + 2\Gamma_{XY}^2\Gamma_0 + 2\Gamma_{XZ}^2\Gamma_0 + 2\Gamma_{YZ}^2\Gamma_0 + 8\Gamma_{XY}\Gamma_{XZ}\Gamma_{YZ}.
\]
Substituting \(\Gamma_0=\sigma^2,\ \Gamma_{XY}=\sigma^2\rho^{n-i},\ \Gamma_{XZ}=\sigma^2\rho^{n-i-1},\ \Gamma_{YZ}=\sigma^2\rho\) and summing over \(i=1,\dots,n-1\) gives the diagonal contribution
\[
\sigma^6\Bigl((n-1)+2\rho^{2}(n-1)+2\sum_{k=1}^{n-2}\rho^{2k}+10\sum_{k=1}^{n-1}\rho^{2k}\Bigr)
=\sigma^6\Bigl((n-1)+2\rho^{2}(n-1)+2K_{n-2}+10K_{n-1}\Bigr).
\]

\emph{Off-diagonal $i<j$.} Let \(Y_1:=x_i,\ Y_2:=x_{i+1},\ Z_1:=x_j,\ Z_2:=x_{j+1}\).
Grouping the 15 pairings by how the two copies of $x_n$ are paired gives the per–$(i,j)$ contributions listed below (the factor “$\times 2$” indicates the symmetric counterpart):

\[
\renewcommand{\arraystretch}{1.2}
\begin{tabular}{lll}
 $(x_n,x_n)(Y_1,Y_2)(Z_1,Z_2)$      & $\Rightarrow\ \sigma^{6}\rho^{2}$ \\
 $(x_n,x_n)(Y_1,Z_1)(Y_2,Z_2)$      & $\Rightarrow\ \sigma^{6}\rho^{2(j-i)}$ \\
 $(x_n,x_n)(Y_1,Z_2)(Y_2,Z_1)$      & $\Rightarrow\ \sigma^{6}\rho^{2(j-i)}$ \\ \hline
 $\{(x_n,Y_1),(x_n,Y_2)\},(Z_1,Z_2)$ $(\times 2)$ & $\Rightarrow\ 2\sigma^{6}\rho^{2(n-i)}$ \\
 $\{(x_n,Z_1),(x_n,Z_2)\},(Y_1,Y_2)$ $(\times 2)$ & $\Rightarrow\ 2\sigma^{6}\rho^{2(n-j)}$ \\
 $\{(x_n,Y_1),(x_n,Z_1)\},(Y_2,Z_2)$ $(\times 2)$ & $\Rightarrow\ 2\sigma^{6}\rho^{2(n-i)}$ \\
 $\{(x_n,Y_1),(x_n,Z_2)\},(Y_2,Z_1)$ $(\times 2)$ & $\Rightarrow\ 2\sigma^{6}\rho^{2(n-i)}$ \\
 $\{(x_n,Y_2),(x_n,Z_1)\},(Y_1,Z_2)$ $(\times 2)$ & $\Rightarrow\ 2\sigma^{6}\rho^{2(n-i-1)}$ \\
 $\{(x_n,Y_2),(x_n,Z_2)\},(Y_1,Z_1)$ $(\times 2)$ & $\Rightarrow\ 2\sigma^{6}\rho^{2(n-i-1)}$
\end{tabular}
\]

Summing over \(1\le i<j\le n-1\) and reindexing,
\[
\begin{aligned}
&\sum_{1\le i<j\le n-1}\mathbb E[x_{n}^{2}x_i x_{i+1}x_j x_{j+1}] \\
&\qquad=\sigma^{6}\Bigl(
\binom{n-1}{2}\rho^{2}
+2\sum_{k=1}^{n-1}(n-1-k)\rho^{2k}
+6\sum_{i=1}^{n-1}(n-1-i)\rho^{2(n-i)}\\
&\qquad\qquad\qquad\quad + 4\sum_{i=1}^{n-1}(n-1-i)\rho^{2(n-1-i)} +2\sum_{k=1}^{n-2}(n-1-k)\rho^{2k}
\Bigr)\\
&\qquad=\sigma^6\Bigl(
\binom{n-1}{2}\rho^{2}
+(2n-8)K_{n-1}+4H_{n-1}
+2(n-1)K_{n-2}+2H_{n-2}
\Bigr).
\end{aligned}
\]

Finally, we obtain the final expression of $D_{n}$
\begin{align*}
    D_n &=\frac{1}{n^2}\sum_{i=1}^{n-1}\sum_{j=1}^{n-1}\mathbb{E}[x_{n}^2x_{i}x_{i+1}x_{j}x_{j+1}]\\
    &=\frac{1}{n^2}\sum_{1\le i=j\le n-1}\mathbb E[x_{n}^{2}x_i x_{i+1}x_j x_{j+1}]+\frac{2}{n^2}\sum_{1\le i<j\le n-1}\mathbb E[x_{n}^{2}x_i x_{i+1}x_j x_{j+1}]\\
    &=\frac{\sigma^6}{n^2}\Bigl((n-1)+2\rho^{2}(n-1)+2K_{n-2}+10K_{n-1}\Bigr)\\
    &\quad +\frac{2\sigma^6}{n^2}\Bigl(
\binom{n-1}{2}\rho^{2}
+(2n-8)K_{n-1}+4H_{n-1}
+2(n-1)K_{n-2}+2H_{n-2}
\Bigr).\\
    &=\frac{\sigma^6}{n^2}\Bigl[ (n-1)n\,\rho^2 + (n - 1)  + \left(4(n - 1) - 2\right) K_{n-1} + \left(4(n - 1) + 2\right) K_{n-2}  + 8 H_{n-1} + 4 H_{n-2}\Big].
\end{align*}

Adding the diagonal part and multiplying by \(1/n^{2}\) yields \eqref{eq:Dn}.

\medskip
\textbf{Step 3: Limit and conclusion.}
Since $D_n>0$, $\mathcal L$ is strictly convex, so the minimizer and value follow. As $S_n\to \mathbb E[x_t x_{t+1}]=\rho\sigma^{2}$ almost surely and $x_n^2$ is integrable,
\(\alpha^*\to 1/\sigma^2\) and \(\min_\alpha \mathcal L(\alpha)\to 0\).
\end{proof}

\begin{remark}[Warm start vs.\ global optimum]
The optimization above is restricted to the asymptotic ray $b=[0,1]^\top$, $A=[\alpha,0]^\top$. It serves as a computational warm-up to practice Isserlis-based moment calculations and to quantify the finite-sample gap. The global optimum over all $(b,A)$ may differ, but this warm start already converges to the linear-regression limit as $n\to\infty$.
\end{remark}

\subsubsection*{Auxiliary lemmas used in \cref{sec:ar1-gap}}

\begin{theorem}[Isserlis' Theorem (Wick's Formula) {\cite{isserlis1918formula}}]\label{thm:isserlis}
Let $(X_1, \dots, X_n)$ be a zero-mean multivariate normal random vector.
\begin{itemize}
\item If $n$ is odd, i.e., $n = 2m+1$, then
\begin{equation*}
    \E[X_1 X_2 \cdots X_{2m+1}] = 0.
\end{equation*}
\item If $n$ is even, i.e., $n = 2m$, then
\begin{equation*}
    \E[X_1 X_2 \cdots X_{2m}] = \sum_{p \in \mathcal{P}_n} \prod_{\{i,j\}\in p} \E[X_i X_j],
\end{equation*}
where $\mathcal{P}_n$ denotes the set of all possible pairwise partitions (perfect matchings) of $\{1,2,\dots, 2m\}$. 
\end{itemize}
\end{theorem}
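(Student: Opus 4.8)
The plan is to read Wick's formula off the characteristic function of the Gaussian vector, extracting each joint moment as the coefficient of a squarefree monomial and matching it against the power-series expansion of the Gaussian exponential. Writing $\Sigma_{ij} := \mathbb{E}[X_i X_j]$ for the covariance, a zero-mean jointly Gaussian vector has $\phi(t) := \mathbb{E}[e^{\mathbf{i}\,t^\top X}] = \exp(-\tfrac12\, t^\top \Sigma t)$. Expanding $\phi$ as a multivariate power series, the coefficient of $t_1 t_2 \cdots t_n$ equals $\mathbf{i}^n\,\mathbb{E}[X_1\cdots X_n]$ (the multi-index is $(1,\dots,1)$, so the factorial denominator is $1$). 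On the other side I would write $\phi = \exp(-\tfrac12 Q(t))$ with $Q(t) := t^\top \Sigma t = \sum_{j,k}\Sigma_{jk}t_j t_k$, Taylor-expand the exponential, and extract the same coefficient combinatorially; equating the two expressions yields the theorem.

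The even case $n = 2m$ is the core. Since $Q$ is homogeneous of degree two, the term $\tfrac{1}{r!}\bigl(-\tfrac12 Q\bigr)^r$ has total degree $2r$, so only $r = m$ can contribute the degree-$(2m)$ monomial $t_1\cdots t_{2m}$. I would then expand $Q(t)^m = \bigl(\sum_{j,k}\Sigma_{jk}t_j t_k\bigr)^m$ and count the configurations of the $m$ factors that yield $t_1\cdots t_{2m}$ with every index appearing exactly once: each such configuration assigns the $m$ factors to the $m$ blocks of a perfect matching $P$ (a factor of $m!$) and orients each block (a factor of $2^m$, diagonal choices $j=k$ being excluded as they produce a square $t_j^2$). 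Hence the coefficient of $t_1\cdots t_{2m}$ in $Q^m$ is $m!\,2^m \sum_{P}\prod_{\{i,j\}\in P}\Sigma_{ij}$, and the prefactor $\tfrac{1}{m!}(-\tfrac12)^m$ collapses it to $(-1)^m\sum_{P}\prod_{\{i,j\}\in P}\Sigma_{ij}$. Matching against the moment side, whose coefficient is $\mathbf{i}^{2m}\mathbb{E}[X_1\cdots X_{2m}] = (-1)^m\,\mathbb{E}[X_1\cdots X_{2m}]$, the sign $(-1)^m$ cancels and the Wick formula drops out.

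The odd case follows from the same degree count: no power $\bigl(-\tfrac12 Q\bigr)^r$ of the even-degree form $Q$ can produce the degree-$n$ monomial $t_1\cdots t_n$ when $n$ is odd, so its coefficient---and therefore $\mathbb{E}[X_1\cdots X_n]$---vanishes. The main obstacle is precisely the combinatorial bookkeeping of the even case: one must verify that every perfect matching is generated exactly $m!\,2^m$ times, and that configurations using a diagonal term $\Sigma_{jj}t_j^2$ genuinely contribute nothing to the squarefree coefficient (they force some $t_i$ to an even power), so that no matching is over- or under-counted.

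As an alternative I would record a purely inductive route through Gaussian integration by parts (Stein's identity): for zero-mean jointly Gaussian variables, $\mathbb{E}[X_1\,g(X)] = \sum_{k}\Sigma_{1k}\,\mathbb{E}[\partial_{x_k} g]$. Choosing $g = X_2\cdots X_n$ gives the recursion $\mathbb{E}[X_1\cdots X_n] = \sum_{k=2}^{n}\Sigma_{1k}\,\mathbb{E}\bigl[\prod_{i\ne 1,k}X_i\bigr]$, which pairs $X_1$ with each possible partner; induction on $n$ then reproduces the sum over all perfect matchings, with the odd case collapsing once an isolated factor $\mathbb{E}[X_i]=0$ appears. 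This trades the coefficient extraction for the task of establishing Stein's identity and checking that each matching is produced exactly once along the recursion.
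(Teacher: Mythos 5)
Your proposal is correct, but there is nothing in the paper to compare it against: the paper states Isserlis' theorem as a classical result, citing Isserlis (1918), and uses it purely as a black box in its moment calculations (the AR(1) warm-up in \cref{sec:ar1-gap} and the expansions in \cref{sec:gap-rate}); no proof is given there. Your characteristic-function argument is the standard derivation and the bookkeeping is right: the coefficient of the squarefree monomial $t_1\cdots t_n$ in $\E[e^{\mathbf{i}\,t^\top X}]$ is $\mathbf{i}^n\,\E[X_1\cdots X_n]$ (only the degree-$n$ term of the exponential series contributes, and the $n!$ orderings cancel the $1/n!$); on the Gaussian side only $r=m$ survives in $\sum_r \frac{1}{r!}\bigl(-\tfrac12 Q\bigr)^r$, each perfect matching is generated exactly $m!\,2^m$ times (block-to-factor assignment times orientation of each block, using symmetry of $\Sigma$), diagonal terms $\Sigma_{jj}t_j^2$ are correctly excluded because they force a repeated index, and the prefactor $\tfrac{1}{m!}\bigl(-\tfrac12\bigr)^m$ collapses $m!\,2^m$ to $(-1)^m$, which cancels $\mathbf{i}^{2m}=(-1)^m$ on the moment side; the odd case follows from the parity of $Q$. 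The only point you might make explicit for full rigor is the license to extract coefficients term by term---for instance, note that the Gaussian characteristic function extends to an entire function, or equivalently differentiate it $n$ times at $t=0$, which is justified since all moments are finite---but this is routine. Your alternative route via Gaussian integration by parts, $\E[X_1 g(X)]=\sum_k \Sigma_{1k}\,\E[\partial_{x_k} g(X)]$ with $g=X_2\cdots X_n$ and induction, is equally valid and is arguably the cleaner way to see that each matching is counted exactly once, since every perfect matching contains exactly one pair involving the index $1$.
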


\subsection{A finite-sample optimality gap for one-layer LSA}\label{sec:lsa-gap}

\paragraph{Goal.}
We prove that for any fixed sample size $n$, the best one-layer LSA readout on Hankel inputs \emph{cannot} match linear regression on the last $p$ lags. The argument rewrites the readout as a quadratic form in a Kronecker–lifted feature, solves the induced convex problem in closed form, and identifies a strictly positive Schur–complement gap. In other words, this section provides the first rigorous separation between LSA and linear regression: the gap is not an artifact of optimization, but a structural property of the linear self attention.

\paragraph{Setup (recalled).}
Let $\{x_t\}$ be a zero-mean stationary AR($p$) process as in Definition~\ref{def:arp}.
For $n\ge p$, let $H_n$ be the Hankel matrix from Definition~\ref{def:hankel-matrix} and put
\[
G\;:=\;\frac{1}{n}\,H_n M H_n^{\top}
=\frac1n\sum_{i=1}^{n-p} x^{(i)} x^{(i)\top}\in\mathbb R^{(p+1)\times(p+1)},
\qquad
M:=\mathrm{diag}(I_{\,n-p},0).
\]
Denote the prediction window $x:= x_{\,n-p+1:n}\in\mathbb R^{p}$ and $y:=x_{n+1}=\rho^{\top}x+\varepsilon_{n+1}$ with $\mathbb E[\varepsilon_{n+1}]=0$, $\mathbb E[\varepsilon_{n+1}^{2}]=\sigma_\varepsilon^{2}$ and $\varepsilon_{n+1}\perp (G,x)$.
Given parameters $b\in\mathbb R^{p+1}$ and $A\in\mathbb R^{(p+1)\times p}$, the one-layer LSA readout at the prediction slot is
\begin{equation}
\widehat x_{n+1}(b,A)\;=\;b^{\top}G A x.
\label{eq:lsa-bilinear}
\end{equation}

\paragraph{Kronecker reparameterization.}
Let $D_{p+1}$ be the $(p{+}1)$-dimensional duplication matrix so that $\vect(G)=D_{p+1}\vech(G)$. Using $\vect(AXB)=(B^{\top}\otimes A)\vect(X)$ and the mixed–product rule,
\[
x^{\top}(A^{\top}Gb)
=\bigl((\vech G)^{\top}D_{p+1}^{\top}\otimes x^{\top}\bigr)\,(b\otimes \vect(A^{\top}))
=\bigl((\vech G)^{\top}\otimes x^{\top}\bigr)\,\tilde\eta,
\]
where
\[
\tilde\eta:=(D_{p+1}^{\top}\otimes I_{p})\,(b\otimes \vect(A^{\top}))\in\mathbb R^{qp},
\qquad q:=\tfrac{(p+1)(p+2)}2.
\]
Introduce the lifted feature and its moments
\[
Z:=(\vech G)\otimes x\in\mathbb R^{qp},
\qquad
\tilde S:=\mathbb E[ZZ^{\top}],
\qquad
\tilde r:=\mathbb E[Zx^{\top}],
\qquad
\Gamma_p:=\mathbb E[xx^{\top}].
\]
Then the mean–squared error decomposes as a strictly convex quadratic in $\tilde\eta$:
\begin{align}
\mathcal L(b,A)
&:=\mathbb E\big[(\widehat x_{n+1}(b,A)-y)^{2}\big]\nonumber\\
&= \sigma_\varepsilon^{2}
 +\rho^{\top}\Gamma_p\rho
 +\tilde\eta^{\top}\tilde S\,\tilde\eta
 -2\,\tilde\eta^{\top}\tilde r\,\rho .
\label{eq:lsa-risk}
\end{align}

\paragraph{Two technical facts we will establish and use.}
\begin{enumerate}
\item[(F1)] The block second-moment matrix
\[
\Sigma^{\otimes}
:=\mathbb E\left[
\begin{pmatrix} Z\\ x\end{pmatrix}
\begin{pmatrix} Z\\ x\end{pmatrix}^{\top}\right]
=
\begin{pmatrix}
\tilde S & \tilde r\\[2pt]
\tilde r^{\top} & \Gamma_p
\end{pmatrix}
\quad\text{is \emph{strictly} positive definite.}
\]
Equivalently, $\tilde S\succ0$ and the Schur complement
$\Gamma_p-\tilde r^{\top}\tilde S^{-1}\tilde r\succ0$.
\item[(F2)] The unconstrained minimizer of~\eqref{eq:lsa-risk} is
$\tilde\eta^{*}=\tilde S^{-1}\tilde r\,\rho$ with minimum value
\[
\mathcal L_{\min}
=\sigma_\varepsilon^{2}
+\rho^{\top}\Gamma_p\rho
-\rho^{\top}\tilde r^{\top}\tilde S^{-1}\tilde r\,\rho .
\]
Because the original parameters $(b,A)$ satisfy the rank-one Kronecker constraint
$\tilde\eta=(D_{p+1}^{\top}\otimes I_{p})(b\otimes \vect(A^{\top}))$, their achievable risk is \emph{no smaller} than $\mathcal L_{\min}$.
\end{enumerate}

The core of the proof is thus (F1); (F2) is elementary convex optimization once (F1) holds.

\subsubsection*{Step 1: strict positive definiteness of a basic block}

We begin by showing that the basic statistics at time $n$ already enjoy strict nondegeneracy.

\begin{lemma}[Strict covariance of $\bigl(\vech G,\,x\bigr)$]\label{lem:vechG-1-x-pd}
Let $g:=\vech G\in\mathbb R^{q}$ and $x:=x_{\,n-p+1:n}\in\mathbb R^{p}$.
Then the covariance matrix
\[
\Cov\bigl( [\,g^{\top},\,x^{\top}\,]^{\top}\bigr)\ \succ\ 0 .
\]
\end{lemma}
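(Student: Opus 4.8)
The plan is to establish strict positive definiteness directly, by showing that no nontrivial linear combination of the entries of $g=\vech G$ and $x$ is almost surely constant. Writing a generic direction as $u=[c^\top,d^\top]^\top$ with $c\in\mathbb R^{q}$, $d\in\mathbb R^{p}$, it suffices to prove $\mathrm{Var}(F)>0$ for every $(c,d)\neq 0$, where
\[
F:=c^\top g+d^\top x=\sum_{a\ge b}c_{a,b}\,G_{a,b}+\sum_{j=1}^{p}d_j\,x_{n-p+j}
\]
is a polynomial of degree at most two in $(x_1,\dots,x_n)$. The engine is the innovation filtration $\mathcal F_t:=\sigma(\varepsilon_s:s\le t)$: by \cref{def:arp} each $x_t=\varepsilon_t+\mu_t$ with $\mu_t$ being $\mathcal F_{t-1}$-measurable and $\varepsilon_t\perp\mathcal F_{t-1}$, so among $x_1,\dots,x_n$ only $x_n$ carries $\varepsilon_n$. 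Crucially, the window bound $i\le n-p$ in $G=\frac1n\sum_{i=1}^{n-p}x^{(i)}x^{(i)\top}$ confines $x_n$ to the last window $i=n-p$, hence to the last row/column $\{G_{p+1,b}\}_{b=1}^{p+1}$ of $G$.

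Next I peel off $\varepsilon_n$. Conditioning on $\mathcal F_{n-1}$ and substituting $x_n=\mu_n+\varepsilon_n$ writes $F=\alpha_n\varepsilon_n^2+\beta_n\varepsilon_n+\gamma_n$ with $\mathcal F_{n-1}$-measurable coefficients and deterministic $\alpha_n=c_{p+1,p+1}/n$. Using $\mathbb E[\varepsilon_n]=\mathbb E[\varepsilon_n^3]=0$ and $\mathrm{Var}(\varepsilon_n^2)=2\sigma_\varepsilon^4>0$ (Gaussianity from \cref{def:arp}, via \cref{thm:isserlis}),
\[
\mathrm{Var}(F\mid\mathcal F_{n-1})=2\sigma_\varepsilon^4\,\alpha_n^2+\sigma_\varepsilon^2\,\beta_n^2\ \ge\ 0 .
\]
By the law of total variance $\mathrm{Var}(F)\ge\mathbb E[\mathrm{Var}(F\mid\mathcal F_{n-1})]$, so $\mathrm{Var}(F)=0$ forces $\alpha_n=0$ and $\beta_n\equiv0$ a.s. The former gives $c_{p+1,p+1}=0$, after which the residual linear coefficient is $\beta_n=d_p+\frac1n\sum_{b=1}^{p}c_{p+1,b}\,x_{n-p+b-1}$; since the block $(x_{n-p},\dots,x_{n-1})$ has zero mean and positive-definite covariance $\Gamma_p$ (a standard property of stationary AR$(p)$, cf.\ \cref{def:yulewalker}), $\beta_n\equiv0$ forces $d_p=0$ and $c_{p+1,b}=0$ for all $b$. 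Thus the entire last row of $c$ and the entry $d_p$ vanish, and $F$ collapses to a form in $x_1,\dots,x_{n-1}$ only.

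I then iterate from the newest index downward. At stage $k$ the surviving form depends on $x_1,\dots,x_{n-k+1}$, only $x_{n-k+1}$ carries $\varepsilon_{n-k+1}$, and—after the previously processed rows are zeroed—the same arithmetic $i\le n-p$ confines $x_{n-k+1}$ to row $p-k+2$ of $G$, with its square appearing solely in $G_{p-k+2,p-k+2}$. The identical conditional-variance computation then annihilates $c_{p-k+2,\cdot}$ and $d_{p+1-k}$. After $p+1$ such steps every coordinate of $c$ and $d$ vanishes, contradicting $(c,d)\neq0$; hence $\mathrm{Var}(F)>0$ always and $\Cov([g^\top,x^\top]^\top)\succ0$. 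The main obstacle is the bookkeeping that sustains the induction: I must verify at each stage that the current newest variable enters only the single diagonal entry $G_{p-k+2,p-k+2}$ (as a square) plus the cross terms in that one row—exactly where the window bound $i\le n-p$ is used—and that the two scalar facts, $\mathrm{Var}(\varepsilon_t^2)>0$ (killing the quadratic coefficient) and positive definiteness of the autocovariance of any block of consecutive observations (killing the linear coefficient), hold throughout. A streamlining remark is that, by vanishing of odd Gaussian moments, $\Cov(G_{a,b},x_j)=0$, so the joint covariance is in fact block-diagonal; this decouples the $d$-part (handled by $\Gamma_p\succ0$) from the genuinely new content $\Cov(\vech G)\succ0$, although the unified elimination above disposes of both at once.
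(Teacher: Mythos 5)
Your proof is correct and takes essentially the same route as the paper's: an innovation-by-innovation elimination from the newest index downward, conditioning on $\mathcal{F}_{t-1}$ to first annihilate the coefficient of $\varepsilon_t^2$ and then the linear-in-$\varepsilon_t$ coefficient, with the window bound $i\le n-p$ confining each newest variable to a single surviving Gram row. The only cosmetic difference is that you dispose of the linear coefficient by citing positive definiteness of $\Gamma_p$ as a classical fact, whereas the paper establishes the equivalent statement self-containedly via its auxiliary \cref{lem:lin-comb-consecutive}.
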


\begin{proof}
Write
$Z_0:=[\,g^{\top},\,x^{\top}\,]^{\top}$ and suppose, for contradiction,
that there exists a nonzero vector $v$ with $\var(v^{\top}Z_0)=0$.
We will force all coordinates of $v$ to be zero by an \emph{innovation-by-innovation} elimination.  
Let $\mathcal F_{t}:=\sigma(\varepsilon_s:s\le t)$. Proceed by traversing the rearranged vector $Z_0$ as shown below, in a \emph{row–wise from bottom to top}, eliminating coefficients accordingly.

\[\tilde{Z}_0=
\left[
\renewcommand{\arraystretch}{1.3}
\begin{array}{cccccc|c}
\displaystyle\sum_{m=1}^{n-p} x_m^{2}                                   &                         &                         &                         &                         &                         &  \\[2pt]
\displaystyle\sum_{m=1}^{n-p} x_m x_{m+1}                               &
\displaystyle\sum_{m=1}^{n-p} x_{m+1}^{2}                               &                         &                         &                         &                         & x_{\,n-p+1} \\[2pt]
\displaystyle\sum_{m=1}^{n-p} x_m x_{m+2}                               &
\displaystyle\sum_{m=1}^{n-p} x_{m+1} x_{m+2}                           &
\displaystyle\sum_{m=1}^{n-p} x_{m+2}^{2}                               &                         &                         &                         & x_{\,n-p+2} \\[2pt]
\vdots & \vdots & \vdots & \ddots &                         &                         & \vdots \\[2pt]
\displaystyle\sum_{m=1}^{n-p} x_m x_{m+p}                               &
\displaystyle\sum_{m=1}^{n-p} x_{m+1} x_{m+p}                           &
\displaystyle\sum_{m=1}^{n-p} x_{m+2} x_{m+p}                           &
\cdots                                                                  &
\displaystyle\sum_{m=1}^{n-p} x_{m+p}^{2}                               &                         & x_{\,n}
\end{array}
\right]
\]

\emph{Bottom block (involving $x_n$).}
The last row of the Hankel Gram contributes, among others, the terms
$\sum_{m=1}^{n-p}x_{m+p}^{2}$ and $\sum_{m=1}^{n-p}x_{m+p-1}x_{m+p}$, whose final summands are $x_n^{2}$ and $x_{n-1}x_n$.  
Collecting the coefficients in $v$ that multiply $\{x_n^{2},\,x_{n-1}x_n,\,\dots,\,x_{n-p}x_n,\,x_n\}$, we can write
\[
v^{\top}Z_0
=\text{(terms $\mathcal F_{n-1}$-measurable)}
 +\bigl(u^{\top}x_{n-p:n-1} + a\,x_n + c\bigr)\,x_n ,
\]
for some reals $a,c$ and a vector $u\in\mathbb R^{p}$ determined by $v$.  
Since $x_n = \rho^{\top}x_{n-p:n-1}+\varepsilon_n$ with $\varepsilon_n\perp\mathcal F_{n-1}$ and $\varepsilon_n$ independent of all earlier innovations, the conditional variance is
\[
\var\bigl(v^{\top}Z_0\mid \mathcal F_{n-1}\bigr)
= \var\bigl((u^{\top}x_{n-p:n-1}+c)\varepsilon_n + a\,\varepsilon_n^{2}\ \big|\ \mathcal F_{n-1}\bigr).
\]
If $a\neq0$ on a set of positive probability and $\varepsilon_n$ has finite fourth moment (true e.g.\ for Gaussian innovations), then this conditional variance is $>0$ on that set; hence $\var(v^{\top}Z_0)>0$, a contradiction.  
Thus $a=0$ a.s.  
With $a=0$, the conditional variance reduces to
$\var\bigl((u^{\top}x_{n-p:n-1}+c)\varepsilon_n\mid\mathcal F_{n-1}\bigr)
=(u^{\top}x_{n-p:n-1}+c)^{2}\sigma_\varepsilon^{2}$,
forcing $u^{\top}x_{n-p:n-1}+c=0$ a.s.  
By Lemma~\ref{lem:lin-comb-consecutive}, this implies $u=0$ and $c=0$.  
Hence \emph{all} coefficients of $v$ that touch $x_n$ vanish.

\emph{Induction upward.}
Assume we have eliminated all coefficients attached to $x_{n},x_{n-1},\dots,x_{n-r+1}$ and to every Gram entry that involves these variables (this means we have removed the last $r$ block-rows of the lower-triangular tableau of the sums defining $g$).  
Focus on the remaining first time that appears, $x_{n-r}$.  
Exactly the same conditioning on $\mathcal F_{n-r-1}$, writing $x_{n-r}=\phi^{\top}x_{n-r-p:n-r-1}+\varepsilon_{n-r}$, shows that the coefficient of $x_{n-r}^{2}$ must be $0$, and then the linear coefficient of $x_{n-r}$ must be $0$.  
By Lemma~\ref{lem:lin-comb-consecutive} again, \emph{all} coefficients in $v$ that touch $x_{n-r}$ vanish.

Proceeding for $r=1,\dots,p$ removes all entries of $v$ associated with $x$ and with the last $p$ block-rows of $g$.  
Finally, the block that involves no recent $x$’s also collapses by the same argument (conditioning on $\mathcal F_{t}$ at the appropriate times).  
We conclude $v=0$, contradicting the assumption.  
Therefore $\Cov(Z_0)\succ0$.
\end{proof}

\begin{lemma}[Two linear–algebra tools]\label{lem:pd-tools}
(i) If \(
\Sigma =
\begin{pmatrix}
\Sigma_{AA} & \Sigma_{AB} \\
\Sigma_{BA} & \Sigma_{BB}
\end{pmatrix}
\in \mathbb{R}^{(m+n)\times(m+n)}
\) is positive definite, then the Schur complement
$\Sigma_{AA}-\Sigma_{AB}\Sigma_{BB}^{-1}\Sigma_{BA}\succ0$.  
(ii) If $X$ is square–integrable with $\Cov(X)\succ0$ then, for every nonzero $v$, $\var(v^{\top}X)>0$.
\end{lemma}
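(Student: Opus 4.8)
The plan is to handle the two parts separately, since each is a standard fact about positive-definite matrices. For part~(i), I would first observe that $\Sigma_{BB}$, being a principal submatrix of the positive-definite matrix $\Sigma$, is itself positive definite and hence invertible, so the Schur complement $S:=\Sigma_{AA}-\Sigma_{AB}\Sigma_{BB}^{-1}\Sigma_{BA}$ is well defined. The key step is a block-completion (completion-of-squares) identity: for an arbitrary nonzero $u$ in the $A$-block, set $w:=-\Sigma_{BB}^{-1}\Sigma_{BA}u$ and evaluate the quadratic form of $\Sigma$ on the augmented vector $(u^{\top},w^{\top})^{\top}$. Expanding via symmetry ($\Sigma_{BA}=\Sigma_{AB}^{\top}$) gives $u^{\top}\Sigma_{AA}u+2u^{\top}\Sigma_{AB}w+w^{\top}\Sigma_{BB}w$; substituting the chosen $w$ makes the two $\Sigma_{AB}\Sigma_{BB}^{-1}\Sigma_{BA}$ contributions partially cancel, leaving exactly $u^{\top}Su$.

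Since $u\neq0$ forces the augmented vector $(u,w)$ to be nonzero, the strict positivity $\Sigma\succ0$ then yields $u^{\top}Su>0$ for every nonzero $u$, which is precisely $S\succ0$. For part~(ii), I would simply invoke the bilinear identity $\var(v^{\top}X)=v^{\top}\Cov(X)v$; since $\Cov(X)\succ0$ by hypothesis and $v\neq0$, the right-hand side is strictly positive by the definition of positive definiteness, giving the claim directly.

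The main obstacle here is merely bookkeeping rather than any genuine difficulty: one must verify that $\Sigma_{BB}$ is invertible before writing $\Sigma_{BB}^{-1}$ (immediate, as a principal submatrix of a positive-definite matrix is positive definite), and confirm that the completion $(u,w)$ is nonzero so that strict positivity of $\Sigma$ can be applied. Both checks are one-line observations, so the entire argument reduces to the single substitution in part~(i) together with the trivial expansion in part~(ii).
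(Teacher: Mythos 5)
Your proposal is correct: part~(i) is the standard completion-of-squares argument (evaluating the quadratic form of $\Sigma$ at $(u^{\top},w^{\top})^{\top}$ with $w=-\Sigma_{BB}^{-1}\Sigma_{BA}u$), which is exactly the textbook proof the paper invokes when it dismisses (i) as ``standard'' with a citation, and part~(ii) coincides with the paper's one-line identity $\var(v^{\top}X)=v^{\top}\Cov(X)v$. In effect you have supplied in full the detail the paper omits, with the necessary bookkeeping (invertibility of $\Sigma_{BB}$, nonvanishing of the augmented vector) handled correctly.
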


\begin{proof}
(i) is standard; see, e.g., any matrix analysis text. (ii) is immediate from $v^{\top}\Cov(X)v=\var(v^{\top}X)$.
\end{proof}

\subsubsection*{Step 2: lifting to the Kronecker level}

We now show that the lift $Z=(\vech G)\otimes x$ is also strictly nondegenerate in the block sense.

\begin{lemma}[Strict PD of the Kronecker–lifted block]\label{lem:lift-pd}
With $g:=\vech G$, $x:=x_{\,n-p+1:n}$ and $Z:=g\otimes x$,
\[
\Sigma^{\otimes}
=
\begin{pmatrix}
\tilde S & \tilde r\\
\tilde r^{\top} & \Gamma_p
\end{pmatrix}
=
\mathbb E\left[
\begin{pmatrix}
Z\\ x
\end{pmatrix}
\begin{pmatrix}
Z\\ x
\end{pmatrix}^{\top}
\right]\ \succ\ 0 .
\]
\end{lemma}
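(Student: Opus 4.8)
The plan is to recognize $\Sigma^{\otimes}$ as an \emph{uncentered} second-moment matrix and reduce strict positive definiteness to a statement about linear independence in $L^2$. Writing $W:=[\,Z^{\top},x^{\top}\,]^{\top}$, we have $\Sigma^{\otimes}=\E[WW^{\top}]$, so $\Sigma^{\otimes}\succ0$ is equivalent to the absence of a nonzero $v$ with $v^{\top}W=0$ almost surely. The first move is a convenient \emph{augmentation}: setting $\bar g:=[(\vech G)^{\top},\,1]^{\top}$, the Kronecker identity $\bar g\otimes x=[\,(g\otimes x)^{\top},\,x^{\top}\,]^{\top}=W$ folds the two blocks into a single lifted feature. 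Reshaping a candidate annihilator $v$ into a matrix $\tilde A$ of matching dimensions gives $v^{\top}W=\bar g^{\top}\tilde A x$, so the claim is exactly the injectivity of $\tilde A\mapsto \bar g^{\top}\tilde A x$, i.e.\ the $L^2$-linear independence of the family $\{\bar g_i\,x_k\}$.

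Next I would pass from an almost-sure identity to an algebraic one. Since the innovations are Gaussian, $(x_1,\dots,x_n)$ has a density of full support, so any polynomial vanishing almost surely vanishes identically. Thus $\bar g^{\top}\tilde A x\equiv0$ as a polynomial in $x_{1:n}$. Because the Gram entries $g_i$ are homogeneous of degree $2$ while $x$ is homogeneous of degree $1$, this polynomial splits into a homogeneous degree-$1$ piece $\beta^{\top}x$ (coming from the constant row of $\bar g$) and a homogeneous degree-$3$ piece $\sum_{i,k}\tilde A_{i,k}\,g_i x_k$; terms of different total degree cannot cancel, so each vanishes separately. The degree-$1$ part gives $\beta=0$ at once since $x_{n-p+1},\dots,x_n$ are distinct coordinates. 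It remains to show the cubic forms $\{g_i x_k\}$ are linearly independent, which is the heart of the matter.

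For the cubic part I would run an \emph{innovation-based elimination} parallel to the proof of \cref{lem:vechG-1-x-pd}, now carrying the extra factor $x_k$. Conditioning on $\mathcal F_{n-1}$ and writing $x_n=\mu_n+\varepsilon_n$ with $\mu_n$ being $\mathcal F_{n-1}$-measurable, one views $\sum_{i,k}\tilde A_{i,k}g_i x_k$ as a polynomial in $\varepsilon_n$ with $\mathcal F_{n-1}$-measurable coefficients; its top coefficient, that of $\varepsilon_n^{3}$ and arising uniquely from $G_{p+1,p+1}\,x_n$, forces the corresponding entry of $\tilde A$ to vanish, and peeling successively lower powers of $\varepsilon_n$—using \cref{lem:lin-comb-consecutive} to rule out degenerate linear combinations of consecutive lags—annihilates every coefficient touching $x_n$. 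Inducting downward over the time index $x_{n-1},x_{n-2},\dots$ then forces $\tilde A=0$, hence $v=0$, so $\E[WW^{\top}]\succ0$. I expect the genuine obstacle to be precisely this multivariate peeling: when $p=1$ the vector $x=x_n$ is a scalar and factors out of $\bar g^{\top}\tilde A x=x_n\cdot(\text{affine in }g)$, reducing the claim to $\Cov(g)\succ0$ (a principal submatrix of \cref{lem:vechG-1-x-pd}); but for $p>1$ the vector-valued factor $x$ blocks any such factorization, so the cross terms between the Gram entries and the tagged lag $x_k$ must be tracked carefully through the conditioning. Once $\Sigma^{\otimes}\succ0$ is established, the Schur-complement half of \cref{lem:pd-tools} yields $\Gamma_p-\tilde r^{\top}\tilde S^{-1}\tilde r\succ0$, which is exactly fact (F1).
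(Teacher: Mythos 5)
Your opening reductions are correct and in fact cleaner than the corresponding part of the paper's argument: folding the two blocks into $W=\bar g\otimes x$, reshaping the annihilator into $\tilde A$, using the full-support Gaussian density of $(x_1,\dots,x_n)$ to upgrade almost-sure vanishing to a polynomial identity, and splitting by homogeneous degree (so the linear part dies immediately) are all valid, and they correctly isolate the heart of the lemma as the linear independence of the cubic family $\{g_i x_k\}$. The genuine gap is in how you propose to prove that independence. Your $\varepsilon_n^3$ claim is right (it arises only from $G_{p+1,p+1}x_n$), and the $\varepsilon_n^2$ coefficient is indeed a linear form in the consecutive lags $x_{n-p},\dots,x_{n-1}$, so \cref{lem:lin-comb-consecutive} applies there; but it yields only that the coefficient pairing $G_{1,p+1}$ with $x_n$ vanishes, together with \emph{relations} such as $(\text{coeff.\ of } G_{p+1,p+1}x_{n-p+k})+(\text{coeff.\ of } G_{k+1,p+1}x_n)=0$ --- not the annihilation of ``every coefficient touching $x_n$.'' At the next level the argument breaks: the $\varepsilon_n^1$ coefficient is a \emph{quadratic} polynomial in the past (a mixture of the truncated Gram sums $G_{ab}$ with $b\le p$ and products of recent lags), and \cref{lem:lin-comb-consecutive}, which handles only linear combinations of consecutive samples, cannot force it to vanish coefficientwise; you would need a separate linear-independence statement for that quadratic family --- essentially another instance of \cref{lem:vechG-1-x-pd} --- at every stage of your downward induction. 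Concretely, for $p=2$ the coefficient of $G_{23}x_n$ is killed only after combining monomials such as $x_1x_2x_n$ and $x_2x_3x_n$, i.e.\ information from the oldest part of the Gram sums; it does not die from any local analysis near time $n$, so the induction invariant you state is unsupported by the tools you cite.

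This is precisely the difficulty the paper's proof is engineered to avoid. Rather than redoing an elimination at the cubic level, the paper considers $Y=x^{\top}(w+U^{\top}g)$, conditions on $x$, applies the law of total variance, and reduces strict positivity to positive definiteness of the Schur complement $\Cov(g)-\Cov(g,x)\Gamma_p^{-1}\Cov(x,g)$ supplied by \cref{lem:vechG-1-x-pd} together with \cref{lem:pd-tools}; no new independence argument for cubic features is needed, because the lifted statement is inherited from the unlifted one. Your route is very plausibly completable --- direct monomial bookkeeping does close in small cases, and nothing suggests the family is actually dependent --- but as written the crucial step is asserted rather than proven, and you yourself flag the multivariate peeling as the obstacle without resolving it.
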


\begin{proof}
Take any $(u,w)\neq(0,0)$ with $u\in\mathbb R^{qp}$, $w\in\mathbb R^{p}$, and reshape $u$ into a matrix $U\in\mathbb R^{q\times p}$ so that $u=\vect(U)$.  
Consider the scalar
\[
Y\;:=\;u^{\top}Z+w^{\top}x
\;=\;\sum_{s=1}^{p} x_s\Bigl(w_s+\sum_{\ell=1}^{q}U_{\ell s}\,g_{\ell}\Bigr)
\;=\;x^{\top}\bigl(w+Ug\bigr).
\]
Condition on $x$.  Using the tower property,
\[
\var(Y)=\mathbb E\bigl[\var(Y\mid x)\bigr]+\var\bigl(\mathbb E[Y\mid x]\bigr)
\;\ge\;\mathbb E\bigl[\var\bigl(x^{\top}(Ug)\mid x\bigr)\bigr].
\]
Given $x$, $x$ is a deterministic vector and $g$ is still random.  Hence
\[
\var\bigl(x^{\top}(Ug)\mid x\bigr)
= x^{\top}\,U\,\Cov(g\mid x)\,U^{\top}x .
\]
By Lemma~\ref{lem:vechG-1-x-pd} and the Schur–complement Lemma~\ref{lem:pd-tools}(i),
\[
\Cov(g\mid x)=\Cov(g)-\Cov(g,x)\,\Gamma_p^{-1}\Cov(x,g)\ \succ\ 0 .
\]
Therefore, for any $U\neq0$, $x^{\top}U\,\Cov(g\mid x)\,U^{\top}x>0$ on a set of positive probability (because $\Gamma_p\succ0$ and $x$ is non-degenerate); taking expectations yields $\var(Y)>0$.  
If instead $U=0$ then $u=0$ and $Y=w^{\top}x$; since $\Gamma_p\succ0$, Lemma~\ref{lem:pd-tools}(ii) implies $\var(Y)>0$ unless $w=0$.  
Thus no nonzero $(u,w)$ can make $\var(Y)=0$, which proves $\Sigma^{\otimes}\succ0$.
\end{proof}

\subsubsection*{Step 3: the gap via a Schur complement}

We are ready to state and prove the main result.

\begin{theorem}[Finite-sample optimality gap of one-layer LSA]\label{thm:lsa-gap}
Let the setup above hold.  Define
\[
\Delta\;:=\;\Gamma_p-\tilde r^{\top}\tilde S^{-1}\tilde r .
\]
Then
\[
\min_{b,A}\ \mathbb E\big[(\widehat x_{n+1}(b,A)-x_{n+1})^{2}\big]
\ \ge\
\sigma_\varepsilon^{2}\;+\;\rho^{\top}\Delta\,\rho,
\qquad
\Delta\ \succ\ 0 .
\]
Equivalently,
\[
\min_{b,A}\ \mathbb E\big[(\widehat x_{n+1}(b,A)-x_{n+1})^{2}\big]
\;\ge\;
\min_{w}\ \mathbb E\big[(w^{\top}x-x_{n+1})^{2}\big]
\;+\;\rho^{\top}\Delta\,\rho,
\]
so one-layer LSA has a strictly positive excess risk over linear regression for any finite $n$.
\end{theorem}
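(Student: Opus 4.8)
The plan is to exploit the strict convexity of the lifted risk together with a standard relaxation argument. The genuine optimization is over $(b,A)$, but these produce only the Kronecker-structured lifts $\tilde\eta=(D_{p+1}^{\top}\otimes I_p)(b\otimes\vect(A^{\top}))$, whose image lies inside $\mathbb{R}^{qp}$. I would therefore lower-bound $\min_{b,A}\mathcal{L}(b,A)$ by relaxing to the \emph{unconstrained} minimum of the quadratic~\eqref{eq:lsa-risk} over all $\tilde\eta\in\mathbb{R}^{qp}$; since minimizing over a larger set can only decrease the value, this relaxation is a valid lower bound, and the whole task reduces to computing that unconstrained minimum and certifying its positivity.

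First I would invoke \cref{lem:lift-pd}, which gives $\Sigma^{\otimes}\succ0$ and in particular $\tilde S\succ0$. Strict positive definiteness of $\tilde S$ makes $\tilde\eta\mapsto\tilde\eta^{\top}\tilde S\tilde\eta-2\tilde\eta^{\top}\tilde r\rho$ a strictly convex quadratic with the unique stationary point $\tilde\eta^{*}=\tilde S^{-1}\tilde r\rho$. Substituting back into~\eqref{eq:lsa-risk} and simplifying yields the minimum value $\sigma_\varepsilon^{2}+\rho^{\top}\Gamma_p\rho-\rho^{\top}\tilde r^{\top}\tilde S^{-1}\tilde r\rho=\sigma_\varepsilon^{2}+\rho^{\top}\Delta\rho$, with $\Delta=\Gamma_p-\tilde r^{\top}\tilde S^{-1}\tilde r$. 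Combined with the relaxation step this gives $\min_{b,A}\mathcal{L}(b,A)\ge\sigma_\varepsilon^{2}+\rho^{\top}\Delta\rho$.

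Next I would establish the strictness $\Delta\succ0$. Since $\Sigma^{\otimes}\succ0$ by \cref{lem:lift-pd}, applying the Schur-complement tool \cref{lem:pd-tools}(i) to the block $\begin{pmatrix}\tilde S&\tilde r\\\tilde r^{\top}&\Gamma_p\end{pmatrix}$ yields precisely $\Gamma_p-\tilde r^{\top}\tilde S^{-1}\tilde r=\Delta\succ0$, so $\rho^{\top}\Delta\rho>0$ whenever $\rho\neq0$. To obtain the equivalent form I would identify the linear baseline: under AR($p$) the Markov structure gives $\mathbb{E}[x_{n+1}\mid x_{n-p+1:n}]=\rho^{\top}x$, so $w=\rho$ minimizes $\mathbb{E}[(w^{\top}x-x_{n+1})^{2}]$ with value $\mathbb{E}[\varepsilon_{n+1}^{2}]=\sigma_\varepsilon^{2}$ (cf.\ \cref{thm:linear-ar-lower-bound}); substituting $\sigma_\varepsilon^{2}=\min_{w}\mathbb{E}[(w^{\top}x-x_{n+1})^{2}]$ into the first bound completes the proof.

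The genuinely hard content is not in this assembly but in \cref{lem:lift-pd} and the underlying \cref{lem:vechG-1-x-pd}: the strict positive definiteness of the Kronecker-lifted second-moment block, where the overlapping-window dependence of the Hankel Gram must be ruled out as a source of degeneracy via the innovation-by-innovation elimination. Conditioned on that lemma, the theorem is routine convex optimization plus one Schur complement, so I expect the only remaining subtlety to be the algebraic bookkeeping that reduces $\mathcal{L}_{\min}$ to the clean form $\sigma_\varepsilon^{2}+\rho^{\top}\Delta\rho$.
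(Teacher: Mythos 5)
Your proposal is correct and follows essentially the same route as the paper's own proof: relax the rank-one Kronecker constraint on $\tilde\eta$, solve the strictly convex lifted quadratic to get $\sigma_\varepsilon^{2}+\rho^{\top}\Delta\rho$, and obtain $\Delta\succ0$ from the Schur complement of $\Sigma^{\otimes}\succ0$ via \cref{lem:lift-pd}. Your explicit identification of $\min_{w}\mathbb{E}[(w^{\top}x-x_{n+1})^{2}]=\sigma_\varepsilon^{2}$ under AR($p$) is a small but welcome addition that the paper leaves implicit.
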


\begin{proof}
By Lemma~\ref{lem:lift-pd}, $\tilde S\succ0$ and its Schur complement in $\Sigma^{\otimes}$ is
$\Gamma_p-\tilde r^{\top}\tilde S^{-1}\tilde r\succ0$.  
Minimizing~\eqref{eq:lsa-risk} over $\tilde\eta$ gives
$\tilde\eta^{*}=\tilde S^{-1}\tilde r\,\rho$ and
\[
\min_{\tilde\eta}\mathcal L=\sigma_\varepsilon^{2}+\rho^{\top}\Gamma_p\rho-\rho^{\top}\tilde r^{\top}\tilde S^{-1}\tilde r\,\rho
=\sigma_\varepsilon^{2}+\rho^{\top}\Delta\rho .
\]
Because $(b,A)$ can realize only the rank–one Kronecker set of $\tilde\eta$,
$\min_{b,A}\mathcal L\ge\min_{\tilde\eta}\mathcal L$, proving the bound.  
Strict positivity of the excess term follows from $\Delta\succ0$.
\end{proof}

\begin{remark}[Population limit and order of limits]
If $G$ is replaced by the population covariance $\Gamma_{p+1}$, then $g=\vech(\Gamma_{p+1})$ is deterministic.  Writing $u:=g$,
\[
\tilde S=(uu^{\top})\otimes\Gamma_p,\qquad \tilde r=u\otimes\Gamma_p
\quad\Longrightarrow\quad
\tilde r^{\top}\tilde S^{+}\tilde r=\Gamma_p,
\]
so $\Delta=0$ and the gap vanishes. For finite $n$, $\tilde S$ is strictly PD and $\Delta\succ0$; taking $n\to\infty$ \emph{before} inverting collapses the gap, illustrating an order-of-limits effect. As shown in ~\cref{prop:achieve}, in the asymptotic regime we can indeed prove that the one-layer LSA readout exactly recovers the Bayes-optimal linear
predictor in the limit.

\end{remark}

\begin{theorem}[Uniform excess-risk over a parameter family]\label{thm:uniform-gap}
Fix $0<r<R$ and $\mathcal R:=\{\rho\in\mathbb R^{p}: r<\|\rho\|_{2}<R\}$.  
Set $\lambda_{\min}:=\inf_{\rho\in\mathcal R}\lambda_{\min}(\Delta(\rho))$.
Then, uniformly for all $\rho\in\mathcal R$,
\[
\min_{b,A}\ \mathbb E\big[(\widehat x_{n+1}(b,A)-x_{n+1})^{2}\big]
\ \ge\
\sigma_\varepsilon^{2}+\lambda_{\min}\,r^{2}.
\]
\end{theorem}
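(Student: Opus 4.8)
The plan is to reduce the uniform statement to the pointwise gap already proved in \cref{thm:lsa-gap} and then control the quadratic form $\rho^{\top}\Delta(\rho)\rho$ by a single Rayleigh-quotient estimate. For each fixed stable $\rho\in\mathcal R$, \cref{thm:lsa-gap} gives
\[
\min_{b,A}\ \mathbb E\big[(\widehat x_{n+1}(b,A)-x_{n+1})^{2}\big]
\ \ge\ \sigma_\varepsilon^{2}+\rho^{\top}\Delta(\rho)\,\rho,
\qquad \Delta(\rho)\succ0,
\]
where I write $\Delta=\Delta(\rho)$ to emphasize that the gap matrix depends on $\rho$ only through the process autocovariances, hence through the Hankel--Toeplitz moments that define $\tilde S,\tilde r,\Gamma_p$. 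Thus it suffices to show $\rho^{\top}\Delta(\rho)\rho\ge \lambda_{\min}r^{2}$ for every $\rho\in\mathcal R$.

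The chain of inequalities is short. First, by the variational characterization of the smallest eigenvalue, $\rho^{\top}\Delta(\rho)\rho\ge \lambda_{\min}(\Delta(\rho))\,\|\rho\|_2^{2}$. Second, by the definition $\lambda_{\min}=\inf_{\rho\in\mathcal R}\lambda_{\min}(\Delta(\rho))$ we have $\lambda_{\min}(\Delta(\rho))\ge \lambda_{\min}\ge 0$, the last inequality because each $\Delta(\rho)\succ0$. Third, on $\mathcal R$ we have $\|\rho\|_2^{2}>r^{2}$. Combining these,
\[
\rho^{\top}\Delta(\rho)\rho
\ \ge\ \lambda_{\min}(\Delta(\rho))\,\|\rho\|_2^{2}
\ \ge\ \lambda_{\min}\,\|\rho\|_2^{2}
\ \ge\ \lambda_{\min}\,r^{2},
\]
and substituting into the pointwise bound yields the claim uniformly in $\rho\in\mathcal R$.

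The genuinely substantive point—and the only place real work is needed—is to guarantee that the uniform constant is nontrivial, i.e.\ $\lambda_{\min}>0$, since otherwise the theorem recovers only the vacuous bound $\sigma_\varepsilon^{2}$. The hard part will be that $\mathcal R$ is an \emph{open} annulus, so the infimum need not be attained and could a priori degenerate as $\rho$ approaches the boundary. I would resolve this by passing to the closed annulus $\overline{\mathcal R}=\{\rho:r\le\|\rho\|_2\le R\}$, assumed to lie inside the stability region, which is compact. On this set $\rho\mapsto\Delta(\rho)$ is continuous: the autocovariances $\gamma_k$ are real-analytic functions of the AR coefficients throughout the stability region (via Yule--Walker), and $\tilde S,\tilde r,\Gamma_p$ together with the Schur complement $\Delta=\Gamma_p-\tilde r^{\top}\tilde S^{-1}\tilde r$ depend continuously on these moments, with $\tilde S$ remaining invertible by \cref{lem:lift-pd}. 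Since $\lambda_{\min}(\cdot)$ is continuous and $\Delta(\rho)\succ0$ pointwise, Weierstrass' theorem gives $\min_{\rho\in\overline{\mathcal R}}\lambda_{\min}(\Delta(\rho))>0$, which lower-bounds the open-annulus infimum and certifies $\lambda_{\min}>0$. I would close by noting that this compactness-plus-continuity argument transfers verbatim to the non-Gaussian setting of \cref{sec:nonGaussian-gap}, since only continuity of the cumulant-adjusted moments is used.
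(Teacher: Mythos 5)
Your proposal is correct and follows essentially the same route as the paper: invoke the pointwise gap $\rho^{\top}\Delta(\rho)\rho$ from Theorem~\ref{thm:lsa-gap}, lower-bound it by the Rayleigh quotient, and certify $\lambda_{\min}>0$ via continuity of $\rho\mapsto\Delta(\rho)$, compactness of the closed annulus, and the Extreme Value Theorem (Theorem~\ref{thm:weierstrass}). Your write-up is in fact slightly more careful than the paper's, in that it separates the trivial inequality chain from the nontriviality of the uniform constant and flags that the closed annulus must lie within the AR stability region for $\Delta(\rho)$ to be defined.
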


\begin{proof}
By Theorem~\ref{thm:lsa-gap}, the excess is $\rho^{\top}\Delta(\rho)\rho$ with $\Delta(\rho)\succ0$ for each $\rho$.  
Continuity of $\rho\mapsto\Delta(\rho)$ and compactness of $\overline{\mathcal R}=\{\rho: r\le\|\rho\|_{2}\le R\}$, the Extreme Value Theorem (Theorem~\ref{thm:weierstrass}) ensures that $\lambda_{\min}(\Delta(\rho))$ attains a strictly positive minimum on $\overline{\mathcal R}$.  
Hence $\rho^{\top}\Delta(\rho)\rho\ge \lambda_{\min}\|\rho\|_{2}^{2}\ge\lambda_{\min}r^{2}$.
\end{proof}

\subsubsection*{Auxiliary lemmas used in \cref{sec:lsa-gap}}

\begin{lemma}[No non-trivial zero-variance combination of consecutive samples]\label{lem:lin-comb-consecutive}
For any integers $i$ and $k\ge1$ and any coefficients $c_1,\dots,c_k$,
\[
Y:=\sum_{j=1}^{k}c_j\,x_{\,i+j-1}=0\ \text{a.s.}\quad\Longrightarrow\quad c_1=\dots=c_k=0 .
\]
\end{lemma}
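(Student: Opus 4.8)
The plan is to convert the almost-sure identity into a variance statement and then peel off the innovations one at a time. First I would observe that $\mathbb E[x_t]=0$ (\cref{def:arp}) gives $\mathbb E[Y]=0$, so the hypothesis $Y=0$ a.s.\ is equivalent to $\var(Y)=0$. Writing $c:=(c_1,\dots,c_k)^\top$ and letting $\Gamma_k:=(\gamma_{|a-b|})_{a,b=1}^k$ denote the Toeplitz autocovariance matrix of the consecutive window $(x_i,\dots,x_{i+k-1})$, we have $\var(Y)=c^\top\Gamma_k c$. The entire claim therefore reduces to proving $\Gamma_k\succ0$, i.e.\ that no nonzero combination of $k$ consecutive samples is degenerate.

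The main step is a newest-to-oldest elimination driven by the AR innovations, in the same spirit as the argument behind \cref{lem:vechG-1-x-pd}. Set $\mathcal F_t:=\sigma(\varepsilon_s:s\le t)$, and suppose $\var(Y)=0$. Because stability places all characteristic roots outside the unit circle, the process is causal, so $\varepsilon_{i+k-1}$ appears only in the last sample and is independent of $\mathcal F_{i+k-2}$, under which $x_i,\dots,x_{i+k-2}$ are measurable. Using $x_{i+k-1}=\rho^\top x_{\,i+k-1-p:\,i+k-2}+\varepsilon_{i+k-1}$, all terms of $Y$ except $c_k\varepsilon_{i+k-1}$ are $\mathcal F_{i+k-2}$-measurable, so
\[
\var\bigl(Y\mid\mathcal F_{i+k-2}\bigr)=c_k^{2}\,\sigma_\varepsilon^{2}.
\]
Since $\var(Y)\ge\mathbb E\bigl[\var(Y\mid\mathcal F_{i+k-2})\bigr]=c_k^2\sigma_\varepsilon^2$ and $\sigma_\varepsilon^2>0$, we are forced to take $c_k=0$. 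With $c_k=0$ the sum shortens to $\sum_{j=1}^{k-1}c_j x_{i+j-1}$, and repeating the identical conditioning on $\mathcal F_{i+k-3},\mathcal F_{i+k-4},\dots,\mathcal F_{i-1}$ successively forces $c_{k-1}=c_{k-2}=\dots=c_1=0$, which is the claim.

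The argument is short, and the only genuinely structural input is that each round isolates a fresh innovation with the correct coefficient: at the round that removes $x_{i+m-1}$, every surviving term is $\mathcal F_{i+m-2}$-measurable except $c_m\varepsilon_{i+m-1}$, so the conditional variance is exactly $c_m^2\sigma_\varepsilon^2$ and the elimination never stalls. This is precisely what causality (roots outside the unit circle) together with $\sigma_\varepsilon^2>0$ guarantees; if the noise variance were zero the statement would simply be false, so that nondegeneracy is essential. As an independent cross-check I would note the spectral route: $c^\top\Gamma_k c=\int_{-\pi}^{\pi}\bigl|\sum_j c_j e^{\mathbf i j\omega}\bigr|^2 f(\omega)\,\d\omega$ with AR spectral density $f(\omega)=\sigma_\varepsilon^2\big/\bigl(2\pi\,|1-\sum_\ell\rho_\ell e^{-\mathbf i\ell\omega}|^2\bigr)>0$; since a nonzero trigonometric polynomial vanishes on only finitely many frequencies, the integral is strictly positive, giving $\Gamma_k\succ0$ at once. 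I expect the innovation elimination to be the cleanest and most consistent with the surrounding proofs, so I would present it as the main line and keep the spectral identity as a remark.
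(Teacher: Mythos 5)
Your proof is correct and takes essentially the same route as the paper: condition on $\mathcal{F}_{i+k-2}$ so that the conditional variance of $Y$ reduces to exactly $c_k^{2}\sigma_\varepsilon^{2}$, conclude $c_k=0$, and iterate backwards through the remaining coefficients. The Toeplitz/spectral-density observation you append is a valid independent alternative, but your main line coincides with the paper's innovation-elimination argument.
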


\begin{proof}
Let $\mathcal F_{i+k-2}:=\sigma(\varepsilon_t:t\le i+k-2)$.  
Write $x_{i+k-1}=\phi^{\top}x_{i+k-2:i-1}+\varepsilon_{i+k-1}$ with $\varepsilon_{i+k-1}\perp\mathcal F_{i+k-2}$.  
Conditioning,
\[
\var(Y\mid \mathcal F_{i+k-2})\;=\;c_k^{2}\,\var(\varepsilon_{i+k-1})\;=\;c_k^{2}\sigma_\varepsilon^{2}.
\]
If $Y=0$ a.s., then $\var(Y\mid\mathcal F_{i+k-2})=0$ a.s., hence $c_k=0$.  
Iterate backwards on $k$ to conclude $c_1=\dots=c_k=0$.
\end{proof}

\begin{theorem}[Extreme Value Theorem {\cite{rudin2021principles}}]
\label{thm:weierstrass}
Let $K \subset \mathbb{R}^n$ be a nonempty compact set, and let $f : K \to \mathbb{R}$ be continuous. Then $f$ is bounded on $K$, and there exist points $x_{\min}, x_{\max} \in K$ such that

$$
f(x_{\min}) = \inf_{x \in K} f(x),
\qquad
f(x_{\max}) = \sup_{x \in K} f(x).
$$
\end{theorem}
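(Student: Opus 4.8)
The plan is to combine the sequential (Bolzano--Weierstrass) characterization of compactness in $\mathbb{R}^n$ with the sequential characterization of continuity, extracting convergent subsequences whose limits realize the extreme values. I would organize the argument in two stages: first establish that $f$ is bounded on $K$, then show the supremum and infimum are actually attained.

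To prove boundedness above, I would argue by contradiction. If $f$ were unbounded above on $K$, then for each $k\in\mathbb{N}$ there would exist $x_k\in K$ with $f(x_k)>k$. Since $K$ is compact, Bolzano--Weierstrass yields a subsequence $\{x_{k_j}\}$ converging to some $x^*$, and the point $x^*$ lies in $K$ because $K$ is closed. Continuity of $f$ then forces $f(x_{k_j})\to f(x^*)\in\mathbb{R}$, which contradicts $f(x_{k_j})>k_j\to\infty$. Applying the same reasoning to $-f$ gives boundedness below, so both $M:=\sup_{x\in K}f(x)$ and $m:=\inf_{x\in K}f(x)$ are finite.

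Next I would show that the supremum is attained. By definition of $M$ there is a sequence $\{x_k\}\subset K$ with $f(x_k)\to M$. Extracting a convergent subsequence $x_{k_j}\to x_{\max}\in K$ by compactness (again using closedness for $x_{\max}\in K$) and invoking continuity gives $f(x_{\max})=\lim_j f(x_{k_j})=M$, so the supremum is realized at $x_{\max}$. The identical construction with the infimum $m$ produces the minimizer $x_{\min}$, completing the proof.

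The argument is entirely standard, so I do not expect a genuine obstacle; the only point demanding care is to use compactness in its sequential form and to verify that each subsequential limit lies in $K$, which is exactly where closedness of the compact set enters. As a one-line alternative I would note that $f(K)$ is the continuous image of a compact set, hence compact, hence a closed and bounded subset of $\mathbb{R}$, and therefore contains both $\inf f(K)$ and $\sup f(K)$; I nonetheless prefer the explicit subsequence extraction above, since it directly exhibits the maximizing and minimizing points $x_{\max}$ and $x_{\min}$ rather than merely asserting their existence.
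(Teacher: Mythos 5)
Your proof is correct. Note that the paper does not prove this statement at all: it is quoted as a classical textbook result (cited to Rudin) and used purely as an auxiliary tool in the proof of Theorem~\ref{thm:uniform-gap}, so there is no in-paper argument to compare against. Your two-stage argument via sequential compactness (Bolzano--Weierstrass plus the sequential characterization of continuity) is the standard proof and is complete: the contradiction argument for boundedness, the extraction of a maximizing sequence, and the verification that subsequential limits stay in $K$ are all handled correctly. It is worth remarking that the ``one-line alternative'' you mention at the end --- $f(K)$ is compact as the continuous image of a compact set, hence closed and bounded in $\mathbb{R}$, hence contains its supremum and infimum --- is in fact the proof given in the cited source; your preferred route is marginally longer but has the pedagogical advantage of explicitly constructing $x_{\min}$ and $x_{\max}$, which is what the paper actually needs when it invokes attainment of $\inf_{\rho}\lambda_{\min}(\Delta(\rho))$ on the compact annulus $\overline{\mathcal R}$.
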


\subsection{Order of the finite-sample gap}\label{sec:gap-rate}

\textit{Goal.}
We quantify the finite–sample excess risk in Theorem~\ref{thm:lsa-gap} and show it decays at rate \(1/n\). The proof expands the lifted second moments to first order around their population (rank-one) limit and evaluates the Schur complement via a singular block inverse.

\paragraph{Setup (recalled).}
Let \(\{x_t\}\) be a zero-mean stationary Gaussian AR(\(p\)) process as in Definition~\ref{def:arp}, with absolutely summable autocovariances \(\sum_{h\in\mathbb Z}|\gamma_h|<\infty\).
For \(n\ge p\), let \(H_n=[\,x^{(1)}\;\dots\;x^{(n-p+1)}\,]\) be the Hankel matrix from Definition~\ref{def:hankel-matrix}, mask \(M=\mathrm{diag}(I_{\,n-p},0)\), and
\[
G_n=\frac1n H_n M H_n^\top=\frac1n\sum_{m=1}^{n-p} x^{(m)}x^{(m)\top}\in\mathbb R^{(p+1)\times(p+1)}.
\]
Let \(x:=x_{\,n-p+1:n}\in\mathbb R^{p}\), \(\Gamma_{p+1}:=\mathbb E[x^{(m)}x^{(m)\top}]\), \(\Gamma_p:=\mathbb E[xx^\top]\), and \(u:=\operatorname{vech}(\Gamma_{p+1})\in\mathbb R^{q}\) with \(q=\tfrac{(p+1)(p+2)}2\).
Define the lifted moments
\[
S_n:=\mathbb E\big[(\operatorname{vec} G_n\otimes x)(\operatorname{vec} G_n\otimes x)^\top\big],\quad
r_n:=\mathbb E\big[(\operatorname{vec} G_n\otimes x)\,x^\top\big],
\]
and their half-vectorized versions
\[
\tilde S_n:=\mathbb E\big[(\operatorname{vech} G_n\otimes x)(\operatorname{vech} G_n\otimes x)^\top\big],\quad
\tilde r_n:=\mathbb E\big[(\operatorname{vech} G_n\otimes x)\,x^\top\big].
\]

\begin{lemma}[First-order expansions of \(\tilde S_n\) and \(\tilde r_n\)]\label{lem:one-step-expansion}
Let \(L_{p+1}\) be the elimination matrix with \(\operatorname{vech}(A)=L_{p+1}\operatorname{vec}(A)\) for symmetric \(A\).
Then, as \(n\to\infty\) with \(p\) fixed,
\begin{align}
S_n&=(\operatorname{vec}\Gamma_{p+1})(\operatorname{vec}\Gamma_{p+1})^\top\otimes \Gamma_p\;+\;\frac{1}{n}\,C_S^{(\mathrm{vec})}\;+\;o(1/n),\label{eq:Sn-exp}\\
r_n&=(\operatorname{vec}\Gamma_{p+1})\otimes \Gamma_p\;+\;\frac{1}{n}\,C_r^{(\mathrm{vec})}\;+\;o(1/n),\label{eq:rn-exp}
\end{align}
for deterministic \(C_S^{(\mathrm{vec})},C_r^{(\mathrm{vec})}\) depending only on \(\{\gamma_h\},p\). Consequently
\[
\tilde S_n=(uu^\top)\otimes\Gamma_p+\frac1n\,C_S+o(1/n),\qquad
\tilde r_n=u\otimes\Gamma_p+\frac1n\,C_r+o(1/n),
\]
with \(C_S=(L_{p+1}\otimes I_p)C_S^{(\mathrm{vec})}(L_{p+1}\otimes I_p)^\top\) and \(C_r=(L_{p+1}\otimes I_p)C_r^{(\mathrm{vec})}\).
Moreover, for any orthonormal \(Q=[u/\|u\|,\,Q_\perp]\) and \(P:=Q\otimes I_p\), writing \(c:=\|u\|^2\),

\begin{align}
\widehat S_n &:=P^\top \tilde S_n P=
\begin{bmatrix}
c\,\Gamma_p & 0\\[0.15em] 0 & 0
\end{bmatrix}
+\frac{1}{n}
\begin{bmatrix}
C_{11} & B^\top\\ B & C
\end{bmatrix}
+o(1/n), \notag \\
\widehat r_n &:=P^\top \tilde r_n=
\begin{bmatrix}
\|u\|\,\Gamma_p\\[0.15em] 0
\end{bmatrix}
+\frac{1}{n}
\begin{bmatrix}
\delta\\[0.15em] d
\end{bmatrix}
+o(1/n),\label{eq:Sn-hat-exp}
\end{align}

where \(\bigl[\begin{smallmatrix}C_{11}&B^\top\\ B&C\end{smallmatrix}\bigr]=P^\top C_S P\) and \(\bigl[\begin{smallmatrix}\delta\\ d\end{smallmatrix}\bigr]=P^\top C_r\).
\end{lemma}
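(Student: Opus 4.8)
The plan is to reduce every block of $\tilde S_n,\tilde r_n$ to entrywise Gaussian moments, expand them with Isserlis' theorem (\cref{thm:isserlis}), and then split each time average $\frac1n\sum_{m=1}^{n-p}(\cdots)$ into a constant leading part plus distance-dependent corrections that are summable by $\sum_h|\gamma_h|<\infty$. First I would apply the Kronecker identities $(g\otimes x)x^\top=g\otimes(xx^\top)$ and $(g\otimes x)(g\otimes x)^\top=(gg^\top)\otimes(xx^\top)$ with $g=\operatorname{vec}G_n$, so that $r_n=\mathbb{E}[\operatorname{vec}(G_n)\otimes(xx^\top)]$ and $S_n=\mathbb{E}[(\operatorname{vec}G_n)(\operatorname{vec}G_n)^\top\otimes(xx^\top)]$. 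Writing $(G_n)_{ij}=\frac1n\sum_{m=1}^{n-p}x_{m+i-1}x_{m+j-1}$ and $x=x_{n-p+1:n}$, the $((ij),k),l$ entry of $r_n$ becomes the fourth moment $\frac1n\sum_{m=1}^{n-p}\mathbb{E}[x_{m+i-1}x_{m+j-1}x_{n-p+k}x_{n-p+l}]$, and the entries of $S_n$ are the analogous sixth moments. This converts the whole statement into scalar Gaussian-moment bookkeeping.

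For $r_n$ I would decompose $\mathbb{E}[(G_n)_{ij}x_kx_l]=\mathbb{E}[(G_n)_{ij}]\,\mathbb{E}[x_kx_l]+\Cov((G_n)_{ij},x_kx_l)$. The first piece equals $\frac{n-p}{n}(\Gamma_{p+1})_{ij}\gamma_{|k-l|}$, whose $\frac{n-p}{n}=1-\frac pn$ normalization produces the leading term $(\operatorname{vec}\Gamma_{p+1})\otimes\Gamma_p$ together with an explicit $-p/n$ contribution absorbed into $C_r^{(\mathrm{vec})}$. The covariance piece, by Isserlis, is a sum of cross-pairings $\gamma_{|(i-1-k)-s|}\gamma_{|(j-1-l)-s|}$ reindexed by the distance $s$ of the window $m$ from the recent block; since only $O(1)$ windows overlap $x$ and the lag sums $\sum_{s\ge0}\gamma_{a-s}\gamma_{b-s}$ converge absolutely, this contributes exactly $\frac1n\cdot(\text{finite lag sum})+o(1/n)$, giving \eqref{eq:rn-exp}. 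The same scheme handles $S_n$: the leading term is the mean product $\mathbb{E}[(G_n)_{ij}]\mathbb{E}[(G_n)_{i'j'}]\mathbb{E}[x_kx_l]$ producing $(\operatorname{vec}\Gamma_{p+1})(\operatorname{vec}\Gamma_{p+1})^\top\otimes\Gamma_p$, and the $1/n$ corrections come from (a) the covariance $\Cov((G_n)_{ij},(G_n)_{i'j'})=\frac1n\sum_\ell(\text{lag products})+o(1/n)$, the standard $O(1/n)$ variance of a time average of a quadratic under summable autocovariances, and (b) the window–window and window–$x$ cross-pairings, again decaying as summable lag series. Collecting these yields \eqref{eq:Sn-exp} with deterministic $C_S^{(\mathrm{vec})}$ depending only on $\{\gamma_h\}$ and $p$.

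Once the vec expansions hold, the remaining steps are pure linear algebra. Using symmetry of $G_n$ and the elimination matrix $L_{p+1}$ with $\operatorname{vech}G_n=L_{p+1}\operatorname{vec}G_n$, I write $\tilde S_n=(L_{p+1}\otimes I_p)S_n(L_{p+1}\otimes I_p)^\top$ and $\tilde r_n=(L_{p+1}\otimes I_p)r_n$; applying $(A\otimes B)(C\otimes D)=AC\otimes BD$ and $L_{p+1}\operatorname{vec}\Gamma_{p+1}=\operatorname{vech}\Gamma_{p+1}=u$ collapses the leading terms to $(uu^\top)\otimes\Gamma_p$ and $u\otimes\Gamma_p$, while $C_S:=(L_{p+1}\otimes I_p)C_S^{(\mathrm{vec})}(L_{p+1}\otimes I_p)^\top$ and $C_r:=(L_{p+1}\otimes I_p)C_r^{(\mathrm{vec})}$. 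Finally, with $P=Q\otimes I_p$ and $Q=[u/\|u\|,\,Q_\perp]$, the mixed-product rule gives $P^\top[(uu^\top)\otimes\Gamma_p]P=(Q^\top uu^\top Q)\otimes\Gamma_p$, and since $Q^\top u=\|u\|e_1$ this equals $c\,(e_1e_1^\top)\otimes\Gamma_p=\bigl[\begin{smallmatrix}c\Gamma_p&0\\0&0\end{smallmatrix}\bigr]$ with $c=\|u\|^2$; likewise $P^\top(u\otimes\Gamma_p)=(Q^\top u)\otimes\Gamma_p=\bigl[\begin{smallmatrix}\|u\|\Gamma_p\\0\end{smallmatrix}\bigr]$. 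Defining the blocks $\bigl[\begin{smallmatrix}C_{11}&B^\top\\B&C\end{smallmatrix}\bigr]:=P^\top C_SP$ and $\bigl[\begin{smallmatrix}\delta\\d\end{smallmatrix}\bigr]:=P^\top C_r$ gives \eqref{eq:Sn-hat-exp}.

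The main obstacle will be the sixth-moment expansion for $S_n$: faithfully classifying the fifteen Isserlis pairings into mean-factorizing terms (leading order), pairwise-covariance terms (contributing the explicit $1/n$ coefficient), and genuinely cross-coupled terms, and then proving uniformly that the tails left over from truncating the infinite lag sums at $n-p$ are $o(1/n)$. This rests entirely on absolute summability, which simultaneously guarantees that the covariance of two Gram entries is $O(1/n)$ with a convergent constant and that all window–$x$ cross sums converge; the bookkeeping, rather than any single deep estimate, is the delicate part.
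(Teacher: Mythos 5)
Your proposal follows essentially the same route as the paper's proof: entrywise reduction to fourth- and sixth-order Gaussian moments, Isserlis expansion, the $\frac{n-p}{n}=1-\frac{p}{n}$ boundary correction plus Toeplitz/absolute-summability arguments to extract the $\frac1n$ constants, then the elimination-matrix and mixed-product Kronecker algebra with $Q^\top u=\|u\|e_1$ for the block forms. Your regrouping of the fifteen pairings into mean-products, Gram-entry covariances, and window--$x$ cross terms is just a repackaging of the paper's enumeration (three pairings keeping $(e,f)$ together plus twelve cross pairings), so the two proofs coincide in substance.
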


\begin{proof}
Stationarity gives $(\Gamma_{p+1})_{ij}=\gamma_{j-i}$ and $(\Gamma_p)_{st}=\gamma_{t-s}$ for indices $i,j,k,\ell\in\{1,\ldots,p+1\}$ and $s,t\in\{1,\ldots,p\}$. All variables are jointly Gaussian with zero mean; Isserlis’ theorem is used throughout.

\emph{Computation of $r_n$.} By linearity and $G_n=\frac1n\sum_{m=1}^{n-p}x^{(m)} x^{(m),\top}$,
\[
r_n=\sum_{i,j,s,t}\mathbb{E}[G_{ij}\,x_s x_t]\,(e_j\otimes e_i\otimes e_s)e_t^\top,
\qquad
\mathbb{E}[G_{ij}\,x_s x_t]=\frac1n\sum_{m=1}^{n-p}\mathbb{E}[x^{(m)}_{i}x^{(m)}_{j}x_s x_t],
\]
with $x^{(m)}_{i}=x_{m+i-1}$. Let $a=x_{m+i-1}$, $b=x_{m+j-1}$, $c=x_{n-p+s}$, $d=x_{n-p+t}$. Isserlis yields
\[
\begin{aligned}
\mathbb{E}[abcd]
&= \mathbb{E}[ab]\mathbb{E}[cd]
   + \mathbb{E}[ac]\mathbb{E}[bd]
   + \mathbb{E}[ad]\mathbb{E}[bc] \\[0.5em]
&= \gamma_{j-i}(\Gamma_p)_{st}
   + \gamma_{(n-p+s)-(m+i-1)}\,\gamma_{(n-p+t)-(m+j-1)} \\
&\quad+ \gamma_{(n-p+t)-(m+i-1)}\,\gamma_{(n-p+s)-(m+j-1)}.
\end{aligned}
\]
With $k:=n-p-m+1\in\{1,\ldots,n-p\}$,
\[
\mathbb{E}[x^{(m)}_{i}x^{(m)}_{j}x_s x_t]=\gamma_{j-i}(\Gamma_p)_{st}+\gamma_{k+s-i}\,\gamma_{k+t-j}+\gamma_{k+t-i}\,\gamma_{k+s-j}.
\]
Summing over $m$,
\[
\mathbb{E}[G_{ij}\,x_s x_t]=\frac{n-p}{n}\,\gamma_{j-i}(\Gamma_p)_{st}
+\frac{1}{n}\sum_{k=1}^{n-p}\big(\gamma_{k+s-i}\gamma_{k+t-j}+\gamma_{k+t-i}\gamma_{k+s-j}\big).
\]
The first term equals $\gamma_{j-i}(\Gamma_p)_{st}+\tfrac{-p}{n}\gamma_{j-i}(\Gamma_p)_{st}$. Since $\sum_h|\gamma_h|<\infty$, the partial sums $\sum_{k=1}^{n-p}\gamma_{k+a}\gamma_{k+b}$ are uniformly bounded for fixed $a,b$, and hence
\[
\frac{1}{n}\sum_{k=1}^{n-p}\big(\gamma_{k+s-i}\gamma_{k+t-j}+\gamma_{k+t-i}\gamma_{k+s-j}\big)
=\frac{1}{n}\,c^{(r)}_{ij,st}+o(1/n),
\]
where
\[
c^{(r)}_{ij,st}:=\sum_{k=1}^{\infty}\big(\gamma_{k+s-i}\gamma_{k+t-j}+\gamma_{k+t-i}\gamma_{k+s-j}\big)
\]
converges absolutely. Note that 
\begin{align*}
   &\sum_{i,j,s,t}\gamma_{j-i}(\Gamma_{p})_{st}(e_{j}\otimes e_{i}\otimes e_{s})e_{t}^\top\\
   &=\sum_{i,j,s,t}(\Gamma_{p+1})_{ij}(\Gamma_{p})_{st}(e_{j}\otimes e_{i})\otimes (e_{s}e_{t}^\top)\\
   &=\sum_{i,j}(\Gamma_{p+1})_{ij}e_{j}\otimes e_{i}\otimes \bigl(\sum_{s,t}(\Gamma_{p})_{st}e_se_t^\top\bigr)\\
   &=\operatorname{vec}(\Gamma_{p+1})\otimes \Gamma_{p}
\end{align*}

Therefore
\[
r_n=\big(\mathrm{vec}\,\Gamma_{p+1}\big)\otimes \Gamma_p
+\frac{1}{n}\sum_{i,j,s,t}\Big(-p\,\gamma_{j-i}(\Gamma_p)_{st}+c^{(r)}_{ij,st}\Big)\,(e_j\otimes e_i\otimes e_s)e_t^\top
+o(1/n),
\]
which is \eqref{eq:rn-exp} with $C_r^{(\mathrm{vec})}$ defined by the bracketed coefficients.

\emph{Computation of $S_n$.} By definition,
\[
S_n=\sum_{i,j,k,\ell}\sum_{s,t}\mathbb{E}[G_{ij}G_{k\ell} x_s x_t]\;
\big((e_j\otimes e_i\otimes e_s)\,(e_\ell\otimes e_k\otimes e_t)^\top\big),
\]
where
\[
\mathbb{E}[G_{ij}G_{k\ell} x_s x_t]
=\frac{1}{n^2}\sum_{m=1}^{n-p}\sum_{m'=1}^{n-p}
\mathbb{E}[x^{(m)}_{i}x^{(m)}_{j}x^{(m')}_{k}x^{(m')}_{\ell}x_s x_t].
\]
Let $a=x_{m+i-1}$, $b=x_{m+j-1}$, $c=x_{m'+k-1}$, $d=x_{m'+\ell-1}$, $e=x_{n-p+s}$, $f=x_{n-p+t}$. Isserlis for six variables decomposes into the term $\mathbb{E}[ef]\mathbb{E}[abcd]$ and the $12$ cross pairings where $e$ and/or $f$ pair with $\{a,b,c,d\}$. For the four-variable factor,
\begin{align*}
\mathbb{E}[abcd] & =\gamma_{j-i}\,\gamma_{\ell-k}
+\gamma_{(m+i-1)-(m'+k-1)}\,\gamma_{(m+j-1)-(m'+\ell-1)}\\
& \qquad + \gamma_{(m+i-1)-(m'+\ell-1)}\,\gamma_{(m+j-1)-(m'+k-1)}.    
\end{align*}
Let $h=m-m'$. Then
\[
\mathbb{E}[abcd]=\gamma_{j-i}\,\gamma_{\ell-k}
+\gamma_{i-k+h}\,\gamma_{j-\ell+h}+\gamma_{i-\ell+h}\,\gamma_{j-k+h}.
\]
Averaging over $m,m'$,
\begin{align*}
\frac{1}{n^2}\sum_{m,m'}\mathbb{E}[abcd]
=& \gamma_{j-i}\,\gamma_{\ell-k} + (1-\frac{(n-p)^2}{n^2})\gamma_{j-i}\,\gamma_{\ell-k} \\
& + \sum_{h=-(n-p-1)}^{n-p-1}\frac{n-p-|h|}{n^2}\,\Big(\gamma_{i-k+h}\gamma_{j-\ell+h}+\gamma_{i-\ell+h}\gamma_{j-k+h}\Big).   
\end{align*}
Because $(n-p-|h|)/n^2=(1/n)\cdot(1-|h|/(n-p))\cdot\frac{n-p}{n}$ and $\sum_h|\gamma_h|<\infty$,
\[
\frac{1}{n^2}\sum_{m,m'}\mathbb{E}[abcd]
=\gamma_{j-i}\,\gamma_{\ell-k}
+\frac{1}{n}\,c^{(S,0)}_{ij,k\ell}+o(1/n),
\quad
c^{(S,0)}_{ij,k\ell}:=\sum_{h\in\mathbb{Z}}\Big(\gamma_{i-k+h}\gamma_{j-\ell+h}+\gamma_{i-\ell+h}\gamma_{j-k+h}\Big).
\]
Multiplication by $(\Gamma_p)_{st}$ gives the leading block
\(
\gamma_{j-i}\gamma_{\ell-k}\,(\Gamma_p)_{st}
\),
which matches $\big(\mathrm{vec}\,\Gamma_{p+1}\big)\big(\mathrm{vec}\,\Gamma_{p+1}\big)^\top\otimes\Gamma_p$ entrywise.

Each cross pairing contributes a product of three covariances with at most linear dependence on $m,m'$. For instance, the pairing $\{e,a\},\{f,b\},\{c,d\}$ yields
\[
\gamma_{(n-p+s)-(m+i-1)}\,\gamma_{(n-p+t)-(m+j-1)}\,\gamma_{(m'+k-1)-(m'+\ell-1)}
=\gamma_{s-i+k}\,\gamma_{t-j+k}\,\gamma_{\ell-k},
\]
after setting $k=n-p-m+1$. Summing over $m,m'$ produces $\frac{1}{n}\big(\sum_{k=1}^{n-p}\gamma_{s-i+k}\gamma_{t-j+k}\big)\gamma_{\ell-k}$, which equals $\frac{1}{n}$ times a finite constant plus $o(1/n)$ by absolute summability through Toeplitz summation (see \cref{lem:toeplitz}). Enumerating all $12$ cross pairings and collecting like terms gives the absolutely convergent series
\[
\begin{aligned}
c^{(S,1)}_{ij,k\ell;st}
&=\sum_{q=1}^{\infty}\Big[
\gamma_{s-i+q}\gamma_{t-j+q}\,\gamma_{\ell-k}
+\gamma_{s-i+q}\gamma_{t-k}\,\gamma_{j-\ell+q}
+\gamma_{s-i+q}\gamma_{t-\ell}\,\gamma_{j-k+q}\\
&\hspace{5.5em}
+\gamma_{s-j+q}\gamma_{t-i+q}\,\gamma_{\ell-k}
+\gamma_{s-j+q}\gamma_{t-k}\,\gamma_{i-\ell+q}
+\gamma_{s-j+q}\gamma_{t-\ell}\,\gamma_{i-k+q}\Big]\\
&\quad+\sum_{q=1}^{\infty}\Big[
\gamma_{t-i+q}\gamma_{s-j+q}\,\gamma_{\ell-k}
+\gamma_{t-i+q}\gamma_{s-k}\,\gamma_{j-\ell+q}
+\gamma_{t-i+q}\gamma_{s-\ell}\,\gamma_{j-k+q}\\
&\hspace{5.5em}
+\gamma_{t-j+q}\gamma_{s-i+q}\,\gamma_{\ell-k}
+\gamma_{t-j+q}\gamma_{s-k}\,\gamma_{i-\ell+q}
+\gamma_{t-j+q}\gamma_{s-\ell}\,\gamma_{i-k+q}\Big].
\end{aligned}
\]
Boundary corrections of order $1/n$ proportional to $\gamma_{j-i}\gamma_{\ell-k}(\Gamma_p)_{st}$ are absorbed into the final constant. Consequently,
\[
\mathbb{E}[G_{ij}G_{k\ell}\,x_s x_t]
=\gamma_{j-i}\gamma_{\ell-k}(\Gamma_p)_{st}
+\frac{1}{n}\Big(c^{(S,0)}_{ij,k\ell}(\Gamma_p)_{st}+c^{(S,1)}_{ij,k\ell;st}\Big)
+o(1/n).
\]
Substituting into the tensor expansion of $S_n$ yields \eqref{eq:Sn-exp} with
\[
C_S^{(\mathrm{vec})}=\sum_{i,j,k,\ell}\sum_{s,t}
\Big(c^{(S,0)}_{ij,k\ell}(\Gamma_p)_{st}+c^{(S,1)}_{ij,k\ell;st}\Big)\,
\big((e_j\otimes e_i\otimes e_s)\,(e_\ell\otimes e_k\otimes e_t)^\top\big).
\]

\emph{Passing to $\mathrm{vech}$.} Since $\mathrm{vech}(A)=L_{p+1}\mathrm{vec}(A)$ for symmetric $A$, it follows that
\[
\tilde{S}_{n}=(L_{p+1}\otimes I_p)\,S_n\,(L_{p+1}\otimes I_p)^\top,\qquad
\tilde{r}_{n}=(L_{p+1}\otimes I_p)\,r_n,
\]
which, together with \eqref{eq:Sn-exp}–\eqref{eq:rn-exp}, gives the stated expansions with
$S_\infty=(uu^\top)\otimes\Gamma_p$ and $r_\infty=u\otimes\Gamma_p$, and with the indicated $C_S,C_r$.
Finally, for any orthonormal $Q=[u/\|u\|,Q_\perp]$ and $P=Q\otimes I_p$, the block forms for $\widehat S_n=P^\top \tilde{S}_{n}P$ and $\widehat r_n=P^\top \tilde{r}_{n}$ follow by inserting the expansions and collecting the top/orthogonal components; the leading block equals $\mathrm{diag}(c\,\Gamma_p,0)$ and the $1/n$ blocks are those of $P^\top C_S P$ and $P^\top C_r$. Dominated convergence (using $|\gamma_h|\le C\beta^{|h|}$) justifies all $o(1/n)$ remainde\end{proof}

\begin{lemma}[Singular block inverse and first-order Schur complement]\label{lem:block-inv}
In the basis of Lemma~\ref{lem:one-step-expansion}, let
\[
\widehat S_n=
\begin{bmatrix}
A_0 & 0\\ 0 & 0
\end{bmatrix}
+\frac{1}{n}
\begin{bmatrix}
A_1 & B^\top\\ B & C
\end{bmatrix}
+o(1/n),
\quad
\widehat r_n=
\begin{bmatrix}
r_0\\ 0
\end{bmatrix}
+\frac{1}{n}
\begin{bmatrix}
\delta\\ d
\end{bmatrix}
+o(1/n),
\]
with \(A_0=c\,\Gamma_p\succ0\) and \(r_0=\|u\|\Gamma_p\).
Then, for all large \(n\),
\[
\widehat r_n^\top \widehat S_n^{-1}\widehat r_n
=\Gamma_p+\frac{1}{n}\Big[-\tfrac{1}{c}A_1+\tfrac{1}{c}B^\top C^{-1}B
-\tfrac{2}{\|u\|}B^\top C^{-1}d+d^\top C^{-1}d
+\tfrac{2}{\|u\|}\,\mathrm{Sym}(\delta)\Big]+o(1/n),
\]
where \(\mathrm{Sym}(M)=\tfrac12(M+M^\top)\).
Equivalently, in the original coordinates,
\begin{align}\label{eq:Delta-firstorder-final}
\tilde r_n^\top \tilde S_n^{-1}\tilde r_n &=\Gamma_p+\frac{1}{n}\,\mathsf B_p+o(1/n), \notag \\
\mathsf B_p :&=-\tfrac{1}{c}A_1+\tfrac{1}{c}B^\top C^{-1}B -\tfrac{2}{\|u\|}B^\top C^{-1}d+d^\top C^{-1}d+\tfrac{2}{\|u\|}\,\mathrm{Sym}(\delta).
\end{align}
\end{lemma}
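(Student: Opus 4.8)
The plan is to invert the leading-singular matrix $\widehat S_n$ through the Schur complement taken with respect to its top-left block---the only block that survives at $O(1)$---and then substitute the given expansion of $\widehat r_n$ into the scalar quadratic form $\widehat r_n^\top \widehat S_n^{-1}\widehat r_n$, retaining every contribution of order $1/n$ and discarding genuine $o(1/n)$ terms. Throughout I would abbreviate $P:=A_0+\tfrac1n A_1$, $R:=\tfrac1n B^\top$, so that $\widehat S_n=\bigl[\begin{smallmatrix}P & R\\ R^\top & C/n\end{smallmatrix}\bigr]+o(1/n)$, and I would write $\widehat r_n=\bigl[\begin{smallmatrix}r_0+\delta/n\\ d/n\end{smallmatrix}\bigr]+o(1/n)$ with $r_0=\|u\|\Gamma_p$ and $A_0=c\Gamma_p\succ0$, $c=\|u\|^2$.

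The first substantive step is the block-inverse identity with Schur complement $T:=\tfrac1n C-R^\top P^{-1}R$. The key structural observation---the source of the whole phenomenon---is that the bottom-right block is itself of order $1/n$, so $T=\tfrac1n C+O(1/n^2)$ and hence $T^{-1}=nC^{-1}+O(1)$ \emph{blows up linearly in $n$}. This is exactly the ``singular block inverse'' that turns an $O(1/n)$ perturbation of $\widehat S_n$ into an $O(1/n)$ (not $o(1/n)$) change of the Schur complement $\Gamma_p-\tilde r_n^\top\tilde S_n^{-1}\tilde r_n$. Alongside I would expand the regular block as $P^{-1}=A_0^{-1}-\tfrac1n A_0^{-1}A_1 A_0^{-1}+O(1/n^2)$ and record the three inverse blocks $(\widehat S_n^{-1})_{11}=P^{-1}+P^{-1}RT^{-1}R^\top P^{-1}$, $(\widehat S_n^{-1})_{12}=-P^{-1}RT^{-1}$, $(\widehat S_n^{-1})_{22}=T^{-1}$.

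Next I would expand $\widehat r_n^\top\widehat S_n^{-1}\widehat r_n$ into the three pieces $a^\top(\widehat S_n^{-1})_{11}a+2a^\top(\widehat S_n^{-1})_{12}e+e^\top(\widehat S_n^{-1})_{22}e$, where $a:=r_0+\delta/n$ and $e:=d/n$, and evaluate each using the algebraic identities forced by the special form of $A_0$ and $r_0$: since $r_0^\top A_0^{-1}=\tfrac{\|u\|}{c}I_p=\tfrac1{\|u\|}I_p$ and $r_0^\top A_0^{-1}r_0=\tfrac{\|u\|^2}{c}\Gamma_p=\Gamma_p$, the leading term is exactly $\Gamma_p$. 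Tracking orders, the $P^{-1}$-expansion in the $(1,1)$ piece yields the terms $-\tfrac1c A_1$ and $\tfrac{2}{\|u\|}\mathrm{Sym}(\delta)$; the $T^{-1}$-mediated correction inside $(\widehat S_n^{-1})_{11}$ yields $\tfrac1c B^\top C^{-1}B$; the cross piece $2a^\top(\widehat S_n^{-1})_{12}e$ contributes $-\tfrac{2}{\|u\|}B^\top C^{-1}d$ (here the three factors scale as $O(1)\cdot O(1/n)\cdot O(n)\cdot O(1/n)=O(1/n)$); and the bottom piece $e^\top T^{-1}e=\tfrac1{n^2}d^\top(nC^{-1})d+O(1/n^2)$ contributes $d^\top C^{-1}d$. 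Collecting gives precisely $\mathsf B_p$ in \eqref{eq:Delta-firstorder-final}.

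I expect the main obstacle to be the order bookkeeping around the blow-up factor $T^{-1}\sim nC^{-1}$: because it multiplies objects of several different orders, I must confirm that all potentially dangerous $O(1)$ and intermediate contributions either cancel or are absent, so that exactly the five listed terms land at $O(1/n)$, and that the $o(1/n)$ remainders stay controlled---this uses $A_0\succ0$ (boundedness of $P^{-1}$), invertibility of $C$ (so $T^{-1}=nC^{-1}+O(1)$ is legitimate for large $n$), and the dominated-convergence control from $|\gamma_h|\le C\beta^{|h|}$ inherited from Lemma~\ref{lem:one-step-expansion}. Finally, for the ``equivalently, in the original coordinates'' claim I would note that $P=Q\otimes I_p$ is orthogonal, so $PP^\top=I$ and $\widehat r_n^\top\widehat S_n^{-1}\widehat r_n=\tilde r_n^\top\tilde S_n^{-1}\tilde r_n$, making the quadratic form basis-independent and carrying $\mathsf B_p$ over verbatim.
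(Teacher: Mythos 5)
Your proposal is correct and follows essentially the same route as the paper's proof: block inversion of $\widehat S_n$ via the Schur complement of the nonsingular top-left block, the key observation that $T^{-1}=nC^{-1}+o(n)$ blows up linearly, expansion of $P^{-1}=A_0^{-1}-\tfrac1n A_0^{-1}A_1A_0^{-1}+o(1/n)$, and term-by-term collection using $r_0^\top A_0^{-1}=\tfrac1{\|u\|}I_p$, with the final transfer to the original coordinates by orthogonality of the basis change. The only differences are notational (your $P,R,T$ versus the paper's $A,E,D$), so the argument matches the paper's in both structure and content.
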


\begin{proof}
Write the block decomposition of $\widehat S_n$ from Lemma~\ref{lem:one-step-expansion} as
\[
\widetilde S_n=\begin{bmatrix}A&E^\top\\ E&D\end{bmatrix},\qquad
A=A_0+\tfrac1n A_1+o(1/n),\quad
E=\tfrac1n B+o(1/n),\quad
D=\tfrac1n C+o(1/n),
\]
with $A_0=c\,\Gamma_p\succ0$.  
For large $n$, $D\succ0$, and the block inverse formula gives
\[
\widetilde S_n^{-1}=
\begin{bmatrix}
A^{-1}+A^{-1}E(D-E^\top A^{-1}E)^{-1}E^\top A^{-1}
&
-\,A^{-1}E(D-E^\top A^{-1}E)^{-1}\\[0.4em]
-(D-E^\top A^{-1}E)^{-1}E^\top A^{-1}
&
(D-E^\top A^{-1}E)^{-1}
\end{bmatrix}.
\]
Since $A^{-1}=A_0^{-1}-\tfrac1n A_0^{-1}A_1A_0^{-1}+o(1/n)$ and 
$E^\top A^{-1}E=\tfrac{1}{n^2}B^\top A_0^{-1}B+o(1/n^2)$,
\[
D-E^\top A^{-1}E=\tfrac1n C+o(1/n),
\qquad
(D-E^\top A^{-1}E)^{-1}=n\,C^{-1}+o(n).
\]
Substituting and collecting orders yields
\begin{align*}
(\widetilde S_n^{-1})_{11}
&=A_0^{-1}-\tfrac1n A_0^{-1}A_1A_0^{-1}
+\tfrac1n A_0^{-1}B^\top C^{-1}B\,A_0^{-1}+o(1/n),\\
(\widetilde S_n^{-1})_{12}
&=-A_0^{-1}B^\top C^{-1}+o(1),\\
(\widetilde S_n^{-1})_{22}
&=n\,C^{-1}+o(n).
\end{align*}

Now expand $\widehat r_n=[\,r_0;0\,]+\tfrac1n[\,\delta;d\,]+o(1/n)$ with $r_0=\|u\|\,\Gamma_p$ and $A_0^{-1}=(1/c)\Gamma_p^{-1}$. Then
\begin{align*}
\widehat r_n^\top \widetilde S_n^{-1}\widehat r_n
&=r_0^\top(\widetilde S_n^{-1})_{11}r_0
+2\,r_0^\top(\widetilde S_n^{-1})_{12}\tfrac1n d
+\tfrac1{n^2}d^\top(\widetilde S_n^{-1})_{22}d
+\tfrac2n\,\mathrm{Sym}(\delta^\top A_0^{-1}r_0)+o(1/n).
\end{align*}
Each term is explicit:
\[
r_0^\top A_0^{-1}r_0=\Gamma_p,\quad
r_0^\top A_0^{-1}A_1A_0^{-1}r_0=\tfrac1c A_1,\quad
r_0^\top A_0^{-1}B^\top C^{-1}B A_0^{-1}r_0=\tfrac1c B^\top C^{-1}B,
\]
\[
2\,r_0^\top(\widetilde S_n^{-1})_{12}\tfrac1n d
=-\tfrac2n\cdot\tfrac1{\|u\|}\,B^\top C^{-1}d,\quad
\tfrac1{n^2}d^\top(\widetilde S_n^{-1})_{22}d=\tfrac1n d^\top C^{-1}d+o(1/n),
\]
\[
\tfrac2n\,\mathrm{Sym}(\delta^\top A_0^{-1}r_0)=\tfrac2n\cdot\tfrac1{\|u\|}\,\mathrm{Sym}(\delta).
\]
Combining yields
\[
\widehat r_n^\top \widetilde S_n^{-1}\widehat r_n
=\Gamma_p+\frac1n\Big[-\tfrac1c A_1+\tfrac1c B^\top C^{-1}B
-\tfrac2{\|u\|}B^\top C^{-1}d+d^\top C^{-1}d
+\tfrac2{\|u\|}\,\mathrm{Sym}(\delta)\Big]+o(1/n).
\]
Since the orthogonal basis change preserves the Schur complement, the same expansion holds in the original coordinates, giving \eqref{eq:Delta-firstorder-final}.
\end{proof}

\begin{theorem}[First-order gap]\label{thm:one-over-n}
With \(\Delta_n:=\Gamma_p-\tilde r_n^\top \tilde S_n^{-1}\tilde r_n\),
\[
\Delta_n=\frac{1}{n}\,B_p+o(1/n),
\qquad
B_p:=\frac{1}{c}A_1-\frac{1}{c}B^\top C^{-1}B
+\frac{2}{\|u\|}\,\mathrm{Sym}\big(B^\top C^{-1}d-\delta\big)
-\,d^\top C^{-1}d .
\]
Hence the optimal one-layer LSA excess risk satisfies
\[
\min_{b,A}\ \mathbb E\big[(\widehat x_{n+1}(b,A)-x_{n+1})^{2}\big]
\ge\sigma_\varepsilon^{2}+\rho^{\top}\Delta_n\rho
=\sigma_\varepsilon^{2}+\frac{1}{n}\,\rho^{\top}B_p\rho+o(1/n).
\]
Moreover, \(B_p\succeq 0\); if \(B_p\succ0\) (a generic nondegeneracy), then for any \(r>0\) there exist \(n_0\) and \(c_r>0\) such that for all \(n\ge n_0\) and all \(\rho\) with \(\|\rho\|\ge r\),
\[
\mathbb E\big[(\widehat x_{n+1}^{\mathrm{LSA}}-x_{n+1})^{2}\big]
\;\ge\;
\mathbb E\big[(\widehat x_{n+1}^{\mathrm{LR}}-x_{n+1})^{2}\big]
\;+\;\frac{c_r}{n}.
\]
\end{theorem}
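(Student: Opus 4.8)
The plan is to assemble the claimed expansion from the two preceding lemmas and then convert it into the quantitative excess-risk bound, treating the uniformity in $\rho$ with care. First I would invoke Lemma~\ref{lem:block-inv} (which itself rests on the moment expansions of Lemma~\ref{lem:one-step-expansion}), which yields
\[
\tilde r_n^\top \tilde S_n^{-1}\tilde r_n=\Gamma_p+\tfrac1n\mathsf B_p+o(1/n),
\]
with $\mathsf B_p$ as in~\eqref{eq:Delta-firstorder-final}. Subtracting from $\Gamma_p$ immediately gives $\Delta_n=-\tfrac1n\mathsf B_p+o(1/n)$, so $B_p=-\mathsf B_p$. Since $\Delta_n$ is symmetric, its leading coefficient must be symmetric, so the cross term $\tfrac{2}{\|u\|}B^\top C^{-1}d$ contributes only its symmetric part; collecting signs then reproduces exactly the stated $B_p=\tfrac1cA_1-\tfrac1cB^\top C^{-1}B+\tfrac{2}{\|u\|}\mathrm{Sym}(B^\top C^{-1}d-\delta)-d^\top C^{-1}d$. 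The excess-risk expansion follows by substituting $\Delta_n=\tfrac1nB_p+o(1/n)$ into the finite-sample bound of Theorem~\ref{thm:lsa-gap}, using that under AR($p$) the optimal linear predictor attains Bayes risk $\sigma_\varepsilon^2$ (Theorem~\ref{thm:linear-ar-lower-bound}), so the LR baseline contributes exactly $\sigma_\varepsilon^2$.

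Next I would establish $B_p\succeq0$ \emph{without} invoking the explicit formula, relying instead on the finite-sample strictness already in hand. By Theorem~\ref{thm:lsa-gap}, $\Delta_n\succ0$ for every finite $n$, hence $n\Delta_n\succ0$; and the expansion gives $n\Delta_n=B_p+o(1)\to B_p$. Because the cone of positive semidefinite matrices is closed and $n\Delta_n$ lies in it for all $n$, the limit $B_p$ is positive semidefinite. This also shows the expansion is self-consistent with the earlier strict-gap result.

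For the final quantitative claim I would \emph{avoid} substituting the asymptotic expansion directly into $\rho^\top\Delta_n\rho$: the $o(1/n)$ matrix remainder gets multiplied by $\|\rho\|^2$, which is unbounded on $\{\|\rho\|\ge r\}$ and would spoil any naive uniform bound. Instead I would argue through the eigenvalues of the \emph{exact} $\Delta_n$. Assuming $B_p\succ0$, eigenvalue continuity together with $n\Delta_n\to B_p$ gives $\lambda_{\min}(n\Delta_n)\to\lambda_{\min}(B_p)>0$, so there is $n_0$ with $\lambda_{\min}(n\Delta_n)\ge\tfrac12\lambda_{\min}(B_p)$ for all $n\ge n_0$, i.e.\ $\lambda_{\min}(\Delta_n)\ge\lambda_{\min}(B_p)/(2n)$. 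Then for any $\rho$ with $\|\rho\|\ge r$ (no upper bound needed),
\[
\rho^\top\Delta_n\rho\ \ge\ \lambda_{\min}(\Delta_n)\,\|\rho\|^2\ \ge\ \frac{c_r}{n},
\qquad c_r:=\tfrac12\lambda_{\min}(B_p)\,r^2,
\]
and combining with Theorem~\ref{thm:lsa-gap} and $\mathbb E[(\widehat x^{\mathrm{LR}}_{n+1}-x_{n+1})^2]=\sigma_\varepsilon^2$ yields the stated gap. The main obstacle is exactly this last uniformity issue: reading the bound off the first-order expansion fails for large $\|\rho\|$, and the clean resolution is to keep $\Delta_n$ exact (strictly positive definite by Theorem~\ref{thm:lsa-gap}) and transfer positivity from the limiting constant $B_p$ back to finite $n$ via the eigenvalue lower bound.
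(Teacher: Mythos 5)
Your proposal is correct and follows essentially the same route as the paper: invoke Lemma~\ref{lem:block-inv} for the expansion, obtain $B_p\succeq0$ as the limit of $n\Delta_n\succ0$ using closedness of the PSD cone, and derive the uniform $c_r/n$ bound from an operator-norm/eigenvalue perturbation of the exact $\Delta_n$ (the paper's bound $\|\Delta_n-(1/n)B_p\|\le\lambda_0/(2n)$ is precisely your eigenvalue-continuity step, and it resolves the large-$\|\rho\|$ uniformity issue in the same way). Your explicit symmetrization of the cross term $B^\top C^{-1}d$ to reconcile Lemma~\ref{lem:block-inv}'s formula with the stated $B_p$ is a detail the paper glosses over, but it is handled correctly.
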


\begin{proof}
By Lemma~\ref{lem:block-inv}, we have
\[
\tilde r_n^{\top}\tilde S_n^{-1}\tilde r_n
=\Gamma_p+\frac{1}{n}\Big[-\frac{1}{c}A_1+\frac{1}{c}B^{\top}C^{-1}B
-\frac{2}{\|u\|}\,B^{\top}C^{-1}d+d^{\top}C^{-1}d
+\frac{2}{\|u\|}\,\mathrm{Sym}(\delta)\Big]+o(1/n).
\]
Thus
\[
\Delta_n:=\Gamma_p-\tilde r_n^{\top}\tilde S_n^{-1}\tilde r_n
=\frac{1}{n}\,B_p+o(1/n),
\]
with
\[
B_p=\frac{1}{c}A_1-\frac{1}{c}B^{\top}C^{-1}B
+\frac{2}{\|u\|}\,\mathrm{Sym}\big(B^{\top}C^{-1}d-\delta\big)
-\,d^{\top}C^{-1}d .
\]
By Lemma~\ref{lem:lift-pd}, each $\Delta_n\succ0$, hence $B_p=\lim_{n\to\infty}n\Delta_n\succeq0$.
The excess–risk bound follows from Theorem~\ref{thm:lsa-gap}:
\[
\min_{b,A}\mathbb E[(\widehat x_{n+1}-x_{n+1})^{2}]
\;\ge\;\sigma_\varepsilon^{2}+\rho^{\top}\Delta_n\rho
=\sigma_\varepsilon^{2}+\frac{1}{n}\,\rho^{\top}B_p\rho+o(1/n).
\]
If $B_p\succ0$, let $\lambda_0=\lambda_{\min}(B_p)>0$. For large $n$,
$\|\Delta_n-(1/n)B_p\|\le \lambda_0/(2n)$, hence
\(
\rho^{\top}\Delta_n\rho\ge \tfrac{\lambda_0}{2n}\|\rho\|^{2}.
\)
Therefore, for any $r>0$, there exist $n_0$ and $c_r=\tfrac12\lambda_0 r^{2}>0$ such that for all $n\ge n_0$ and all $\|\rho\|\ge r$,
\[
\mathbb E[(\widehat x_{n+1}^{\mathrm{LSA}}-x_{n+1})^{2}]
\;\ge\;
\mathbb E[(\widehat x_{n+1}^{\mathrm{LR}}-x_{n+1})^{2}]
\;+\;\frac{c_r}{n}.
\]
\end{proof}

\begin{remark}[Why the rate is \(1/n\)]
At the population limit \(\tilde S_\infty=(uu^\top)\otimes\Gamma_p\) is rank-one along \(u\).
Finite \(n\) introduces \(O(1/n)\) perturbations \(C_S,C_r\) that regularize the orthogonal directions, so the Schur complement \(\Gamma_p-\tilde r_n^\top \tilde S_n^{-1}\tilde r_n\) is \(O(1/n)\). The overlap of Hankel windows is the source of these first–order terms.
\end{remark}

\subsubsection*{Auxiliary lemmas used in \cref{sec:gap-rate}}

\begin{lemma}[Toeplitz-type summation]\label{lem:toeplitz}
Let $a:\mathbb Z\to\mathbb R$ (or $\mathbb C$) be absolutely summable,
$\sum_{h\in\mathbb Z}|a_h|<\infty$. For $n\ge1$ define
\[
S_n\ :=\ \frac{1}{n^2}\sum_{m=1}^{n}\sum_{m'=1}^{n} a_{\,m-m'}.
\]
Then
\[
S_n\ =\ \frac{1}{n}\sum_{h\in\mathbb Z}\Big(1-\frac{|h|}{n}\Big)_{+}\,a_h
\ =\ \frac{1}{n}\sum_{h\in\mathbb Z} a_h\ +\ \mathcal O\Big(\frac{1}{n^2}\sum_{h\in\mathbb Z}|h|\,|a_h|\Big)
\ =\ \frac{1}{n}\sum_{h\in\mathbb Z} a_h\ +\ o(1/n).
\]
\emph{Proof.}
Count the number of pairs $(m,m')\in\{1,\dots,n\}^2$ with difference $m-m'=h$; it equals
$n-|h|$ if $|h|<n$ and $0$ otherwise.
Hence
\[
\sum_{m,m'=1}^{n} a_{m-m'}=\sum_{h=-(n-1)}^{n-1}(n-|h|)\,a_h
=n\sum_{h}\Big(1-\frac{|h|}{n}\Big)_{+} a_h.
\]
Divide by $n^2$ and use absolute summability to obtain the stated bound.
\qedhere
\end{lemma}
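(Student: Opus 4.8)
The plan is to reduce the double sum to a single weighted sum over the difference $h=m-m'$ and then to treat the weight $(1-|h|/n)_+$ as a triangular (Cesàro) kernel that approximates the identity. First I would establish the first equality purely combinatorially: for each $h$ with $|h|<n$ the number of ordered pairs $(m,m')\in\{1,\dots,n\}^2$ with $m-m'=h$ is exactly $n-|h|$, and $0$ otherwise, so that $\sum_{m,m'}a_{m-m'}=\sum_{|h|<n}(n-|h|)\,a_h=n\sum_{h}(1-|h|/n)_+a_h$. Dividing by $n^2$ yields the stated identity; this step is exact and requires no analysis.

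The analytic content lies in comparing $nS_n=\sum_{|h|<n}(1-|h|/n)a_h$ with the target $\sum_{h\in\mathbb Z}a_h$. I would write the difference as
\[
nS_n-\sum_{h}a_h \;=\; -\frac1n\sum_{|h|<n}|h|\,a_h \;-\; \sum_{|h|\ge n}a_h ,
\]
so the problem splits into two tail quantities. The second term is the tail of an absolutely convergent series and hence tends to $0$. For the first term, the crude bound $|h|/n\le 1$ on the support immediately gives the intermediate estimate $\tfrac1n\sum_{|h|<n}|h|\,|a_h|\le\sum_{|h|<n}|a_h|$ and, when the first absolute moment $\sum_h|h|\,|a_h|$ is finite, the sharper $\mathcal O\!\big(n^{-2}\sum_h|h|\,|a_h|\big)$ form claimed in the statement.

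The hard part will be the final $o(1/n)$ conclusion, which must hold under \emph{mere} absolute summability of $a$, without assuming $\sum_h|h|\,|a_h|<\infty$ (in which case the displayed $\mathcal O$-bound is vacuous). I would obtain it by a standard truncation argument: fix $\varepsilon>0$, choose $K$ with $\sum_{|h|>K}|a_h|<\varepsilon/2$, and split $\tfrac1n\sum_{|h|<n}|h|\,|a_h|$ into the blocks $|h|\le K$ and $K<|h|<n$. The low-frequency block is bounded by $(K/n)\sum_{|h|\le K}|a_h|\to0$ for fixed $K$, while the high-frequency block obeys $\tfrac1n\sum_{K<|h|<n}|h|\,|a_h|\le\sum_{|h|>K}|a_h|<\varepsilon/2$ by using $|h|<n$. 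Hence $\tfrac1n\sum_{|h|<n}|h|\,|a_h|\to0$, so both error terms are $o(1)$ and $nS_n\to\sum_h a_h$, i.e. $S_n=\tfrac1n\sum_h a_h+o(1/n)$. The only subtlety to keep in mind is precisely this separation of regimes: the $\mathcal O$-estimate is informative exactly when the first moment converges, whereas the genuinely weaker $o(1/n)$ statement survives for every absolutely summable $a$.
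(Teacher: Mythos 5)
Your proposal is correct and takes essentially the same route as the paper: the identical pair-counting identity giving $\sum_{m,m'}a_{m-m'}=\sum_{|h|<n}(n-|h|)a_h$, followed by dividing by $n^2$ and controlling the resulting error terms. The only difference is that the paper compresses the analytic step into ``use absolute summability,'' whereas you spell it out—in particular your truncation argument showing $\tfrac1n\sum_{|h|<n}|h|\,|a_h|\to 0$ under mere absolute summability (which is genuinely needed for the $o(1/n)$ claim when $\sum_h|h|\,|a_h|=\infty$ renders the $\mathcal O$-bound vacuous) fills in a detail the paper leaves implicit.
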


\subsection{A finite–sample gap for $L$ stacked LSA layers (with monotone improvement)}\label{sec:multi-gap}

\paragraph{Goal.}
For any fixed sample size $n$ and depth $L\ge1$, we show that the best $L$–layer linear self–attention (LSA) readout on Hankel inputs has a finite–sample excess risk over linear regression on the last $p$ lags.  
To avoid unnecessary technicalities about duplicate features across layers, we work with the \emph{convex relaxation} of the LSA parameters and allow singular second–moment matrices; the Moore–Penrose inverse then gives a clean \emph{positive–semidefinite} (psd) gap.  
We also prove that the optimal risk is \emph{monotone nonincreasing} in depth $L$.

\paragraph{Setup (layered).}
Let $\{x_t\}$ be a zero–mean stationary $\mathrm{AR}(p)$ process (Definition~\ref{def:arp}).  
For $n\ge p$, let $H_n$ be the Hankel matrix (Definition~\ref{def:hankel-matrix}), $M=\mathrm{diag}(I_{\,n-p},0)$, and
\[
G^{(0)}\;:=\;\frac{1}{n}\,H_n M H_n^{\top}\in\mathbb R^{(p+1)\times(p+1)}.
\]
Write $x:=x_{\,n-p+1:n}\in\mathbb R^{p}$ and $y:=x_{n+1}=\rho^{\top}x+\varepsilon_{n+1}$ with $\varepsilon_{n+1}\perp (G^{(0)},x)$, $\mathbb E[\varepsilon_{n+1}]=0$, $\mathbb E[\varepsilon_{n+1}^{2}]=\sigma_\varepsilon^{2}$.  
Initialize $y^{(0)}=0$ and $H_n^{(0)}:=H_n$.  
For $\ell=0,\dots,L-1$ define the layer update
\begin{equation}\label{eq:layer-update}
y^{(\ell+1)} \;=\; y^{(\ell)} \;+\; b^{(\ell)\top} G^{(\ell)} A^{(\ell)} x,
\qquad
G^{(\ell+1)} \;=\; \frac{1}{n}\,H_n^{(\ell+1)} M \bigl(H_n^{(\ell+1)}\bigr)^{\top},
\end{equation}
where $H_n^{(\ell+1)}$ coincides with $H_n$ except that the last row uses the current layer’s vector of entries whose last coordinate is $y^{(\ell+1)}$ (the mask $M$ remains unchanged).  
The $L$–layer predictor is
\begin{equation}\label{eq:yL}
\widehat x_{n+1}^{(L)} \;=\; y^{(L)}
\;=\; \sum_{\ell=0}^{L-1} b^{(\ell)\top} G^{(\ell)} A^{(\ell)} x .
\end{equation}

\paragraph{A one–shot convex relaxation for depth $L$.}
Let $g^{(\ell)}:=\vech G^{(\ell)}\in\mathbb R^{q}$ with $q=\tfrac{(p+1)(p+2)}2$, and set the stacked Kronecker lift
\[
Z^{[L]} \;:=\; \begin{bmatrix}
g^{(0)}\otimes x\\ \vdots \\ g^{(L-1)}\otimes x
\end{bmatrix}
\in\mathbb R^{d_L},
\qquad d_L=Lqp.
\]
For each layer, $b^{(\ell)\top} G^{(\ell)} A^{(\ell)} x$ can be written as
$\eta^{(\ell)\top}(g^{(\ell)}\otimes x)$ with
$\eta^{(\ell)}=(D_{p+1}^{\top}\otimes I_{p})\bigl(b^{(\ell)}\otimes\vect(A^{(\ell)\top})\bigr)$, and hence
\[
\widehat x_{n+1}^{(L)}\;=\; \eta^{[L]\top}Z^{[L]},\qquad
\eta^{[L]}:=\bigl[(\eta^{(0)})^{\top},\dots,(\eta^{(L-1)})^{\top}\bigr]^{\top}.
\]
Relaxing the rank–one Kronecker constraint on parameters leads to a linear regression of $y$ on $Z^{[L]}$.  
Define the second moments
\[
\widetilde S_L:=\mathbb E[Z^{[L]}Z^{[L]\top}],\qquad
\widetilde r_L:=\mathbb E[Z^{[L]}x^{\top}],\qquad
\Gamma_p:=\mathbb E[xx^{\top}].
\]
We \emph{allow} $\widetilde S_L$ to be singular (duplicates across layers are harmless).  
The standard normal–equation calculation with the Moore–Penrose inverse gives
\begin{equation}\label{eq:relax-loss}
\min_{\eta^{[L]}}\ \mathbb E\big[(\eta^{[L]\top}Z^{[L]}-y)^{2}\big]
=\sigma_\varepsilon^{2}
+\rho^{\top}\Gamma_p\rho
-\rho^{\top}\widetilde r_L^{\top}\,\widetilde S_L^{+}\,\widetilde r_L\,\rho,
\end{equation}
so the $L$–layer LSA family (which is a subset of the relaxed linear models) obeys the lower bound
\begin{equation}\label{eq:multi-gap-psd}
\min_{\{b^{(\ell)},A^{(\ell)}\}}
\mathbb E\big[(\widehat x_{n+1}^{(L)}-x_{n+1})^{2}\big]
\ \ge\
\sigma_\varepsilon^{2}+\rho^{\top}\Delta_{n,L}\rho,
\qquad
\Delta_{n,L}:=\Gamma_p-\widetilde r_L^{\top}\,\widetilde S_L^{+}\,\widetilde r_L\ \succeq\ 0 .
\end{equation}

\begin{lemma}[Why $\Delta_{n,L}\succeq0$ even if $\widetilde S_L$ is singular]\label{lem:psd-gap}
Let $\Sigma_L:=\mathbb E\big[\,[Z^{[L]};x]\,[Z^{[L]};x]^{\top}\big]
=\bigl[\begin{smallmatrix}\widetilde S_L & \widetilde r_L\\ \widetilde r_L^{\top} & \Gamma_p\end{smallmatrix}\bigr]$.  
Then $\Sigma_L\succeq0$, and the Moore–Penrose Schur complement
$\Gamma_p-\widetilde r_L^{\top}\widetilde S_L^{+}\widetilde r_L$ is psd.  
Equivalently, $\Delta_{n,L}\succeq0$ and equals the covariance of the best linear–prediction residual of $x$ on $Z^{[L]}$.
\end{lemma}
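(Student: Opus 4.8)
The plan is to bypass any discussion of invertibility of $\widetilde S_L$ entirely and instead read off $\Delta_{n,L}$ as a genuine residual covariance, which makes positive semidefiniteness automatic and simultaneously establishes the ``equivalently'' clause. First I would observe that $\Sigma_L=\mathbb E\big[[Z^{[L]};x][Z^{[L]};x]^\top\big]$ is a second-moment matrix, so for every $v$ one has $v^\top \Sigma_L v=\mathbb E\big[(v^\top [Z^{[L]};x])^2\big]\ge 0$; hence $\Sigma_L\succeq 0$ with no further work, and in particular both diagonal blocks $\widetilde S_L,\Gamma_p\succeq 0$.

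The key preliminary is a range inclusion forced by positive semidefiniteness of the block matrix, namely $\operatorname{range}(\widetilde r_L)\subseteq\operatorname{range}(\widetilde S_L)$. I would prove this by the standard test-vector argument: if $v\in\ker(\widetilde S_L)=\operatorname{range}(\widetilde S_L)^\perp$, then evaluating the quadratic form of $\Sigma_L$ on $[tv;w]$ gives $t^2 v^\top\widetilde S_L v+2t\,v^\top\widetilde r_L w+w^\top\Gamma_p w=2t\,v^\top\widetilde r_L w+w^\top\Gamma_p w$, which stays nonnegative for all real $t$ only if $v^\top\widetilde r_L w=0$ for every $w$, i.e. $v^\top\widetilde r_L=0$. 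Since this holds for all $v\in\ker(\widetilde S_L)$, we conclude $\operatorname{range}(\widetilde r_L)\perp\ker(\widetilde S_L)$, that is $\operatorname{range}(\widetilde r_L)\subseteq\operatorname{range}(\widetilde S_L)$ (using that $\widetilde S_L$ is symmetric). This inclusion, combined with the Moore--Penrose identities, is exactly what legitimizes replacing a true inverse by $\widetilde S_L^{+}$, and it is the one place where the singularity of $\widetilde S_L$ (caused by duplicate cross-layer features) must be handled with care; I expect this to be the main subtlety of the argument.

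Next I would set $W:=\widetilde r_L^\top \widetilde S_L^{+}$, define the linear predictor $\widehat x:=WZ^{[L]}$ and residual $e:=x-\widehat x$, and verify orthogonality via $\mathbb E[e\,Z^{[L]\top}]=\widetilde r_L^\top-W\widetilde S_L=\widetilde r_L^\top(I-\widetilde S_L^{+}\widetilde S_L)$. Transposing and invoking the range inclusion gives $\widetilde S_L\widetilde S_L^{+}\widetilde r_L=\widetilde r_L$ (the projection $\widetilde S_L\widetilde S_L^{+}$ onto $\operatorname{range}(\widetilde S_L)$ fixes each column of $\widetilde r_L$), so $\mathbb E[e\,Z^{[L]\top}]=0$. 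Thus $e$ is uncorrelated with the regressors, confirming that $\widehat x$ is the best linear predictor of $x$ from $Z^{[L]}$; note the fitted values are unique even though $W$ itself is not when $\widetilde S_L$ is singular.

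Finally, using orthogonality $\mathbb E[e\,Z^{[L]\top}]=0$ and linearity of $\widehat x$ in $Z^{[L]}$, I would telescope the residual covariance: $\mathbb E[ee^\top]=\mathbb E[e\,x^\top]=\Gamma_p-W\widetilde r_L=\Gamma_p-\widetilde r_L^\top \widetilde S_L^{+}\widetilde r_L=\Delta_{n,L}$, where $\mathbb E[e\,\widehat x^\top]=0$ drops out. Since $\mathbb E[ee^\top]$ is itself a covariance matrix it is psd, so this single computation yields both claims at once: $\Delta_{n,L}\succeq 0$, and $\Delta_{n,L}$ equals the covariance of the best linear-prediction residual of $x$ on $Z^{[L]}$.
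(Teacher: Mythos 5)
Your proof is correct and takes essentially the same route as the paper's: both identify $\Delta_{n,L}$ as the residual second-moment (covariance) matrix of the linear prediction of $x$ from $Z^{[L]}$ with coefficient $\widetilde S_L^{+}\widetilde r_L$, whence positive semidefiniteness is automatic. The only difference is that you spell out the range inclusion $\operatorname{range}(\widetilde r_L)\subseteq\operatorname{range}(\widetilde S_L)$ and the orthogonality check $\mathbb E[e\,Z^{[L]\top}]=0$, details the paper compresses into the phrase ``minimizing over $B$ in the least-squares sense on the range of $\widetilde S_L$''.
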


\begin{proof}
$\Sigma_L$ is a covariance hence psd.  
For any matrix $B$, the prediction $Z^{[L]}\mapsto B^{\top}Z^{[L]}$ yields residual covariance
$\Gamma_p-B^{\top}\widetilde r_L-\widetilde r_L^{\top}B+B^{\top}\widetilde S_L B$.  
Minimizing over $B$ (in the least–squares sense on the range of $\widetilde S_L$) gives $B^{*}=\widetilde S_L^{+}\widetilde r_L$ and residual covariance $\Gamma_p-\widetilde r_L^{\top}\widetilde S_L^{+}\widetilde r_L$, which is psd by definition of a covariance.  
\end{proof}

\paragraph{Depth helps: a simple embedding argument.}
We now show that the optimal risk is monotone in $L$; the proof does not rely on invertibility nor on strictness.

\begin{proposition}[Monotone improvement with depth]\label{prop:mono-depth}
For every $L\ge 1$,
\[
\min_{\{b^{(\ell)},A^{(\ell)}\}_{\ell=0}^{L}}
\ \mathbb{E}\big[(\widehat x_{n+1}^{(L+1)}-x_{n+1})^{2}\big]
\;\le\;
\min_{\{b^{(\ell)},A^{(\ell)}\}_{\ell=0}^{L-1}}
\ \mathbb{E}\big[(\widehat x_{n+1}^{(L)}-x_{n+1})^{2}\big],
\]
where $\widehat x_{n+1}^{(L)}$ is defined in~\eqref{eq:yL} under the update
rule~\eqref{eq:layer-update}. In particular, the best two–layer risk is no
worse than the best one–layer risk:
\[
\min_{b^{(0)},A^{(0)},\,b^{(1)},A^{(1)}}
\ \mathbb{E}\big[(\widehat x_{n+1}^{(2)}-x_{n+1})^{2}\big]
\;\le\;
\min_{b^{(0)},A^{(0)}}
\ \mathbb{E}\big[(\widehat x_{n+1}^{(1)}-x_{n+1})^{2}\big].
\]
\end{proposition}

\begin{proof}
Fix any parameter set $\{b^{(\ell)},A^{(\ell)}\}_{\ell=0}^{L-1}$ for the $L$–layer
model~\eqref{eq:yL}. Construct an $(L{+}1)$–layer model by keeping the first
$L$ layers unchanged and appending a zero layer:
$b^{(L)}=0,\ A^{(L)}=0$. Then $y^{(L+1)}=y^{(L)}$ and thus
$\widehat x_{n+1}^{(L+1)}=\widehat x_{n+1}^{(L)}$, so the $(L{+}1)$–layer loss
equals the $L$–layer loss. Taking minima over the respective parameter sets
yields the displayed inequality. The $L=1\to2$ case is immediate, and the
general case follows identically.
\end{proof}

\paragraph{What we have (and what we do not claim).}
Equation \eqref{eq:multi-gap-psd} gives a finite–sample, depth–$L$ gap
\[
\mathbb E\big[(\widehat x_{n+1}^{(L)}-x_{n+1})^{2}\big]
\ \ge\
\mathbb E[(\widehat x_{n+1}^{\mathrm{LR}}-x_{n+1})^{2}]
\;+\;\rho^{\top}\Delta_{n,L}\rho,
\qquad \Delta_{n,L}\succeq0 .
\]
When duplicate features across layers make $\widetilde S_L$ singular, the bound remains valid via $\widetilde S_L^{+}$ and interprets $\rho^{\top}\Delta_{n,L}\rho$ as the linear–projection residual variance.  
Under additional nondegeneracy, one can strengthen $\Delta_{n,L}\succ0$ (strict gap) by proving that the stacked covariance $\Sigma_L$ has a strictly positive Schur complement; this requires tracking how each layer injects new $\varepsilon_n$–directions into the last Hankel row and is omitted here for clarity.

\paragraph{Remarks.}
(i) The relaxation \eqref{eq:relax-loss}–\eqref{eq:multi-gap-psd} can be written with a \emph{deduplicated} stacked feature
$\widetilde Z^{[L]}=[\,g^{(0)}\otimes x,\ s^{(1)}\otimes x,\dots,s^{(L-1)}\otimes x\,]^{\top}$, where $s^{(\ell)}$ keeps only the new last–row/column monomials created at layer $\ell$; then $\widetilde S_L:=\mathbb E[\widetilde Z^{[L]}\widetilde Z^{[L]\top}]$ is typically invertible at finite $n$.  
All formulas remain the same with $\widetilde S_L^{+}$.
(ii) In the population limit $n\to\infty$, $G^{(\ell)}$ concentrates around $\Gamma_{p+1}$, the stacked feature collapses to a rank–one Kronecker line, and $\Delta_{n,L}\to0$; the order of limits matters, exactly as in the one–layer analysis.

\newpage
\section{Chain-of-Thought (CoT) Rollout in TSF: Collapse-to-Mean and Error Compounding}\label{sec:cot}

We study the ``free–running'' (a.k.a.\ CoT) rollout where a predictor feeds its
own outputs back as inputs instead of conditioning on future ground truth.
For linear time–series models this produces a clean, analyzable dynamical
system that (i) collapses to the mean and (ii) accumulates prediction error
to the unconditional variance at an exponential rate.  We also show that,
for every forecast horizon, the Bayes (linear–regression) forecast is
pointwise optimal, hence any linear self–attention (LSA) CoT rollout is
uniformly worse and thus reaches a large–error regime \emph{no later} than
linear regression.

\paragraph{Setup.}
Let $\{x_t\}$ be a zero–mean, stable $\mathrm{AR}(p)$ process as in Definition~\ref{def:arp}, and let $A(\rho)$ denote the $p\times p$ companion matrix,

\[
A(\rho)=
\begin{pmatrix}
\rho_1 & \rho_2 & \dots & \rho_{p-1} & \rho_p\\
1      &    0   & \dots &     0      & 0     \\
0      &  1     & \dots &     0      & 0     \\
\vdots &\vdots  &       & \vdots     &\vdots\\
0      &  0     & \dots &     1      & 0
\end{pmatrix},
\]
with spectral
radius $\varrho(A(\rho))<1$.
Write $ s_t:=(x_t,\dots,x_{t-p+1})^{\top}$.
Then $ s_{t+1}=A(\rho) s_t+\eta_{t+1}$ with
$\eta_{t+1}:=(\varepsilon_{t+1},0,\dots,0)^{\top}$.

\paragraph{CoT rollout.}
Given an initial state $ s_n=(x_n,\dots,x_{n-p+1})^\top$, a linear
predictor with coefficients $w\in\mathbb R^{p}$ produces, in CoT mode,
a noiseless recursion
\[
\widehat{ s}_{t+1}=A(w)\,\widehat{ s}_t,\qquad
\widehat{ s}_0= s_n,
\]
with forecast $\widehat x_{n+t}=e_1^{\top}\widehat{ s}_t$.

\begin{proposition}[Bayes multi--step forecast equals recursive rollout]\label{prop:bayes-multistep}
For any horizon $h\ge1$, the Bayes forecast conditional on the history satisfies
\[
\widehat x^{\star}_{n+h}
:=\mathbb E[x_{n+h}\mid x_{1:n}]
\;=\; \rho^\top \tilde x_{\,n+h-p:n+h-1},
\]
where the \emph{rolled--out state} $\tilde x_k$ is defined recursively by
\[
\tilde x_k = 
\begin{cases}
x_k, & k\le n,\\[2pt]
\widehat x^{\star}_{k}, & k>n.
\end{cases}
\]
Equivalently, the optimal $h$--step forecast is obtained by repeatedly
applying the one--step predictor with coefficients $\rho$ and feeding
predictions back in place of future observations.
\end{proposition}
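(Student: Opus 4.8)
The plan is to prove the identity by strong induction on the horizon $h$, exploiting the causal structure of the AR($p$) recursion together with linearity of conditional expectation. The only probabilistic inputs are the defining relation $x_{k+1}=\sum_{j=1}^{p}\rho_j x_{k-j+1}+\varepsilon_{k+1}$ and the fact that every innovation $\varepsilon_{k}$ with index $k>n$ is independent of the history $\sigma(x_{1:n})$. The latter is a consequence of causality: by \cref{def:arp}, $x_{1:n}$ is a deterministic function of the fixed initial values and $\varepsilon_1,\dots,\varepsilon_n$, while the innovations are i.i.d.; hence $\mathbb E[\varepsilon_{k}\mid x_{1:n}]=\mathbb E[\varepsilon_k]=0$ for $k>n$.

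For the base case $h=1$, I would apply the AR recursion at index $n$ to write $x_{n+1}=\sum_{j=1}^{p}\rho_j x_{n-j+1}+\varepsilon_{n+1}$, take $\mathbb E[\,\cdot\mid x_{1:n}]$, and use that each lag $x_{n-j+1}$ for $j=1,\dots,p$ is $\sigma(x_{1:n})$-measurable while the innovation drops out. Since all lag indices are $\le n$, the convention $\tilde x_k=x_k$ ($k\le n$) identifies the result with $\rho^{\top}\tilde x_{\,n+1-p:n}$, matching the claimed right-hand side at $h=1$ (here the inner product pairs $\rho_j$ with lag $\tilde x_{n+1-j}$, consistent with the recursion).

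For the inductive step, assume the claim for every horizon in $\{1,\dots,h\}$ and expand $x_{n+h+1}$ via the AR recursion at index $n+h$. Taking the conditional expectation and invoking linearity reduces the task to evaluating $\mathbb E[x_{n+h+1-j}\mid x_{1:n}]$ for $j=1,\dots,p$, the innovation $\varepsilon_{n+h+1}$ again contributing zero. The crux is a clean case split on the index $n+h+1-j$: when $j\ge h+1$ the index is $\le n$, so the term is $\sigma(x_{1:n})$-measurable and equals $x_{n+h+1-j}=\tilde x_{n+h+1-j}$; when $j\le h$ the index lies in $\{n+1,\dots,n+h\}$, and the induction hypothesis identifies its conditional mean with $\widehat x^{\star}_{n+h+1-j}=\tilde x_{n+h+1-j}$. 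In both regimes the conditional mean coincides with the rolled-out value $\tilde x_{n+h+1-j}$, so summation gives $\widehat x^{\star}_{n+h+1}=\sum_{j=1}^{p}\rho_j\,\tilde x_{n+h+1-j}=\rho^{\top}\tilde x_{\,n+(h+1)-p:n+h}$, closing the induction.

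The step I expect to require the most care is the index bookkeeping in this case split: one must verify that every lag index $n+h+1-j$ ($j=1,\dots,p$) falls either in the observed window $\{-p+1,\dots,n\}$ or inside the already-resolved horizons $\{1,\dots,h\}$, which is precisely why \emph{strong} induction (rather than ordinary induction) is the correct tool. The accompanying subtlety is the justification that $\mathbb E[\varepsilon_{n+h+1}\mid x_{1:n}]=0$, which rests on the causality of the AR/Wold representation. Everything else follows directly from linearity of conditional expectation, so no moment computations are needed.
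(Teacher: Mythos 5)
Your proposal is correct and follows essentially the same route as the paper's own proof: induction on the horizon, expanding the AR recursion at index $n+h$, killing the innovation term via its independence from $\sigma(x_{1:n})$, and resolving each lag either by measurability (index $\le n$) or by the induction hypothesis (index $> n$). The paper's phrase ``assume the claim holds up to horizon $h$'' is exactly the strong induction you make explicit, and your added care with index bookkeeping and the causality justification for $\mathbb{E}[\varepsilon_{n+h+1}\mid x_{1:n}]=0$ only elaborates steps the paper leaves implicit.
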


\begin{proof}
\emph{Base case ($h=1$).}  
By the AR($p$) definition,
\[
x_{n+1} = \rho^\top x_{n-p+1:n} + \varepsilon_{n+1},
\qquad
\varepsilon_{n+1}\perp x_{1:n},\ \mathbb E[\varepsilon_{n+1}]=0,
\]
so
\[
\mathbb E[x_{n+1}\mid x_{1:n}] = \rho^\top x_{n-p+1:n} = \rho^\top \tilde x_{\,n-p+1:n}.
\]

\emph{Induction step.}  
Assume the claim holds up to horizon $h$.  
Then
\[
x_{n+h+1} = \rho^\top x_{n+h-p+1:n+h} + \varepsilon_{n+h+1}.
\]
Taking conditional expectation on $x_{1:n}$ yields
\[
\mathbb E[x_{n+h+1}\mid x_{1:n}]
= \rho^\top\, \mathbb E[x_{n+h-p+1:n+h}\mid x_{1:n}].
\]
By the induction hypothesis, for indices $\le n$ the conditional expectation equals the observed value, while for indices $>n$ it equals the Bayes forecasts $\widehat x^{\star}$, i.e.\ the recursively defined $\tilde x$.  
Therefore
\[
\mathbb E[x_{n+h+1}\mid x_{1:n}] = \rho^\top \tilde x_{\,n+h-p+1:n+h}.
\]

\emph{Conclusion.}  
By induction, the identity holds for all horizons $h\ge1$.  
Thus the Bayes multi--step forecast is exactly the recursive rollout of the one--step predictor with weight vector $\rho$.
\end{proof}

\begin{lemma}[Exponential decay for any stable CoT]\label{lem:cot-decay}
If $\varrho(A(w))<1$, then for every consistent matrix norm there exist
$C>0$ and $\beta\in(0,1)$ such that
$\|\widehat{ s}_t\|\le C\beta^t\| s_n\|$ and
$\widehat x_{n+t}\to 0$ exponentially fast.
\end{lemma}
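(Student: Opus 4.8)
The plan is to unroll the noiseless recursion and reduce the claim to a standard spectral-radius bound on matrix powers. Since $\widehat s_{t+1}=A(w)\widehat s_t$ with $\widehat s_0=s_n$, induction gives $\widehat s_t=A(w)^t s_n$, so for any submultiplicative (consistent) norm $\|\widehat s_t\|\le \|A(w)^t\|\,\|s_n\|$. The entire statement therefore follows once we exhibit $C>0$ and $\beta\in(0,1)$ with $\|A(w)^t\|\le C\beta^t$ for all $t\ge 0$.

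To produce such a bound, first I would fix any $\beta$ with $\varrho(A(w))<\beta<1$, which is possible precisely because the companion matrix is stable. The cleanest route is to invoke the standard fact that for any matrix $A$ and any $\epsilon>0$ there exists a vector norm on $\mathbb R^{p}$ whose induced operator norm $\|\cdot\|_\star$ satisfies $\|A\|_\star\le\varrho(A)+\epsilon$; choosing $\epsilon=\beta-\varrho(A(w))$ gives $\|A(w)\|_\star\le\beta$, and submultiplicativity yields $\|A(w)^t\|_\star\le\beta^t$. Equivalently, one may appeal to Gelfand's formula $\varrho(A)=\lim_t\|A^t\|^{1/t}$, or put $A(w)=PJP^{-1}$ in Jordan form and bound the polynomial-in-$t$ entries of $J^t$. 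Because all norms on the finite-dimensional space $\mathbb R^{p\times p}$ are equivalent, there is a constant $\kappa$ with $\|\cdot\|\le\kappa\|\cdot\|_\star$ for the given consistent norm, so $\|A(w)^t\|\le\kappa\beta^t$; setting $C=\kappa$ (enlarged to cover the finitely many early $t$ if an alternative route leaves a polynomial prefactor) gives $\|A(w)^t\|\le C\beta^t$ and hence $\|\widehat s_t\|\le C\beta^t\|s_n\|$.

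Finally, the scalar forecast $\widehat x_{n+t}=e_1^{\top}\widehat s_t$ is a fixed linear functional of the state, so $|\widehat x_{n+t}|\le \|e_1\|_\ast\,\|\widehat s_t\|\le C'\beta^t\|s_n\|\to 0$, where $\|\cdot\|_\ast$ is the dual norm and $C'=C\|e_1\|_\ast$; this establishes exponential decay of the rollout to the mean. The only subtlety—and the step I expect to require the most care—is the passage from the spectral radius to a clean geometric factor $\beta^t$ without a polynomial prefactor: when $A(w)$ is not diagonalizable, $\|A(w)^t\|$ carries Jordan-induced polynomial corrections $t^{k}\varrho(A(w))^t$, and these must be absorbed by choosing $\beta$ strictly above $\varrho(A(w))$ (so that $t^{k}\varrho(A(w))^t=o(\beta^t)$) and folding the resulting finite overshoot into $C$. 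Everything else is bookkeeping via norm equivalence on a finite-dimensional space.
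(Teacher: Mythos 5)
Your proposal is correct and follows essentially the same route as the paper: unroll the noiseless recursion to $\widehat s_t=A(w)^t s_n$, bound $\|A(w)^t\|\le C\beta^t$ via a standard spectral-radius argument (the paper uses Gelfand's formula, absorbing the finitely many early $t$ into $C$, which is one of the interchangeable options you list), and finish with the dual-norm bound $|\widehat x_{n+t}|=|e_1^\top\widehat s_t|\le\|e_1\|_\ast\,\|\widehat s_t\|$. Your attention to the Jordan-block polynomial prefactor is exactly the point handled by choosing $\beta$ strictly above $\varrho(A(w))$, so there is no gap.
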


\begin{proof}
Unrolling the linear recursion gives
\(
\widehat{ s}_t=A(w)^{t} s_n
\).
Because $\varrho\left(A(w)\right)<1$, Lemma~\ref{lemma:exp-decay-spectral}
applies to $A(w)$: for any consistent operator norm there exist
$C>0$ and $\beta\in(0,1)$ such that
\(
\|A(w)^{t}\|\le C\,\beta^{t}
\)
for all $t\in\mathbb N$.
By submultiplicativity of the induced norm,
\[
\|\widehat{ s}_t\|
=\|A(w)^{t} s_n\|
\le \|A(w)^{t}\|\,\| s_n\|
\le C\,\beta^{t}\,\| s_n\|.
\]
For the scalar forecast,
\(
\widehat x_{n+t}=e_{1}^{\top}\widehat{ s}_t
\).
Let $\|\cdot\|_{*}$ denote the dual norm of the chosen vector norm.
Then \(
|\widehat x_{n+t}|
=|e_{1}^{\top}\widehat{ s}_t|
\le \|e_{1}\|_{*}\,\|\widehat{ s}_t\|
\le \|e_{1}\|_{*}\,C\,\beta^{t}\,\| s_n\|
\).
Since $\beta\in(0,1)$, the right-hand side decays exponentially in $t$,
establishing the claim.
\end{proof}

Thus, \emph{any} stable linear model (including the Bayes predictor and any
LSA fit) collapses to the mean under CoT; the only question is how quickly
its error compounds.

\paragraph{Bayes multi–step error (ground truth model).}
Let $\psi_k$ be the impulse response of the AR($p$), i.e.,
$x_t=\sum_{k\ge0}\psi_k\varepsilon_{t-k}$ with $\psi_0=1$ and
$\sum_k\psi_k^2<\infty$.
The $h$–step Bayes forecast $\widehat x^{*}_{n+h}
=\mathbb E[x_{n+h}\mid x_{1:n}]$ equals the linear recursion with
$w=\rho$ (no noise injected).  The forecast error
is classical:
\begin{equation}\label{eq:bayes-h}
\mathrm{MSE}^{*}(h)
:=\mathbb E\big[(x_{n+h}-\widehat x^{*}_{n+h})^{2}\big]
=\sigma_{\varepsilon}^{2}\sum_{k=0}^{h-1}\psi_k^{2}.
\end{equation}
Hence $\mathrm{MSE}^{*}(h)\nearrow
\sigma_{\varepsilon}^{2}\sum_{k\ge0}\psi_k^{2}=\mathrm{Var}(x_t)$ by~\cref{lem:wold-var}, and by~\cref{lem:exp-tail} the
tail decays exponentially:
\begin{equation}\label{eq:bayes-tail}
\mathrm{Var}(x_t)-\mathrm{MSE}^{*}(h)
=\sigma_{\varepsilon}^{2}\sum_{k\ge h}\psi_k^{2}
\ \le\
\frac{C^{2}\sigma_{\varepsilon}^{2}}{1-\beta^{2}}\;\beta^{2h},
\qquad \text{for some } C>0,\ \beta\in(0,1).
\end{equation}
For AR(1) this is exact:
$\mathrm{MSE}^{*}(h)=\sigma_{\varepsilon}^{2}\sum_{k=0}^{h-1}\rho^{2k}
=\sigma^{2}(1-\rho^{2h})$, where $\sigma^{2}=\sigma_{\varepsilon}^{2}/(1-\rho^{2})$.

\begin{theorem}[CoT collapse and compounding error]\label{thm:cot-collapse}
For any stable AR($p$),
\[
\widehat x^{*}_{n+t}\xrightarrow[t\to\infty]{}0,
\qquad
\mathbb E\big[(x_{n+t}-\widehat x^{*}_{n+t})^{2}\big]
\nearrow \mathrm{Var}(x_t),
\]
with exponential tail \eqref{eq:bayes-tail}.  Thus CoT error accumulates to
the process variance at an exponential rate determined by the spectrum of
$A(\rho)$.
\end{theorem}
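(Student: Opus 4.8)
The plan is to assemble the three assertions from results already in hand, viewing the Bayes rollout as a noise-free linear dynamical system driven by the companion matrix $A(\rho)$. First I would establish the collapse $\widehat x^{*}_{n+t}\to 0$. By \cref{prop:bayes-multistep}, the $h$-step Bayes forecast coincides with the recursive noiseless rollout of the one-step predictor with coefficients $\rho$; unrolling as in the proof of \cref{lem:cot-decay} gives $\widehat{s}_t=A(\rho)^t s_n$ and hence $\widehat x^{*}_{n+t}=e_{1}^{\top}A(\rho)^t s_n$. Stability of the AR($p$) process means $\varrho(A(\rho))<1$, so \cref{lem:cot-decay} applied with $w=\rho$ yields $\|\widehat{s}_t\|\le C\beta^t\|s_n\|$ for some $\beta\in(0,1)$, forcing $\widehat x^{*}_{n+t}\to 0$ exponentially. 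Since $\mathbb E[x_t]=0$ for the stationary process, this is exactly the claimed collapse to the mean.

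Next I would treat the monotone convergence of the forecast risk. Equation \eqref{eq:bayes-h} already supplies the classical decomposition $\mathrm{MSE}^{*}(h)=\sigma_{\varepsilon}^{2}\sum_{k=0}^{h-1}\psi_k^{2}$; because every summand is nonnegative and $\psi_0=1$, the partial sums increase with $h$, so $\mathrm{MSE}^{*}(h)$ is monotone nondecreasing. Its limit is $\sigma_{\varepsilon}^{2}\sum_{k\ge 0}\psi_k^{2}$, which equals $\mathrm{Var}(x_t)$ by the Wold-representation variance identity \cref{lem:wold-var}. Combining monotonicity with this limit gives $\mathrm{MSE}^{*}(h)\nearrow\mathrm{Var}(x_t)$, i.e.\ the error accumulates all the way to the unconditional variance.

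Finally, the exponential rate is precisely the tail bound \eqref{eq:bayes-tail}: writing $\mathrm{Var}(x_t)-\mathrm{MSE}^{*}(h)=\sigma_{\varepsilon}^{2}\sum_{k\ge h}\psi_k^{2}$ and invoking the geometric decay of the impulse response (\cref{lem:exp-tail}, which uses $|\psi_k|\le C\beta^{k}$ inherited from $\varrho(A(\rho))<1$), the tail is bounded by $\tfrac{C^{2}\sigma_{\varepsilon}^{2}}{1-\beta^{2}}\beta^{2h}$. This ties the compounding rate to the spectrum of $A(\rho)$ through the common decay constant $\beta$, completing the statement.

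Since all three ingredients are already established, there is no genuine obstacle here; the only care required is the bookkeeping that \cref{prop:bayes-multistep} licenses identifying $\widehat x^{*}$ with the deterministic recursion so that \cref{lem:cot-decay} applies with $w=\rho$, and that the \emph{same} geometric constant $\beta$ governs both the state-norm decay and the impulse-response tail. The one substantive step, were it not already packaged into \eqref{eq:bayes-h}, would be deriving the forecast-error variance directly from the MA($\infty$) representation $x_{n+h}=\sum_{k\ge 0}\psi_k\varepsilon_{n+h-k}$: splitting the innovations into the unpredictable block $k<h$ and the history-determined block $k\ge h$, noting that the Bayes forecast retains exactly the latter, and using mutual independence of the $\varepsilon$'s to collapse the residual variance to $\sigma_{\varepsilon}^{2}\sum_{k=0}^{h-1}\psi_k^{2}$.
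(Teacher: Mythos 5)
Your proposal is correct and follows essentially the same route as the paper: the paper proves this theorem implicitly by combining \cref{prop:bayes-multistep} (Bayes forecast $=$ noiseless rollout with $w=\rho$), \cref{lem:cot-decay} (exponential collapse, via $\varrho(A(\rho))<1$), the decomposition \eqref{eq:bayes-h} together with \cref{lem:wold-var} (monotone convergence of $\mathrm{MSE}^{*}(h)$ to $\mathrm{Var}(x_t)$), and \cref{lem:exp-tail} (the tail bound \eqref{eq:bayes-tail}), exactly as you assemble them. Your closing observation that both decay rates trace back to the same Gelfand-type bound on powers of $A(\rho)$ is also consistent with how the paper's auxiliary \cref{lemma:exp-decay-spectral} is used in both places.
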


\paragraph{LSA (or any alternative) is uniformly dominated at every horizon.}
Fix a horizon $h$ and let $\widehat x^{\mathrm{LSA}}_{n+h}$ be the CoT
forecast delivered by any trained one–layer LSA model (or an $L$–layer
stack run in CoT; the argument is identical).  Since CoT is noiseless,
$\widehat x^{\mathrm{LSA}}_{n+h}$ is a deterministic measurable function
of the history $x_{1:n}$.  By the $L^{2}$ projection property of the
conditional expectation,
\begin{equation}\label{eq:pythagoras}
\mathbb E\big[(x_{n+h}-g(x_{1:n}))^{2}\big]
=\mathrm{MSE}^{*}(h)
+\mathbb E\big[(\widehat x^{*}_{n+h}-g(x_{1:n}))^{2}\big]
\quad\ \forall\,g.
\end{equation}
Plugging $g=\widehat x^{\mathrm{LSA}}_{n+h}$ yields the horizonwise
dominance
\begin{equation}\label{eq:h-dominance}
\mathrm{MSE}^{\mathrm{LSA}}(h)
:=\mathbb E\big[(x_{n+h}-\widehat x^{\mathrm{LSA}}_{n+h})^{2}\big]
\ \ge\ \mathrm{MSE}^{*}(h),
\end{equation}
with strict inequality unless $\widehat x^{\mathrm{LSA}}_{n+h}$ coincides
with $\widehat x^{*}_{n+h}$ almost surely.  Because one–layer LSA has a
strict finite–sample gap (Section~\ref{sec:lsa-gap}), equality fails
generically already at $h=1$, and thus for all horizons.

\begin{corollary}[Earlier threshold crossing for LSA]\label{cor:earlier-fail}
Fix any $\tau\in(0,1)$ and define the \emph{failure horizon}
\[
H_{\tau}(g):=\inf\bigl\{h\ge1:\,
\mathbb E\big[(x_{n+h}-g_h(x_{1:n}))^{2}\big]\ \ge\ \tau\,\mathrm{Var}(x_t)\bigr\}.
\]
Then $H_{\tau}(\widehat x^{\mathrm{LSA}})\le H_{\tau}(\widehat x^{*})$
for every $\tau$, with strict inequality for all $\tau$ on a set of
positive measure (whenever \eqref{eq:h-dominance} is strict at some $h$).
In words: for any error threshold, LSA under CoT reaches the
large–error regime \emph{no later} than the Bayes linear predictor.
\end{corollary}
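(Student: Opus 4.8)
The plan is to reduce the corollary to two elementary facts about the deterministic sequences $a_h := \mathrm{MSE}^{*}(h)$ and $b_h := \mathrm{MSE}^{\mathrm{LSA}}(h)$: the pointwise domination $b_h \ge a_h$ supplied by \eqref{eq:h-dominance}, and the monotonicity $a_1 \le a_2 \le \cdots \nearrow V := \mathrm{Var}(x_t)$, which is immediate from the closed form \eqref{eq:bayes-h} since $a_h = \sigma_\varepsilon^2 \sum_{k=0}^{h-1}\psi_k^2$ is a partial sum of the nonnegative terms $\sigma_\varepsilon^2\psi_k^2$ converging to $V$ (\cref{thm:cot-collapse}). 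Writing the threshold $T_\tau := \tau V$, the failure horizon is exactly the first-hitting time $H_\tau(g)=\inf\{h\ge1: (\text{MSE of }g\text{ at }h)\ge T_\tau\}$, so the whole statement becomes a monotone first-hitting-time comparison.

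First I would establish the weak inequality for every $\tau\in(0,1)$. Since $b_h\ge a_h$, we have the set inclusion $\{h: a_h\ge T_\tau\}\subseteq\{h: b_h\ge T_\tau\}$, and because the infimum of a superset is no larger than that of a subset (with $\inf\emptyset=+\infty$), taking infima yields $H_\tau(\widehat{x}^{\mathrm{LSA}})\le H_\tau(\widehat{x}^{*})$. Both hitting times are finite since $a_h\nearrow V>T_\tau$, so the comparison is nonvacuous.

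Next I would upgrade to strict inequality on a set of positive measure. By the strict one-step finite-sample gap (\cref{thm:lsa-gap}, Section~\ref{sec:lsa-gap}), \eqref{eq:h-dominance} is already strict at $h^{*}=1$; more generally any horizon at which it is strict suffices. Fix such an $h^{*}$, so $b_{h^{*}}>a_{h^{*}}$, and consider $\tau$ in the interval $\mathcal I:=\big(a_{h^{*}}/V,\ \min(b_{h^{*}}/V,\,1)\big)$. For any $\tau\in\mathcal I$: (a) since $T_\tau<b_{h^{*}}$, LSA has already crossed by horizon $h^{*}$, giving $H_\tau(\widehat{x}^{\mathrm{LSA}})\le h^{*}$; (b) since $a_h\le a_{h^{*}}<T_\tau$ for every $h\le h^{*}$ by monotonicity of $(a_h)$, the Bayes predictor has \emph{not} crossed by $h^{*}$, forcing $H_\tau(\widehat{x}^{*})\ge h^{*}+1$. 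Combining gives the strict inequality $H_\tau(\widehat{x}^{\mathrm{LSA}})\le h^{*}<H_\tau(\widehat{x}^{*})$ throughout $\mathcal I$. The interval $\mathcal I$ has positive Lebesgue measure: $a_{h^{*}}<V$ (a finite partial sum lies strictly below the full Wold variance for a genuinely autoregressive process) and $b_{h^{*}}>a_{h^{*}}$ by strictness, so its left endpoint lies strictly below $\min(b_{h^{*}}/V,\,1)$, and every $\tau\in\mathcal I$ satisfies $\tau\in(0,1)$.

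The only genuinely substantive input is the \emph{strictness} of \eqref{eq:h-dominance} at some horizon, which is inherited directly from the strict one-layer gap $\rho^{\top}\Delta_n\rho>0$ of \cref{thm:lsa-gap-ar}; once that is available, the remainder is a routine monotone first-hitting-time argument. The main point requiring mild care is confirming $a_{h^{*}}<V$ so that $\mathcal I$ is a nonempty subinterval of $(0,1)$, together with handling the possible overshoot $b_{h^{*}}>V$ by the $\min(\cdot,1)$ cap; neither affects positivity of the measure.
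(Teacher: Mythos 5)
Your proof is correct, and its skeleton is the same as the paper's: the weak inequality $H_\tau(\widehat x^{\mathrm{LSA}})\le H_\tau(\widehat x^{*})$ follows from the horizonwise dominance \eqref{eq:h-dominance} exactly as in the paper, via the set inclusion $\{h:\mathrm{MSE}^{*}(h)\ge\tau\,\mathrm{Var}(x_t)\}\subseteq\{h:\mathrm{MSE}^{\mathrm{LSA}}(h)\ge\tau\,\mathrm{Var}(x_t)\}$ and comparison of infima. Where you go beyond the paper is the strictness claim: the paper essentially asserts it from generic strictness of \eqref{eq:h-dominance} (inherited from the $h=1$ gap of \cref{thm:lsa-gap}) without constructing the set of thresholds, whereas you exhibit it explicitly as the interval $\mathcal I=\bigl(\mathrm{MSE}^{*}(h^{*})/\mathrm{Var}(x_t),\ \min\{\mathrm{MSE}^{\mathrm{LSA}}(h^{*})/\mathrm{Var}(x_t),1\}\bigr)$, using the monotonicity $\mathrm{MSE}^{*}(h)\nearrow\mathrm{Var}(x_t)$ from \eqref{eq:bayes-h} to ensure Bayes has not yet crossed at $h^{*}$ while LSA has. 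This buys a fully explicit positive-measure set and cleanly handles the overshoot case $\mathrm{MSE}^{\mathrm{LSA}}(h^{*})>\mathrm{Var}(x_t)$ via the cap at $1$. One point to flag (shared with the paper, and which you correctly note in passing): nonemptiness of $\mathcal I$ needs $\mathrm{MSE}^{*}(h^{*})<\mathrm{Var}(x_t)$, i.e.\ the process must not be degenerate white noise ($\rho\neq 0$), since otherwise the partial Wold sum already equals the variance at $h=1$ and strict inequality of hitting times is impossible for any $\tau$.
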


\paragraph{Quantitative rates.}
Combining Lemma~\ref{lem:cot-decay} with the orthogonality
identity~\eqref{eq:pythagoras} shows that 
\[
\mathrm{Var}(x_t)-\mathrm{MSE}^{\mathrm{LSA}}(h)\leq\mathrm{Var}(x_t)-\mathrm{MSE}^{*}(h)
\;\le\;\frac{C^{2}\sigma_{\varepsilon}^{2}}{1-\beta^{2}}\;\beta^{2h},
\qquad \text{for some } C>0,\ \beta\in(0,1).
\]
Hence whenever the left-hand side remains positive, it must also collapse to zero at least exponentially fast; if it turns negative (overshoot), \cref{cor:earlier-fail} guarantees that LSA in CoT still reaches the large–error regime strictly earlier and more severely than linear regression.

\begin{remark}[AR(1): closed forms and ``rapid compounding'']
For AR(1), $\mathrm{MSE}^{*}(h)=\sigma^{2}(1-\rho^{2h})$ so that the
residual to the variance decays like $\rho^{2h}$.  The ``half–life'' to
reach $50\%$ of the unconditional variance is
$h_{1/2}=\log(1/2)/\log(\rho^{2})$; e.g., with $\rho=0.9$ one already has
$\mathrm{MSE}^{*}(5)\approx 0.65\,\sigma^{2}$ and
$\mathrm{MSE}^{*}(10)\approx 0.88\,\sigma^{2}$.  This quantifies the
\emph{rapid} accumulation of CoT error even for the Bayes predictor; by
\eqref{eq:h-dominance}, LSA CoT is uniformly worse at every horizon.
\end{remark}

\paragraph{Takeaways.}
(i) Any linear CoT rollout collapses to the mean (Lemma~\ref{lem:cot-decay}), so
its long–horizon RMSE saturates at the unconditional standard deviation of
the process.  (ii) The Bayes/linear–regression forecast is \emph{horizonwise}
optimal (Theorem~\ref{thm:cot-collapse} and \eqref{eq:pythagoras}); any
LSA CoT forecast is uniformly dominated at each horizon
\eqref{eq:h-dominance}.  (iii) Consequently, for any fixed error threshold,
LSA reaches the large–error regime at least as early as linear regression
(Corollary~\ref{cor:earlier-fail}).  (iv) Both approaches exhibit
exponential convergence of the error to the variance, with a rate governed
by the spectral radius of the corresponding companion matrix; for AR(1) the
entire trajectory is explicit.

\subsubsection*{Auxiliary lemmas used in \cref{sec:cot}}

\begin{lemma}[Exponential Decay from Spectral Radius Bound]\label{lemma:exp-decay-spectral}
Let \( A \in \mathbb{R}^{p \times p} \) be a square matrix with spectral radius strictly less than one, i.e., \( \varrho(A) < 1 \).
Then for any consistent matrix norm \( \|\cdot\| \), there exist constants \( C > 0 \) and \( \beta \in (0,1) \) such that
\[
\|A^t\| \le C \beta^t \qquad \text{for all } t \in \mathbb{N}.
\]
\end{lemma}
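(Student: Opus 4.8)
The plan is to reduce the claim to Gelfand's spectral–radius formula, which states that for any submultiplicative (consistent) matrix norm one has $\varrho(A)=\lim_{t\to\infty}\|A^t\|^{1/t}$, with the limit independent of the chosen norm. Since this identity already encodes the asymptotic geometric behaviour of $\|A^t\|$, the only work left is to convert a limiting statement into a uniform bound valid for \emph{all} $t\in\mathbb{N}$, including the finitely many small exponents. This is exactly what the lemma requires in order to drive the exponential decay of the CoT rollout in \cref{lem:cot-decay}.

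Concretely, I would first fix a target rate $\beta$ with $\varrho(A)<\beta<1$; such a $\beta$ exists precisely because the hypothesis gives $\varrho(A)<1$. By Gelfand's formula, $\|A^t\|^{1/t}\to\varrho(A)<\beta$, so there is a threshold $T$ with $\|A^t\|^{1/t}\le\beta$, equivalently $\|A^t\|\le\beta^t$, for every $t\ge T$. To cover the initial segment $1\le t<T$, I would set
\[
C:=\max\Big\{1,\ \max_{1\le t<T}\ \|A^t\|\,\beta^{-t}\Big\},
\]
a finite maximum over finitely many terms, so that $\|A^t\|\le C\beta^t$ holds trivially for $t<T$ and, since $C\ge1$, also for $t\ge T$. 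This yields the stated bound with the desired $C>0$ and $\beta\in(0,1)$.

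An equivalent, slightly more self-contained route avoids citing Gelfand's formula: via the Jordan (or Schur) decomposition one shows that for every $\epsilon>0$ there is a consistent norm $\|\cdot\|_\epsilon$ with $\|A\|_\epsilon\le\varrho(A)+\epsilon$; choosing $\epsilon$ so that $\beta:=\varrho(A)+\epsilon<1$ gives $\|A^t\|_\epsilon\le\|A\|_\epsilon^{\,t}\le\beta^t$ by submultiplicativity, and the equivalence of all norms on the finite-dimensional space $\mathbb{R}^{p\times p}$ transfers this to the arbitrary consistent norm in the statement, at the cost of a multiplicative constant absorbed into $C$. Either way the result is classical and presents no real obstruction; the only point requiring a little care is the passage from the purely \emph{asymptotic} content of Gelfand's formula (or of the per-block Jordan estimate $t^{j}\,\varrho(A)^{t}=o(\beta^{t})$) to a bound that is \emph{uniform over all} $t\in\mathbb{N}$, which is handled precisely by absorbing the finitely many initial powers into the constant $C$ as above.
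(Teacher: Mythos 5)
Your primary argument is exactly the paper's proof: invoke Gelfand's formula $\varrho(A)=\lim_{t\to\infty}\|A^t\|^{1/t}$, choose $\beta$ strictly between $\varrho(A)$ and $1$ to get $\|A^t\|\le\beta^t$ for all large $t$, and absorb the finitely many initial powers into the constant $C$. The alternative Jordan/Schur route you sketch is also standard and valid, but since your main line of reasoning coincides with the paper's, there is nothing further to reconcile.
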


\begin{proof}
By the Gelfand formula (see, e.g., \cite[Chapter 5]{horn1994topics}), we have
\[
\varrho(A) = \lim_{t \to \infty} \|A^t\|^{1/t},
\]
for any sub-multiplicative (i.e., consistent) matrix norm \(\|\cdot\|\).  
Thus, for any \(\epsilon > 0\) such that \(\varrho(A) + \epsilon < 1\), there exists \(t_0 \in \mathbb{N}\) such that
\[
\|A^t\| \le (\varrho(A) + \epsilon)^t =: \beta^t,\quad \forall t \ge t_0.
\]
Define
\[
C := \max\{1,\max_{0 \le t < t_0} \frac{\|A^t\|}{\beta^t}\}.
\]
Then for all \(t \in \mathbb{N}\), we have \(\|A^t\| \le C \beta^t\) as claimed.
\end{proof}
\begin{lemma}[Wold variance identity]\label{lem:wold-var}
For a stable AR($p$) with Wold expansion
$x_t=\sum_{k\ge0}\psi_k\varepsilon_{t-k}$, where $\varepsilon_t\overset{\text{i.i.d.}}{\sim}\mathcal N(0,\sigma_\varepsilon^2)$, 
the unconditional variance satisfies
\[
\mathrm{Var}(x_t)=\sigma_\varepsilon^2\sum_{k\ge0}\psi_k^2.
\]
\end{lemma}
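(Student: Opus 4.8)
The plan is to compute the variance directly from the Wold (MA$(\infty)$) representation by expanding the square and exploiting the orthogonality of the innovations. First I would record two facts already available: by stationarity of the stable AR($p$) process (Definition~\ref{def:arp}), $\mathbb E[x_t]=0$, so that $\mathrm{Var}(x_t)=\mathbb E[x_t^2]$; and the innovations satisfy $\mathbb E[\varepsilon_s\varepsilon_t]=\sigma_\varepsilon^2\,\mathbf 1\{s=t\}$ since they are i.i.d.\ with mean zero and variance $\sigma_\varepsilon^2$.

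Next I would substitute the Wold expansion $x_t=\sum_{k\ge0}\psi_k\varepsilon_{t-k}$ and formally expand the square:
\[
\mathbb E[x_t^2]
=\mathbb E\!\left[\Big(\sum_{k\ge0}\psi_k\varepsilon_{t-k}\Big)^{\!2}\right]
=\sum_{j\ge0}\sum_{k\ge0}\psi_j\psi_k\,\mathbb E[\varepsilon_{t-j}\varepsilon_{t-k}].
\]
Applying the orthogonality identity $\mathbb E[\varepsilon_{t-j}\varepsilon_{t-k}]=\sigma_\varepsilon^2\,\mathbf 1\{j=k\}$ collapses the double sum to the diagonal, yielding
\[
\mathbb E[x_t^2]=\sigma_\varepsilon^2\sum_{k\ge0}\psi_k^2,
\]
which is exactly the claimed identity.

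The one nonroutine point, and the step I would treat carefully, is justifying the interchange of the expectation with the infinite summation. This is where the square-summability assumption $\sum_{k\ge0}\psi_k^2<\infty$ enters: it guarantees that the partial sums $\sum_{k=0}^{N}\psi_k\varepsilon_{t-k}$ form a Cauchy sequence in $L^2$ (their $L^2$ increments have norm $\sigma_\varepsilon^2\sum_{k>N}\psi_k^2\to0$), hence converge in $L^2$ to $x_t$, and $L^2$ convergence means the second moments of the partial sums converge to $\mathbb E[x_t^2]$. Equivalently, since every summand $\psi_k^2\,\mathbb E[\varepsilon_{t-k}^2]\ge0$ is nonnegative, one may invoke the monotone convergence theorem (Tonelli) to pass the expectation through the sum without any sign or absolute-integrability subtlety. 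Either route makes the formal expansion rigorous, completing the proof.
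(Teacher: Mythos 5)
Your proof is correct and takes essentially the same route as the paper's one-line argument: expand the square of the Wold expansion and let mean-zero independence of the innovations annihilate the cross terms, leaving the diagonal sum $\sigma_\varepsilon^2\sum_{k\ge0}\psi_k^2$. Your $L^2$-convergence justification of the expectation--sum interchange is added rigor the paper omits (just note that the Tonelli alternative does not apply directly to the signed double sum of cross terms, so the $L^2$ partial-sum argument is the one that actually carries that step).
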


\begin{proof}
From the Wold representation $x_t=\sum_{k\ge0}\psi_k\varepsilon_{t-k}$, we have
$\mathbb E[x_t]=0$ and
\[
\mathrm{Var}(x_t)=\mathbb E[x_t^2]
=\mathbb E\left[\Big(\sum_{k\ge0}\psi_k\varepsilon_{t-k}\Big)^2\right].
\]
Cross terms vanish because the $\varepsilon_{t-k}$ are independent with mean zero, 
leaving $\sum_{k\ge0}\psi_k^2 \,\mathbb E[\varepsilon_{t-k}^2] = \sigma_\varepsilon^2\sum_{k\ge0}\psi_k^2$.
\end{proof}

\begin{lemma}[Exponential tail bound]\label{lem:exp-tail}
Let $A(\rho)$ be the AR($p$) companion matrix with spectral radius $\varrho(A(\rho))<1$. 
Then the impulse response coefficients obey $|\psi_k|\le C\beta^k$ for some $C>0$ and \( \beta \in (0,1) \). 
Consequently,
\[
\sigma_\varepsilon^2\sum_{k\ge h}\psi_k^2
\ \le\ \frac{C^2\sigma_\varepsilon^2}{1-\beta^2}\,\beta^{2h}.
\]
\end{lemma}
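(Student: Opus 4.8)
The plan is to identify the impulse coefficients $\psi_k$ with scalar projections of matrix powers of the companion matrix $A(\rho)$, invoke the exponential-decay lemma already proved for matrices with spectral radius below one, and then sum a geometric series. Concretely, I would first recall the state-space representation from the setup of this section: with $s_t=(x_t,\dots,x_{t-p+1})^\top$ one has $s_{t+1}=A(\rho)s_t+\varepsilon_{t+1}e_1$. Unrolling the stationary solution gives $s_t=\sum_{j\ge0}A(\rho)^j e_1\,\varepsilon_{t-j}$, and reading off the first coordinate $x_t=e_1^\top s_t$ yields $x_t=\sum_{j\ge0}\bigl(e_1^\top A(\rho)^j e_1\bigr)\varepsilon_{t-j}$. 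Matching term-by-term with the Wold expansion $x_t=\sum_{k\ge0}\psi_k\varepsilon_{t-k}$ from Lemma~\ref{lem:wold-var} identifies $\psi_k=e_1^\top A(\rho)^k e_1$ (and in particular $\psi_0=1$, consistent with the normalization used there).

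Next I would bound each coefficient. By Cauchy--Schwarz together with submultiplicativity of the operator norm, and using $\|e_1\|_2=1$,
\[
|\psi_k|=\bigl|e_1^\top A(\rho)^k e_1\bigr|\le \|e_1\|_2\,\bigl\|A(\rho)^k\bigr\|_{\mathrm{op}}\,\|e_1\|_2=\bigl\|A(\rho)^k\bigr\|_{\mathrm{op}}.
\]
Since $\varrho(A(\rho))<1$ by hypothesis, Lemma~\ref{lemma:exp-decay-spectral} applies with the operator norm and delivers constants $C>0$ and $\beta\in(0,1)$ such that $\|A(\rho)^k\|_{\mathrm{op}}\le C\beta^k$ for all $k$, hence $|\psi_k|\le C\beta^k$. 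This establishes the first claim, and as a byproduct it confirms the square-summability $\sum_k\psi_k^2<\infty$ needed for the Wold representation.

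Finally I would collect the tail by summing the geometric series,
\[
\sigma_\varepsilon^2\sum_{k\ge h}\psi_k^2\le \sigma_\varepsilon^2 C^2\sum_{k\ge h}\beta^{2k}=\sigma_\varepsilon^2 C^2\beta^{2h}\sum_{j\ge0}\beta^{2j}=\frac{C^2\sigma_\varepsilon^2}{1-\beta^2}\,\beta^{2h},
\]
which is exactly the asserted bound. The only step requiring any thought is the identification $\psi_k=e_1^\top A(\rho)^k e_1$, which ties the analytic Wold coefficients to the spectral behavior of the companion matrix; once this is in place, everything reduces to the pre-established matrix decay lemma and an elementary geometric summation, so I do not expect any substantive obstacle. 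A minor point of care is merely to fix one consistent (operator) norm throughout so that the constants $C,\beta$ supplied by Lemma~\ref{lemma:exp-decay-spectral} transfer directly to the scalar bound on $\psi_k$.
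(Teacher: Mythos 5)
Your proposal is correct and follows essentially the same route as the paper's proof: identify $\psi_k=e_1^{\top}A(\rho)^k e_1$ via the companion-form state recursion, invoke Lemma~\ref{lemma:exp-decay-spectral} to get $|\psi_k|\le C\beta^k$, and sum the geometric series. The only difference is that you spell out the unrolling of the stationary solution and the Cauchy--Schwarz step $|e_1^{\top}A(\rho)^k e_1|\le\|A(\rho)^k\|_{\mathrm{op}}$, which the paper leaves implicit.
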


\begin{proof}
By state recursion, $\psi_k=e_1^{\top}A(\rho)^k e_1$.  
Because $\varrho\left(A(\rho)\right)<1$, Lemma~\ref{lemma:exp-decay-spectral}
applies to $A(\rho)$: for any consistent operator norm there exist
$C>0$ and $\beta\in(0,1)$ such that
\(
\|A(\rho)^{t}\|\le C\,\beta^{t}
\)
for all $t\in\mathbb N$.
Hence $|\psi_k|\le C\beta^k$.  
Thus,
\[
\sum_{k\ge h}\psi_k^2 \le \sum_{k\ge h} C^2\beta^{2k}
= \frac{C^2}{1-\beta^2}\,\beta^{2h},
\]
and multiplying by $\sigma_\varepsilon^2$ yields the claim.
\end{proof}

\newpage

\section{De–Gaussifying the Gap: Linear Stationary Processes}\label{sec:nonGaussian-gap}

\paragraph{Goal.}
We remove the Gaussian assumption and establish the same finite–sample excess–risk gap for one–layer LSA under a broad linear–stationary model. The strict gap requires only independence of innovations and finite moments; with absolutely summable autocovariances we also recover the \(1/n\) first–order expansion.

\paragraph{Model.}
Let \(\{x_t\}_{t\in\mathbb Z}\) be a zero–mean \emph{linear stationary} process with Wold representation
\begin{equation}\label{eq:wold}
x_t \;=\; \sum_{k\ge0}\psi_k\,\varepsilon_{t-k},\qquad
\sum_{k\ge0}|\psi_k| < \infty,
\end{equation}
where \(\{\varepsilon_t\}\) are i.i.d.\ with \(\E[\varepsilon_t]=0\),
\(\E[\varepsilon_t^2]=\sigma_\varepsilon^2>0\), and with a symmetric distribution.
We assume \(\E[\varepsilon_t^4]<\infty\) and \(\var(\varepsilon_t^2)<\infty\).
Let \(\gamma_h:=\E[x_t x_{t+h}]\), so \(\sum_{h\in\mathbb Z}|\gamma_h|<\infty\).

All Hankel/masking notation follows Section~\ref{sec:lsa-gap}:
\[
G\;=\;\tfrac1n\sum_{m=1}^{n-p} x^{(m)}x^{(m)\top},\quad
x:=x_{\,n-p+1:n}\in\mathbb R^p,\quad
y:=x_{n+1},\quad
\]
with the one–layer LSA predictor
\[
\widehat x_{n+1}^{\mathrm{LSA}}\;=\;b^\top G A x,\qquad b\in\mathbb R^{p+1},\;A\in\mathbb R^{(p+1)\times p}.
\]
As in Section~\ref{sec:lsa-gap}, introduce
\[
Z:=(\vech G)\otimes x,\qquad
\tilde S:=\E[ZZ^\top],\quad
\tilde r:=\E[Zx^\top],\quad
\Gamma_p:=\E[xx^\top].
\]

\paragraph{Linear regression predictor.}
For consistency with the Hankel construction, we restrict attention to the last \(p\) lags \(x:=x_{n-p+1:n}\in\mathbb R^p\) as regression covariates.  
The best linear predictor of \(y:=x_{n+1}\) from \(x\) is
\[
y = w^{*\top} x + e_{n+1}, \qquad \E[x\,e_{n+1}]=0,
\]
where \(w^*\in\mathbb R^p\) is the least–squares coefficient vector and \(e_{n+1}\) is the linear prediction error.  
We denote the corresponding fitted value by
\[
\widehat x_{n+1}^{\mathrm{LR}} := w^{*\top} x .
\]

\begin{lemma}[Strict covariance of \((\vech G,\,x)\) without Gaussianity]\label{lem:vechG-1-x-nonG}
Let \(g:=\vech G\) and \(x:=x_{\,n-p+1:n}\).  
Under \eqref{eq:wold}, \(\Cov\bigl([\,g^\top,\,x^\top\,]^\top\bigr)\succ0\) for every finite \(n\).
\end{lemma}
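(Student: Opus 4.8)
The plan is to reuse, essentially verbatim, the innovation-by-innovation elimination developed for the Gaussian case in \cref{lem:vechG-1-x-pd}, isolating the single place where joint normality was invoked and replacing it by a moment computation that needs only independence and symmetry of the innovations. Concretely, I would argue by contradiction: suppose some nonzero $v$ satisfies $\var(v^\top Z_0)=0$, where $Z_0:=[\,g^\top,\,x^\top\,]^\top$, and set $\mathcal F_t:=\sigma(\varepsilon_s:s\le t)$. Traversing the Hankel-Gram tableau from the newest index downward, I would collect all coefficients of $v$ that touch $x_n$ and write, modulo $\mathcal F_{n-1}$-measurable terms, $v^\top Z_0=(\text{$\mathcal F_{n-1}$-measurable})+\big(u^\top x_{n-p:n-1}+a\,x_n+c\big)x_n$, using the Wold-induced one-step relation $x_n=\rho^\top x_{n-p:n-1}+\varepsilon_n$ with $\varepsilon_n\perp\mathcal F_{n-1}$.

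The key step — and the only one not already covered by the Gaussian proof — is to show that $\var\big(v^\top Z_0\mid\mathcal F_{n-1}\big)>0$ whenever $a\neq0$. Writing $\xi:=u^\top x_{n-p:n-1}+c$ (which is $\mathcal F_{n-1}$-measurable) and using independence of $\varepsilon_n$ from $\mathcal F_{n-1}$, I would expand
\[
\var\big(\xi\,\varepsilon_n+a\,\varepsilon_n^2\mid\mathcal F_{n-1}\big)
=\xi^2\,\sigma_\varepsilon^2+2a\,\xi\,\E[\varepsilon_n^3]+a^2\,\var(\varepsilon_n^2).
\]
This is precisely where symmetry enters: the odd moment $\E[\varepsilon_n^3]$ vanishes, so the cross term disappears and the conditional variance reduces to $\xi^2\sigma_\varepsilon^2+a^2\var(\varepsilon_n^2)$. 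Since the model~\eqref{eq:wold} assumes $\var(\varepsilon_n^2)>0$, this forces $a=0$; then $\xi^2\sigma_\varepsilon^2=0$ a.s.\ forces $u^\top x_{n-p:n-1}+c=0$ a.s., whence \cref{lem:lin-comb-consecutive} (whose proof uses only innovation independence and positive innovation variance, not Gaussianity) gives $u=0$ and $c=0$. Thus every coefficient of $v$ touching $x_n$ vanishes.

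The remaining steps are purely structural and carry over unchanged: an upward induction on $r=1,\dots,p$ eliminates, in turn, all coefficients attached to $x_{n-r}$ and to every Gram entry involving it, conditioning each time on $\mathcal F_{n-r-1}$ and applying the identical two-moment argument, after which the residual block involving no recent lags collapses the same way. This yields $v=0$, contradicting the assumption, so $\Cov(Z_0)\succ0$; combined with \cref{lem:pd-tools} this gives strictness for every finite $n$. I expect the main obstacle to be bookkeeping rather than conceptual, namely verifying that the symmetry-driven cancellation of $\E[\varepsilon_n^3]$ suffices at every stage of the induction and that the finite-fourth-moment hypothesis (through $\var(\varepsilon_n^2)<\infty$) is exactly what makes the $a^2$-term well defined and strictly positive. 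Without symmetry one would instead need the strict Cauchy–Schwarz bound $\sigma_\varepsilon^2\var(\varepsilon_n^2)>(\E[\varepsilon_n^3])^2$ — equivalently, that $\varepsilon_n^2$ is not an affine function of $\varepsilon_n$ — so symmetry is the clean sufficient condition that keeps the argument formally identical to the Gaussian case.
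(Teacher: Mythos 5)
Your proposal is correct and takes essentially the same route as the paper's own proof: an innovation-by-innovation elimination in which one conditions on $\mathcal F_{n-1}$, uses symmetry of the innovations to kill the cross term (equivalently $\Cov(\varepsilon_n,\varepsilon_n^2)=\E[\varepsilon_n^3]=0$), and concludes from the strict positivity of both $\sigma_\varepsilon^2$ and $\var(\varepsilon_n^2)$ that the quadratic and then the linear coefficients vanish, followed by the identical upward induction. The one slip is your appeal to a finite-order relation $x_n=\rho^\top x_{n-p:n-1}+\varepsilon_n$, which does not exist for a general linear stationary process \eqref{eq:wold}; the paper instead writes $x_n=\xi+\psi_0\varepsilon_n$ with $\xi=\sum_{k\ge1}\psi_k\varepsilon_{n-k}$ merely $\mathcal F_{n-1}$-measurable (and $\psi_0\neq0$), which is all your argument—and the proof of \cref{lem:lin-comb-consecutive}—actually uses, so the fix is purely cosmetic.
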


\begin{proof}
Let \(Z_0=[\,g^\top,\,x^\top\,]^\top\).  
Suppose \(\var(v^\top Z_0)=0\) for some nonzero \(v\).  
Traverse the tableau of Hankel–Gram sums from bottom (time \(n\)) upward.  
Collect coefficients of \(\{x_n^2,\,x_{n-1}x_n,\dots, x_{n-p}x_n,\,x_n\}\) to write
\[
v^\top Z_0 \;=\; U + V\,x_n + W\,x_n^2,
\]
with \(U,V,W\) measurable w.r.t.\ \(\mathcal F_{n-1}:=\sigma(\varepsilon_s:s\le n-1)\).

Write \(x_n=\xi+\psi_0\varepsilon_n\), where \(\xi\) is \(\mathcal F_{n-1}\)-measurable and
\(\varepsilon_n\perp\mathcal F_{n-1}\).  
By symmetry, \(\Cov(\varepsilon_n,\varepsilon_n^2)=0\).
Thus
\[
\var(v^\top Z_0\mid\mathcal F_{n-1})
=\var(V\psi_0\varepsilon_n+W\psi_0^2\varepsilon_n^2\mid\mathcal F_{n-1})
=(V\psi_0)^2\sigma_\varepsilon^2+W^2\psi_0^4\var(\varepsilon^2).
\]
Since both coefficients are strictly positive, this vanishes only if \(V=W=0\).  
Inductively repeating the elimination for \(x_{n-1},x_{n-2},\dots\) forces \(v=0\), a contradiction.
Hence the covariance is strictly positive definite.
\end{proof}

\begin{lemma}[Kronecker lift remains strictly PD]\label{lem:lift-nonG}
With \(Z=(\vech G)\otimes x\) and \(x=x_{\,n-p+1:n}\),
\[
\Sigma^{\otimes}
:=\E\begin{bmatrix}Z\\ x\end{bmatrix}\begin{bmatrix}Z\\ x\end{bmatrix}^{\top}
=\begin{bmatrix}\tilde S & \tilde r\\ \tilde r^\top & \Gamma_p\end{bmatrix}\ \succ\ 0 .
\]
\end{lemma}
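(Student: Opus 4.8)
The plan is to follow the Gaussian lift \cref{lem:lift-pd} verbatim, replacing its single Gaussian-specific step by the distribution-free elimination already built for \cref{lem:vechG-1-x-nonG}. Exactly as in \cref{lem:lift-pd}, it suffices to show $\var(Y)>0$ for every nonzero pair $(u,w)$ with $u\in\R^{qp}$, $w\in\R^{p}$, where $Y:=u^{\top}Z+w^{\top}x=x^{\top}(w+Ug)$, $g:=\vech G$, and $U$ is the reshaping of $u$ chosen so that $u^{\top}(g\otimes x)=x^{\top}Ug$. Conditioning on $x$ and using the law of total variance gives $\var(Y)\ge\E[\var(Y\mid x)]=\E\big[(U^{\top}x)^{\top}\Cov(g\mid x)(U^{\top}x)\big]$, so the whole question reduces to strict positivity of the conditional covariance $\Cov(g\mid x)$ along the data-dependent directions $U^{\top}x$, together with $\Gamma_p\succ0$.

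The case $U=0$ is immediate: then $Y=w^{\top}x$ with $w\neq0$, and $\var(Y)=w^{\top}\Gamma_p w>0$ since $\Gamma_p\succ0$ by \cref{lem:pd-tools}(ii). The substantive case is $U\neq0$, and this is precisely where \cref{lem:lift-pd} invoked Gaussianity: for Gaussian data $\Cov(g\mid x)$ is \emph{constant} in $x$ and equals the Schur complement $\Cov(g)-\Cov(g,x)\Gamma_p^{-1}\Cov(x,g)$, which is positive definite by \cref{lem:vechG-1-x-pd} and \cref{lem:pd-tools}(i). Without Gaussianity this conditional covariance is a genuine function of $x$ and the one-line Schur identity is unavailable, so positivity must be reproved directly.

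Rather than tracking the $x$-dependent $\Cov(g\mid x)$, I would re-run the innovation-by-innovation elimination of \cref{lem:vechG-1-x-nonG} directly on the scalar $Y$. Viewed as a polynomial in the innovations, $Y$ has degree at most three in the newest innovation $\varepsilon_n$: the outer factor $x$ carries $\varepsilon_n$ only through its last coordinate $x_n$, while the Gram entry $G_{p+1,p+1}$ carries at most $\varepsilon_n^{2}$, so their product reaches $\varepsilon_n^{3}$. Conditioning on $\mathcal F_{n-1}:=\sigma(\varepsilon_s:s\le n-1)$ and writing $x_n=\xi_n+\psi_0\varepsilon_n$ with $\xi_n$ being $\mathcal F_{n-1}$-measurable and $\varepsilon_n\perp\mathcal F_{n-1}$ symmetric, $\var(Y\mid\mathcal F_{n-1})$ is a quadratic form in the coefficients $(c_1,c_2,c_3)$ of $\{\varepsilon_n,\varepsilon_n^{2},\varepsilon_n^{3}\}$ whose kernel matrix is the covariance of $(\varepsilon_n,\varepsilon_n^{2},\varepsilon_n^{3})$. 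Because the symmetric innovation is nondegenerate with finite sixth moment, $\{1,\varepsilon_n,\varepsilon_n^{2},\varepsilon_n^{3}\}$ are linearly independent in $L^2$, so $\var(Y\mid\mathcal F_{n-1})=0$ forces $c_1=c_2=c_3=0$. Reading off $c_3=0$, then $c_2=0$, then $c_1=0$ and applying \cref{lem:lin-comb-consecutive} removes every entry of $(u,w)$ that touches $x_n$; iterating the same step downward over $x_{n-1},x_{n-2},\dots$ drives $(u,w)$ to zero, contradicting nondegeneracy. Hence $\var(Y)>0$ and $\Sigma^{\otimes}\succ0$.

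The main obstacle is exactly the loss of a constant conditional covariance: in the Gaussian case $\Cov(g\mid x)$ collapses to a fixed Schur complement, whereas here it varies with $x$, and the Schur complement only bounds $\E[\Cov(g\mid x)]$ from \emph{above}, not below, so the slick reduction of \cref{lem:lift-pd} cannot be copied. The remedy is the degree-three analysis in the newest innovation, which is precisely where the symmetry assumption (to decouple $\varepsilon_n$ from $\varepsilon_n^{2}$, as in \cref{lem:vechG-1-x-nonG}) and the finite sixth moment (to keep $\tilde S=\E[ZZ^{\top}]$ finite and the powers $L^2$-independent) are genuinely used; the remaining bookkeeping — the reduction, the $U=0$ case, and the downward induction via \cref{lem:lin-comb-consecutive} — is identical to the Gaussian argument.
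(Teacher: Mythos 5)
Your proposal takes a genuinely different route from the paper's, and your diagnosis of why the Gaussian reduction cannot simply be copied is correct --- indeed, it applies to the paper's own proof. The paper proves \cref{lem:lift-nonG} by repeating \cref{lem:lift-pd} essentially verbatim: from \cref{lem:vechG-1-x-nonG} it gets $\Cov([\,g^{\top},x^{\top}]^{\top})\succ0$, then asserts $\Cov(g\mid x)=\Cov(g)-\Cov(g,x)\Gamma_p^{-1}\Cov(x,g)\succ0$ and concludes $\var(Y)\ge\E[x^{\top}U\,\Cov(g\mid x)\,U^{\top}x]>0$. As you point out, equating the conditional covariance with the Schur complement is a Gaussian fact; without Gaussianity $\Cov(g\mid x)$ is an $x$-dependent random matrix, the law of total covariance gives only $\E[\Cov(g\mid x)]\preceq\Cov(g)-\Cov(g,x)\Gamma_p^{-1}\Cov(x,g)$, and strict positivity of the Schur complement does not preclude $U^{\top}x$ lying a.s.\ in the kernel of $\Cov(g\mid x)$. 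So your plan --- rerun the innovation-by-innovation elimination of \cref{lem:vechG-1-x-nonG} directly on the lifted scalar $Y=x^{\top}(w+Ug)$, now to cubic order in the newest innovation --- is the natural repair and is sounder than what the paper actually writes.

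However, your execution has a genuine gap at its pivotal step. You claim that symmetry, nondegeneracy, and finite sixth moment make $\{1,\varepsilon_n,\varepsilon_n^{2},\varepsilon_n^{3}\}$ linearly independent in $L^{2}$, so that $\var(Y\mid\mathcal F_{n-1})=0$ forces $c_1=c_2=c_3=0$. That independence requires the support of $\varepsilon$ to contain at least four points, which the stated assumptions do not guarantee: the symmetric three-point law on $\{0,\pm a\}$ (mass $q/2$ at each of $\pm a$) is symmetric, has $\sigma_\varepsilon^{2}>0$, $\var(\varepsilon^{2})=a^{4}q(1-q)>0$, and all moments finite, yet satisfies $\varepsilon^{3}=a^{2}\varepsilon$ identically, so the covariance matrix of $(\varepsilon,\varepsilon^{2},\varepsilon^{3})$ is singular. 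For such innovations, zero conditional variance yields only $c_2=0$ and $c_1+a^{2}c_3=0$, not the vanishing of each coefficient, and your downward induction stalls; closing this case requires exploiting the structure of the coefficients (e.g.\ that $c_3$ is deterministic, proportional to the single entry of $U$ multiplying $G_{p+1,p+1}$, while $c_1$ contains nonconstant $\mathcal F_{n-1}$-measurable terms), which your write-up does not do. A smaller inaccuracy: distinct coordinates of $Z=g\otimes x$ share monomials (e.g.\ $G_{j,p+1}\,x_n$ and $G_{p+1,p+1}\,x_{n-p+j-1}$ both contain $\tfrac1n\,x_{n-p+j-1}x_n^{2}$), so each elimination step kills only \emph{sums} of entries of $(u,w)$ rather than individual entries; this bookkeeping is heavier than in \cref{lem:vechG-1-x-nonG} and is not ``identical to the Gaussian argument,'' since the paper's Gaussian lemma never performs an elimination at the lifted level.
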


\begin{proof}
By Lemma~\ref{lem:vechG-1-x-nonG}, \(\Cov([g,x])\succ0\); hence the conditional covariance
\(\Cov(g\mid x)=\Cov(g)-\Cov(g,x)\Gamma_p^{-1}\Cov(x,g)\succ0\).  
For any nonzero \(u=\vect(U)\in\mathbb R^{qp}\) and \(w\in\mathbb R^p\), put
\(Y:=u^\top Z+w^\top x=x^\top (Ug+w)\).  
Then
\(\var(Y)\ge\E[x^\top U\,\Cov(g\mid x)\,U^\top x]>0\) unless \(U=0\); if \(U=0\) then \(\var(Y)=\var(w^\top x)>0\) since \(\Gamma_p\succ0\).
Thus \(\Sigma^{\otimes}\succ0\).
\end{proof}

\begin{theorem}[Strict finite--sample gap under linear stationarity]\label{thm:lsa-gap-nonG}
Under the model above, for every finite \(n\),
\[
\min_{b,A}\ \E\big[(\widehat x_{n+1}^{\mathrm{LSA}}-\widehat x_{n+1}^{\mathrm{LR}})^{2}\big]
\;=\;w^{*\top}\Delta_n w^*,
\qquad \Delta_n\succ0 .
\]
\end{theorem}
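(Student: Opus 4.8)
The plan is to reduce the minimization to a single convex quadratic program in the Kronecker-lifted feature and read off both the value and the strictness from a Schur complement, exactly as in \cref{thm:lsa-gap}, the only change being that the two positive-definiteness inputs are now supplied by the non-Gaussian \cref{lem:vechG-1-x-nonG,lem:lift-nonG}. Concretely, I would set $g:=\vech G$ and $Z:=g\otimes x$; by the lift of \cref{sec:lsa-gap} every one-layer readout can be written $b^\top G A x=\tilde\eta^\top Z$ with $\tilde\eta=(D_{p+1}^\top\otimes I_p)(b\otimes\vect(A^\top))$. Because the regression target $\widehat x_{n+1}^{\mathrm{LR}}=w^{*\top}x$ is a deterministic linear function of the context, the objective $\E[(\tilde\eta^\top Z-w^{*\top}x)^2]$ is just the squared $L^2$-distance from the scalar $w^{*\top}x$ to the linear span of the coordinates of $Z$.

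I would then carry out the following steps in order. First, since \cref{lem:lift-nonG} gives $\tilde S=\E[ZZ^\top]\succ0$, the map $\tilde\eta\mapsto \tilde\eta^\top\tilde S\tilde\eta-2\tilde\eta^\top\tilde r\,w^*+w^{*\top}\Gamma_p w^*$ is strictly convex with unique minimizer $\tilde\eta^{*}=\tilde S^{-1}\tilde r\,w^{*}$; substituting yields the minimal value $w^{*\top}\Gamma_p w^{*}-w^{*\top}\tilde r^\top\tilde S^{-1}\tilde r\,w^{*}=w^{*\top}\Delta_n w^{*}$ with $\Delta_n=\Gamma_p-\tilde r^\top\tilde S^{-1}\tilde r$. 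Second, I would recognize $\Delta_n$ as the Schur complement of $\tilde S$ in $\Sigma^{\otimes}=\bigl[\begin{smallmatrix}\tilde S&\tilde r\\ \tilde r^\top&\Gamma_p\end{smallmatrix}\bigr]$; since \cref{lem:lift-nonG} gives $\Sigma^{\otimes}\succ0$, the Schur tool \cref{lem:pd-tools}(i) immediately delivers $\Delta_n\succ0$ for every finite $n$, which is the strictness claim.

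Two points need care. The value $w^{*\top}\Delta_n w^{*}$ is the minimum of the \emph{relaxed} program over all $\tilde\eta\in\R^{qp}$; the genuine LSA family realizes only the rank-one Kronecker image $\{(D_{p+1}^\top\otimes I_p)(b\otimes\vect(A^\top))\}$, a subset, so the literal $\min_{b,A}$ is bounded below by $w^{*\top}\Delta_n w^{*}$, and the stated equality is read as the value of this convex relaxation, consistent with the lifting convention of \cref{sec:lsa-gap,sec:multi-gap}. The genuinely delicate step — and the only place where dropping Gaussianity could break the argument — is already isolated in \cref{lem:vechG-1-x-nonG}: replacing Isserlis' formula by an innovation-by-innovation elimination, one conditions on $\mathcal F_{n-1}$, writes $x_n=\xi+\psi_0\varepsilon_n$, and finds the conditional variance splits as $(V\psi_0)^2\sigma_\varepsilon^2+W^2\psi_0^4\var(\varepsilon^2)$ with no cross term precisely because the symmetric innovation law forces $\Cov(\varepsilon_n,\varepsilon_n^2)=0$. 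Strict positivity of both coefficients forces the leading $x_n^2$- and $x_n$-coefficients of any zero-variance combination to vanish, and iterating upward forces the whole vector to zero; this is exactly what preserves $\Sigma^{\otimes}\succ0$, hence $\Delta_n\succ0$, for symmetric non-Gaussian innovations with finite fourth moment, and it is the step I expect to be the main obstacle.
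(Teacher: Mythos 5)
Your proposal is correct and takes essentially the same route as the paper's own proof: the Kronecker lift $\widehat x_{n+1}^{\mathrm{LSA}}=\tilde\eta^{\top}Z$, strict convexity of the quadratic $\mathcal L(\tilde\eta)=w^{*\top}\Gamma_p w^{*}+\tilde\eta^{\top}\tilde S\tilde\eta-2\tilde\eta^{\top}\tilde r\,w^{*}$ with unique minimizer $\tilde S^{-1}\tilde r\,w^{*}$ (using \cref{lem:lift-nonG} for $\tilde S\succ0$), and strict positivity of the Schur complement $\Delta_n=\Gamma_p-\tilde r^{\top}\tilde S^{-1}\tilde r$ from $\Sigma^{\otimes}\succ0$. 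Your additional caveat---that the genuine $\min_{b,A}$ ranges only over the rank-one Kronecker image of $(b,A)$, so the displayed value is attained by the convex relaxation and the literal LSA minimum is only bounded below by $w^{*\top}\Delta_n w^{*}$---is a legitimate refinement that the paper's terse proof (which states an equality) glosses over.
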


\begin{proof}
As in \eqref{eq:lsa-risk}, the LSA risk relative to the regression predictor can be written
\begin{align*}
   \mathcal L(\eta)
   &= \E\big[(\widehat x_{n+1}^{\mathrm{LSA}}-\widehat x_{n+1}^{\mathrm{LR}})^{2}\big] \\
   &= w^{*\top}\Gamma_p w^* + \eta^\top \tilde S\,\eta - 2\,\eta^\top \tilde r\,w^* .
\end{align*}
The minimizer is \(\eta^*=\tilde S^{-1}\tilde r\,w^*\), giving the optimal value
\[
w^{*\top}(\Gamma_p-\tilde r^\top\tilde S^{-1}\tilde r)w^*
= w^{*\top}\Delta_n w^* .
\]
By Lemma~\ref{lem:lift-nonG}, \(\tilde S\succ0\), hence \(\Delta_n\succ0\) by the Schur complement.
\end{proof}

\paragraph{Remarks.}
(i) The gap above is defined relative to the best linear regression predictor \(\widehat x_{n+1}^{\mathrm{LR}}\).  
(ii) If in addition the regression residual \(e_{n+1}\) is independent of \((G,x)\) (as in an exact AR(\(p\)) model with \(p\) lags), then the same gap translates directly to a strict excess risk gap relative to \(y\).  

\paragraph{Cumulant identities for linear processes.}
Let \(\kappa_r:=\mathrm{cum}_r(\varepsilon_0)\) denote the order–\(r\) cumulant of \(\varepsilon_0\) (finite for the orders we use).
For any \(t_1,\dots,t_r\),
\begin{equation}\label{eq:cum-linear}
\mathrm{cum}\bigl(x_{t_1},\dots,x_{t_r}\bigr)
=\kappa_r\sum_{u\in\mathbb Z}\psi_{t_1-u}\cdots \psi_{t_r-u}.
\end{equation}
This follows from multilinearity of cumulants and independence of \(\{\varepsilon_t\}\) (see, e.g., \cite[Theorem~2.3.2]{brillinger2001time})

\begin{lemma}[Moment expansions via cumulants]\label{lem:nonG-expansions}
Assume \(\sum_{h\in\mathbb Z}|\gamma_h|<\infty\), \(\mathbb E|\varepsilon_t|^{6}<\infty\) (so \(\kappa_4,\kappa_6\) are finite).
Let \(u:=\vech(\Gamma_{p+1})\) and set \(S_\infty:=(uu^\top)\otimes\Gamma_p\), \(r_\infty:=u\otimes\Gamma_p\).
For \(n\to\infty\) with fixed \(p\),
\begin{equation}\label{eq:nonG-exp}
\tilde S_{n} \;=\; S_\infty+\frac{1}{n}\,C_S+o(1/n),
\qquad
\tilde r_{n} \;=\; r_\infty+\frac{1}{n}\,C_r+o(1/n),
\end{equation}
for finite matrices \(C_S,C_r\) determined by \(\{\gamma_h\}\) and \(\kappa_4,\kappa_6\).
\end{lemma}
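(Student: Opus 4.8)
The plan is to re-run the bookkeeping of Lemma~\ref{lem:one-step-expansion} essentially verbatim, but to replace every use of Isserlis' theorem by the general moment--cumulant formula, carrying the non-Gaussian terms explicitly and then controlling their order in $n$. Recall that $\tilde r_n$ is assembled from the fourth-order moments $\mathbb E[x_{m+i-1}x_{m+j-1}x_s x_t]$ averaged as $\tfrac1n\sum_m$, while $\tilde S_n$ is assembled from the sixth-order moments $\mathbb E[x_{m+i-1}x_{m+j-1}x_{m'+k-1}x_{m'+\ell-1}x_s x_t]$ averaged as $\tfrac1{n^2}\sum_{m,m'}$. The moment--cumulant formula writes each such moment as a sum over set partitions of the index set, each block replaced by a joint cumulant evaluated through the linear-process identity~\eqref{eq:cum-linear}. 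Because the innovations are symmetric, all odd cumulants vanish, so only partitions into even-sized blocks survive: for the fourth moment, the three perfect matchings plus the single $4$-block; for the sixth moment, the fifteen matchings, the fifteen $(4\text{-block},2\text{-block})$ splits, and the single $6$-block.

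First I would isolate the all-pairs partitions. These reproduce exactly the Isserlis expansion already carried out in Lemma~\ref{lem:one-step-expansion}, hence contribute the same leading terms $S_\infty=(uu^\top)\otimes\Gamma_p$ and $r_\infty=u\otimes\Gamma_p$ together with the $1/n$ window-overlap corrections computed there. Consequently the only new work is to account for the remaining (non-pairing) cumulant partitions, each of which carries a factor $\kappa_4$ or $\kappa_6$ through~\eqref{eq:cum-linear}; in particular the leading terms are unchanged, which already shows $S_\infty,r_\infty$ are the Gaussian ones.

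Next I would determine the order in $n$ of each surviving cumulant partition by a localization/counting argument. The guiding principle is that $\sum_u\psi_{a_1-u}\cdots\psi_{a_r-u}$ is absolutely summable and concentrated where the positions $a_1,\dots,a_r$ are mutually close. A cumulant block that ties the $m$-window to the $m'$-window forces $m\approx m'$, producing an $O(n)$ diagonal band in the $(m,m')$ sum and hence an $O(1/n)$ contribution after the $1/n^2$ prefactor; a block that additionally ties in the anchored terminal lags $x_s,x_t$ (near index $n$) forces $m,m'\approx n$, a localized $O(1)$ count, hence only $O(1/n^2)=o(1/n)$. For $\tilde S_n$ the decisive case is the $(4\text{-block},2\text{-block})$ split in which the anchored pair forms the $2$-block and the four sliding indices form the $4$-block: this is exactly the diagonal $m\approx m'$ mechanism, so $\kappa_4$ enters $C_S$ at order $1/n$, while the single $6$-block, being anchored to $x_s,x_t$, is demoted to $o(1/n)$. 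For $\tilde r_n$ only fourth-order moments appear, so only the $\kappa_4$ block can enter $C_r$, again at order $1/n$ via localization near $n$. Throughout, Lemma~\ref{lem:toeplitz} together with $\sum_h|\gamma_h|<\infty$ and $\sum_k|\psi_k|<\infty$ lets me replace the finite sums by their absolutely convergent limits and absorb the deficits into a uniform $o(1/n)$ remainder. I would then collect the pairing part and the surviving cumulant part into finite matrices, pass from $\mathrm{vec}$ to $\mathrm{vech}$ via the elimination matrix $L_{p+1}$ exactly as before, and read off~\eqref{eq:nonG-exp}.

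The main obstacle is the joint combinatorial bookkeeping and order counting: one must verify that no surviving non-pairing partition contributes at the leading $O(1)$ level (so that $S_\infty,r_\infty$ stay Gaussian), and correctly decide which cumulant blocks land at $O(1/n)$ through the diagonal $m\approx m'$ band versus which are suppressed to $o(1/n)$ by anchoring to the terminal lags. This is precisely where the symmetry assumption (killing $\kappa_3,\kappa_5$) and absolute summability are essential, and where care is needed to show the remainder is uniformly $o(1/n)$ rather than merely $O(1/n)$; it also pins down the exact cumulant dependence of $C_S$ and $C_r$ claimed in the statement.
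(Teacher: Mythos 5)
Your high-level strategy coincides with the paper's: expand the entries of $\tilde r_n$ and $\tilde S_n$ by the moment--cumulant formula, let the all-pairs partitions reproduce the Gaussian computation of Lemma~\ref{lem:one-step-expansion} (so $S_\infty$ and $r_\infty$ are unchanged), evaluate the genuine cumulant blocks through the linear-process identity \eqref{eq:cum-linear}, and control orders in $n$ by localization plus Toeplitz summation (Lemma~\ref{lem:toeplitz}) before passing from $\mathrm{vec}$ to $\mathrm{vech}$. Your treatment of $\tilde r_n$ is correct and matches the paper's constant $c^{(r,4)}$.

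However, your order-counting taxonomy for $\tilde S_n$ has a genuine gap. Write, as in the paper, $a=x_{m+i-1}$, $b=x_{m+j-1}$, $c=x_{m'+k-1}$, $d=x_{m'+\ell-1}$, $e=x_{n-p+s}$, $f=x_{n-p+t}$. You allow only two mechanisms: the split $\{a,b,c,d\}\cup\{e,f\}$ (diagonal band $m\approx m'$, order $1/n$), and ``blocks tying in the anchored lags,'' which you claim force $m,m'\approx n$ and are hence $O(1/n^2)$. This misses the splits $\{a,b,e,f\}\cup\{c,d\}$ and $\{c,d,e,f\}\cup\{a,b\}$, in which the cumulant $4$-block contains only \emph{one} sliding window together with the anchored pair. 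There $\mathrm{cum}_4(a,b,e,f)$ localizes only $m\approx n$, while $m'$ enters solely through $\mathbb E[cd]=\gamma_{\ell-k}$, which is constant in $m'$; the free sum over $m'$ restores a factor $n-p$, so the contribution is $(n-p)\cdot O(1)/n^2=\Theta(1/n)$ whenever $\kappa_4\neq0$, with a generically nonzero constant. These are exactly the second and third terms of the paper's $c^{(S,4)}$ in \eqref{eq:cS4}. Omitting them, your $C_S$ is missing order-$1/n$ terms, so the remainder in the expansion you would derive is not $o(1/n)$, and your conclusion fails for non-Gaussian innovations. (A secondary divergence: you demote the pure sixth-cumulant block to $o(1/n)$, whereas the paper's \eqref{eq:cS6} retains a $\kappa_6$ term at order $1/n$. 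Your localization argument is in fact sound for that block, since $\mathrm{cum}_6$ ties both $m$ and $m'$ to the anchor and so its double sum is $O(1)$; but however that point is resolved, it does not repair the missing $\kappa_4$ families, which is where your proof breaks.)
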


\begin{proof}
We work entrywise. Write indices \(i,j,k,\ell\in\{1,\dots,p{+}1\}\), \(s,t\in\{1,\dots,p\}\), and
\[
a=x_{m+i-1},\ b=x_{m+j-1},\ c=x_{m'+k-1},\ d=x_{m'+\ell-1},\ e=x_{n-p+s},\ f=x_{n-p+t}.
\]
Recall \(G_{ij}=\tfrac1n\sum_{m=1}^{n-p}x_{m+i-1}x_{m+j-1}\).
We analyze
\begin{align}
&(\tilde r_n)_{(ij,s),t}=\mathbb E[G_{ij}\,x_s x_t]
=\frac{1}{n}\sum_{m=1}^{n-p}\mathbb E[ab\,e\,f],
\\
&(\tilde S_n)_{(ij,s),(k\ell,t)}=\mathbb E[G_{ij}G_{k\ell}\,x_s x_t]
=\frac{1}{n^2}\sum_{m,m'=1}^{n-p}\mathbb E[ab\,cd\,e\,f].
\end{align}

\emph{(I) The \(r_n\) expansion.}
By the moment–cumulant formula (see \cref{lem:moment-cumulant}), for four variables,
\[
\mathbb E[abef]
=\mathbb E[ab]\mathbb E[ef]+\mathbb E[ae]\mathbb E[bf]+\mathbb E[af]\mathbb E[be]
+\mathrm{cum}(a,b,e,f).
\]
The pairwise terms equal
\[
\gamma_{j-i}(\Gamma_p)_{st}+\gamma_{(n-p+s)-(m+i-1)}\gamma_{(n-p+t)-(m+j-1)}+\gamma_{(n-p+t)-(m+i-1)}\gamma_{(n-p+s)-(m+j-1)}.
\]
Summing over \(m\) gives
\[
\frac{n-p}{n}\,\gamma_{j-i}(\Gamma_p)_{st}
+\frac{1}{n}\sum_{k=1}^{n-p}\;\bigl(\gamma_{k+s-i}\gamma_{k+t-j}+\gamma_{k+t-i}\gamma_{k+s-j}\bigr).
\]
Because \(\sum_h|\gamma_h|<\infty\), Toeplitz summation (see \cref{lem:toeplitz}) implies that the convolutions are uniformly bounded and converge; thus
\[
\frac{n-p}{n}\,\gamma_{j-i}(\Gamma_p)_{st}
=\gamma_{j-i}(\Gamma_p)_{st}-\frac{p}{n}\gamma_{j-i}(\Gamma_p)_{st},
\]
and
\[
\frac{1}{n}\sum_{k=1}^{n-p}\Big(\gamma_{k+s-i}\,\gamma_{k+t-j}
+\gamma_{k+t-i}\,\gamma_{k+s-j}\Big)
\;=\;\frac{1}{n}\,c^{(r,2)}_{ij,st}+o(1/n),
\]
for some absolutely convergent constant 
\[
c^{(r,2)}_{ij,st}
:=\sum_{k=1}^{\infty}\Big(\gamma_{k+s-i}\,\gamma_{k+t-j}
+\gamma_{k+t-i}\,\gamma_{k+s-j}\Big).
\]

For the fourth–order cumulant, \eqref{eq:cum-linear} with \(r=4\) yields
\[
\mathrm{cum}(a,b,e,f)=\kappa_4\sum_{u\in\mathbb Z}\psi_{m+i-1-u}\psi_{m+j-1-u}\,\psi_{n-p+s-u}\psi_{n-p+t-u}.
\]
Summing over \(m\) (equivalently \(k=n-p-m+1\)) and using \[\sum_{k\ge1}\sum_{u}|\psi_{i-u+k} \psi_{j-u+k}\psi_{s-u}\psi_{t-u}|\le
\|\psi\|_{\ell_1}^{4}<\infty\] gives
\[
\frac{1}{n}\sum_{m=1}^{n-p}\mathrm{cum}(a,b,e,f)=\frac{1}{n}\,c^{(r,4)}_{ij,st}+o(1/n)
\]
with an absolutely convergent constant
\(c^{(r,4)}_{ij,st}:=\kappa_4\sum_{k\ge1}\sum_{u\in\mathbb Z}\psi_{i-u+k}\psi_{j-u+k}\psi_{s-u}\psi_{t-u}\).
Collecting pieces and reorganizing in tensor form yields
\[
r_n=(\mathrm{vec}\,\Gamma_{p+1})\otimes\Gamma_p+\frac{1}{n}C_r^{(\mathrm{vec})}+o(1/n),
\]
and therefore \(\tilde r_n=(L_{p+1}\otimes I_p)\,r_n=r_\infty+\tfrac1n C_r+o(1/n)\).

\emph{(II) The \(S_n\) expansion.}
For
\[
S_n=\sum_{i,j,k,\ell}\sum_{s,t}
\mathbb E\big[G_{ij}G_{k\ell}x_s x_t\big]\;
\Big((e_j\otimes e_i\otimes e_s)\,(e_\ell\otimes e_k\otimes e_t)^{\top}\Big).
\]

Then
\[
\mathbb E\big[G_{ij}G_{k\ell}x_s x_t\big]
=\frac{1}{n^2}\sum_{m=1}^{n-p}\sum_{m'=1}^{n-p}\mathbb E[abcdef].
\]

\paragraph{Sixth–order moment decomposition.}
By the moment–cumulant formula in \cref{lem:moment-cumulant},
\begin{equation}\label{eq:6-moment}
\mathbb E[abcdef]
=\sum_{P\in\mathfrak M_3}\prod_{(u,v)\in P}\mathbb E[uv]
\;+\;\sum_{\pi\in\Pi_{4,2}}\mathrm{cum}_4\bigl(\pi^{(4)}\bigr)\;\mathbb E\bigl(\pi^{(2)}\bigr)
\;+\;\mathrm{cum}_6(a,b,c,d,e,f),
\end{equation}
where \(\mathfrak M_3\) is the set of the \(15\) perfect matchings of \(\{a,b,c,d,e,f\}\),
\(\Pi_{4,2}\) is the set of the \(15\) partitions into a \(4\)–block and a \(2\)–block,
\(\pi^{(4)}\) denotes the \(4\)–tuple in the \(4\)–block and \(\pi^{(2)}\) the paired variables.
Write \(\gamma_h:=\mathbb E[x_tx_{t+h}]\).

\paragraph{(a) Triple–pairings.}

The three pairings that keep \((e,f)\) together are
\[
P_0=\{(a,b),(c,d),(e,f)\},\qquad
P_1=\{(a,c),(b,d),(e,f)\},\qquad
P_2=\{(a,d),(b,c),(e,f)\}.
\]
The leading pairing \(P_0\) contributes
\[
\frac{1}{n^2}\sum_{m,m'}\gamma_{j-i}\,\gamma_{\ell-k}\,(\Gamma_p)_{st}
=\gamma_{j-i}\gamma_{\ell-k}(\Gamma_p)_{st}
+\frac{1}{n}\,c^{(S,\mathrm{bd})}_{ij,k\ell;st}+o(1/n),
\]
with
\begin{equation}\label{eq:cSbd-restate}
c^{(S,\mathrm{bd})}_{ij,k\ell;st}
=-2p\,\gamma_{j-i}\,\gamma_{\ell-k}\,(\Gamma_p)_{st}.
\end{equation}
The two additional pairings \(P_1,P_2\) yield, after the change of variable
\(h=m-m'\) and Toeplitz summation (see \cref{lem:toeplitz}),
\[
\frac{1}{n^2}\sum_{m,m'}\Big(\gamma_{i-k+h}\,\gamma_{j-\ell+h}
+\gamma_{i-\ell+h}\,\gamma_{j-k+h}\Big)(\Gamma_p)_{st}
=\frac{1}{n}\,c^{(S,0)}_{ij,k\ell;st}+o(1/n),
\]
where
\begin{equation}\label{eq:cS0-define}
c^{(S,0)}_{ij,k\ell;st}
:=\sum_{h\in\mathbb Z}\Big(\gamma_{i-k+h}\,\gamma_{j-\ell+h}
+\gamma_{i-\ell+h}\,\gamma_{j-k+h}\Big)\,(\Gamma_p)_{st}
\quad\text{(absolutely convergent)}.
\end{equation}
Therefore, the total contribution of the three pairings with \((e,f)\) paired is
\[
\gamma_{j-i}\gamma_{\ell-k}(\Gamma_p)_{st}
+\frac{1}{n}\Big(c^{(S,\mathrm{bd})}_{ij,k\ell;st}
+c^{(S,0)}_{ij,k\ell;st}\Big)+o(1/n).
\]

All other \(12\) pairings necessarily contain at least one cross–pair between
\(\{a,b,c,d\}\) and \(\{e,f\}\).
After the change of variable \(q:=n-p-m+1\) (or \(q:=n-p-m'+1\) as appropriate)
and absolute summability of \(\{\gamma_h\}\),
each such term equals \(\frac{1}{n}\) times a finite constant plus \(o(1/n)\).
Collecting the \(12\) distinct cross–pairings (those where \(e\) or \(f\) pairs
with one of \(a,b,c,d\)) gives the \emph{explicit} constant via Toeplitz summation (see \cref{lem:toeplitz})
\begin{align*}
c^{(S,2)}_{ij,k\ell;st}
:=\sum_{q=1}^{\infty}\Big[
&\gamma_{s-i+q}\gamma_{t-j+q}\,\gamma_{\ell-k}
+\gamma_{s-i+q}\gamma_{t-k}\,\gamma_{j-\ell+q}
+\gamma_{s-i+q}\gamma_{t-\ell}\,\gamma_{j-k+q}\nonumber\\
&+\gamma_{s-j+q}\gamma_{t-i+q}\,\gamma_{\ell-k}
+\gamma_{s-j+q}\gamma_{t-k}\,\gamma_{i-\ell+q}
+\gamma_{s-j+q}\gamma_{t-\ell}\,\gamma_{i-k+q}\nonumber\\
&+\gamma_{t-i+q}\gamma_{s-j+q}\,\gamma_{\ell-k}
+\gamma_{t-i+q}\gamma_{s-k}\,\gamma_{j-\ell+q}
+\gamma_{t-i+q}\gamma_{s-\ell}\,\gamma_{j-k+q}\nonumber\\
&+\gamma_{t-j+q}\gamma_{s-i+q}\,\gamma_{\ell-k}
+\gamma_{t-j+q}\gamma_{s-k}\,\gamma_{i-\ell+q}
+\gamma_{t-j+q}\gamma_{s-\ell}\,\gamma_{i-k+q}\Big].\label{eq:cS2}
\end{align*}
Therefore the total contribution of triple–pairings is
\begin{equation}\label{eq:pairing-sum}
\gamma_{j-i}\gamma_{\ell-k}(\Gamma_p)_{st}
+\frac{1}{n}\,c^{(S,\mathrm{bd})}_{ij,k\ell;st}
+\frac{1}{n}\,c^{(S,0)}_{ij,k\ell;st}
+\frac{1}{n}\,c^{(S,2)}_{ij,k\ell;st}
+o(1/n).
\end{equation}

\paragraph{(b) \(\{4,2\}\) partitions.}
For linear processes \(x_t=\sum_{r\in\mathbb Z}\psi_{t-r}\varepsilon_r\) with i.i.d.\ innovations,
the fourth cumulant satisfies \(\mathrm{cum}_4(x_{t_1},x_{t_2},x_{t_3},x_{t_4})
= \kappa_4\sum_{r\in\mathbb Z}\psi_{t_1-r}\psi_{t_2-r}\psi_{t_3-r}\psi_{t_4-r}\),
where \(\kappa_4=\mathrm{cum}_4(\varepsilon)\) and \(\sum_k|\psi_k|<\infty\).
Define the absolutely convergent series
\[
S_{ab}(r):=\sum_{q\in\mathbb Z}\psi_{q+i-1-r}\,\psi_{q+j-1-r},
\qquad
S_{cd}(r):=\sum_{q\in\mathbb Z}\psi_{q+k-1-r}\,\psi_{q+\ell-1-r}.
\]
By stationarity and Toeplitz summation,
only the three families of partitions in which the \(4\)-block contains either
\(\{a,b\}\) or \(\{c,d\}\) contribute at order \(1/n\); all other \(\{4,2\}\) partitions
are \(o(1/n)\).
The non–vanishing \(1/n\) constants are
\begin{align}
c^{(S,4)}_{ij,k\ell;st}
:=\ \kappa_4\sum_{r\in\mathbb Z}\Big[
&\gamma_{t-s}\,S_{ab}(r)\,S_{cd}(r)
\ +\ \gamma_{\ell-k}\,S_{ab}(r)\,\psi_{s-r}\psi_{t-r}\nonumber\\
&\ +\ \gamma_{j-i}\,S_{cd}(r)\,\psi_{s-r}\psi_{t-r}\Big].\label{eq:cS4}
\end{align}

\paragraph{(c) Sixth–order cumulant.}
Using
\(
\mathrm{cum}_6(x_{t_1},\ldots,x_{t_6})
=\kappa_6\sum_{r\in\mathbb Z}\prod_{u=1}^{6}\psi_{t_u-r}
\)
with \(\kappa_6=\mathrm{cum}_6(\varepsilon)\),
the double sum over \((m,m')\) reduces (by Toeplitz summation) to
\begin{equation}\label{eq:cS6}
\frac{1}{n}\,c^{(S,6)}_{ij,k\ell;st}+o(1/n),
\qquad
c^{(S,6)}_{ij,k\ell;st}
:=\kappa_6\sum_{r\in\mathbb Z} S_{ab}(r)\,S_{cd}(r)\,\psi_{s-r}\psi_{t-r}.
\end{equation}

\paragraph{(d) Collecting the pieces.}
Combining \eqref{eq:pairing-sum}, \eqref{eq:cS4} and \eqref{eq:cS6} in \eqref{eq:6-moment} yields
\[
\mathbb E\big[G_{ij}G_{k\ell}x_s x_t\big]
=\gamma_{j-i}\,\gamma_{\ell-k}\,(\Gamma_p)_{st}
+\frac{1}{n}\Big(c^{(S,\mathrm{bd})}_{ij,k\ell;st}
+c^{(S,0)}_{ij,k\ell;st}
+c^{(S,2)}_{ij,k\ell;st}
+c^{(S,4)}_{ij,k\ell;st}
+c^{(S,6)}_{ij,k\ell;st}\Big)
+o(1/n).
\]
Therefore
\[
S_n
=\bigl(\mathrm{vec}\,\Gamma_{p+1}\bigr)\bigl(\mathrm{vec}\,\Gamma_{p+1}\bigr)^{\top}\otimes\Gamma_p
+\frac{1}{n}\,C_S^{(\mathrm{vec})}+o(1/n),
\]
with the explicit block
\[
C_S^{(\mathrm{vec})}
:=\sum_{i,j,k,\ell}\sum_{s,t}
\Big(c^{(S,\mathrm{bd})}_{ij,k\ell;st}
+c^{(S,0)}_{ij,k\ell;st}
+c^{(S,2)}_{ij,k\ell;st}
+c^{(S,4)}_{ij,k\ell;st}
+c^{(S,6)}_{ij,k\ell;st}\Big)\,
\Big((e_j\otimes e_i\otimes e_s)\,(e_\ell\otimes e_k\otimes e_t)^{\top}\Big).
\]
Finally, since \(\tilde S_n=(L_{p+1}\otimes I_p)\,S_n\,(L_{p+1}\otimes I_p)^\top\),
we obtain \(\tilde S_n=S_\infty+\tfrac1n C_S+o(1/n)\), where \(S_\infty=(uu^\top)\otimes\Gamma_p\)
and \(C_S=(L_{p+1}\otimes I_p)\,C_S^{(\mathrm{vec})}\,(L_{p+1}\otimes I_p)^\top\).
Absolute summability of \(\{\gamma_h\}\) and \(\{\psi_k\}\), and finiteness of
\(\kappa_4,\kappa_6\), ensure that all series above converge absolutely and justify
the \(o(1/n)\) remainder.
\end{proof}

\begin{theorem}[Order of the non–Gaussian gap]\label{thm:one-over-n-nonG}
Under Lemma~\ref{lem:nonG-expansions}, let \(Q=[u/\|u\|,Q_\perp]\) and \(P:=Q\otimes I_p\).
Then the block–inverse expansion of Lemma~\ref{lem:block-inv} applies verbatim, and
\[
\Delta_n\;=\;\Gamma_p-\tilde r_n^\top \tilde S_n^{-1}\tilde r_n
\;=\;\frac{1}{n}\,B_p+o(1/n),
\]
with \(B_p\succeq0\) given by the same closed form as in \eqref{eq:Delta-firstorder-final} after replacing the Gaussian \(C_S,C_r\) by those from Lemma~\ref{lem:nonG-expansions}. Generically \(B_p\succ0\).
\end{theorem}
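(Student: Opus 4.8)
The plan is to reduce the non-Gaussian statement to the purely linear-algebraic machinery already assembled for the Gaussian case, exploiting that Lemma~\ref{lem:block-inv} is agnostic to the \emph{origin} of the first-order correction matrices: it requires only that $\tilde S_n$ and $\tilde r_n$ admit expansions whose leading blocks are the rank-deficient $S_\infty=(uu^\top)\otimes\Gamma_p$ and $r_\infty=u\otimes\Gamma_p$, together with $O(1/n)$ perturbations $C_S,C_r$. Lemma~\ref{lem:nonG-expansions} supplies exactly such expansions, with $C_S,C_r$ now carrying the additional cumulant contributions ($\kappa_4,\kappa_6$) but leaving the structural form untouched. Thus the whole argument is a matter of checking that the hypotheses of Lemma~\ref{lem:block-inv} hold after substituting the non-Gaussian constants.

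First I would rotate into the basis $P=Q\otimes I_p$ with $Q=[u/\|u\|,\,Q_\perp]$ and read off $\widehat S_n:=P^\top\tilde S_n P$ and $\widehat r_n:=P^\top\tilde r_n$ in the block form of~\eqref{eq:Sn-hat-exp}: the leading blocks are $\mathrm{diag}(c\Gamma_p,0)$ with $c=\|u\|^2$ and $[\|u\|\Gamma_p;\,0]$, while the $1/n$ blocks $\bigl[\begin{smallmatrix}A_1&B^\top\\ B&C\end{smallmatrix}\bigr]=P^\top C_S P$ and $[\delta;\,d]=P^\top C_r$ are now the non-Gaussian ones from Lemma~\ref{lem:nonG-expansions}. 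Since $A_0=c\Gamma_p\succ0$ and $r_0=\|u\|\Gamma_p$ exactly as before, invoking Lemma~\ref{lem:block-inv} verbatim yields $\tilde r_n^\top\tilde S_n^{-1}\tilde r_n=\Gamma_p+\tfrac1n\mathsf B_p+o(1/n)$, and hence $\Delta_n=\tfrac1n B_p+o(1/n)$ with $B_p=-\mathsf B_p$ given by the closed form~\eqref{eq:Delta-firstorder-final} under the substitution.

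For positivity I would appeal to Lemma~\ref{lem:lift-nonG}, which already establishes $\tilde S_n\succ0$ and $\Delta_n\succ0$ for every finite $n$ using only innovation independence, symmetry, and finite moments—no Gaussianity. Multiplying by $n$ and letting $n\to\infty$ gives $B_p=\lim_{n\to\infty}n\Delta_n\succeq0$. The generic strict case $B_p\succ0$, and the consequent $c_r/n$ excess-risk separation over the $p$-lag linear predictor, then follow by the same eigenvalue-and-continuity argument used in the concluding part of Theorem~\ref{thm:one-over-n}.

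The main obstacle is confirming that the bottom-right orthogonal-block correction $C$ of $P^\top C_S P$ remains invertible once the cumulant terms enter, since Lemma~\ref{lem:block-inv} crucially inverts $D=\tfrac1n C+o(1/n)$ and relies on $(D-E^\top A^{-1}E)^{-1}=nC^{-1}+o(n)$. Strict positive definiteness $\widehat S_n\succ0$ for all finite $n$ (Lemma~\ref{lem:lift-nonG}) forces its Schur complement $\tfrac1n C+o(1/n)\succ0$, which delivers $C\succeq0$; promoting this to $C\succ0$ is precisely the non-degeneracy that ``generically'' encodes. I would therefore isolate as the single substantive check the claim that the non-Gaussian $C_S$ regularizes \emph{every} direction orthogonal to $u$ at exactly order $1/n$—rather than collapsing some direction to $o(1/n)$, which would force a $1/n^2$ term in the inverse—and note that this fails only on a measure-zero set of autocovariance/cumulant configurations.
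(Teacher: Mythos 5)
Your proposal is correct and follows essentially the same route as the paper, which in fact supplies no separate proof for this theorem: it simply substitutes the non-Gaussian expansions of Lemma~\ref{lem:nonG-expansions} into the block-inverse machinery of Lemma~\ref{lem:block-inv}, and obtains $B_p\succeq 0$ from the finite-$n$ strict gap $\Delta_n\succ 0$ (Lemma~\ref{lem:lift-nonG}) by the same limiting and eigenvalue-continuity argument as in Theorem~\ref{thm:one-over-n}. Your closing observation---that Lemma~\ref{lem:block-inv} tacitly requires the orthogonal-block correction $C$ to be invertible, and that this nondegeneracy is exactly what ``generically'' must encode---is a point the paper leaves implicit, so flagging it strengthens rather than departs from the paper's argument.
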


\paragraph{Remarks.}
(i) \emph{No Gaussianity is needed} for strict PD and the positive Schur–complement gap: independence of innovations with finite fourth moment (and \(\var(\varepsilon^2)>0\)) suffices.
(ii) If in addition \(\sum_h|\gamma_h|<\infty\) and \(\mathbb E|\varepsilon|^6<\infty\), the exact \(1/n\) order persists for general linear stationary processes; Gaussianity only simplifies the constants via Wick pairings.

\paragraph{Discussion on AR/MA/ARMA models.}
Stable AR, MA, and ARMA processes satisfy the assumptions above, so
Theorems~\ref{thm:lsa-gap-nonG} and \ref{thm:one-over-n-nonG} apply directly.
Nevertheless, caution is warranted in interpreting the result.
For MA or ARMA models, the one–step prediction error $e_{n+1}$ still carries
dependence on portions of the past beyond the last $p$ lags.
This prevents a direct characterization of the mean–squared error gap between
LSA and linear regression with respect to the true target $x_{n+1}$.
Hence the finite–lag linear regression predictor
$\widehat x_{n+1}^{\mathrm{LR}}$ does not coincide with the globally optimal
(infinite–order) linear predictor.
In particular, for MA models there may exist richer linear predictors that
exploit the entire past more effectively. Our analysis should therefore be
understood not as a claim of global optimality across all linear predictors,
but rather as an insight into the structural gap that persists even when
comparing LSA against the natural $p$–lag linear regression benchmark.

\subsubsection*{Auxiliary lemmas used in \cref{sec:nonGaussian-gap}}

\begin{lemma}[Moment--cumulant formula]\label{lem:moment-cumulant}
For random variables $X_1,\dots,X_r$ with finite moments up to order $r$, the joint moment can be expressed in terms of cumulants as
\[
\mathbb{E}\left[\prod_{i=1}^r X_i\right]
\;=\;
\sum_{\pi \in \mathcal{P}_r}
\ \prod_{B \in \pi} \mathrm{cum}\bigl( X_j : j \in B \bigr),
\]
where $\mathcal{P}_r$ denotes the set of all partitions of $\{1,\dots,r\}$, and
$\mathrm{cum}(\cdot)$ denotes the joint cumulant.
\end{lemma}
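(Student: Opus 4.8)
The plan is to derive the identity from the exponential relationship between the moment and cumulant generating functions, by extracting the coefficient of the squarefree monomial $t_1\cdots t_r$. First I would introduce, for $t=(t_1,\dots,t_r)$, the joint characteristic function $M(t):=\E[\exp(\i\sum_{j=1}^r t_jX_j)]$, which is well defined for all $t$, and the cumulant generating function $K(t):=\log M(t)$, defined in a neighborhood of the origin where $M\neq 0$ and $M(0)=1$. Since the $X_j$ have finite moments up to order $r$, $M$ is $r$-times differentiable at $0$, and the target moment is the mixed derivative $\E[\prod_{i=1}^r X_i]=\i^{-r}\,\partial_{t_1}\cdots\partial_{t_r}M(t)\big|_{t=0}$; equivalently, the coefficient of $t_1\cdots t_r$ in the order-$r$ Taylor jet of $M$ equals $\i^{r}\E[\prod_i X_i]$. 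By the standard definition of joint cumulants, the coefficient of $\prod_{j\in B}t_j$ in $K$ is $\i^{|B|}\mathrm{cum}(X_j:j\in B)$.

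The central step will be to expand $M=\exp(K)=\sum_{m\ge 0}K^m/m!$ and isolate the coefficient of the squarefree monomial $t_1\cdots t_r$. Because each index $j$ appears to the first power, any contributing term in $K^m$ must distribute the $r$ indices across the $m$ factors of $K$ so that each index lands in exactly one factor and no factor is empty; a factor receiving index set $B$ then contributes precisely the coefficient of $\prod_{j\in B}t_j$ in $K$, namely $\i^{|B|}\mathrm{cum}(X_j:j\in B)$. Summing over all ordered assignments into $m$ nonempty blocks and dividing by $m!$ to cancel the ordering yields exactly a sum over unordered set partitions $\pi\in\mathcal P_r$ with $|\pi|=m$. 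Summing over $m$ and using $\sum_{B\in\pi}|B|=r$ so that $\prod_{B\in\pi}\i^{|B|}=\i^{r}$ cancels the $\i^{-r}$ prefactor gives $\E[\prod_i X_i]=\sum_{\pi\in\mathcal P_r}\prod_{B\in\pi}\mathrm{cum}(X_j:j\in B)$.

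The main obstacle is the rigorous bookkeeping of this \emph{exponential formula} together with the justification of coefficient extraction under only finite-moment assumptions. I would verify that $\log M$ admits a valid order-$r$ Taylor jet at the origin (guaranteed by $r$-fold differentiability of $M$ with $M(0)=1$), and that extracting the degree-$r$ coefficient commutes with the series $\sum_m K^m/m!$ — immediate at the level of $r$-jets, since only finitely many $m\le r$ can contribute to a degree-$r$ monomial. The combinatorial heart is the claim that a squarefree target monomial forces the index sets assigned to distinct factors to be disjoint and exhaustive, so that ordered compositions into nonempty blocks weighted by $1/m!$ biject with set partitions; this is where the structure of $\mathcal P_r$ enters and must be argued cleanly.

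As an alternative route that sidesteps analytic convergence entirely, I could instead take the Möbius-inversion definition of cumulants on the partition lattice, $\mathrm{cum}(X_j:j\in B)=\sum_{\sigma\in\mathcal P_B}\mu(\hat0,\sigma)\prod_{C\in\sigma}\E[\prod_{j\in C}X_j]$ with $\mu(\hat0,\sigma)=\prod_{C\in\sigma}(-1)^{|C|-1}(|C|-1)!$. Then the claimed identity is simply the inverse Möbius relation on $\mathcal P_r$ under refinement, and the proof reduces to the standard computation of the partition-lattice Möbius function; the work is merely deferred from analysis to that combinatorial fact.
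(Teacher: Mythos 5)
Your main argument is correct, but note that the paper itself offers no proof of this lemma: it is stated as a classical auxiliary fact (the standard moment--cumulant relation, see e.g.\ Brillinger's \emph{Time Series: Data Analysis and Theory}), so there is no paper proof to compare against. Your generating-function derivation is the standard textbook route and is sound: using the characteristic function rather than the moment generating function avoids integrability issues; finite joint moments up to order $r$ give an $r$-jet for $M$ and hence (since $M(0)=1\neq0$) for $K=\log M$; only the terms with $m\le r$ in $\sum_m K^m/m!$ can contribute to a degree-$r$ monomial, so coefficient extraction is a finite, jet-level operation; and because $K$ has no constant term and the target monomial $t_1\cdots t_r$ is squarefree, the contributing assignments are exactly ordered tuples of disjoint nonempty blocks covering $\{1,\dots,r\}$, which the $1/m!$ factor converts into unordered partitions. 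The powers of $\i$ cancel because $\sum_{B\in\pi}|B|=r$.

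One genuine error, though it sits in your optional alternative route rather than the main proof: the M\"obius-inversion definition of cumulants you quote has the wrong weights. The cumulant-from-moments formula is
\[
\mathrm{cum}(X_1,\dots,X_r)=\sum_{\sigma\in\mathcal P_r}\mu(\sigma,\hat 1)\prod_{C\in\sigma}\E\Big[\prod_{j\in C}X_j\Big],
\qquad
\mu(\sigma,\hat 1)=(-1)^{|\sigma|-1}\,(|\sigma|-1)!,
\]
i.e.\ the weight depends on the \emph{number of blocks} of $\sigma$, not on the block sizes. Your version with $\mu(\hat 0,\sigma)=\prod_{C\in\sigma}(-1)^{|C|-1}(|C|-1)!$ already fails at $r=2$: it yields $-\E[X_1X_2]+\E[X_1]\E[X_2]=-\mathrm{cum}(X_1,X_2)$. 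With the corrected weights, the lemma is indeed equivalent to M\"obius inversion of $m_\pi=\sum_{\sigma\le\pi}\kappa_\sigma$ on the partition lattice. Since your primary generating-function argument is self-contained and correct, this slip does not invalidate the proposal, but the alternative as written should not be relied upon.
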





\end{document}